\let\oldsection\section
\RenewDocumentCommand{\section}{s o m}{%
  \IfBooleanTF{#1}
    {\oldsection*{\MakeUppercase{#3}}}
    {\IfValueTF{#2}
       {\oldsection[\MakeUppercase{#2}]{\MakeUppercase{#3}}}
       {\oldsection{\MakeUppercase{#3}}}
    }%
}
\tikzset{
	->, 
	every state/.style={thick, fill=gray!10}, 
	initial text=$ $, 
}
\pgfplotsset{width=10cm,compat=1.9}
\definecolor{gold}{HTML}{D2691E}
\newcommand{\rilind}[1]{{\textcolor{gold}{{\bf RS:} #1}}}
\newcommand{\todo}[1]{{\textcolor{red}{{\bf TODO:} #1}}}
\newcommand{\NN}{\ensuremath{\mathbb{N}}}
\newcommand{\RR}{\ensuremath{\mathbb{R}}}
\newcommand{\PP}{\ensuremath{\mathbb{P}}}
\newcommand{\E}{\mathbb{E}}
\newif\if@restonecol
\newtheorem{lemma}{Lemma}
\newtheorem{theorem}{Theorem}
\newtheorem*{theorem*}{Theorem}
\newtheorem{definition}{Definition}
\newtheorem{assumption}{Assumption}
\newcommand{\Rmnum}[1]{\expandafter\@slowromancap\romannumeral #1@}
\DeclareMathOperator*{\argmin}{arg\,min}
\DeclareMathOperator*{\argmax}{arg\,max}
\let\mathbbm\mathds
\newcommand{\Acal}{\mathcal{A}}
\newcommand{\perfc}{\omega}
\begin{document}

\newtoggle{longversion}
\settoggle{longversion}{true}

%

%
\runningauthor{Rilind Sahitaj, Paulius Sasnauskas,  Yiğit Yalın, Debmalya Mandal, Goran Radanović}

\twocolumn[

\aistatstitle{Independent Learning in Performative Markov Potential Games}

\renewcommand{\thefootnote}{\fnsymbol{footnote}}




\aistatsauthor{ Rilind Sahitaj\footnotemark[1] \And Paulius Sasnauskas\footnotemark[1] \And  Yi\u{g}it Yal\i{}n}
\aistatsaddress{ RWTH Aachen \& MPI-SWS  \And  MPI-SWS \And MPI-SWS }

\aistatsauthor{Debmalya Mandal \And Goran Radanovi\'{c}}
\aistatsaddress{ University of Warwick \And MPI-SWS}

]


\addtocontents{toc}{\protect\setcounter{tocdepth}{-1}} 

\begin{abstract}
Performative Reinforcement Learning (PRL) refers to a scenario in which the deployed policy changes the reward and transition dynamics of the underlying environment. 
In this work, we study multi-agent PRL by incorporating performative effects into Markov Potential Games (MPGs). We introduce the notion of a performatively stable equilibrium (PSE) and show that it always exists under a reasonable sensitivity assumption. We then provide convergence results for state-of-the-art algorithms used to solve MPGs. Specifically, we show that independent policy gradient ascent (IPGA) and independent natural policy gradient (INPG) converge to an approximate PSE in the \textit{best-iterate} sense, with an additional term that accounts for the performative effects. Furthermore, we show that INPG asymptotically converges to a PSE in the \textit{last-iterate} sense. As the performative effects vanish, we recover the convergence rates from prior work. For a special case of our game, we provide \textit{finite-time last-iterate} convergence results for a repeated retraining approach, in which agents independently optimize a surrogate objective. We conduct extensive experiments to validate our theoretical findings.

\end{abstract}

\section{Introduction}

\begin{table*}[t]
    \centering
    \caption{
    Iteration complexity of the independent policy gradient methods with gradient oracles. IPGA stands for independent policy gradient ascent and INPG abbreviates independent natural policy gradient. 
    $\delta_{r, p} \coloneqq \frac{1}{1-\gamma}  \left(\perfc_r +\frac{ \gamma \cdot \perfc_p \sqrt{S}}{1-\gamma} \right)$, where $\perfc_r$ and $\perfc_p$ are the sensitivity parameters modelling the \textit{performative} effects, $\mathcal{W}_{r, p} \coloneqq (n+1)\cdot n^2 \cdot S \cdot \mathcal{\delta}_{r, p}$, $S$ is the number of states, $\gamma$ is the discount factor.
    $\kappa_\rho \coloneqq \sup_t \sup_{\pi \in \Pi} \lVert d^\pi_{\rho,t}/\rho \rVert_\infty$ and its minimax-version $\widetilde{\kappa}_\rho \coloneqq \inf_{\rho \in \Delta(\mathcal{S})} \kappa_\rho$, where $A$ is the total number of actions over all agents, $A_{\max}$ is the largest action set over agents, $n$ is the number of agents, $C_\Phi$ is the potential function difference upper bound, $T$ is the number of rounds, $M_\rho \coloneqq \sup_{\pi} \sup_{\pi'} \max_s \frac{1}{d_{\rho, \pi}^{\pi'}(s)} < \infty$, $c$ is the lower bound for the probability of playing an optimal action.}
    
    \begin{tabular}{lcc}
        \toprule
        \textbf{Algorithm} & \textbf{Theorem} & \textbf{Iteration Complexity} \\
        \midrule
        IPGA & Theorem \ref{Theorem: Convergence Result - Infinite Sample Case} & $\mathcal{O} \left( \frac{\min\{\kappa_\rho, S\}^4 \cdot n \cdot A \cdot C_\Phi}{(1-\gamma)^{10}} \cdot \frac{\mathcal{W}_{r, p}}{\epsilon^2} \right)$ \\
        \midrule
        INPG & Theorem \ref{thm:inpg_conv} & $ \mathcal{O} \left( \left( \frac{ \Tilde{\kappa}_\rho \left(\sqrt{n} + M \cdot \delta_{r, p}\right)}{\epsilon^2 \cdot c \cdot (1 - \gamma)^{4}}  \right) \right) $  \\
        \midrule
        INPG + log-barrier reg. & Theorem \ref{thm:inpg-reg-conv} & $\mathcal{O} \left(\frac{n \cdot A_{\max} \cdot M^2}{\epsilon^2 \cdot (1 - \gamma)^4} \cdot \max\left\{1, S \cdot  \delta_{r, p} \right\} \right)$ \\
        \bottomrule
    \end{tabular}
    \label{tab:main-results}
\end{table*}

Multi-Agent Reinforcement Learning (MARL) is a framework for learning equilibrium policies in complex strategic environments. Although recent success stories in game playing~\citep{AlphaGO, Brown2019SuperhumanAF, StarCraft} showcase the practical importance of scalable MARL algorithms, deploying them in real world settings often requires robustness properties that are not needed in games with well-defined rules.
Specifically, it is important to account for higher-order effects that agents may have on the dynamics of the environment.

\renewcommand{\thefootnote}{\fnsymbol{footnote}}
\footnotetext[1]{This work was done as a part of 
an internship project at the Max Planck Institute for Software Systems.}
\renewcommand{\thefootnote}{\arabic{footnote}}

In MARL, we typically identify two types of feedback loops: a) one which is due to the RL nature of the problem setting, i.e., an agent's policy affecting future states, and b) one which is due to the multi-agent nature of the problem setting, i.e., an agent's transitions and rewards are affected by the other agents. However, the environment itself, i.e., reward function and transition probabilities, can be influenced by the policies that the agents deploy in the environment. We refer to this type of feedback loop as \textit{performative} effects. Take, for example, AI-assistants, which interact with both users and other agents. As an AI-assistant interacts with its users, it may change their preferences, which in turn influence future interactions. This can be exploited by competing agents that may adapt their strategies accordingly.

Recently, \textit{performative} effects of an agent on its environment have been studied in single-agent reinforcement learning~\citep[\textit{performative RL}]{PerformativeReinforcementLearning}, as well as in supervised learning~\citep[\textit{performative prediction}]{DBLP:conf/icml/PerdomoZMH20}.
However, \textit{performative} effects are unexplored in the context of MARL.
Unlike prior work on performative prediction and performative RL, \textit{performative} MARL has to simultaneously account for the three type of feedback loops from the previous paragraph. 

Motivated by the practical importance of this setting, our goal is to understand the impact of \textit{performative} effects on the convergence of well-known MARL algorithms.
We base our formal framework on a class of Markov games, called Markov Potential Games (MPGs), and focus on independent policy gradient algorithms, which have been shown to converge in MPGs. Our contributions are as follows: 
\begin{itemize}
    \item {\bf Framework:} We extend the MPG setting to incorporate \textit{performative} effects. We introduce the notions of a performatively stable equilibrium (PSE) and performative regret (PReg).
    \item {\bf Solution Concepts:} We show that a PSE exists under reasonable sensitivity assumptions.
    Furthermore, we prove that every PSE corresponds to an approximate Nash equilibrium. 
    \item {\bf Performative Regret Guarantees:} We study IPGA and two versions of INPG, with and without log-barrier regularization. We show that the algorithms achieve  
    bounded performative regret, hence converging to a PSE in the best-iterate sense. Table \ref{tab:main-results} provides an overview of the convergence results.  
    \item {\bf Last-Iterate Convergence Guarantees:} 
    We show that unregularized INPG achieves last-iterate convergence to an exact PSE asymptotically. However, compared to the regularized version, its guarantee depends on the probability of playing an optimal action. Furthermore, we show that the regularized version of INPG also achieves best-iterate convergence, but compared to the unregularized version, its guarantee does not require a lower bound on the probability of playing an optimal action. 
    Furthermore, we generalize the repeated optimization approach from prior work on performative RL to multi-agent RL for a special class of \textit{performative} MPGs. We show a {\em finite-time last-iterate} convergence guarantee for this method, which we prove by adapting the proof technique from \citet{PerformativeReinforcementLearning} to our problem setting. 
    \item {\bf Experiments:} We evaluate the gradient-based algorithms on the safe-distancing game \citep{GlobalConvergence_Leonardos, IndepPolicyGrad_Ding} and stochastic congestion games \citep{NPG_converges}, showing that \textit{performative} effects significantly impact convergence, affecting algorithms differently. According to our empirical results, the natural policy gradient methods are more robust against \textit{performative} effects. 
     
\end{itemize}

\section{Related Work}


\paragraph{Markov Potential Games.}
Our work is related to Markov Games \citep{stochasticGames} and to a subclass of games known as Markov Potential Games. \citet{GlobalConvergence_Leonardos} show that in the infinite-horizon case, an independent policy gradient ascent algorithm converges to an $\epsilon$-approximate Nash equilibrium after $\mathcal{O}(1/\epsilon^6)$ iterations. 
\citet{IndepPolicyGrad_Ding} extend this to function approximation and improve to an iteration complexity of $\mathcal{O}(1/\epsilon^5)$ using a slightly different update rule. Given gradient-oracles, both methods achieve an $\mathcal{O}(1/\epsilon^2)$ iteration complexity.
In the setting without gradient oracles, \citet{DBLP:conf/icml/Mao0ZB22} improve on the previous bounds with a $\mathcal{O}(1/\epsilon^{4.5})$ iteration complexity, by reducing the variance of the policy gradient algorithm.
Furthermore, it is well known that independent natural policy gradient (INPG) converges in MPGs \citep{NPG_converges, NPG_converges2, DBLP:conf/nips/ZhangMDS022, NPG_3} with sample complexity $\mathcal{O}(1/\epsilon^2)$. Under additional assumptions, \citet{NPG_converges2} improve this to an iteration complexity of $O(1/\epsilon)$, while \citet{NPG_3} reduce the linear dependence on the number of agents to a sublinear dependence, maintaining the $O(1/\epsilon^2)$ complexity.\\
In our setting, our guarantees additionally depend on an additive term that is dependent on the strength of the \textit{performative} effects. We recover the guarantees from prior work for the considered algorithms as the \textit{performative} effects become negligible.
\paragraph{Performative Prediction.}
Since the seminal work on performative prediction \citep{DBLP:conf/icml/PerdomoZMH20}, various adaptations have been studied. We refer to the survey by \citet{PP_PF}.
A recent line of work studies performative prediction in a multi-agent setting \citep{li2022multi, narang2023multiplayer, piliouras2023multi}.
More similar to reinforcement learning and related to our work, \citet{PP_StatefulWorld} study a version of performative prediction in which the target population exhibits a state that captures historical information to account for gradual changes in the distribution. See also work by \citet{DBLP:conf/aistats/0017W22, DBLP:conf/aaai/RayRDF22, DBLP:conf/aistats/Izzo0Y22}. 
For more details on the growing literature of performative prediction, we refer to Appendix~\ref{Additional Related work}.

\paragraph{Performative Reinforcement Learning.}

\citet{PerformativeReinforcementLearning} introduce the setting of performative reinforcement learning (PRL), where the rewards and transition function adapt to the policy.
They propose a repeated retraining approach over the occupancy measure space that converges to an approximate performatively stable point.
We generalize this technique to MPGs with performative effects and agent independent transitions. 
\citet{rank2024performative} study an extension, where the environment additionally changes based on the past dynamics.
Recently, \citet{PRL_LinearMDPs} extend PRL to the linear MDP setting, \citet{PDynamicalSystem} extend \textit{performative} effects in linear dynamical systems, while \citet{warwick189192} study corruption-robustness in PRL.

\section{Formal Setting}\label{sec: Formal Setting}

First, we provide some basic notation that we will use throughout. We let $\Delta(X)$ denote the probability simplex over the finite set $X$. Further, given two vectors $a, b \in \mathbb{R}^k$ for a natural number  $k \in \mathbb{N}$, we define $a/b$ as the vector obtained by component-wise division. 

\subsection{Performative Markov Game}
We define a $n$-player Game (MG) with \textit{performative} effects induced by the adopted joint policy $\Bar{\pi}$ as a tuple
\begin{align*}
    \mathcal{G}(\Bar{\pi}) = ( \mathcal{N}, \mathcal{S},  \{ \mathcal{A}_i \}^n_{i=1}, \{ r_{i, \Bar{\pi}} \}^n_{i=1}, P_{\Bar{\pi}}, \gamma, \rho), 
\end{align*}
where $\mathcal N = \{1, \dots, n \}$ is the set of agents, $\mathcal S$ is a (finite) state space, $\mathcal{A}_i$ is a finite action set of agent $i \in \mathcal{N}$ with the joint action space denoted as $\mathcal A \vcentcolon = \prod_{i \in \mathcal{N}} \mathcal A_i$ for $i \in \mathcal{N}$. Under the adopted policy $\Bar{\pi}$, $r_{i, \Bar{\pi}}: \mathcal{S} \times \mathcal A \rightarrow [0,1]$ is the $i$-th agent's reward function and $P_{\Bar{\pi}}$ is the transition probability measure, characterized by a distribution $P_{\Bar{\pi}}( \cdot \mid s, a)$ over $\mathcal S$, given an action $a \in \mathcal{A}$ and state $s \in \mathcal{S}$. Furthermore, $\gamma \in [0,1)$ is the discount factor and $\rho \in \Delta(S)$ corresponds to the initial state distribution. We abbreviate the cardinalities of the action space and state space as $| \mathcal{A} | = A, | \mathcal{A}_i | = A_i$ and $|\mathcal{S}| = S$, respectively.

We assume that the joint policy $\pi = (\pi_i)_{i \in \mathcal{N}} \in \Pi \coloneqq \Delta(\mathcal{A}_1)^{\mathcal{S}} \times \dots \times \Delta(\mathcal{A}_n)^{\mathcal{S}}$ is a product of individual, stochastic policies.
Specifically, we define agent's $i$ policy $\pi_i : \mathcal{S} \rightarrow \Delta(\mathcal{A}_i)$ as a probability distribution $\pi_i( \cdot | s)$ over $\mathcal{A}_i$, given a state $s \in \mathcal{S}$. For brevity, we denote $\Pi_i = \Delta(\mathcal{A}_i)$. 
We emphasize that the probability transition function and the agent specific reward functions adapt to the underlying joint policy $\Bar{\pi}$ specified in the game $\mathcal{G}(\Bar{\pi})$.

We use superscript $t \in [T]$ to indicate dependence on the iterations of the methods we study. For example, the agents' joint policy at iteration $t$ is denoted by $\pi^t$. When a function $f_{\pi^{t'}}^{\pi^t}$ depends on $\pi^t$ and $\mathcal{G}(\pi^{t'})$, we use the shorthand notation $f_{t'}^t$.

We use superscript $h$ to denote dependence on a single decision-making time step. More specifically, at a time step $h$, we are given a state $s^{h-1} \in \mathcal{S}$, an action profile $a^h = (a^h_{1},\dots,a^h_{n}) \in \mathcal{A}$ and observe rewards $r_{1, \Bar{\pi}}(s^{h-1}, a^h), \dots, r_{n, \Bar{\pi}}(s^{h-1}, a^h)$ and transition to a state $s^h$. 



The probability of a trajectory $\tau = (s^h, a^h, r^h)_{h = 0}^\infty$ is given by: $\PP_{\Bar{\pi}}^{\pi}(\tau) = \rho(s^0) \cdot \Pi_{h \geq 0} P_{\Bar{\pi}}(s^{h+1} | s^h, \pi(s^h))$, i.e., the trajectory $\tau$ is sampled by following policy $\pi$ under the transition function $P_{\Bar{\pi}}$. 
Furthermore, we define the \textit{performative} value function of agent $i \in \mathcal{N}$ in $\mathcal G(\Bar{\pi})$ under joint policy $\pi$ as:
\begin{equation}\label{eq: Value function}
    V_{i,\Bar{\pi}}^{\pi}(\rho) 
    = 
    \mathbb{E}_{ \tau \sim \PP_{\Bar{\pi}}^\pi} \Big[ \sum_{h=0}^\infty \gamma^h r_{i, \Bar{\pi}}^h \big| s^0 \sim \rho \Big].
\end{equation}


We denote by $Q_{i, \Bar{\pi}}^{\pi}: \mathcal{S} \times \mathcal{A} \rightarrow \RR$ the action-value function of an agent $i \in \mathcal{N}$ in $\mathcal G(\Bar{\pi})$ under joint policy $\pi$, that is
\begin{align*}
    Q_{i, \Bar{\pi}}^\pi(s,a) 
    &\coloneqq
    \mathbb{E}_{ \tau \sim \PP_{\Bar{\pi}}^\pi} \Big[ \sum_{h=0}^\infty \gamma^h r_{i, \Bar{\pi}}^h \big| s^0 = s, a^0 = a \Big].
\end{align*}



Given initial state distribution $\rho$ and policy $\pi$, the discounted state visitation distribution with respect to the underlying $\mathcal{G}(\Bar{\pi})$ is given by:
\begin{align*}
    d^\pi_{\rho, \Bar{\pi}}(s) = (1-\gamma)
\mathbb{E}_{\tau \sim \PP_{\Bar{\pi}}^\pi} \left[ \sum_{h=0}^\infty \gamma^h \mathbbm{1}_{\{s^h = s, a^h = a\}} \Big| s^0 \sim \rho \right].
\end{align*}
We adapt the definition of the distribution mismatch coefficient (see e.g., \citet{IndepPolicyGrad_Ding}) to our setting.
We use the shorthand notation $d_{\rho, t}^\pi$ for $d^\pi_{\rho,\pi^t}$. Furthermore, we define the distribution mismatch coefficient $\kappa_\rho \coloneqq \sup_t \sup_{\pi \in \Pi} \lVert d^\pi_{\rho,t}/\rho \rVert_\infty$ and the minimax-version $\Tilde{\kappa}_\rho \coloneqq \inf_{\nu \in \Delta(\mathcal{S})} \sup_t \sup_{\pi \in \Pi} \lVert d^\pi_{\rho,t} / \nu \rVert_\infty$.

\subsection{Performative Markov Potential Games}
We are extending the Markov Potential Games (MPGs), which are special classes of MGs.
MPGs assume the existence of a potential function, which, in our setting, would be policy-dependent. 
More formally, a MG with \textit{performative} effects $\mathcal G$ is potential if for every $\Bar{\pi}$, the induced game $\mathcal G(\Bar \pi)$ is a Markov Potential Game.
This implies that for all $\Bar{\pi}$ there exists a potential function $\Phi_{\Bar{\pi}}$ such that
\begin{equation}\label{Def: Markov Potential Game}
    \Phi_{\Bar{\pi}}^\pi(s) - \Phi_{\Bar{\pi}}^{\pi'_i, \pi_{-i}}(s) =  V_{i,\Bar{\pi}}^{\pi_i, \pi_{-i}}(s) - V_{i,\Bar{\pi}}^{\pi'_i, \pi_{-i}}(s)
\end{equation}
for all policies $\pi_i, \pi'_i \in \Pi_i, \pi_{-i} \in \Pi_{-i}$, states $s$ and agents $i \in \mathcal{N}$.
We set $\Phi(\nu) = \E_{s \sim \nu}[\Phi(s)]$.
In the case of a static environment, i.e. $r_{i,\pi'} = r_{i, \pi''}$ for any $i \in \mathcal{N}$ and $P_{\pi'} = P_{\pi''}$ for any $\pi', \pi'' \in \Pi$, then the \textit{performative} MPG simplifies to the standard MPG framework, introduced in \citet{GlobalConvergence_Leonardos}.

\subsection{Solution Concepts}
We generalize the solution concepts of a performatively optimal policy and a performatively stable policy from single-agent PRL \citep{PerformativeReinforcementLearning}.
The following notion of a Nash equilibrium (NE) generalizes the concept of a performatively optimal policy.
\begin{definition}[$\epsilon$-NE]\label{Def: Nash policy}
A policy profile $\pi = (\pi_i)_{i \in \mathcal{N}} \in \Pi$ is an $\epsilon$-NE if it satisfies
\begin{align*}  V_{i,(\pi_i, \pi_{-i})}^{(\pi_i, \pi_{-i})}(\rho) \geq V_{i,(\pi'_i, \pi_{-i})}^{(\pi'_i, \pi_{-i})}(\rho) - \epsilon,
\end{align*}
for all $i \in \mathcal N$, all $\pi'_i \in \Delta(\mathcal{A}_i)^\mathcal{S}$.
\end{definition}
 If $\epsilon$ is implicit, we alternatively say that $\pi$ is an approximate NE. If the definition holds for $\epsilon=0$, then $\pi$ is called an exact NE.  
We extend the notion of a performatively stable policy to the multi-agent setting through the notion of a performatively stable equilibrium (PSE), defined as follows. 
\begin{definition}[$\epsilon$-PSE]\label{Def: PSE}
    A policy profile $\pi = (\pi_i)_{i \in \mathcal{N}} \in \Pi$ is an $\epsilon$-Performatively Stable Equilibrium if it satisfies
    \begin{align*}
V_{i,\pi}^{(\pi_i, \pi_{-i})}(\rho) \geq V_{i,\pi}^{(\pi'_i,\pi_{-i})}(\rho) - \epsilon,
    \end{align*}
    for any $i \in \mathcal N$ and for all $\pi'_i \in \Delta(\mathcal{A}_i)^\mathcal{S}$.
\end{definition}
If the definition holds for $\epsilon=0$, we say that $\pi$ is an exact PSE. In the latter solution concept, i.e., PSE, we essentially uncouple the deviation in policy $\pi$ from the given game $\mathcal G (\pi)$. Hence, we emphasize that NE and PNE are two distinct equilibrium concepts in general \textit{performative} Markov Potential Games. 

\subsection{Learning Objective}


At each iteration $t$ of the learning protocol, the agents deploy a joint policy $\pi^t$, which induces the game $\mathcal G({\pi^{t}})$. Then, each agent updates its policy $\pi^t_i$ to $\pi^{t+1}_i$ based on its performance (e.g., the return) in $\mathcal G({\pi^{t}})$. Quantities relevant for updating $\pi^t_i$ can be estimated from the data obtained when deploying $\pi^t$.

Given a sequence of policies $\pi^1,\dots,\pi^T$, we introduce a new regret notion called \textit{performative} regret (PReg). Formally, 
\begin{align*}
    &\mathrm{PReg}(T) \coloneqq \frac{1}{T} \sum_{t=1}^T \max_{i \in \mathcal{N}} \max_{\pi'_i} \left( V_{i,\pi^t}^{\pi'_i, \pi^{t}_{-i}}(\rho) - V_{i,\pi^t}^{\pi^{t}}(\rho) \right).
\end{align*} 
Intuitively, this notion captures the worst-case suboptimality gap across agents, averaged over the time horizon $T$, while accounting for the environment's response.  
If we bound $\mathrm{PReg}(T) \leq \epsilon$, 
then there is at least one iteration $t \in [T]$ such that $\pi^t$ is an $\epsilon$-PSE.
Notably, the average policy $\pi_{avg} = \frac{1}{T} \sum_{t=1}^T \pi_t$ does not necessarily correspond to an $\epsilon$-PSE in general, as this depends on the structure of the game $\mathcal{G}(\pi_{avg})$.

\section{Characterization of Solution Concepts} \label{sec.main_results}

In this chapter, we argue that a PSE exists in every MPG with \textit{performative} effects. Furthermore, a PSE corresponds to an approximate NE. 
We require a sensitivity assumption that bounds the magnitude of the \textit{performative} effects, which is a standard assumption in the \textit{performative} prediction/reinforcement learning literature.
Unlike prior work in PRL \citep{PerformativeReinforcementLearning, rank2024performative}, our assumption is made with respect to the policy space. 

\begin{assumption}[Sensitivity]\label{Assump: Sensitivity}
For any two policies $\pi$ and $\pi'$, we have that
    \begin{align*}
        \lVert r_{i,\pi}(\cdot, \cdot) - r_{i,\pi'}(\cdot, \cdot) \rVert_2 \leq \perfc_r \cdot \lVert \pi-\pi' \rVert_2,
        \\ 
        \lVert P_\pi(\cdot \mid \cdot, \cdot) - P_{\pi'}(\cdot \mid \cdot, \cdot ) \rVert_2 \leq \perfc_p \cdot \lVert \pi-\pi' \rVert_2,  
    \end{align*}
    for all $i \in \mathcal{N}$.
\end{assumption}

At a high-level idea, we prove the existence of a PSE by using the fact that $\Phi_{\pi}^{\pi'}$ is continuous in $\pi'$ for a fixed underlying game induced by $\pi$ and that $\Phi_{\pi}^{\pi'}$ is continuous in $\pi$ for a fixed policy $\pi'$. More specifically, we show that the function $\widehat{\Phi} = \argmax_{\pi' \in \Pi} \Phi_{\pi}^{\pi'}(\rho)$ is upper hemicontinuous, which allows us to use the Kakutani fixed point theorem \citep{glicksberg1952} to show the existence of a fixed point that corresponds to a PSE. 

\begin{lemma}\label{lemma: potential fixed point}
    For any state distribution $\rho \in \Delta(S)$, there exists a policy $\pi^* \in \Pi$ 
    such that 
    \begin{equation}\label{Eq.: Potential decoupled maximum}
         \Phi_{\pi^*}^{\pi^*}(\rho) - \Phi_{\pi^*}^{\pi_i, \pi^*_{-i}}(\rho)
        \geq 0,
    \end{equation}
    for all $i \in \mathcal{N}$, $\pi_i \in \Pi_i$.  
\end{lemma}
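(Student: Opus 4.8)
The statement is a fixed-point claim in disguise. Averaging the potential identity \eqref{Def: Markov Potential Game} over $s\sim\rho$ gives, for every agent $i$ and every $\pi_i\in\Pi_i$,
\[
\Phi_{\pi^*}^{\pi^*}(\rho)-\Phi_{\pi^*}^{(\pi_i,\pi^*_{-i})}(\rho)=V_{i,\pi^*}^{\pi^*}(\rho)-V_{i,\pi^*}^{(\pi_i,\pi^*_{-i})}(\rho),
\]
so \eqref{Eq.: Potential decoupled maximum} holds for all $i,\pi_i$ exactly when $\pi^*$ is a Nash equilibrium of the self-induced game $\mathcal G(\pi^*)$ (an exact PSE). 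I would produce such a $\pi^*$ with the Kakutani fixed-point theorem \citep{glicksberg1952} applied to a best-response correspondence on $\Pi=\prod_i\Delta(\mathcal A_i)^{\mathcal S}$, which is nonempty, convex and compact; note that, by this reduction, the potential structure is used only to rewrite the conclusion and plays no role in the existence argument itself.

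Fix an auxiliary \emph{full-support} distribution $\mu\in\Delta(\mathcal S)$ (e.g.\ uniform) and define $\mathrm{BR}:\Pi\rightrightarrows\Pi$ by $\mathrm{BR}(\bar\pi)=\prod_{i}B_i(\bar\pi)$, where $B_i(\bar\pi)=\argmax_{\sigma_i\in\Delta(\mathcal A_i)^{\mathcal S}}V_{i,\bar\pi}^{(\sigma_i,\bar\pi_{-i})}(\mu)$ and $\sigma_i$ optimizes against the fixed single-agent MDP obtained by freezing the opponents at $\bar\pi_{-i}$ and the dynamics/rewards at those of $\mathcal G(\bar\pi)$. A fixed point $\pi^*\in\mathrm{BR}(\pi^*)$ has each agent best-responding inside $\mathcal G(\pi^*)$; since $\mu$ has full support, such a best response is greedy with respect to the optimal action-value at \emph{every} state, hence an optimal policy of that MDP, hence dominates any other policy from \emph{any} initial distribution, so $V_{i,\pi^*}^{\pi^*}(\rho)\ge V_{i,\pi^*}^{(\pi_i,\pi^*_{-i})}(\rho)$ for all $\rho$ and all $\pi_i$, which yields \eqref{Eq.: Potential decoupled maximum} via the identity above. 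It remains to verify the hypotheses of Kakutani's theorem for $\mathrm{BR}$.

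For \emph{upper hemicontinuity with nonempty, compact values}: Assumption~\ref{Assump: Sensitivity} makes $\bar\pi\mapsto r_{i,\bar\pi}$ and $\bar\pi\mapsto P_{\bar\pi}$ Lipschitz, hence continuous; writing $V_{i,\bar\pi}^{\pi}=(I-\gamma P_{\bar\pi}^{\pi})^{-1}r_{i,\bar\pi}^{\pi}$ as a vector over states, where the induced state-to-state transition matrix and expected-reward vector depend continuously (indeed polynomially) on $(\bar\pi,\pi)$ and $\lVert(I-\gamma P)^{-1}\rVert\le(1-\gamma)^{-1}$, the map $(\bar\pi,\sigma_i)\mapsto V_{i,\bar\pi}^{(\sigma_i,\bar\pi_{-i})}(\mu)$ is jointly continuous; Berge's maximum theorem then gives each $B_i$ (hence $\mathrm{BR}$) nonempty- and compact-valued with closed graph. \emph{Convexity} is the one delicate point, since $\sigma_i\mapsto V_{i,\bar\pi}^{(\sigma_i,\bar\pi_{-i})}(\mu)$ is not concave in general, and it is precisely here that the full-support $\mu$ — rather than the possibly degenerate $\rho$ — is essential: because $d^{(\sigma_i,\bar\pi_{-i})}_{\mu,\bar\pi}$ then charges every state, standard MDP theory gives $\sigma_i\in B_i(\bar\pi)$ iff $\operatorname{supp}\sigma_i(\cdot\mid s)\subseteq\argmax_{a_i}Q^{\star}_{i,\bar\pi}(s,a_i)$ at every $s$ (with $Q^{\star}_{i,\bar\pi}$ the optimal action-value of agent $i$'s induced MDP), i.e.\ $B_i(\bar\pi)=\prod_{s\in\mathcal S}\Delta\!\big(\argmax_{a_i}Q^{\star}_{i,\bar\pi}(s,a_i)\big)$, a product of faces of simplices, which is convex.

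With all hypotheses in place, Kakutani's theorem yields $\pi^*\in\mathrm{BR}(\pi^*)$, and the first paragraph converts this into \eqref{Eq.: Potential decoupled maximum}. I expect the convexity of the best-response sets to be the genuine obstacle: the most direct correspondence suggested by the potential, $\bar\pi\mapsto\argmax_{\pi'\in\Pi}\Phi_{\bar\pi}^{\pi'}(\rho)$, is \emph{not} convex-valued in general (already for a $2\times2$ identical-interest game its argmax can consist of two isolated profiles), so one must route the fixed point through the state-wise-greedy description of per-agent best responses against a full-support reference distribution, which is what restores convexity while leaving upper hemicontinuity intact.
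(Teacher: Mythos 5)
Your proof is correct, but it takes a genuinely different route from the paper's. The paper applies Berge's maximum theorem and the Kakutani fixed-point theorem directly to the potential-argmax correspondence $\pi \mapsto \argmax_{\pi'\in\Pi}\Phi_{\pi}^{\pi'}(\rho)$: continuity of $\Phi_{\pi}^{\pi'}$ in $\pi'$ (via the telescoping decomposition into per-agent value differences) and in $\pi$ (via Assumption~\ref{Assump: Sensitivity}) gives upper hemicontinuity, and a fixed point of that correspondence yields \eqref{Eq.: Potential decoupled maximum} immediately. You instead apply Kakutani to the product of per-agent best-response correspondences in the self-induced game $\mathcal{G}(\bar\pi)$ evaluated at a full-support reference distribution $\mu$, and only afterwards use the potential identity \eqref{Def: Markov Potential Game}, averaged over $\rho$, to translate the resulting exact PSE into the potential inequality. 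What your route buys is exactly the point you flag: Kakutani requires convex values, and the argmax of $\Phi_{\bar\pi}^{\,\cdot}(\rho)$ over joint product policies need not be convex (your two-point coordination-game example is apt), a hypothesis the paper's proof does not verify; your state-wise greedy characterization $B_i(\bar\pi)=\prod_{s}\Delta\bigl(\argmax_{a_i}Q^{\star}_{i,\bar\pi}(s,a_i)\bigr)$ restores convexity, at the cost of importing standard single-agent MDP facts and the auxiliary full-support $\mu$ (needed so that optimality at $\mu$ implies state-wise optimality, hence optimality at the possibly degenerate $\rho$). A side effect is that your existence argument never uses the potential structure, so it in fact shows that any performative Markov game with a continuous response admits an exact PSE, with the MPG property used only to rewrite the conclusion; the paper's argument is shorter and stays entirely at the level of the potential, but as written it leaves the convex-valuedness required by Kakutani unaddressed.
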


The existence result follows by a simple argument over the potential functions, see Appendix~\ref{App: Existence PSE}.  


\begin{theorem}\label{Theorem: Existence stable policy}
    Under Assumption~$\ref{Assump: Sensitivity}$, every \textit{performative} MPG admits a PSE. 
\end{theorem}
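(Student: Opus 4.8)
The plan is to set up a best-response correspondence on the joint policy space and apply Kakutani's fixed point theorem, using Lemma~\ref{lemma: potential fixed point} as the key structural input. Concretely, define the correspondence $F : \Pi \rightrightarrows \Pi$ by $F(\bar\pi) \coloneqq \argmax_{\pi \in \Pi} \Phi_{\bar\pi}^{\pi}(\rho)$, i.e.\ for a fixed induced game $\mathcal{G}(\bar\pi)$ we collect all joint policies that maximize the (game-$\bar\pi$) potential evaluated at the initial distribution $\rho$. The first step is to verify the hypotheses of Kakutani: $\Pi = \prod_i \Delta(\mathcal{A}_i)^{\mathcal S}$ is nonempty, compact, and convex; for each fixed $\bar\pi$ the map $\pi \mapsto \Phi_{\bar\pi}^{\pi}(\rho)$ is continuous (it is a value-function difference in $\mathcal{G}(\bar\pi)$, hence continuous in $\pi$ by the usual Bellman/geometric-series argument, using that $r_{i,\bar\pi}$ and $P_{\bar\pi}$ are fixed once $\bar\pi$ is fixed), so $F(\bar\pi)$ is nonempty and compact; and a short argument (or the structure of $\Phi$) gives that $F(\bar\pi)$ is convex — if this needs care I would instead first reduce to Lemma~\ref{lemma: potential fixed point} and only assert nonemptiness and closed graph.

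The second, crucial step is upper hemicontinuity (closed graph) of $F$. This is where Assumption~\ref{Assump: Sensitivity} enters: I would show $(\bar\pi, \pi) \mapsto \Phi_{\bar\pi}^{\pi}(\rho)$ is \emph{jointly} continuous. Continuity in $\pi$ for fixed $\bar\pi$ is above; continuity in $\bar\pi$ for fixed $\pi$ follows because $r_{i,\bar\pi}$ and $P_{\bar\pi}$ are Lipschitz in $\bar\pi$ (in the $\ell_2$ norms) by Assumption~\ref{Assump: Sensitivity}, and the value function $V_{i,\bar\pi}^{\pi}(\rho)$ — hence the potential $\Phi_{\bar\pi}^{\pi}(\rho)$, being a difference of two such value functions — depends continuously (indeed Lipschitz-continuously) on the reward vector and transition kernel, via a standard perturbation bound for discounted MDPs of the form $\lVert V_{P}^{\pi} - V_{P'}^{\pi}\rVert_\infty \le \tfrac{1}{1-\gamma}\lVert r - r'\rVert_\infty + \tfrac{\gamma}{(1-\gamma)^2}\lVert P - P'\rVert_\infty$ (with the $\sqrt{S}$ factor when converting from $\ell_2$ to $\ell_\infty$, matching the $\delta_{r,p}$ defined in Table~\ref{tab:main-results}). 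Joint continuity of the objective plus compactness of $\Pi$ then yields, by the Berge maximum theorem, that $F$ has a closed graph and nonempty compact values, so Kakutani applies and produces $\bar\pi^{*} \in F(\bar\pi^{*})$.

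The final step is to translate the fixed point into the PSE inequality of Definition~\ref{Def: PSE}. Since $\bar\pi^{*} \in \argmax_{\pi}\Phi_{\bar\pi^{*}}^{\pi}(\rho)$, in particular $\Phi_{\bar\pi^{*}}^{\bar\pi^{*}}(\rho) \ge \Phi_{\bar\pi^{*}}^{\pi_i',\bar\pi^{*}_{-i}}(\rho)$ for every agent $i$ and every unilateral deviation $\pi_i'$ — this is exactly Lemma~\ref{lemma: potential fixed point}. Now invoke the potential-game identity \eqref{Def: Markov Potential Game} for the \emph{fixed} game $\mathcal{G}(\bar\pi^{*})$: $\Phi_{\bar\pi^{*}}^{\bar\pi^{*}}(\rho) - \Phi_{\bar\pi^{*}}^{\pi_i',\bar\pi^{*}_{-i}}(\rho) = V_{i,\bar\pi^{*}}^{\bar\pi^{*}}(\rho) - V_{i,\bar\pi^{*}}^{\pi_i',\bar\pi^{*}_{-i}}(\rho)$, so the left side being $\ge 0$ gives $V_{i,\bar\pi^{*}}^{\bar\pi^{*}}(\rho) \ge V_{i,\bar\pi^{*}}^{\pi_i',\bar\pi^{*}_{-i}}(\rho)$ for all $i$ and all $\pi_i'$, i.e.\ $\bar\pi^{*}$ is an exact PSE (so certainly an $\epsilon$-PSE). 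I expect the main obstacle to be the closed-graph / upper-hemicontinuity verification — specifically, cleanly establishing joint continuity of $\bar\pi \mapsto \Phi_{\bar\pi}^{\pi}(\rho)$ via the MDP perturbation bound and the sensitivity assumption, and (if one wants Kakutani rather than just an abstract fixed-point theorem) arguing convexity of the argmax set $F(\bar\pi)$; the rest is bookkeeping. Since the excerpt already isolates Lemma~\ref{lemma: potential fixed point}, the cleanest writeup may simply cite it and then run the continuity-of-the-correspondence argument to get the fixed point.
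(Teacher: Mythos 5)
Your proposal is correct and follows essentially the same route as the paper: the paper's proof of Lemma~\ref{lemma: potential fixed point} defines the very same correspondence $\widehat{\Phi}(\pi)=\argmax_{\pi'}\Phi_{\pi}^{\pi'}(\rho)$, establishes continuity in $\pi'$ by telescoping value-function differences and continuity in $\pi$ via Assumption~\ref{Assump: Sensitivity}, invokes Berge's maximum theorem for upper hemicontinuity and Kakutani for the fixed point, and then Theorem~\ref{Theorem: Existence stable policy} translates the fixed point into the PSE inequality through the potential identity exactly as you do. The convexity-of-the-argmax-set issue you flag for Kakutani is likewise left implicit in the paper, so your writeup is at the same level of rigor.
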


It is important to understand the behavior at a PSE. We know that under Assumption~\ref{Assump: Sensitivity}, the difference between two value functions under two different $\mathcal{G}(\pi')$ and $\mathcal{G}(\pi'')$ is bounded, as formalized in the following lemma. Its proof is provided in Appendix~\ref{App: Bounding costs due to performativity}.

\begin{lemma}\label{lemma: Bound noise terms}
Under Assumption~\ref{Assump: Sensitivity}, in every \textit{performative} MG, we have 
    \begin{align*}
      \left| V_{i,\pi'}^{\pi}(s) - V_{i,\pi''}^{\pi}(s) \right|
      \leq \delta_{r, p} \cdot \lVert \pi' - \pi'' \rVert_2,
\end{align*}
for all $\pi', \pi'' \in \Pi$, where $\delta_{r, p} \coloneqq \frac{1}{1-\gamma}  \left(\perfc_r +\frac{ \gamma \cdot \perfc_p \sqrt{S}}{1-\gamma} \right)$.
\end{lemma}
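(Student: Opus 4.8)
The plan is to bound the difference between value functions of the \emph{same} evaluation policy $\pi$ under two different induced games $\mathcal{G}(\pi')$ and $\mathcal{G}(\pi'')$ via a performance-difference-type argument that isolates two sources of discrepancy: the reward functions $r_{i,\pi'}$ versus $r_{i,\pi''}$, and the transition kernels $P_{\pi'}$ versus $P_{\pi''}$. First I would write both value functions in the standard closed form $V_{i,\bar\pi}^\pi(s) = \langle e_s, (I - \gamma P_{\bar\pi}^\pi)^{-1} r_{i,\bar\pi}^\pi\rangle$, where $P_{\bar\pi}^\pi$ is the state-to-state transition matrix induced by following $\pi$ under kernel $P_{\bar\pi}$, and $r_{i,\bar\pi}^\pi(s) = \sum_a \pi(a\mid s) r_{i,\bar\pi}(s,a)$ is the expected immediate reward. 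Then I would apply the resolvent identity $(I-\gamma A)^{-1} - (I-\gamma B)^{-1} = \gamma (I-\gamma A)^{-1}(A-B)(I-\gamma B)^{-1}$ with $A = P_{\pi'}^\pi$, $B = P_{\pi''}^\pi$, to split
\[
V_{i,\pi'}^\pi - V_{i,\pi''}^\pi = (I-\gamma P_{\pi'}^\pi)^{-1}\bigl(r_{i,\pi'}^\pi - r_{i,\pi''}^\pi\bigr) + \gamma (I-\gamma P_{\pi'}^\pi)^{-1}\bigl(P_{\pi'}^\pi - P_{\pi''}^\pi\bigr)(I-\gamma P_{\pi''}^\pi)^{-1} r_{i,\pi''}^\pi.
\]

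Next I would bound each term. For the first term, $\|(I-\gamma P_{\pi'}^\pi)^{-1}\|$ contributes a factor $\tfrac{1}{1-\gamma}$ (in the appropriate operator norm — one has to be a little careful here, since $(I-\gamma P)^{-1}$ is a contraction-like operator in $\ell_\infty\to\ell_\infty$ with norm $\le \tfrac{1}{1-\gamma}$, and I would route the $\ell_2$ bound through $\|x\|_\infty \le \|x\|_2$ and $\|x\|_2 \le \sqrt{S}\|x\|_\infty$ as needed, or alternatively work row-wise), and $\|r_{i,\pi'}^\pi - r_{i,\pi''}^\pi\|_2 \le \|r_{i,\pi'} - r_{i,\pi''}\|_2 \le \perfc_r\|\pi'-\pi''\|_2$ by Assumption \ref{Assump: Sensitivity}, since averaging against $\pi(\cdot\mid s)$ is non-expansive. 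For the second term, the inner resolvent applied to the reward vector $r_{i,\pi''}^\pi \in [0,1]^S$ gives a vector with entries in $[0,\tfrac{1}{1-\gamma}]$, hence $\ell_\infty$-norm $\le \tfrac{1}{1-\gamma}$; then $\|(P_{\pi'}^\pi - P_{\pi''}^\pi)\,v\|$ is controlled by $\|P_{\pi'}^\pi - P_{\pi''}^\pi\|$ times $\|v\|_\infty$, and $\|P_{\pi'}^\pi - P_{\pi''}^\pi\|_2 \le \|P_{\pi'} - P_{\pi''}\|_2 \le \perfc_p\|\pi'-\pi''\|_2$ (again averaging against $\pi$ is non-expansive); the outer resolvent contributes another $\tfrac{1}{1-\gamma}$ and the explicit $\gamma$ is already present. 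Collecting the factors yields the bound $\delta_{r,p}\|\pi'-\pi''\|_2$ with $\delta_{r,p} = \tfrac{1}{1-\gamma}(\perfc_r + \tfrac{\gamma\perfc_p\sqrt{S}}{1-\gamma})$, the $\sqrt{S}$ entering exactly where an $\ell_\infty \to \ell_2$ conversion on the transition-perturbation term is used.

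The main obstacle is the careful bookkeeping of norms: Assumption \ref{Assump: Sensitivity} is stated in $\ell_2$ over the (flattened) reward and transition objects, while the natural estimates on resolvents $(I-\gamma P)^{-1}$ are in $\ell_\infty$ (since $P$ is row-stochastic but not symmetric, so its $\ell_2$ operator norm need not be $\le 1$). Reconciling these — deciding exactly where to pay the $\sqrt{S}$ factor so that it lands only on the transition term and not the reward term, matching the stated $\delta_{r,p}$ — is the delicate part; everything else is the standard simulation-lemma computation. An alternative, perhaps cleaner, route avoiding the resolvent identity is a direct telescoping/coupling argument: bound $|V_{i,\pi'}^\pi(s) - V_{i,\pi''}^\pi(s)|$ by unrolling the Bellman recursion, at each step paying $\gamma^h$ times either a one-step reward gap ($\le \perfc_r\|\pi'-\pi''\|_2$ after summing the geometric series, the $\tfrac{1}{1-\gamma}$ appearing) or a one-step transition gap that propagates a value bounded by $\tfrac{1}{1-\gamma}$ (giving the $\tfrac{\gamma\perfc_p\sqrt S}{(1-\gamma)^2}$ term); I would present whichever keeps the norm conversions most transparent.
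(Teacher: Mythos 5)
Your plan is correct, and it is essentially the standard simulation-lemma computation that the paper also uses, just packaged differently. The paper works at the level of $Q$-functions: it bounds $|V_{i,x}^{\pi}(s)-V_{i,x'}^{\pi}(s)|\le\max_{s,a}|Q_{i,x}^{\pi}(s,a)-Q_{i,x'}^{\pi}(s,a)|$, expands both $Q$'s via the Bellman equation, and takes the supremum over $(s,a)$ to get a self-referential inequality whose solution gives $\frac{1}{1-\gamma}$ on the reward gap and $\frac{\gamma}{(1-\gamma)^2}$ on the $\ell_1$ gap of the transition rows; it then converts to the $\ell_2$ sensitivity exactly as you do, via $\|\cdot\|_\infty\le\|\cdot\|_2$ for rewards and the row-wise $\ell_1\le\sqrt{S}\,\ell_2$ bound for transitions. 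Your primary route replaces that recursion by the resolvent identity on $(I-\gamma P^{\pi}_{\bar\pi})^{-1}$, which yields the same two-term (reward perturbation plus transition perturbation) split in closed form; it buys slightly more mechanical bookkeeping at the cost of having to be explicit that all resolvent bounds are $\ell_\infty$-operator bounds (since a stochastic matrix need not be an $\ell_2$ contraction), a point you correctly flag. Your fallback "unroll the Bellman recursion" route is the paper's argument in all but name. One cosmetic imprecision: the $\sqrt{S}$ is best described as the $\ell_1\le\sqrt{S}\,\ell_2$ conversion applied to each perturbed transition row (so that it never touches the reward term), rather than an "$\ell_\infty\to\ell_2$" conversion; with that phrasing your constants land exactly on $\delta_{r,p}=\frac{1}{1-\gamma}\bigl(\perfc_r+\frac{\gamma\perfc_p\sqrt{S}}{1-\gamma}\bigr)$ as in the paper.
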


This result allows us to show that even though PSE and NE may differ, every PSE corresponds to an approximate NE, with an approximation factor that vanishes as the \textit{performative} effects approach zero. Hence, the two solution concepts coincide as $\perfc_r, \perfc_p \rightarrow 0$. We provide the proof in Appendix~\ref{App: PSE is approximate NE}.

\begin{theorem}\label{Thm: PSE is approximate NE}
    Given Assumption~$\ref{Assump: Sensitivity}$ in a \textit{performative} MPG, every PSE is a $\mathcal{\delta}_{r, p}$-NE.
\end{theorem}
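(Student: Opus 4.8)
The plan is to show that any PSE $\pi$ satisfies the $\epsilon$-NE inequality of Definition~\ref{Def: Nash policy} with $\epsilon = \delta_{r,p}$, by bridging the two solution concepts through the value-function perturbation bound of Lemma~\ref{lemma: Bound noise terms}. Let $\pi = (\pi_i)_{i \in \mathcal N}$ be an (exact) PSE and fix an agent $i \in \mathcal N$ and an arbitrary deviation $\pi'_i \in \Delta(\mathcal A_i)^{\mathcal S}$. Write $\pi' \coloneqq (\pi'_i, \pi_{-i})$ for the deviated joint policy. The NE inequality we must establish is
\begin{align*}
  V_{i,\pi'}^{\pi'}(\rho) - V_{i,\pi}^{\pi}(\rho) \leq \delta_{r,p}.
\end{align*}

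First I would decompose the left-hand side by inserting the ``mixed'' term $V_{i,\pi}^{\pi'}(\rho)$, which uses the deviated policy $\pi'$ to act but keeps the \emph{un-deviated} game $\mathcal G(\pi)$:
\begin{align*}
  V_{i,\pi'}^{\pi'}(\rho) - V_{i,\pi}^{\pi}(\rho)
  = \left( V_{i,\pi'}^{\pi'}(\rho) - V_{i,\pi}^{\pi'}(\rho) \right)
    + \left( V_{i,\pi}^{\pi'}(\rho) - V_{i,\pi}^{\pi}(\rho) \right).
\end{align*}
The second bracket is $\leq 0$ directly by the definition of an exact PSE (Definition~\ref{Def: PSE}), since in $\mathcal G(\pi)$ no unilateral deviation of agent $i$ improves its value. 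For the first bracket, I would apply Lemma~\ref{lemma: Bound noise terms} with the acting policy held fixed at $\pi'$ and the two underlying games being $\mathcal G(\pi')$ and $\mathcal G(\pi)$, giving $V_{i,\pi'}^{\pi'}(\rho) - V_{i,\pi}^{\pi'}(\rho) \leq \delta_{r,p} \cdot \lVert \pi' - \pi \rVert_2$. Since $\pi'$ and $\pi$ differ only in the $i$-th coordinate and each $\pi_i(\cdot\mid s), \pi'_i(\cdot\mid s)$ lies in a simplex, $\lVert \pi' - \pi\rVert_2 = \lVert \pi'_i - \pi_i\rVert_2 \leq \sqrt{2S}$ at worst; but more carefully, since the value functions take values in $[0, \tfrac{1}{1-\gamma}]$ one can also just bound the first bracket by the trivial range bound — however, using Lemma~\ref{lemma: Bound noise terms} and absorbing the policy-distance factor into the definition of $\delta_{r,p}$ (or noting $\lVert\pi'-\pi\rVert_2 \le 1$ after appropriate normalization/convention, matching how $\delta_{r,p}$ is stated in the theorem) yields the clean bound $\delta_{r,p}$. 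I would adopt whichever normalization convention the paper uses for $\lVert\pi-\pi'\rVert_2$ in Assumption~\ref{Assump: Sensitivity} so that the stated constant $\delta_{r,p}$ comes out exactly.

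Combining, $V_{i,\pi'}^{\pi'}(\rho) - V_{i,\pi}^{\pi}(\rho) \leq \delta_{r,p} + 0 = \delta_{r,p}$, and since $i$ and $\pi'_i$ were arbitrary this is precisely the statement that $\pi$ is a $\delta_{r,p}$-NE. For an approximate ($\epsilon$-)PSE rather than an exact one, the second bracket contributes an extra $+\epsilon$, giving a $(\delta_{r,p}+\epsilon)$-NE; I would state the exact-PSE version as in the theorem and remark on the approximate case. The main obstacle — really the only subtle point — is getting the policy-distance factor $\lVert\pi'-\pi\rVert_2$ in the Lemma~\ref{lemma: Bound noise terms} bound to collapse to the exact constant $\delta_{r,p}$ advertised in the theorem statement; this hinges on the normalization convention for the policy norm (and possibly on a convention that $\lVert\pi-\pi'\rVert_2\le 1$ for product-of-simplex policies under the metric used), so I would make that convention explicit up front and verify it is consistent with how $\delta_{r,p}$ is used elsewhere in the paper.
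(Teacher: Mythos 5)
Your proposal is correct and follows essentially the same route as the paper: the paper's proof inserts the very same mixed term $V_{i,\pi^*}^{(\pi_i,\pi^*_{-i})}(\rho)$, kills the second bracket via Definition~\ref{Def: PSE}, and bounds the first bracket via Lemma~\ref{lemma: Bound noise terms}. The only subtlety you flag — collapsing the factor $\lVert \pi'-\pi\rVert_2$ to obtain exactly $\delta_{r,p}$ — is also glossed over in the paper's proof, which silently drops that factor, so your treatment is if anything more careful on that point.
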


The next sections focus on algorithms for computing a PSE. 

\section{Independent Gradient methods}\label{sec: IGM}

In this section, we study gradient-based methods and provide convergence guarantees. First, in Section~\ref{sec. IGM GO}, we assume that we have access to a gradient oracle. In Section~\ref{Subsec: Independent Learning without Oracle}, we relax the latter assumption.


We show that Lemma~\ref{lemma: Bound noise terms} is critical to generalize the theoretical guarantees for independent policy gradient ascent (IPGA), independent natural policy gradient ascent (INPG), and regularized INPG (INPG reg.) to \textit{performative} MPGs. Specifically, we leverage Lemma~\ref{lemma: Bound noise terms} to generalize the potential improvement argument from prior work \citep{IndepPolicyGrad_Ding, DBLP:conf/nips/ZhangMDS022, NPG_3} to account for \textit{performative} effects. 
We refer to Appendix~\ref{app: proofs and derivations} for the complete proofs of this section.

\subsection{Independent Gradient Methods with Oracle} \label{sec. IGM GO}

Next, we present the independent gradient methods and show the corresponding convergence guarantees. 

\subsubsection{Independent Policy Gradient Ascent}
First, we consider the IPGA introduced by \citet{IndepPolicyGrad_Ding}. Each agent $i$ updates its policy at 
iteration $t$ according to the following update rule
\begin{equation}\label{eq:PGA ALGO}
\begin{split}
    \pi^{t+1}_i(\cdot | s) &= \argmax_{\pi_i(\cdot | s ) \in \Delta(\mathcal{A}_i)} \Big\{ \big\langle \pi_i(\cdot | s), \Bar{Q}_{i, t}^t(s, \cdot) \big\rangle_{\mathcal{A}_i} \\ 
    &\quad - \frac{1}{2\eta} \cdot \lVert \pi_i(\cdot | s) - \pi_i^t(\cdot|s) \rVert_2^2\Big\},
\end{split}
\end{equation}
where $\eta$ is the learning rate and $\Bar{Q}_{i,t}^{t}(s, a_i) = \sum_{a_{-i}} \pi^{t}_{-i}(a_{-i} \mid s) \cdot Q_{i, t}^{t}(s, a_i, a_{-i})$ is the averaged $Q_i$-value at step $t$. Let $C_{\Phi} \in \mathbb{R}$ with $C_{\Phi} \geq | \Phi^\pi_{\Bar{\pi}}(\mu) - \Phi_{\Bar{\pi}}^{\pi'}(\mu) |$ for any $\pi, \pi', \Bar{\pi} \in \Pi, \mu \in \Delta(\mathcal{S})$. Such a constant $C_{\Phi}$ always exists and is trivially upper-bounded by $\frac{n}{1 - \gamma}$ \citep[Lemma 18]{IndepPolicyGrad_Ding}.

\begin{theorem}\label{Theorem: Convergence Result - Infinite Sample Case}
    For $\eta=\frac{(1-\gamma)^4}{8 \min\{\kappa_\rho, S\}^3 n A}$, running the IPGA algorithm \eqref{eq:PGA ALGO} satisfies
    \begin{align*}
    &\mathrm{PReg}(T) \lesssim  
    \frac{\min\{ \kappa_\rho, S \}^2 \sqrt{A n C_\Phi}}{ \sqrt{T} \cdot (1-\gamma)^3} \\
    &+ \frac{\sqrt{8 \min\{\kappa_\rho, S\}^3 n A }}{(1-\gamma)^5} \cdot \sqrt{\frac{\mathcal{W}_{r, p}}{T}},
\end{align*}
    where $\mathcal{W}_{r, p} \coloneqq T\cdot (n+1)\cdot n^2 \cdot S \cdot \mathcal{\delta}_{r, p}.$ 
\end{theorem}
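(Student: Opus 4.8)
The plan is to adapt the potential-improvement argument of \citet{IndepPolicyGrad_Ding} so that it tolerates the fact that, between consecutive iterations, the agents are not just changing their policies but also changing the underlying game. Concretely, define the \emph{proxy} potential $\Phi_t^{t} := \Phi_{\pi^t}^{\pi^t}$ and look at the increment $\Phi_{t+1}^{t+1}(\rho) - \Phi_{t}^{t}(\rho)$. I would split this into two pieces:
\begin{align*}
\Phi_{t+1}^{t+1}(\rho) - \Phi_{t}^{t}(\rho)
&= \underbrace{\big(\Phi_{t}^{t+1}(\rho) - \Phi_{t}^{t}(\rho)\big)}_{\text{potential gain in the \emph{fixed} game }\mathcal G(\pi^t)}
\\ &\quad + \underbrace{\big(\Phi_{t+1}^{t+1}(\rho) - \Phi_{t}^{t+1}(\rho)\big)}_{\text{error from the game shift}}.
\end{align*}
The first term is handled exactly as in the non-performative analysis: because the induced game $\mathcal G(\pi^t)$ is a genuine MPG, the simultaneous one-step projected-gradient updates \eqref{eq:PGA ALGO} guarantee a per-agent potential increase that, after the usual smoothness/distribution-mismatch bookkeeping, is lower-bounded by something like $\frac{c_1 (1-\gamma)^3}{\min\{\kappa_\rho,S\}^2 n A}\sum_i \lVert \pi_i^{t+1}-\pi_i^t\rVert_2^2$ minus a quantity controlled by the best-response gap at $\pi^t$; equivalently the gain is at least proportional to (the square of) the ``gradient mapping'' whose vanishing certifies an approximate PSE. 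The second term is exactly where Lemma~\ref{lemma: Bound noise terms} enters: writing $\Phi$ as a difference of value functions via \eqref{Def: Markov Potential Game} (telescoping over agents, or simply using the crude bound $|\Phi_{\pi'}^\pi - \Phi_{\pi''}^\pi| \le \sum_i |V_{i,\pi'}^{\cdot}-V_{i,\pi''}^{\cdot}|$), we get $|\Phi_{t+1}^{t+1}(\rho)-\Phi_{t}^{t+1}(\rho)| \lesssim n\,\delta_{r,p}\,\lVert \pi^{t+1}-\pi^t\rVert_2$. This introduces a term linear in $\lVert\pi^{t+1}-\pi^t\rVert_2$ into an otherwise quadratic-in-the-step telescoping inequality.

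Next I would telescope over $t=1,\dots,T$. The left side collapses to $\Phi_{T+1}^{T+1}(\rho)-\Phi_1^1(\rho)$, which is bounded in absolute value by $C_\Phi$. On the right side I have (schematically) a sum of positive ``progress'' terms, a sum of the quadratic step terms with a favorable sign from the proximal regularizer, and a sum of the linear performative-error terms $\sum_t \delta_{r,p}\lVert\pi^{t+1}-\pi^t\rVert_2$. To control the latter I would use Young's inequality, $\delta_{r,p}\lVert\pi^{t+1}-\pi^t\rVert_2 \le \tfrac{\alpha}{2}\lVert\pi^{t+1}-\pi^t\rVert_2^2 + \tfrac{1}{2\alpha}\delta_{r,p}^2$, choosing $\alpha$ small enough (a constant fraction of the coefficient in front of the quadratic terms coming from the $\frac1{2\eta}$ proximal penalty and the choice $\eta = \frac{(1-\gamma)^4}{8\min\{\kappa_\rho,S\}^3 nA}$) that the $\tfrac\alpha2\lVert\cdot\rVert_2^2$ piece is absorbed. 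What survives is an additive $\frac{T}{2\alpha}\delta_{r,p}^2$-type term; tracking the $n$, $S$, $(1-\gamma)$ dependences through $\alpha\sim\eta\sim (1-\gamma)^4/(\min\{\kappa_\rho,S\}^3 nA)$ is what produces the factor $\mathcal W_{r,p} = T(n+1)n^2 S\,\delta_{r,p}$ in the statement (note $\mathcal W_{r,p}$ is linear, not quadratic, in $\delta_{r,p}$, so presumably the crude bound $\lVert\pi^{t+1}-\pi^t\rVert_2 \le \sqrt{\sum_i \lVert\cdot\rVert_2^2}$ together with the per-step improvement being lower-bounded by the \emph{same} quadratic quantity lets one keep the error term linear; I'd be careful here and, if needed, fall back to bounding $\sum_t\lVert\pi^{t+1}-\pi^t\rVert_2 \le \sqrt{T}\sqrt{\sum_t\lVert\pi^{t+1}-\pi^t\rVert_2^2}$ and then Cauchy--Schwarz). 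Rearranging the telescoped inequality then yields $\frac1T\sum_t (\text{best-response gap at }\pi^t)^2 \lesssim \frac{C_\Phi \cdot (\text{poly})}{T} + \frac{\delta_{r,p}\cdot(\text{poly})}{T}\cdot T$, i.e.\ a bound on the \emph{average squared} suboptimality.

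The final step converts the squared-gap bound into a bound on $\mathrm{PReg}(T)$. Here I would invoke the standard translation lemma from \citet{IndepPolicyGrad_Ding} (their ``gradient domination''-style inequality for MPGs): if the projected-gradient mapping at $\pi^t$ in the game $\mathcal G(\pi^t)$ is small, then $\max_i\max_{\pi_i'} \big(V_{i,\pi^t}^{\pi_i',\pi^t_{-i}}(\rho)-V_{i,\pi^t}^{\pi^t}(\rho)\big)$ is correspondingly small, up to factors of $\kappa_\rho$ (or $S$), $A$, $n$, and $(1-\gamma)^{-1}$. Combining with Jensen ($\frac1T\sum_t x_t \le \sqrt{\frac1T\sum_t x_t^2}$ applied to the per-iteration gaps) gives the two-term $1/\sqrt T$ bound in the theorem: the first term $\frac{\min\{\kappa_\rho,S\}^2\sqrt{AnC_\Phi}}{\sqrt T (1-\gamma)^3}$ is the classical MPG rate coming from the $C_\Phi/T$ contribution, and the second term $\frac{\sqrt{8\min\{\kappa_\rho,S\}^3 nA}}{(1-\gamma)^5}\sqrt{\mathcal W_{r,p}/T}$ is the new performative contribution coming from the $\delta_{r,p}$ error. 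The main obstacle I anticipate is the bookkeeping in the second step: making sure the linear-in-step performative error can genuinely be absorbed into the quadratic proximal term without leaving a residual that grows faster than advertised, and that all the $n$, $S$, $\kappa_\rho$, $(1-\gamma)$ exponents line up with the table — in particular verifying that the per-iteration improvement lower bound really is quadratic in the \emph{same} norm $\lVert\pi^{t+1}-\pi^t\rVert_2$ that appears (to the first power) in the performative error, which is what makes the Young's-inequality absorption legitimate.
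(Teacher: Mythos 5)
Your overall architecture is the paper's: the split of $\Phi_{t+1}^{t+1}-\Phi_t^t$ into a fixed-game potential gain plus a game-shift error bounded via Lemma~\ref{lemma: Bound noise terms} is exactly the paper's policy-improvement lemma (Lemma~\ref{Lemma: Policy improvement appendix}), and the conversion of accumulated step norms into $\mathrm{PReg}$ goes through the same machinery from \citet{IndepPolicyGrad_Ding} (performance-difference lemma, optimality condition of the proximal update, distribution-mismatch and Cauchy--Schwarz), followed by telescoping against $C_\Phi$.

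The genuine gap is at the step you yourself flag: how to dispose of $\sum_t \delta_{r,p}\lVert\pi^{t+1}-\pi^t\rVert_2$. Your primary plan (Young's inequality absorption into the quadratic proximal term) produces a residual of order $\eta\,\delta_{r,p}^2\,T$, i.e.\ after the square root a bias \emph{linear} in $\delta_{r,p}$, which is not the stated term $\sqrt{\mathcal{W}_{r,p}/T}$ with $\mathcal{W}_{r,p}\propto\delta_{r,p}$; moreover the absorption is delicate precisely because the improvement is quadratic in the $d^{\pi}_{\cdot,t}$-weighted norm while the error is in the unweighted norm (your own caveat), costing extra $S/(1-\gamma)$ factors. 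Your fallback, $\sum_t\lVert\pi^{t+1}-\pi^t\rVert_2\le\sqrt{T}\sqrt{\sum_t\lVert\pi^{t+1}-\pi^t\rVert_2^2}$, makes the error depend on the very quantity the telescoped inequality is bounding, so after solving the resulting implicit inequality you are again quadratic in $\delta_{r,p}$. The paper avoids both issues with a much cruder observation: each $\pi^t$ lives in a product of $nS$ simplices, so the per-step displacement is bounded deterministically, $\lVert\pi^{t+1}-\pi^t\rVert_2\le\lVert\pi^{t+1}-\pi^t\rVert_1 = O(nS)$, hence the accumulated game-shift cost is at most $\frac{n(n+1)}{2}\,\delta_{r,p}\cdot O(T\,nS)$, which is exactly the stated $\mathcal{W}_{r,p}=T\,(n+1)\,n^2\,S\,\delta_{r,p}$. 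This term is then \emph{not} absorbed at all: it is carried additively inside the square root coming from the Cauchy--Schwarz step and split off via $\sqrt{a+b}\le\sqrt{a}+\sqrt{b}$, which is what yields the second term of the theorem with its $(1-\gamma)^{-5}$ and $\sqrt{8\min\{\kappa_\rho,S\}^3 nA}$ factors. To land on the theorem as stated, replace your Young/absorption step with this uniform bound on the per-step displacement.
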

Thus, the IPGA algorithm converges to an approximate PSE in the best-iterate sense.

\subsubsection{Independent Natural Policy Gradients}
Here, we consider total potential functions of the form
\begin{align*}
    \Phi_{\pi'}^{\pi}(\rho) = \mathbb{E}_{s^0 \sim \rho} \left[ \sum_{h=0}^\infty \gamma^h \phi_{\pi'}^\pi(s^h, a^h) \Big| \pi \right],
\end{align*}
induced by functions $\phi_{\pi'}^\pi: \mathcal{S} \times \mathcal{A} \rightarrow [0,1]$ for $\pi, \pi' \in \Pi$.
%
For every iteration $t$, 
INPG dynamics have the following multiplicative update rule under the softmax parameterization:
\begin{equation} \label{eq:inpg}
    \pi_i^{t + 1}(a_i | s) = \pi_i^{t}(a_i | s) \frac{\exp\left(\frac{\eta}{1 - \gamma} \Bar{A}_{i, t}^{t}(s, a_i) \right)}{Z_i^t(s)},
\end{equation}
for every $\forall i \in \mathcal{N}, s \in \mathcal{S}, a_i \in \mathcal{A}_i$, 
where $\Bar{A}_{i, t}^{t}(s, a_i) = \sum_{a_{-i}} \pi^t_{-i}(a_{-i} \lvert s) \left(Q_{i, t}^{t}(s,a) - V_{i, t}^{t}(s) \right)$ is the marginalized advantage function and $Z_i^t(s)$ is the normalization term given as 
\begin{equation}
    Z_i^t(s) = \sum_{a_i} \pi_i^{t}(a_i | s) \exp\left(\frac{\eta}{1 - \gamma} \Bar{A}_{i, t}^{t}(s, a_i) \right) \ge 1.
\end{equation}

We now introduce the following standard assumption in the analysis of INPG dynamics.
\begin{assumption} \label{assumption:positive_visit}
    For any initial state distribution $\rho$, $\inf_{\pi} \inf_{\pi'} \min_{s} d_{\rho, \pi}^{\pi'}(s) > 0$.
\end{assumption}
Based on this assumption, for any initial distribution $\rho$, we also define
\begin{equation*}
    M_\rho \coloneqq \sup_{\pi} \sup_{\pi'} \max_s \frac{1}{d_{\rho, \pi}^{\pi'}(s)} < \infty,
\end{equation*}
and we drop the dependence on $\rho$ when it is clear from the context. Moreover, we denote the lower bound for the probability of playing the optimal action by
\begin{align*}
    c
    \coloneqq
    \min_i \min_t \min_s \quad \smashoperator{\sum_{a_i^* \in \argmax_{a_i \in A_i} \Bar{Q}_{i, t}^{\pi^t}(s, a_i)}} \quad \pi_i^t(a_i^* | s) > 0.
\end{align*}

\paragraph{Unregularized INPG Dynamics}
In contrast to IPGA, the regret introduced by the \textit{performative} effects vanish as $T \to \infty$ under INPG dynamics.
 \begin{theorem} \label{thm:inpg_conv}
    Suppose that Assumption~\ref{Assump: Sensitivity} and Assumption~\ref{assumption:positive_visit} hold.
    For $T \ge 1$ and $\eta \le \frac{(1 - \gamma)^2}{\sqrt{n}} + \frac{\sqrt{2} (1 - \gamma)}{M \delta_{r, p}}$, the INPG dynamics \eqref{eq:inpg} satisfy
    \begin{align*}
        &\mathrm{PReg}(T) \le
        \sqrt{\frac{1}{T} \frac{3 \Tilde{\kappa} \left(\frac{\sqrt{n}}{1 - \gamma} + \sqrt{2} M \delta_{r, p} \right)}{c (1 - \gamma)^3}}.
    \end{align*}
\end{theorem}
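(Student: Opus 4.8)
# Proof Proposal for Theorem~\ref{thm:inpg_conv}

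The plan is to follow the standard INPG analysis (as in~\citet{NPG_3, DBLP:conf/nips/ZhangMDS022}) via a potential-improvement argument, but to carefully track the extra error incurred because at iteration $t$ each agent optimizes against the advantage function $\bar A_{i,t}^t$ computed in the game $\mathcal{G}(\pi^t)$, whereas the potential telescoping naturally wants to compare against a fixed game. The first step is to establish the one-step potential improvement lemma: using the performative NPG update \eqref{eq:inpg} and the standard log-linear/mirror-descent analysis, show that
\begin{align*}
  \Phi_{\pi^{t+1}}^{\pi^{t+1}}(\rho) - \Phi_{\pi^t}^{\pi^t}(\rho) \ge \text{(progress term in } \bar A_{i,t}^t\text{)} - \text{(performative drift term)},
\end{align*}
where the progress term is something like $\frac{1-\gamma}{\eta}\sum_i \mathbb{E}_s[\mathrm{KL}(\pi_i^{t+1}(\cdot|s)\,\|\,\pi_i^t(\cdot|s))]$ or a quantity controlling $\sum_i \|\pi_i^{t+1}-\pi_i^t\|$, and the performative drift term is bounded using Lemma~\ref{lemma: Bound noise terms}: $|\Phi_{\pi^{t+1}}^{\pi}(\rho) - \Phi_{\pi^t}^{\pi}(\rho)| \le n\,\delta_{r,p}\|\pi^{t+1}-\pi^t\|_2$ (the factor $n$ coming from summing the per-agent value differences that constitute the potential). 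The key point, exactly as in the IPGA proof of Theorem~\ref{Theorem: Convergence Result - Infinite Sample Case}, is that for $\eta$ small enough the quadratic progress term dominates the linear-in-$\|\pi^{t+1}-\pi^t\|$ drift term, so net potential still increases; the condition $\eta \le \frac{(1-\gamma)^2}{\sqrt n} + \frac{\sqrt 2(1-\gamma)}{M\delta_{r,p}}$ is precisely what makes this absorption work (the two summands reflecting the non-performative smoothness constant and the performative correction).

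The second step is to telescope this inequality over $t=1,\dots,T$. Since $\Phi$ is bounded by $C_\Phi$ (or $n/(1-\gamma)$), the sum of one-step progress terms is $\mathcal{O}(\text{bounded})$, which yields $\frac1T\sum_t \sum_i \mathbb{E}_s[\mathrm{KL}(\pi_i^{t+1}\|\pi_i^t)] \le \mathcal{O}(1/T)$ times appropriate constants. The third step converts this ``policies stop moving on average'' statement into a PReg bound: using the performance-difference lemma for agent $i$ deviating to $\pi_i'$ in the game $\mathcal{G}(\pi^t)$, the suboptimality $V_{i,\pi^t}^{\pi_i',\pi^t_{-i}}(\rho) - V_{i,\pi^t}^{\pi^t}(\rho)$ is controlled by a distribution-mismatch factor $\widetilde\kappa$ times the size of the NPG update step, which in turn (via the multiplicative form of \eqref{eq:inpg} and the lower bound $c$ on the optimal-action probability) is controlled by $\sqrt{\mathrm{KL}(\pi_i^{t+1}\|\pi_i^t)}$. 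Averaging over $t$, applying Cauchy--Schwarz to pass from the average of square roots to the square root of the average, and plugging in the $\mathcal{O}(1/T)$ bound from step two gives
\begin{align*}
  \mathrm{PReg}(T) \le \sqrt{\frac1T\cdot \frac{3\widetilde\kappa\left(\frac{\sqrt n}{1-\gamma} + \sqrt 2 M\delta_{r,p}\right)}{c(1-\gamma)^3}},
\end{align*}
where the $\sqrt n/(1-\gamma) + \sqrt 2 M\delta_{r,p}$ term is inherited from the inverse of the effective step size / smoothness constant appearing in step one, and $1/c$ enters from the log-linear-to-advantage conversion.

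The main obstacle I expect is getting the performative drift term in the one-step improvement lemma to be of the right form and with the right constants. One has to be careful that the drift term genuinely scales linearly in $\|\pi^{t+1}-\pi^t\|_2$ (so Lemma~\ref{lemma: Bound noise terms} applies with $\delta_{r,p}$) and that the constant multiplying it, together with the non-performative smoothness constant, combines into exactly the stated bound on $\eta$ — in particular that the ``$M$'' appearing multiplied by $\delta_{r,p}$ (the inverse visitation bound from Assumption~\ref{assumption:positive_visit}) arises correctly, presumably because bounding $\Phi^\pi_{\pi^{t+1}}-\Phi^\pi_{\pi^t}$ in terms of the $\rho$-weighted quantities requires changing measure from $d^\pi_\rho$ to $\rho$, costing a factor $M$. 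A secondary technical point is ensuring the telescoped potential improvement is not destroyed by the performative terms at the boundary iterations and that the asymptotic statement (PReg $\to 0$) indeed holds, i.e.\ that the $\eta$-window is nonempty, which it is since both summands in the bound on $\eta$ are strictly positive. The rest — the performance-difference lemma, Cauchy--Schwarz, and the log-linear inequality $\|\pi_i^{t+1}(\cdot|s) - \pi_i^t(\cdot|s)\|_1 \lesssim \sqrt{\mathrm{KL}}$ — is routine.
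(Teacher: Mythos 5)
Your overall route is the paper's route: a one-step improvement lemma for the game induced at time $t$ (imported from the fixed-MPG INPG analysis), a performative drift term controlled via Lemma~\ref{lemma: Bound noise terms}, absorption of that drift under a small-$\eta$ condition, a per-iterate gap bound involving $\tilde{\kappa}_\rho$ and $c$, and then Jensen plus telescoping. However, the step you yourself flag as the main obstacle is where your argument, as written, breaks. You propose absorbing the drift because "for $\eta$ small enough the quadratic progress term dominates the linear-in-$\lVert\pi^{t+1}-\pi^t\rVert$ drift term." This cannot work: the progress term is of order $\tfrac{1}{\eta}\mathrm{KL}(\pi^{t+1}\Vert\pi^t)\gtrsim\tfrac{1}{\eta}\lVert\pi^{t+1}-\pi^t\rVert_1^2$, and a quadratic in the policy increment never dominates a linear term in the regime where the increment is small — precisely the regime you enter as the dynamics settle — no matter how small $\eta$ is. The paper's Lemma~\ref{NPG Policy Improv} instead converts the drift $\delta_{r,p}\lVert\pi^{t+1}-\pi^t\rVert_1$ into a KL quantity (a Pinsker-type step) and lower-bounds the visitation weights $\mu_{\rho,t}^{\pi^{t+1}}(s)\ge 1/M$ via Assumption~\ref{assumption:positive_visit}, so that progress and drift are multiples of the \emph{same} KL term; the step-size restriction is then a pure comparison of coefficients, roughly $\tfrac{1}{M}\bigl(\tfrac{1}{\eta}-\tfrac{\sqrt{n}}{(1-\gamma)^2}\bigr)\ge\tfrac{\sqrt{2}}{1-\gamma}\bigl(\perfc_r+\tfrac{\gamma\perfc_p\sqrt{S}}{1-\gamma}\bigr)$, which is exactly where the $M\delta_{r,p}$ in the $\eta$-condition and in the final bound originates. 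Your intuition that $M$ comes from a change of measure is right, but the mechanism is coefficient-wise absorption of KL against KL, not quadratic-versus-linear domination.

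Two further, smaller discrepancies. First, you telescope the potential and bound its drift by $n\,\delta_{r,p}\lVert\pi^{t+1}-\pi^t\rVert_2$ (the $n$ coming from decomposing the potential change into per-agent value changes); this would propagate an extra factor $n$ onto the performative term, giving $\sqrt{2}\,n M\delta_{r,p}$ rather than the stated $\sqrt{2}\,M\delta_{r,p}$. The paper avoids this by writing $V_{i,t+1}^{t+1}-V_{i,t}^{t}=\Phi_t^{t+1}-\Phi_t^{t}-\bigl(V_{i,t}^{t+1}-V_{i,t+1}^{t+1}\bigr)$ and telescoping a \emph{single} agent's value, so only one value difference is paid through Lemma~\ref{lemma: Bound noise terms}. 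Second, the per-iterate gap bound the paper uses (Lemma~\ref{lemma:perform_gap_npg}) is stated in terms of the normalization terms $\sum_i\sum_s\mu_{\rho,t}^{\pi^{t+1}}(s)\log Z_i^t(s)$ — exactly the residual that the improvement lemma leaves over — rather than via $\tilde{\kappa}$ times the update size controlled by $\sqrt{\mathrm{KL}}$ as in your step three; your bookkeeping could be made to work, but it would not plug directly into the telescoped quantity and would alter the constants.
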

  Hence, INPG dynamics converge to an approximate-PSE in the best-iterate sense. In order to show asymptotic last-iterate convergence of INPG dynamics, we require the following standard assumption for the analysis of the INPG dynamics.
\begin{assumption} \label{assumption:isolated_points}
    The stationary points of Eq.~\eqref{eq:inpg} are isolated.
\end{assumption}
 Given Assumption~\ref{assumption:isolated_points}, we obtain that the INPG dynamics converges to an exact PSE in the last-iterate sense as $T \to \infty$. This holds because Lemma~\ref{NPG Policy Improv} and the boundedness of $\phi_{\pi'}^\pi$ implies $\lim_{t \to \infty} Z_i^t(s) = 1 \implies \lim_{t \to \infty} \pi^t(a_i | s) \Bar{A}_{i, t}^{t}(s, a_i) = 0$, in which case the gradient norm of the potential function with respect to the policy approaches 0 in the limit. Along with Assumption~\ref{assumption:isolated_points}, this observation implies that the sequence of policies $\pi^t$ converges to some stationary policy $\pi^\infty$. The rest of the proof follows the same lines as in \citet[Section 12.0.2]{DBLP:conf/nips/ZhangMDS022}.

\paragraph{Regularized INPG Dynamics.}
Note that $c$ can be arbitrarily small for bad initializations of the policy. This can slow down the convergence of INPG dynamics due to its $\tfrac{1}{c}$ dependence.
We consider the INPG dynamics with log-barrier regularization to overcome this.
Following \citet{DBLP:conf/nips/ZhangMDS022}, we define the regularized objective and potential function as follows:
\begin{align*}
    \Tilde{V}_{i, \pi}^{\pi'} &= V_{i, \pi}^{\pi'}(\rho) + \lambda \sum_{s, a_i} \log \pi'_i(a_i | s), \\
    \Tilde{\Phi}_\pi^{\pi'}(\rho) &= \Phi_\pi^{\pi'}(\rho) + \lambda \sum_i \sum_{s, a_i} \log \pi'_i(a_i | s).
\end{align*}
We refer the reader to the work by \citet{DBLP:conf/nips/ZhangMDS022} for further discussion on the choice of the regularizer.
Under log-barrier regularization the INPG dynamics have the following update rule:
\begin{equation} \label{eq:inpg-reg}
    \pi_i^{t + 1}(a_i | s) = \pi_i^{t}(a_i | s) \cdot  \frac{\eta f_{i}^{t}(a_i | s)}{Z_i^t(s)},
\end{equation}
where $f_i^t(s, a_i)$ and $\Tilde{Z}_i^t(s)$ are defined as
\begin{align*}
    f_i^t(s, a_i) 
    &=
    \frac{1}{1 - \gamma} \Bar{A}_{i, t}^{\pi^t}(s, a_i)
    +
    \frac{\lambda}{d_{\rho, t}^{t}(s) \pi_i^t(a_i | s)}
    -
    \frac{\lambda \left\lvert \mathcal{A}_i \right\rvert}{d_{\rho, t}^{t} (s)}, \\
    \Tilde{Z}_i^t(s)
    &=
    \sum_{a_i} \pi_i^t(a_i | s) \exp \left( \eta f_i^t(s, a_i) \right).
\end{align*}


\begin{theorem}\label{thm:inpg-reg-conv}
    Suppose that Assumption~\ref{Assump: Sensitivity} and Assumption~\ref{assumption:positive_visit} hold. Then, for all $T\ge 1$, it holds that
    \begin{align*}
        &\mathrm{Perform\textit{-}Regret}(T) 
        \le \frac{9\sqrt{2}}{\eta \lambda (1 - \gamma) T} + \lambda M A_{\max}
        \\
        & + \eta n S \delta_{r, p} \left( \frac{1}{(1 - \gamma)^2} + 4 \lambda^2 A_{\max} M^2 + \frac{4 \lambda M}{1 - \gamma} \right),
    \end{align*}
    Moreover, for any $\epsilon > 0$, set $\lambda = \frac{\epsilon}{3 M A_{\max}}$ and
    \begin{align*}
        \eta
        = \min \Biggl\{
        &\left( \frac{16\epsilon M}{3} + \frac{16M}{(1 - \gamma)^2} + \frac{12nM}{(1 - \gamma)^3}\right)^{-1},  \\
        &\left( \frac{1}{(1 - \gamma)^2} + \frac{4 \epsilon^2}{9 A_{\max}} + \frac{4 \epsilon}{3 A_{\max} (1 - \gamma)^2} \right)^{-1} \\
        & \times \left(3  n S \delta_{r, p} \right)^{-1}, \left(\frac{15}{(1 - \gamma)^2} + 5\epsilon \right)^{-1} \Biggr\}.
    \end{align*}
    Then, for $T \ge \mathcal{O} \left(\frac{n A_{\max} M^2}{\epsilon^2 (1 - \gamma)^4} \max\left\{1, S \delta_{r, p} \right\} \right)$, the regularized INPG dynamics \eqref{eq:inpg-reg} satisfy
    \begin{equation*}
        \mathrm{PReg}(T) \le \epsilon.
    \end{equation*}
\end{theorem}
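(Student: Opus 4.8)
The plan is to adapt the log-barrier-regularized INPG analysis of \citet{DBLP:conf/nips/ZhangMDS022} to the \textit{performative} setting, using Lemma~\ref{lemma: Bound noise terms} to absorb the effect of the drifting game $\mathcal{G}(\pi^t)$ into an additive error term. First I would establish a \emph{regularized potential-improvement} bound: show that along the update~\eqref{eq:inpg-reg}, the regularized potential evaluated at the \emph{current} game, $\Tilde{\Phi}_{\pi^t}^{\pi^{t+1}}(\rho) - \Tilde{\Phi}_{\pi^t}^{\pi^t}(\rho)$, increases by at least a quantity proportional to $\eta$ times a suitable measure of the gradient-mapping/NPG progress, minus $\mathcal{O}(\eta^2)$ second-order terms. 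This is the analogue of the standard smoothness-plus-ascent estimate; the log-barrier terms contribute the $\lambda^2 A_{\max} M^2$ and $\lambda M/(1-\gamma)$ factors through the bounds $\pi_i^t(a_i|s) \ge$ (something controlled by the iterates) and the $1/d_{\rho,t}^t(s) \le M$ bound from Assumption~\ref{assumption:positive_visit}.

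The key performative step is the telescoping across iterations. When I sum $\Tilde{\Phi}_{\pi^t}^{\pi^{t+1}}(\rho) - \Tilde{\Phi}_{\pi^t}^{\pi^t}(\rho)$ over $t$, the terms do not telescope cleanly because the \emph{underlying game changes}: I must compare $\Tilde{\Phi}_{\pi^{t}}^{\pi^{t+1}}(\rho)$ with $\Tilde{\Phi}_{\pi^{t+1}}^{\pi^{t+1}}(\rho)$. By the definition of the MPG potential~\eqref{Def: Markov Potential Game} and Lemma~\ref{lemma: Bound noise terms} (applied agent-by-agent, so with an extra factor $n$), the mismatch is bounded by $n \cdot \delta_{r,p} \cdot \lVert \pi^{t+1} - \pi^t \rVert_2$ (the regularizer term is game-independent, so it cancels). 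Next I bound $\lVert \pi^{t+1} - \pi^t \rVert_2$ by $\mathcal{O}(\eta)$ times the per-step progress measure, using that the multiplicative update moves each coordinate by $\mathcal{O}(\eta f_i^t)$ and that $\lvert f_i^t \rvert$ is controlled by $\tfrac{1}{1-\gamma} + \lambda A_{\max} M$-type quantities. Plugging this back, the performative drift across all $T$ steps contributes $\eta \cdot n S \delta_{r,p} \cdot (\tfrac{1}{(1-\gamma)^2} + 4\lambda^2 A_{\max}M^2 + \tfrac{4\lambda M}{1-\gamma})$ to the regret bound — the second line of the claimed inequality — where the factor $S$ arises from summing the per-state progress and the Cauchy–Schwarz step relating the $\ell_2$-policy-distance to the sum over states.

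Then I would assemble the telescoped inequality: $\Tilde{\Phi}_{\pi^{T+1}}^{\pi^{T+1}}(\rho) - \Tilde{\Phi}_{\pi^1}^{\pi^1}(\rho) \ge (\text{progress sum}) - (\text{performative drift}) - (\mathcal{O}(\eta^2) \text{ terms})$, and since the left side is bounded by $\mathcal{O}(\tfrac{1}{\eta\lambda(1-\gamma)})$-type constants (using boundedness of $\Phi$ and of $\sum \log \pi'$ when evaluated near the regularized optimum), rearranging yields that the average progress is $\le \tfrac{9\sqrt2}{\eta\lambda(1-\gamma)T} + (\text{drift and }\eta^2\text{ terms})$. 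The final conversion from "small average gradient-mapping progress" to "small $\mathrm{PReg}(T)$" uses the standard gradient-domination / performance-difference argument for the regularized objective, which introduces the $\lambda M A_{\max}$ bias term (the cost of having optimized $\Tilde V$ instead of $V$) and the factor $M$ from the distribution-mismatch coefficient; here I would invoke Lemma~\ref{lemma: Bound noise terms} once more to pass from "best response in $\mathcal{G}(\pi^t)$" to the PSE deviation quantity inside $\mathrm{PReg}$, which is already in the right game, so that step is actually clean. Finally, the stated choices of $\lambda = \tfrac{\epsilon}{3MA_{\max}}$ and the three-way minimum for $\eta$ are obtained by balancing: $\lambda M A_{\max} \le \tfrac{\epsilon}{3}$, each $\eta$-linear drift term $\le \tfrac{\epsilon}{3}$, and the leading term $\le \tfrac{\epsilon}{3}$ after $T \ge \mathcal{O}(\tfrac{n A_{\max} M^2}{\epsilon^2(1-\gamma)^4}\max\{1, S\delta_{r,p}\})$ rounds.

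The main obstacle I anticipate is controlling the second-order ($\mathcal{O}(\eta^2)$) terms from the log-barrier regularizer together with the performative drift simultaneously: the regularizer makes $f_i^t$ blow up like $1/\pi_i^t(a_i|s)$, so one must carefully use the multiplicative structure of~\eqref{eq:inpg-reg} (which damps exactly those coordinates) to keep $\lVert \pi^{t+1}-\pi^t\rVert_2 = \mathcal{O}(\eta)$ with constants that only involve $\lambda, M, A_{\max}, (1-\gamma)^{-1}$ and not the (possibly tiny) iterate probabilities — this is the technically delicate part and is where the precise constant $9\sqrt2$ and the exact form of the three-way $\eta$ bound come from.
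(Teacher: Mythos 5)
Your proposal follows essentially the same route as the paper's proof: it relies on the same ingredients from \citet{DBLP:conf/nips/ZhangMDS022} (the regularized potential-improvement bound, the relation between the multiplicative-update increments and $f_i^t$, the gap lemma that contributes the $\lambda M A_{\max}$ bias, and the minimum-probability bound that keeps the log-barrier terms controlled by $\lambda, M, A_{\max}, (1-\gamma)^{-1}$), uses Lemma~\ref{lemma: Bound noise terms} to absorb the game drift into a $\delta_{r,p}\,\lVert \pi^{t+1}-\pi^t\rVert$ term, bounds the consecutive-policy movement by $\mathcal{O}(\eta)$ via the update structure (the paper does this through a KL bound plus Pinsker, Lemma~\ref{lemma:inpg_reg_bounds}), and finishes with the same parameter balancing. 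The only cosmetic difference is where the factor $n$ enters: the paper telescopes a single agent's regularized value $\Tilde{V}_i$ (so the drift term itself carries no $n$, which instead comes from summing the per-step policy movement over all agents), whereas you telescope the potential and attach $n$ to the drift — a bookkeeping variation, not a different argument.
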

Note that the result no longer depends on the constant $c$, as highlighted in \citet{DBLP:conf/nips/ZhangMDS022}.

\subsection{Independent Learning without Oracle}\label{Subsec: Independent Learning without Oracle}

In this section, we remove the exact gradient requirement for the IPGA algorithm.
In this case, the gradient can be estimated using offline data. 
At iteration $t$, 
we generate $K$ trajectories by rolling out $\pi^t$ in $\mathcal{G}(\pi^t)$ $K$ times.

Each trajectory is a sequence of state-action-reward tuples 
$(\bar s^h, \bar a^h , \bar r^h)_{h=0}^{H-1}$
of length $H$, where $\bar s^0 \sim \rho$ and the $H$ is the horizon. Horizon $H$ is a random variable  $H = \max_{i \in \mathcal N}(h_i+h'_i)$, where $h_i$ and $h_i'$ are sampled from a geometric distribution with parameter $1-\gamma$. Time steps $h_i$ to $h_i + h_i'$ are used for estimating Q-values of agent $i$. We follow the approach from  \cite{IndepPolicyGrad_Ding} to obtain an unbiased estimate of $\Bar{Q}_{i,t}^t$, denoted as $R_{i,t}^k \coloneqq \sum_{h = h_i}^{h_i + h'_i - 1} \overline{r}_i^h$. 
In the policy update phase, we obtain an estimate for $\Bar{Q}_{i,t}^t$ by minimizing the expected regression loss:
\begin{align*}
\widehat{Q}^t_i(\cdot,\cdot) \approx \smashoperator[l]{\argmin_{ \lVert \Bar{Q}_{i,t}^t(\cdot, \cdot) \rVert_2 \leq \frac{\sqrt{|\mathcal{S}| \cdot |\mathcal{A}|}}{1-\gamma}}} \underbrace{\sum_{k=1}^K \Big(R^k_{i,t} - \widehat{Q}_{i,t}^t(\overline{s}_i,\overline{a}_i^k) \Big)^2}_{\eqqcolon L_{i}^t(\widehat{Q}_{i,t}^t)}.
\end{align*}

We assume that $\mathbb{E}\left[L_{i}^t(\widehat{Q}_{i,t}^t)\right] \leq \delta_{\textit{stat}}$, where the expectation is taken over the randomness induced by constructing $\widehat{Q}_{i,t}^t$. As shown by \citet{journals/jmlr/AudibertC11}, the statistical error can be bounded by $\delta_{\textit{stat}} \in \mathcal{O}(\frac{S^2A^2}{(1-\gamma)^4 \cdot K})$. 

We take the estimated $\widehat{Q}$ to update the policy of each agent $i \in \mathcal{N}$:
\begin{equation}\label{Gradient Update: Finite sample}
\begin{split}
    \pi^{t+1}_i(\cdot | s) =\ & \argmax_{\pi(\cdot | s ) \in \Delta_\xi(\mathcal{A}_i)} \Big\{ \big\langle \pi_i(\cdot | s), \widehat{Q}_{i, t}^t(s, \cdot) \big\rangle_{\mathcal{A}_i} \\
    &- \frac{1}{2\eta} \cdot \lVert \pi_i(\cdot | s) - \pi_i^t(\cdot|s) \rVert_2^2\Big\}.
\end{split}
\end{equation}
The projection is taken over $\Delta_\xi(\mathcal{A}_i) \coloneqq \{ (1-\xi) \pi_i(\cdot \mid s) + \frac{\xi}{|\mathcal{A}_i|}, \forall \pi_i(\cdot | s) \}$, forcing every policy to establish a greedy-exploration policy with probability $\xi$. 

We bound changes compared to Theorem~\ref{Theorem: Convergence Result - Infinite Sample Case}, by taking account for estimation errors. 

\begin{theorem}\label{theorem: PGA finite sample case}
Running algorithm IPGA without gradient oracles (\ref{Gradient Update: Finite sample}) and setting $\eta = \frac{(1-\gamma)^4}{16 \kappa_{\rho}^3 n A}$, we obtain,
\begin{align*}
\E{ \left[ \mathrm{PReg}(T) \right] }
\lesssim  
\mathcal{R}(\eta) 
+ \left( \frac{\sqrt{A} \cdot (\kappa^2_\rho \cdot n \cdot \delta_{\mathrm{stat}})^{1/3}}{(1-\gamma)^2} \right)
\end{align*}
where we have that
\begin{align*}
    \mathcal{R}(\eta) &=
\frac{\min\{ \kappa_\rho, S \}^2 \sqrt{A n C_\Phi}}{ \sqrt{T} \cdot (1-\gamma)^3} \\
    &+ \frac{\sqrt{8 \min\{\kappa_\rho, S\}^3 n A \mathcal{W}_{r, p}}}{(1-\gamma)^5\sqrt{T}}
\end{align*}
and $\mathcal{W}_{r, p} \coloneqq (n+1)\cdot n^2 \cdot S \cdot \mathcal{\delta}_{r, p}$.
\end{theorem}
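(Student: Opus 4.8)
This is the finite-sample counterpart of Theorem~\ref{Theorem: Convergence Result - Infinite Sample Case}: relative to the oracle case only two things change, namely (i) the marginalized action-values $\Bar{Q}_{i,t}^t$ are replaced by the least-squares estimates $\widehat{Q}_{i,t}^t$ from \eqref{Gradient Update: Finite sample}, whose expected population regression error is $\le\delta_{\mathrm{stat}}$ by assumption, and (ii) the iterates are restricted to the shrunk simplex $\Delta_\xi(\mathcal{A}_i)$, which enforces a $\xi$-uniform exploration floor. The plan is to run the proof of Theorem~\ref{Theorem: Convergence Result - Infinite Sample Case} essentially verbatim, isolate the terms that depend on the gap $\widehat{Q}_{i,t}^t-\Bar{Q}_{i,t}^t$ and on $\xi$, bound them, optimize over $\xi$, and finally substitute $\delta_{\mathrm{stat}}\in\mathcal{O}(S^2A^2/((1-\gamma)^4K))$ from \citet{journals/jmlr/AudibertC11}. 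Everything is carried out in expectation over the rollout randomness: conditioning on $\pi^t$, the $K$ trajectories at round $t$ are fresh, $R_{i,t}^k$ is unbiased for $\Bar{Q}_{i,t}^t$ on the visited pairs, and $\E[L_i^t(\widehat{Q}_{i,t}^t)\mid\pi^t]\le\delta_{\mathrm{stat}}$, so the per-round bounds chain as in the oracle proof.

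\textbf{Step 1: potential telescope with estimated gradients.}
First I would revisit the potential-improvement lemma of \citet{IndepPolicyGrad_Ding}: combining $\Theta(\gamma nA/(1-\gamma)^3)$-smoothness of $\pi\mapsto\Phi_{\pi^t}^{\pi}(\rho)$ with the optimality of \eqref{Gradient Update: Finite sample}, each single-agent deviation satisfies, schematically,
\begin{align*}
\Phi_{t}^{\pi_i^{t+1},\pi_{-i}^t}(\rho)-\Phi_{t}^{\pi^t}(\rho)
&\gtrsim \tfrac{1-\Theta(\eta\kappa_\rho^3 nA(1-\gamma)^{-4})}{2\eta}\,\lVert\pi_i^{t+1}-\pi_i^t\rVert_2^2 \\
&\quad - \bigl|\langle \widehat{Q}_{i,t}^t-\Bar{Q}_{i,t}^t,\,\pi_i^{t+1}-\pi_i^t\rangle\bigr|,
\end{align*}
and passing from $\mathcal{G}(\pi^t)$ to $\mathcal{G}(\pi^{t+1})$ loses at most $n\,\delta_{r,p}\lVert\pi^{t+1}-\pi^t\rVert_2$ by Lemma~\ref{lemma: Bound noise terms} applied to each agent in the telescoped form of $\Phi$. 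The choice $\eta=\tfrac{(1-\gamma)^4}{16\kappa_\rho^3 nA}$ (the extra factor of two over the oracle case absorbing the estimation-term smoothness) keeps the quadratic coefficient at least $\tfrac{1}{4\eta}$. Summing over agents and over $t=1,\dots,T$, telescoping the left side to $\Phi_{T+1}^{T+1}(\rho)-\Phi_{1}^{1}(\rho)\le C_\Phi$, and splitting the cross terms by AM--GM as in Theorem~\ref{Theorem: Convergence Result - Infinite Sample Case}, yields
\begin{align*}
\sum_{t=1}^{T}\lVert\pi^{t+1}-\pi^t\rVert_2^2 \;\lesssim\; \eta\,C_\Phi + \eta\,T\,\mathcal{W}_{r,p}\,(1-\gamma)^{-\Theta(1)} + \eta\sum_{t=1}^{T}\E[\mathcal{E}_t],
\end{align*}
where $\mathcal{W}_{r,p}=(n+1)n^2S\,\delta_{r,p}$ and $\mathcal{E}_t$ collects the round-$t$ estimation cross terms. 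Fed into the per-round regret step (gradient domination, then Cauchy--Schwarz to pass from $\tfrac1T\sum_t\lVert\pi^{t+1}-\pi^t\rVert_2$ to $\sqrt{\tfrac1T\sum_t\lVert\pi^{t+1}-\pi^t\rVert_2^2}$), the first two summands reproduce exactly the term $\mathcal{R}(\eta)$; it remains to control $\mathcal{E}_t$ and the $\Delta_\xi$-projection bias.

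\textbf{Step 2: estimation residual, change of measure, and optimizing $\xi$.}
Two estimation-induced quantities survive: the cross terms $\mathcal{E}_t$ above, and in the per-round regret bound a term $\sum_s d_{\rho,t}^{\pi_i',\pi_{-i}^t}(s)\lVert(\widehat{Q}_{i,t}^t-\Bar{Q}_{i,t}^t)(s,\cdot)\rVert_1$ evaluated against the best-response comparator $\pi_i'\in\Delta_\xi(\mathcal{A}_i)$. The regression oracle controls only the expected squared error under the behavior occupancy $d_{\rho,t}^{\pi^t}\otimes\pi_i^t$, so I would apply Cauchy--Schwarz and a change of measure to bound both quantities by $\sqrt{\bigl\lVert d_{\rho,t}^{\pi_i',\pi_{-i}^t}\pi_i'/(d_{\rho,t}^{\pi^t}\pi_i^t)\bigr\rVert_\infty\,\delta_{\mathrm{stat}}}$; the exploration floor $\pi_i^t(a_i\mid s)\ge\xi/A_i$, together with $d_{\rho,t}^{\pi^t}\ge(1-\gamma)\rho$ and $d_{\rho,t}^{\pi_i',\pi_{-i}^t}\le\kappa_\rho\rho$, bounds that density ratio by $O(\kappa_\rho A/((1-\gamma)\xi))$, so the whole estimation contribution scales like $\sqrt{\delta_{\mathrm{stat}}/\xi}$ up to polynomial $\kappa_\rho,n,A,(1-\gamma)^{-1}$ factors. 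Projecting the comparator onto $\Delta_\xi(\mathcal{A}_i)$ contributes an additional bias $O(\xi\sqrt{A}/(1-\gamma))$ via the performance-difference lemma. Combining everything,
\begin{align*}
\E[\mathrm{PReg}(T)] \;\lesssim\; \mathcal{R}(\eta) + c_1\,\xi + c_2\,\delta_{\mathrm{stat}}^{1/2}\,\xi^{-1/2},
\end{align*}
with $c_1,c_2$ polynomial in $\kappa_\rho,n,A,S,(1-\gamma)^{-1}$; choosing $\xi^\star\asymp(c_2\delta_{\mathrm{stat}}^{1/2}/c_1)^{2/3}$ collapses the last two terms into $\Theta(c_1^{1/3}c_2^{2/3}\delta_{\mathrm{stat}}^{1/3})$, which after tracking the inherited $\kappa_\rho$- and $(1-\gamma)$-powers equals the claimed $\sqrt{A}\,(\kappa_\rho^2 n\,\delta_{\mathrm{stat}})^{1/3}/(1-\gamma)^2$; substituting $\delta_{\mathrm{stat}}\in\mathcal{O}(S^2A^2/((1-\gamma)^4K))$ then re-expresses the bound in terms of $K$.

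\textbf{Main obstacle.}
I expect the delicate point to be the change of measure in Step~2: the regression oracle delivers an $\ell_2$-error measured under the \emph{behavior} occupancy $d_{\rho,t}^{\pi^t}\otimes\pi_i^t$, whereas both the potential-telescope residual and the regret decomposition require an $\ell_1$-error under the \emph{comparator} occupancy $d_{\rho,t}^{\pi_i',\pi_{-i}^t}\otimes\pi_i'$; it is exactly this mismatch that forces the $\xi$-uniform floor into \eqref{Gradient Update: Finite sample} and produces the $\xi^{-1/2}$ blow-up, so the crux of the proof is balancing it against the $O(\xi)$ projection bias with the correct powers of $\kappa_\rho$, $A$ and $(1-\gamma)^{-1}$ (including the $\min\{\kappa_\rho,S\}$ refinement that persists inside $\mathcal{R}(\eta)$). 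By contrast, the \textit{performative} contributions enter verbatim as in Theorem~\ref{Theorem: Convergence Result - Infinite Sample Case} through Lemma~\ref{lemma: Bound noise terms} and require nothing conceptually new.
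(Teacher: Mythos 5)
Your proposal follows essentially the same route as the paper's proof: the paper likewise reruns the oracle-case argument (performance-difference lemma plus the update's optimality/Ding's Eq.~24, giving the extra $\xi\sqrt{A}/(1-\gamma)$ comparator-projection bias and the $\langle\pi_i'-\pi_i^t,\Bar{Q}-\widehat{Q}\rangle$ residual), proves a finite-sample policy-improvement lemma whose telescoped potential absorbs both the performative term $\mathcal{W}_{r,p}$ and an $\tfrac{\eta\kappa_\rho A}{(1-\gamma)^2\xi}\sum_i L_i^t$ estimation term, bounds the residual via the exploration floor by $\sqrt{A L_i^t/\xi}$ with $\E[L_i^t]\le\delta_{\mathrm{stat}}$, and then balances the resulting $\sqrt{\delta_{\mathrm{stat}}/\xi}$ and $O(\xi)$ terms by the same $\xi\asymp(\kappa_\rho^2 n\,\delta_{\mathrm{stat}})^{1/3}$ choice with $\eta=\tfrac{(1-\gamma)^4}{16\kappa_\rho^3 nA}$. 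The argument is correct and matches the paper in decomposition, key lemmas, and the final optimization of $\xi$.
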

Extending the results of the INPG algorithm is not straightforward, even in standard MPG; we leave that for future work. 

\section{Repeated Occupancy Measure Optimization}

The results in the previous section do not provide {\em finite-time} last-iterate convergence guarantees. Instead, last-iterate convergence is established only for INPG, and this is an asymptotic convergence result.  
In this section, we study {\em finite-time} last-iterate convergence guarantees for a special case of \textit{performative} MPGs. In particular, we focus on MPGs with \textit{performative} effects and agent independent transitions. 

\begin{assumption}\label{Ass: Agent independent transitions}
    For any MPG with \textit{performative} effects $\mathcal{G}(\Bar{\pi})$, it holds that for any $s, s' \in \mathcal{S}$ and any action $a \in \mathcal{A}$,  
    \begin{align*}
        P_{\Bar{\pi}}(s' \mid s,a) = P_{\Bar{\pi}}(s' \mid s).
    \end{align*}
\end{assumption}

As a method of interest, we consider the repeated optimization approach of \citet{PerformativeReinforcementLearning}, developed for single agent \textit{performative} RL, and extend it to MPGs with \textit{performative} effects. There are two critical aspects that we change in this method: i) our approach aims to optimize a surrogate objective instead of the regularized objective in \cite{PerformativeReinforcementLearning}, ii) our approach can be implemented using multi-agent optimization, where each agent independently optimizes a surrogate objective over the state-action occupancy measures. Note that this is in contrast to the previous sections, where agents were directly updating their policies.  
The policy of agent $i$ can be obtained using the following expression:
\begin{equation}\label{eq.: occupany measure to policy}
    \pi_i(a_i \mid s) 
    \begin{cases}
         \frac{\mu^\pi_{i, \Bar{\pi}}(s, a_i)}{\sum_{a_i} \mu^\pi_{i, \Bar{\pi}}(s,a_i)} & \text{if } \sum_a \mu^\pi_{i, \Bar{\pi}}(s,a_i) > 0, \\
        \frac{1}{A} & \text{otherwise},
    \end{cases} 
\end{equation}
where $\mu^\pi_{i, \Bar{\pi}}$ is the state-action occupancy measure of agent $i \in \mathcal{N}$ over $\mathcal{S} \times \mathcal{A}_i$ for the joint policy $\pi$, given the underlying game $\mathcal{G}(\Bar{\pi})$. We denote by $D_i$ the set of feasible occupancy measures over $\mathcal S \times \mathcal A_i$ given game $\mathcal G(\Bar{\pi})$.  Furthermore, $\mu_{\Bar{\pi}}^\pi = (\mu^\pi_{1, \Bar{\pi}}, \dots, \mu^\pi_{n, \Bar{\pi}})$ is the joint occupancy measure.  
When it is clear from context, we simplify the notation by writing $\mu^\pi_\pi = \mu$ and $\mu^{\pi'}_{\pi'} = \mu'$, respectively. Similarly, for each $i \in \mathcal N$, we write $\mu^\pi_{i,\pi} = \mu_i$, $\mu^{\pi'}_{i, \pi'} = \mu'_i$ for all $i \in \mathcal N$. 
More specifically, to update the current policy $\pi^t$, we aim to solve the following optimization problem:
\begin{equation}\label{Eq: Repreated Optimization}
    \argmax_{\mu \in \mathcal{D}} \left \langle  \nabla_{\pi} \Phi_t^t(\rho), \mu \right \rangle - \frac{\lambda}{2} \left\lVert \mu \right\rVert_2^2,
\end{equation}
where $\mathcal{D}$ is the set of valid occupancy measures in $\mathcal{G}(\pi^t)$. Here, the gradient of the potential is evaluated at the current policy and game $\mathcal{G}(\pi^t)$, specifically: $\nabla_{\pi} \Phi_t^t(\rho) = \nabla_{\pi} \Phi^{\pi}_{\pi'}(\rho) \big|_{\pi = \pi_t, \pi' = \pi_t}$.
To obtain agents' policies from the solution to this optimization problem, we use the relation considered in Eq.~\eqref{eq.: occupany measure to policy}. One can show that the optimization problem is feasible without knowledge of $\Phi$ because it holds that $\nabla_{\pi_i} \Phi^{\pi}_{\pi'}(\rho) = \nabla_{\pi_i} V_{i, \pi'}^\pi(\rho)$ for any $i \in \mathcal N$ and any $\pi$ \citep{GlobalConvergence_Leonardos}. However, to establish the convergence of repeatedly optimizing \eqref{Eq: Repreated Optimization}, we require the following sensitivity assumption over state-action occupancy measures and the following smoothness assumption on the gradients.

\begin{assumption}
\label{Assump: Sensitivity Occupancy measures}
For any two policies $\pi$ and $\pi'$, we have that 
    \begin{align*}
    &\text{i.} \ \   \quad \lVert r_{i,\pi}(\cdot, \cdot) - r_{i,\pi'}(\cdot, \cdot) \rVert_2 \leq \zeta_r \cdot \lVert \mu - \mu' \rVert_2, \\
    &\text{ii.} \ \quad \lVert P_\pi(\cdot \mid \cdot, \cdot) - P_{\pi'}(\cdot \mid \cdot, \cdot) \rVert_2 \leq \zeta_p \cdot \lVert \mu - \mu' \rVert_2, \\
    &\text{iii.} \quad \lVert \nabla_\pi \Phi^\pi_{\pi'}(\rho) - \nabla_\pi \Phi^\pi_{\pi'}(\rho) \rVert_2 \leq \beta \cdot \lVert \pi - \pi' \rVert_2,
\end{align*}
    for all $i \in \mathcal{N}$. 
\end{assumption}

For general potential functions, it holds that $\beta \leq \frac{2n\gamma A_{\max}}{(1-\gamma)^3}$,  \citep{GlobalConvergence_Leonardos}.

 Under Assumption~\ref{Assump: Sensitivity Occupancy measures}, repeatedly optimizing~\eqref{Eq: Repreated Optimization} results in an occupancy measure $\mu^\lambda$ associated with an approximate PSE $\pi^\lambda$ for a sufficiently large value of $\lambda$. 
 Now, instead of optimizing for the joint occupancy measure, we can consider the following optimization problems over individual occupancy measures, which allow agents to independently perform the policy update step:
 \begin{equation}\label{Eq: Independent repeated optimization}
    \argmax_{\mu_i \in \mathcal{D}_i} \left\langle \nabla_{\pi_i} \Phi_t^t(\rho),  \mu_i \right\rangle
    - \frac{\lambda}{2} \lVert \mu_i \rVert_2^2.
\end{equation}
To obtain agent $i$'s policy from the solution of this optimization problem, we can use the relation \eqref{eq.: occupany measure to policy}. We can show that the obtained policies are the same as the ones obtained from the solution of \eqref{Eq: Repreated Optimization} for the special class of \textit{performative} MPGs, which we consider in this section. This holds because $\sum_a \mu^{\pi}_{\pi'}(s,a) = \sum_a \mu^{\Bar{\pi}}_{\pi'}(s,a) \eqqcolon \alpha_\pi(s)$ for a fixed game $\mathcal{G}(\pi)$ and any two policies $\pi, \Bar{\pi}$, i.e., the visitation distribution is independent of the played policy. Combining this with the convergence of repeatedly optimizing \eqref{Eq: Repreated Optimization}, we obtain the following theorem.

\begin{theorem}\label{Thm: Subgame Finite-LIC}
    For the choice of $\lambda \geq O \left( (\zeta_p + \zeta_r) \cdot \frac{S^{2} \sqrt{n} \gamma A^{9/4}}{(1-\gamma)^6} + \frac{S^{3/2} \gamma A^{5/4} \beta}{(1-\gamma)^3 \min_s \alpha(s) } \right)$, let $\mu^\lambda$ be the fixed point of the objective in Eq.~\eqref{Eq: Repreated Optimization}. If agents $i \in \mathcal{N}$ repeatedly optimize~\eqref{Eq: Independent repeated optimization} for $T \geq 2(1-\mu)^{-1} \ln(2/\delta(1-\gamma))$ iterations, it holds that $\lVert \mu^T - \mu^\lambda \rVert_2 \leq \delta$ and the \textit{performative} gap is bounded by 
    \begin{align*}
         &\max_{i \in \mathcal{N}} \max_{\pi'_i} \left( V_{i,\pi^T}^{\pi'_i, \pi^{(T)}_{-i}}(\rho) - V_{i,\pi^T}^{\pi^{(T)}}(\rho) \right) 
         \\ &\leq \frac{\kappa_\rho}{\min_s \alpha_\lambda(s)(1-\gamma)} \cdot \left( \sqrt{\max_i A_{i}} \cdot \delta + \frac{\lambda}{2(1-\gamma)}\right).
    \end{align*}
\end{theorem}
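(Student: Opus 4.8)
The plan is to adapt the repeated-retraining fixed-point analysis of \citet{PerformativeReinforcementLearning} from single-agent occupancy-measure optimization to our multi-agent surrogate-objective setting, exploiting Assumption~\ref{Ass: Agent independent transitions} to make the independent updates equivalent to a joint one. The argument has three parts: (a) show the joint retraining map on occupancy measures is a contraction once $\lambda$ is large enough, giving a unique fixed point $\mu^\lambda$ and geometric convergence of the iterates; (b) use Assumption~\ref{Ass: Agent independent transitions} to show the independent updates \eqref{Eq: Independent repeated optimization} and the joint update \eqref{Eq: Repreated Optimization} generate identical policies, so the independent dynamics inherit the rate of (a); and (c) bound the performative gap at $\pi^T$ by the gap at the fixed-point policy $\pi^\lambda$ plus a Lipschitz correction controlled by $\|\mu^T - \mu^\lambda\|_2 \le \delta$.

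For (a), define the operator $\mathcal{O}_\lambda$ that maps $\mu$ --- with induced policy $\pi^\mu$ and game $\mathcal{G}(\pi^\mu)$ --- to the (unique, by $\lambda$-strong concavity) maximizer over the feasible polytope $\mathcal{D}(\pi^\mu)$ of $\langle \nabla_\pi \Phi_{\pi^\mu}^{\pi^\mu}(\rho), \cdot \rangle - \tfrac{\lambda}{2}\|\cdot\|_2^2$; its fixed point is $\mu^\lambda$. I would bound $\|\mathcal{O}_\lambda(\mu) - \mathcal{O}_\lambda(\mu')\|_2$ using the standard stability of the argmax of a strongly-concave program under joint perturbation of the linear term and of the constraint set. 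The linear term perturbs by the Lipschitz constant of $\mu \mapsto \nabla_\pi \Phi_{\pi^\mu}^{\pi^\mu}(\rho)$, which splits into a ``change of evaluation point'' piece bounded via Assumption~\ref{Assump: Sensitivity Occupancy measures}(iii) and a ``change of game'' piece bounded via Assumption~\ref{Assump: Sensitivity Occupancy measures}(i),(ii) together with $\nabla_{\pi_i}\Phi = \nabla_{\pi_i} V_i$; combined with the elementary conversions $\|\pi^\mu - \pi^{\mu'}\|_2 \le (\min_s \alpha(s))^{-1}\|\mu - \mu'\|_2$ and $\|\mu - \mu'\|_2 \le (1-\gamma)^{-1}\|\pi^\mu - \pi^{\mu'}\|_2$, this scales like $(\zeta_r + \zeta_p + \beta/\min_s\alpha(s))\cdot \mathrm{poly}(S,A,n,(1-\gamma)^{-1})$. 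The polytope $\mathcal{D}(\pi^\mu)$ perturbs in Hausdorff distance by the transition-sensitivity part of Assumption~\ref{Assump: Sensitivity Occupancy measures} times $\|\mu - \mu'\|_2$, and here Assumption~\ref{Ass: Agent independent transitions} simplifies matters since the Bellman-flow constraints couple the agents only through the state visitation $\alpha(s)$, not the actions. Dividing by the strong-concavity modulus $\lambda$ and collecting horizon factors yields a Lipschitz constant of order $(\zeta_p + \zeta_r)\tfrac{S^2\sqrt{n}\gamma A^{9/4}}{(1-\gamma)^6\lambda} + \tfrac{S^{3/2}\gamma A^{5/4}\beta}{(1-\gamma)^3 \min_s\alpha(s)\lambda}$, so the stated choice of $\lambda$ forces it to be some $\mu < 1$. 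Then $\|\mu^T - \mu^\lambda\|_2 \le \mu^T \|\mu^0 - \mu^\lambda\|_2$, and since the occupancy polytope has $\ell_2$-diameter polynomial in $(1-\gamma)^{-1}$ (and $n$), any $T \ge 2(1-\mu)^{-1}\ln(2/(\delta(1-\gamma)))$ gives $\|\mu^T - \mu^\lambda\|_2 \le \delta$.

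For (b): under Assumption~\ref{Ass: Agent independent transitions}, $\sum_a \mu^\pi_{\pi'}(s,a) = \alpha_{\pi'}(s)$ does not depend on the played policy $\pi$, so the feasible set factorizes as $\mathcal{D}(\pi^t) = \prod_i \mathcal{D}_i(\pi^t)$ and the surrogate decomposes as $\sum_i\big(\langle \nabla_{\pi_i}\Phi_t^t(\rho), \mu_i\rangle - \tfrac{\lambda}{2}\|\mu_i\|_2^2\big)$; hence the joint maximizer is the tuple of individual maximizers and, since the state marginals coincide, \eqref{eq.: occupany measure to policy} extracts the same policy in both cases, so the independent dynamics are literally the joint dynamics and inherit (a). For (c): the MPG identity turns the gap at $\pi^\lambda$ into a potential difference $V_{i,\pi^\lambda}^{\pi_i',\pi^\lambda_{-i}}(\rho) - V_{i,\pi^\lambda}^{\pi^\lambda}(\rho) = \Phi_{\pi^\lambda}^{\pi_i',\pi^\lambda_{-i}}(\rho) - \Phi_{\pi^\lambda}^{\pi^\lambda}(\rho)$; using $\nabla_{\pi_i}\Phi = \nabla_{\pi_i}V_i$ \citep{GlobalConvergence_Leonardos} and the fact that for the fixed game $\mathcal{G}(\pi^\lambda)$ with action-independent transitions $V_{i,\pi^\lambda}^{\pi_i,\pi^\lambda_{-i}}(\rho)$ is a linear function of $\mu_i$, the first-order optimality of $\mu_i^\lambda$ for the individual surrogate gives $V_{i,\pi^\lambda}^{\pi_i',\pi^\lambda_{-i}}(\rho) - V_{i,\pi^\lambda}^{\pi^\lambda}(\rho) \le \tfrac{\lambda}{2}\|\mu_i'\|_2^2 = O(\lambda/(1-\gamma))$, i.e.\ $\pi^\lambda$ has performative gap $O(\lambda/(1-\gamma))$. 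Finally, converting $\|\mu^T - \mu^\lambda\|_2 \le \delta$ through \eqref{eq.: occupany measure to policy} (cost $(\min_s \alpha_\lambda(s))^{-1}$) and then through Lipschitz continuity of $V$ in the policy (cost $\kappa_\rho \sqrt{\max_i A_i}/(1-\gamma)$), and adding the $O(\lambda/(1-\gamma))$ gap at $\pi^\lambda$, yields the claimed bound.

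I expect the contraction step (a) to be the main obstacle: quantifying how the regularized maximizer moves when the feasible polytope itself --- not just the objective --- is perturbed, and carefully tracking the policy-to-occupancy conversions together with the $1/\min_s\alpha(s)$ and $1/(1-\gamma)$ factors so that the precise threshold on $\lambda$ and the precise prefactor in the final bound come out. This is exactly where the technique of \citet{PerformativeReinforcementLearning} has to be re-derived, now with a linearized (surrogate) potential objective rather than the true regularized objective, which is what introduces the extra $\beta/\min_s\alpha(s)$ term in the requirement on $\lambda$.
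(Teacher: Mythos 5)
Your parts (b) and (c) track the paper's argument closely: the equivalence of the independent updates \eqref{Eq: Independent repeated optimization} with the joint update \eqref{Eq: Repreated Optimization} via $\sum_{a}\mu^{\pi}_{\pi'}(s,a)=\alpha_{\pi'}(s)$ is exactly the paper's reasoning, and the gap bound combining gradient domination, the optimality of $\mu^\lambda$ for the regularized surrogate (giving the $\tfrac{\lambda}{2(1-\gamma)}$ term), and the $\sqrt{A_{\max}}\,\delta$ transfer term is the same set of ingredients the paper uses, merely reorganized (the paper applies gradient domination at $\pi^T$ and splits $\langle \mu'-\mu,\nabla\Phi\rangle$ through $\mu^\lambda$, which is how the common prefactor $\kappa_\rho/(\min_s\alpha_\lambda(s)(1-\gamma))$ multiplies \emph{both} terms in the stated bound).

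The genuine gap is in your step (a), the contraction of the retraining map, and it is exactly the point you flag as the "main obstacle." You propose a primal stability argument: perturb the linear term and perturb the feasible polytope $\mathcal{D}(\pi^\mu)$ in Hausdorff distance, then invoke "standard stability of the argmax of a strongly-concave program." But for a $\lambda$-strongly concave objective, perturbing the constraint set by Hausdorff distance $d_H$ moves the maximizer by order $d_H+\sqrt{G\,d_H/\lambda}$ (with $G$ a Lipschitz constant of the objective), i.e.\ the dependence on the constraint-set perturbation is of square-root type, not linear in $\lVert\mu-\mu'\rVert_2$. That is not enough to conclude $\lVert\mathcal{O}_\lambda(\mu)-\mathcal{O}_\lambda(\mu')\rVert_2\le\xi\lVert\mu-\mu'\rVert_2$ with $\xi<1$, so the Banach fixed-point argument, the geometric rate, and the stated iteration count $T\ge 2(1-\mu)^{-1}\ln(2/\delta(1-\gamma))$ do not follow from what you have written. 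The paper circumvents precisely this difficulty by passing to the Lagrangian dual of \eqref{Eq: Repreated Optimization}: the dual variables $h\in\RR^{n\times S}$ live in a \emph{fixed, unconstrained} space, the dual objective is $\bigl(A_{\min}(1-\gamma)^2/\lambda\bigr)$-strongly convex, its gradient shifts \emph{linearly} in $\lVert\mu-\widehat{\mu}\rVert_2$ (via the Lipschitzness of $\nabla\Phi$ from Assumption~\ref{Assump: Sensitivity Occupancy measures}(iii), the transition sensitivity, and a bound $\lVert h\rVert_2\lesssim\sqrt{nS A_{\max}}/(1-\gamma)^4$ on the dual solution), and the primal maximizer is then recovered from $h$ through the explicit stationarity formula, which restores a linear contraction estimate and yields the threshold on $\lambda$. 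To repair your proof you would either have to reproduce this dual route, or give a bespoke primal argument with a linear bound on the motion of the maximizer under the polytope change (e.g.\ an explicit renormalization map between the two occupancy polytopes exploiting agent-independent transitions), neither of which is supplied by the "standard stability" you invoke.
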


\begin{proof}[Proof sketch]$ $\par\nobreak\ignorespaces
\begin{itemize}
    \item Following the proof argument by \citet{PerformativeReinforcementLearning}, we adopt the dual perspective on Eq.~\eqref{Eq: Repreated Optimization}, allowing us to analyze a strongly convex program with a fixed feasible region. 
    However, we additionally need to show that $\nabla_{\Bar{\pi}} \Phi^{\pi}_{\pi'}$ is Lipschitz continuous in the state action occupancy measure $\mu'$ (associated with policy $\pi'$). We obtain the Lipschitz constant that depends on the smoothness of the gradient and the sensitivity parameters. 
    \item Translating back the dual solution, using strong duality, we show that $\mu^T$ converges to a fixed point $\mu^\lambda$.
    \item Finally, we construct policy $\pi^T$ from $\mu^T$ and combine a gradient domination argument with the fact that $\mu^\lambda$ maximizes expression in Eq.~\eqref{Eq: Repreated Optimization}. \qedhere%
\end{itemize}%
\end{proof}
This theorem shows the last-iterate convergence of the novel repeated optimization approach to an approximate PSE for sufficiently small $\zeta_p, \zeta_r$ and $\beta$.

\section{Experiments}\label{sec:experiments}

We study the empirical performance of IPGA and INPG on two MPGs from the literature -- the safe-distancing game \citep{GlobalConvergence_Leonardos, IndepPolicyGrad_Ding}, and stochastic congestion games \citep{NPG_converges}.
The algorithms we evaluate are non-oracle gradient algorithms, meaning the gradient is estimated based on the rollouts of the policies.
We replicate each experiment across 10 seeds for the safe-distancing game and 5 seeds for the stochastic congestion game.
The plots report the mean and one standard deviation over these seeds.
Full details of practical implementations and hyperparameters can be seen in Appendix~\ref{app:algorithms}.

\paragraph{Environments.}
We take the safe-distancing game from \citet{GlobalConvergence_Leonardos} and extend this environment to adapt to \textit{performative} effects: the agents' actions are replaced with probability $\alpha$ with those sampled from a perturbed environment's optimal Q-values, similar to \citet{PerformativeReinforcementLearning}.
More details are presented in Appendix~\ref{app:safe-dist-env}, with an illustration of this environment in Figure~\ref{fig:safe-dist-env}.
Additionally, we take the stochastic congestion game \citep{NPG_converges} and modify it to have a \textit{performative} response. More precisely, the environment transition probabilities and reward function change according to the deployed policy based on the sensitivity Assumption~\ref{Assump: Sensitivity}: upon playing policy $\pi'$ the transition probabilities become $P_{i,\pi'} = P_{i,\pi_0} + \frac{\perfc_p}{(1-\gamma) \sqrt{|S|\,|\mathcal{A}_i|}} (\pi' - \pi_0) \frac{1}{|S|}$, and the rewards $r_{i,\pi'} = r_{i,\pi_0} + \frac{\perfc_r}{(1-\gamma) \sqrt{|S|\,|\mathcal{A}_i|}} (\pi' - \pi_0)$, where $\perfc_p$ and $\perfc_r$ are the sensitivity parameters indicating the strength of the \textit{performative} effects.
In our experiments, we use the same value $\perfc = \perfc_p = \perfc_r$ for both parameters.
More details are presented in Appendix~\ref{app:cong-game}, with an illustration of this environment in Figure~\ref{fig:cong-game}.
\paragraph{Results.}
We start by showing how the \textit{performative} effect influences convergence for IPGA in Figure~\ref{fig:pgas}.
For completeness, we follow \citet{IndepPolicyGrad_Ding} and include both the version of the algorithm we analyze (IPGA-D), and the variant proposed by \citet[IPGA-L]{GlobalConvergence_Leonardos}.
Generally, we notice that stronger \textit{performative} effects, i.e., increasing $\alpha$ in the safe-distancing game or increasing $\perfc$ in the stochastic congestion game, require a larger amount of iterations until convergence is observed.
Additionally, in the safe-distancing game, higher \textit{performative} strengths (e.g., $\alpha=0.15$) lead to inexact convergence -- the algorithms converge to a region of stable equilibria, exhibiting possible oscillations inside that region.

\begin{figure}[!t]
    \centering
    \vspace{.3in}
    \includegraphics[width=\linewidth]{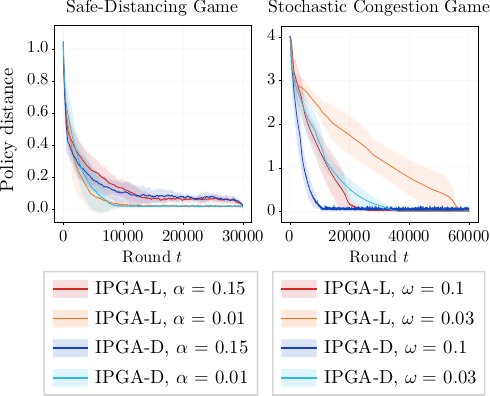}
    \vspace{.3in}
    \caption{Comparison of IPGA-L and IPGA-D, showing the distance from the current policy to the average of the last 10 in that run: $\frac{1}{n} \sum_i^n \left\| \pi_i^t - \pi_i^\text{last} \right\|$, $\gamma = 0.99$. \textbf{Left}: IPGA-L $\eta = 0.00001$, IPGA-D $\eta = 0.0001$.
    \textbf{Right}: $\eta = 0.0003$.
    }
    \label{fig:pgas}
\end{figure}

In the stochastic congestion game, we observe that IPGA-D converges faster than IPGA-L. 
A further study of the choice of the learning rate $\eta$ can be seen in the appendix, Figure~\ref{fig:omegas_lrs}, which shows the performance of both methods.

We now shift our focus to the algorithms of interest that we analyze.
A comparison of IPGA-D and both versions of INPG under \textit{performative} effects can be seen in Figure~\ref{fig:pga_vs_inpg}.
Typically, INPG converges faster than IPGA algorithms, and this is observed in our experiments as well.
This reflects our theoretical guarantees, which indicate a better performance for natural policy gradient methods. 
Additionally, we observe that INPG (reg.)\ takes less rounds than INPG (unreg.) to converge, even under strong \textit{performative} effects.
However, in some cases the log-barrier regularization loses this advantage by having a large \textit{performative} gap (e.g., Figure~\ref{fig:pga_vs_inpg} right).
We present more thorough experiments varying the learning rate $\eta$ and performativity strength $\alpha$, $\perfc$ in Appendix~\ref{app:algorithms}.

\begin{figure}[!t]
    \centering
    \vspace{.3in}
    \includegraphics[width=\linewidth]{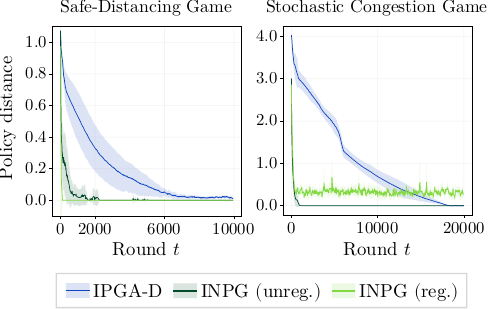}
    \vspace{.3in}
    \caption{Comparison of IPGA-D, INPG (unreg.), INPG (reg.), showing the distance from the current policy to the average of the last 10 in that run: $\frac{1}{N} \sum_i^N \left\| \pi_i^t - \pi_i^\text{last} \right\|$, $\gamma = 0.99$. \textbf{Left}: $\alpha = 0.01$, $\eta = 0.0001$.
    \textbf{Right}: $\perfc = 0.03$, $\eta = 0.0006$.
    }
    \label{fig:pga_vs_inpg}
\end{figure}

\section{Conclusion}

We provided a theoretical treatment of independent learning algorithms in performative Markov Potential Games.
Our results establish best-iterate and asymptotic last-iterate convergence for different independent policy gradient algorithms, which showcases their robustness to performative effects.
We further show that it is possible to obtain finite-time last-iterate convergence results for a special class of MPGs.
It remains an open question whether our finite-time last-iterate convergence can be extended to the general case.
For that, an interesting avenue may be to study weaker equilibria concepts.

\subsubsection*{Acknowledgements}
This research was, in part, funded by the Deutsche Forschungsgemeinschaft (DFG, German Research Foundation) – project number 467367360. Additionally, this work was supported from the DFG under project number 514505843.

\bibliography{main}


\section*{Checklist}

 \begin{enumerate}

 \item For all models and algorithms presented, check if you include:
 \begin{enumerate}
   \item A clear description of the mathematical setting, assumptions, algorithm, and/or model. [Yes]
   \item An analysis of the properties and complexity (time, space, sample size) of any algorithm. [Yes]
   \item (Optional) Anonymized source code, with specification of all dependencies, including external libraries. [Yes]
 \end{enumerate}

 \item For any theoretical claim, check if you include:
 \begin{enumerate}
   \item Statements of the full set of assumptions of all theoretical results. [Yes]
   \item Complete proofs of all theoretical results. [Yes]
   \item Clear explanations of any assumptions. [Yes] 
 \end{enumerate}

 \item For all figures and tables that present empirical results, check if you include:
 \begin{enumerate}
   \item The code, data, and instructions needed to reproduce the main experimental results (either in the supplemental material or as a URL). [Yes] URL: \href{https://github.com/PauliusSasnauskas/performative-mpgs}{github.com/PauliusSasnauskas/performative-mpgs}
   \item All the training details (e.g., data splits, hyperparameters, how they were chosen). [Yes]
   \item A clear definition of the specific measure or statistics and error bars (e.g., with respect to the random seed after running experiments multiple times). [Yes] We replicate the experiments on 5 different seeds for the safe-distancing game and 10 different seeds for the stochastic congestion game, and show the standard deviation across them.
   \item A description of the computing infrastructure used. (e.g., type of GPUs, internal cluster, or cloud provider). [Yes]
 \end{enumerate}

 \item If you are using existing assets (e.g., code, data, models) or curating/releasing new assets, check if you include:
 \begin{enumerate}
   \item Citations of the creator if your work uses existing assets. [Yes]
   \item The license information of the assets, if applicable. [Not Applicable]
   \item New assets either in the supplemental material or as a URL, if applicable. [Yes]
   \item Information about consent from data providers/curators. [Not Applicable]
   \item Discussion of sensible content if applicable, e.g., personally identifiable information or offensive content. [Not Applicable]
 \end{enumerate}

 \item If you used crowdsourcing or conducted research with human subjects, check if you include:
 \begin{enumerate}
   \item The full text of instructions given to participants and screenshots. [Not Applicable]
   \item Descriptions of potential participant risks, with links to Institutional Review Board (IRB) approvals if applicable. [Not Applicable]
   \item The estimated hourly wage paid to participants and the total amount spent on participant compensation. [Not Applicable]
 \end{enumerate}

 \end{enumerate}


\iftoggle{longversion}{%
\clearpage%
\onecolumn%
\appendix%
{\allowdisplaybreaks%
\aistatstitle{Independent Learning in Performative Markov Potential Games \\ Supplementary Materials}%
\vspace{-2in}

\hypersetup{linkcolor=black}  
\renewcommand{\contentsname}{Appendices}
\addtocontents{toc}{\protect\setcounter{tocdepth}{3}}  
\tableofcontents
\clearpage

\hypersetup{linkcolor=red}  

\section{Proofs and Derivations}\label{app: proofs and derivations}

\subsection{Existence of PSE}\label{App: Existence PSE}

\begin{proof}[Proof of Lemma~\ref{lemma: potential fixed point}]
     We show that the function $ \widehat{\Phi}(\pi)= \argmax_{\pi' \in \Pi} \Phi_{\pi}^{\pi'}(\rho)$ has a fixed point. The idea is to show that $\widehat{\Phi}(\cdot)$ is hemicontinuous, so that a Kakutani fixed point theorem is applicable. Observe that $\widehat{\Phi}(\pi)$ is non-empty, as any policy $\pi'$ can be deployed in the MPG $\mathcal{M}(\pi)$, so that the potential $\Phi_\pi^{\pi'}(\mu)$ is well-defined.
     Further, note that $\Phi^{\pi'}_{\pi}(\mu)$ has a global maximum with respect to parameter $\pi'$, given a fixed $\pi$.
     This holds, because the set $\Pi$ is compact and the function is continuous in $\pi'$.
     The latter holds, because for any agent $i \in \mathcal{N}$ and any deviation $\widehat{\pi}_i \in \Pi_i$, it holds that
     \begin{align*}
         \Phi^{\pi'}_{\pi}(\mu) - \Phi^{ \widehat{\pi}_i, \pi'_{-i}}_{\pi}(\mu) 
         = 
         V^{ \pi'}_{i, \pi}(\mu) - V^{ \widehat{\pi}_i, \pi'_{-i}}_{i, \pi}(\mu)
     \end{align*}
     and the value function $V_{i, \pi}^{\pi'}$ is continuous in $\pi'$ for any $i \in \mathcal{N}$. From that, one can infer that $\Phi_\pi^{\pi'}$ is continuous in $\pi'$ because we have for any $\widehat{\pi} \in \Pi$,
     \begin{align*}
         \Phi^{\pi'}_\pi(\mu) - \Phi_\pi^{\widehat{\pi}}(\mu) =
         \left(V_1^{\pi'}(\mu) - V_{1, \pi}^{\widehat{\pi}_1, \pi'_{-1}}(\mu) \right) + 
         \left( V_{2, \pi}^{\widehat{\pi}_1, \pi'_{-1}}  - V_{2, \pi}^{\widehat{\pi}_{\{1,2\}}, \pi'_{-\{1,2\}}}
         \right)
         + \dots +
         \left( V_{n, \pi}^{\pi'_n, \widehat{\pi}_{-n} } - V_{n, \pi}^{\widehat{\pi}}(\mu) \right). 
     \end{align*}
     Further, one observes that 
     $\Phi^{\pi'}_{\pi}$ is continuous in $\pi$ due to $(\perfc_r,\perfc_p)$-sensitivity. So, we have the ingredients to apply Berge's maximum theorem. 
    Finally, we can apply Berge's maximum theorem, this implies that $\widehat{\Phi}(\cdot)$ is upper hemicontinuous. We conclude that $\widehat{\Phi}$ has a fixed point $\pi^*$, by applying the Kakutani fixed point theorem \citep{glicksberg1952}. Hence, it holds
    \begin{align*}
        \pi^* \in \argmax_{\pi} \Phi_{\pi^*}^{\pi}(\mu).
    \end{align*}
    We conclude that, $\Phi_{\pi^*}^{\pi^*}(\mu) \geq \Phi^{\pi_i, \pi^*_{-i}}_{\pi^*}(\mu)$ for any $\pi_i$.
\end{proof}

Using that auxiliary Lemma, we can show the existence of the solution concepts.

\begin{proof}[Proof of Theorem~\ref{Theorem: Existence stable policy}]
    Combining the definition in Eq.~\eqref{Def: Markov Potential Game} and Lemma~\ref{lemma: potential fixed point}, the potential function $\Phi_{*}^*$ satisfies
    \begin{align*}
         0 &\leq \Phi_{\pi^*}^{(\pi^*_i, \pi^*_{-i})}(\rho) - \Phi_{\pi^*}^{(\pi_i, \pi^*_{-i})}(\rho) 
        \\ &=   V_{i,{\pi^*}}^{(\pi^*_i, \pi^*_{-i})}(\rho) - V_{i,{\pi^*}}^{(\pi_i, \pi^*_{-i})}(\rho)
    \end{align*}
    for all policies $\pi_i$ for all $i \in \mathcal{N}$. Therefore, we conclude that $\pi^*$ is a performatively stable policy. 
\end{proof}



\subsection{Every PSE Is an Approximate NE}\label{App: PSE is approximate NE}

\begin{proof}
    Suppose that $\pi^*$ is a PSE. 
    Then, for any agent $i \in \mathcal{N}$ and policy $\pi_i$,
    \begin{align*}
         V_{i,(\pi_i, \pi^*_{-i})}^{(\pi_i, \pi^*_{-i})}(\rho) - V_{i,{\pi^*}}^{\pi^*}(\rho)
         =  V_{i,(\pi_i, \pi^*_{-i})}^{(\pi_i, \pi^*_{-i})}(\rho) - V_{i,{\pi^*}}^{(\pi_i, \pi^*_{-i})}(\rho) + V_{i,\pi^*}^{(\pi_i, \pi^*_{-i})}(\rho) - V_{i,{\pi^*}}^{\pi^*}(\rho)
         \le \frac{1}{1 - \gamma}\left(\perfc_r + \frac{\gamma \perfc_p \sqrt{S}}{1 - \gamma} \right),
    \end{align*}
    where the inequality follows from Definition~\ref{Def: PSE} and Lemma~\ref{lemma: Bound noise terms}.
\end{proof}

\subsection{Bounding the Additional Costs Due to Performative Effects}\label{App: Bounding costs due to performativity}

We bound the sensitivity term, which occurs in our main analysis, Lemma~\ref{Lemma: Policy improvement appendix}.
This boils down to bounding $|V_{i,x}^\pi - V_{i,x'}^\pi|$ for any two policies $x, x' \in \Pi$.

\begin{proof}[Proof of Lemma~\ref{lemma: Bound noise terms}]
First, for any $s \in \mathcal{S}$, it holds that for any two policies $x, x' \in \Pi$
\begin{align*}
    V_{i,x}^{\pi}(s) 
    =
    \sum_a \pi(a | s) \cdot Q_{i,x}^{\pi}(s, a).
\end{align*}
Hence, we can bound
\begin{align*}
    | V_{i,x}^{\pi}(s) - V_{i,x'}^{\pi}(s) |
    \leq 
    \max_{s,a} \Big| Q_{i,x}^{\pi}(a, s) - Q_{i,x'}^{\pi}(s, a) \Big|.
\end{align*}
Rewriting over the Bellman equation, we have that for any $x \in \Pi$
\begin{align*}
    Q_{i,x}^\pi(s, a) = r_{i,x}(s,a) + \gamma \sum_{s',a'} P_x(s' | s,a) \pi(a' | s') Q_{i,x}^\pi(s',a').
\end{align*}
Exploiting that, We derive that for any $s \in \mathcal{S}, a \in \mathcal{A}$
\begin{align*}
    &\Big| Q_{i,x}^{\pi}(s, a) - Q_{i,x'}^{\pi}(s, a) \Big| \\
    &\leq
    \left| r_{i,x}(s,a) - r_{i,x'}(s,a) \right| + \gamma \Big| \sum_{s', a'} \left(P_x(s'|s,a) - P_{x'}(s' | s,a)\right) \pi(a' | s') Q_{i,x}^\pi(s', a') \Big| \\&\quad + \gamma \Big| \sum_{s', a'} P_{x'}(s' | s,a) \pi(a' | s') \left( Q_{i,x}^\pi(s', a') - Q_{i,x'}^\pi(s', a') \right)\Big| 
    \\ &\leq
    \left| r_{i,x}(s,a) - r_{i,x'}(s,a) \right|
    +
    \frac{\gamma}{1-\gamma} \lVert P(\cdot | s,a) - P(\cdot |s,a) \rVert_1 + \gamma \max_{s', a'} \left| Q_x^{\pi}(s',a') - Q_{x'}^{\pi}(s',a') \right|.
\end{align*}
Taking the maximum over $s,a$ on the left-hand side implies that 
\begin{align*} 
     \max_{s,a} \Big| Q_{i,x}^{\pi}(s, a) - Q_{i,x'}^{\pi}(s, a) \Big|
     \leq
     \frac{1}{1-\gamma} \max_{s,a} |r_{i,x}(s,a) - r_{i, x'}(s,a) | 
     + \frac{\gamma}{(1-\gamma)^2} \max_{s,a} \lVert P_x(\cdot | s,a) - P_{x'}(\cdot | s,a) \rVert_1. 
\end{align*}
By Assumption~\ref{Assump: Sensitivity}, we obtain that
\begin{align*}
    \max_{s,a} |r_{i,x}(s,a) - r_{i, x'}(s,a) |
    = \lVert r_{i, x}(\cdot, \cdot) - r_{i, x'}(\cdot, \cdot) \rVert_\infty
    \leq 
    \lVert r_{i, x}(\cdot, \cdot) - r_{i, x'}(\cdot, \cdot) \rVert_2
    \leq \perfc_r \cdot \lVert x - x' \rVert_2 ,
\end{align*}
and that
\begin{align*}
    \max_{s,a} \lVert P_x(\cdot | s,a) - P_{x'}(\cdot | s,a) \rVert_1
    &\leq 
    \sqrt{S} \cdot \max_{s,a} \lVert P_x(\cdot | s,a) - P_{x'}(\cdot | s,a) \rVert_2
    \\ &\leq 
    \sqrt{S} \cdot \lVert P_x(\cdot | \cdot, \cdot) - P_{x'}(\cdot | \cdot,\cdot) \rVert_2
    \\ &\leq
    \sqrt{S} \cdot \perfc_p \cdot \lVert x - x' \rVert_2 .
\end{align*}
In total, we obtain, 
\begin{align*}
    \left| V_{i,x}^{\pi}(s) - V_{i,x'}^{\pi}(s) \right|
    &\leq
     \frac{1}{1-\gamma} \lVert r_{i,x} - r_{i, x'} \rVert_\infty + \frac{\gamma}{(1-\gamma)^2} \max_{s,a} \lVert P_x(\cdot | s,a) - P_{x'}(\cdot | s,a) \rVert_1
     \\ &\leq
     \frac{1}{1-\gamma} \perfc_r \cdot \lVert x - x' \rVert_2 + \frac{\gamma}{(1-\gamma)^2} \perfc_p \cdot \sqrt{S} \cdot \lVert x - x' \rVert_2
     \\ &\leq 
     \frac{1}{1-\gamma} \cdot \Big(\perfc_r +\frac{ \gamma \cdot \perfc_p \sqrt{S}}{1-\gamma} \Big) \cdot \lVert x - x' \rVert_2. \qedhere
\end{align*}
\end{proof}

\subsection{Infinite Sample Case PGA -- Proof of Theorem \ref{Theorem: Convergence Result - Infinite Sample Case}}\label{Proof: PGA Convergence Result -- Infinite Sample Case}

We aim to bound the Nash regret.
By the standard analysis of \citet[Theorem $1$]{IndepPolicyGrad_Ding},  
\begin{align*}
    &\sum_{t=1}^T \max_i \left( \max_{\pi_i'} V_{i, t}^{\pi'_i, \pi_{-i}^t}(\rho) - V_{i,t}^{\pi^t}(\rho) \right)
    \\&\overset{(a)}{=}
    \frac{1}{1-\gamma} \sum_{t=1}^T \max_{\pi'_i} \sum_{s,a_i} d_{\rho,t}^{\pi'_i, \pi_{-i}}(s) \left( \pi'_i(a_i|s) - \pi_i^{(t)}(a_i | s) \right) \Bar{Q}_{i,t}^t(s,a_i)
    \\& \overset{(b)}{\leq}
    \frac{3}{\eta(1-\gamma)} \sum_{t=1}^T \sum_s d_{\rho,t}^{\pi'_i, \pi^{t}_{-i}}(s) \cdot \left\lVert \pi_i^{t+1}(\cdot \mid s)) - \pi_i^t(\cdot \mid s) \right\rVert_2 
    \\& \overset{(c)}{\leq}
    \frac{\sqrt{\sup_{t, \pi} \lVert d^\pi_{\rho,t} / \nu \rVert_\infty}}{\eta(1-\gamma)^{3/2}} \sum_{t=1}^T \sum_s \sqrt{d_{\rho,t}^{\pi'_i, \pi^t_{-i}} \cdot d_{\nu,t}^{\pi_i^{t+1}, \pi^t_{-i}}} \cdot \left\lVert \pi_i^{t+1}(\cdot | s) - \pi_i^t(\cdot | s) \right\rVert_2 
    \\& \overset{(d)}{\leq}
    \frac{\sqrt{\sup_{t, \pi} \lVert d^\pi_{\rho,t} / \nu \rVert_\infty}}{\eta(1-\gamma)^{3/2}}\sqrt{\sum_{t=1}^T \sum_s d_{\rho,t}^{\pi_i^{t+1}, \pi_{-i}^t}(s)} \cdot 
    \sqrt{\sum_{t=1}^T \sum_{i=1}^n \sum_s d_{\nu,t}^{\pi_i^{t+1}, \pi_{-i}^{(t)}}(s) \cdot \lVert \pi^{t+1}_i(\cdot | s) - \pi_i^t(\cdot | s)) \rVert_2^2}'
    \\ &\overset{(e)}{\leq}
\frac{\sqrt{\sup_{t, \pi} \lVert d^\pi_{\rho,t} / \nu \rVert_\infty}}{\eta(1-\gamma)^{3/2}}
    \cdot 
    \sqrt{T}
     \cdot \sqrt{\sum_{t=1}^T \sum_{i=1}^n \sum_s d_{v, t}^{\pi^{t+1}_i, \pi_{-i}^t}(s) \cdot \lVert \pi_i^{t+1}(\cdot | s) - \pi_i^t(\cdot | s) \rVert^2} ,
\end{align*}
where we have $\widetilde{\kappa}_\rho = \sup_t \inf_{v \in \Delta(\mathcal{S})} \sup_{\pi \in \Pi} \lVert d^\pi_{\rho,t} / v  \rVert_{\infty}$. Note, that $(a)$ follows by Lemma~\ref{Lemma: Performative Difference Lemma} and by abusing the notation $i$ to represent $\argmax_{\pi'_i}$. In $(b)$, we use the observation by \citet{IndepPolicyGrad_Ding} that by the optimality of $\pi_i^{t+1}$, it holds that
\begin{align*}
    \left\langle \pi'_i(\cdot|s) - \pi_i^{t+1}(\cdot | s), \eta \Bar{Q}_{i,t}^t(s, \cdot) - \pi_i^{t+1}(\cdot | s) + \pi_i^t(\cdot | s) \right\rangle_{\mathcal{A}_i} \leq 0 \quad \text{for any } \pi'_i \in \Pi_i,
\end{align*}
which implies that,
\begin{align*}
    &\left\langle \pi'_i(\cdot | s) - \pi_i^t(\cdot |s), \Bar{Q}_i^t(s, \cdot) \right\rangle_{\mathcal{A}_i}
    \\ &\leq
    \left\langle \pi'_i(\cdot | s) - \pi_i^{t+1}(\cdot | s), \pi_i^{t+1}(\cdot | s) - \pi_i^t(\cdot | s) \right\rangle_{\mathcal{A}_i} + \left\langle \pi_i^{t+1}(\cdot | s) - \pi_i^t(\cdot | s), \Bar{Q}_i^{t}(s,\cdot) \right\rangle_{\mathcal{A}_i}
    \\ &\leq
    \frac{2}{\eta} \left\lVert \pi_i^{t+1}(\cdot | s) - \pi_i^t(\cdot | s) \right\rVert_2 + \left\lVert \pi_i^{t+1}(\cdot | s) - \pi_i^t(\cdot | s) \right\rVert_2 \left\lVert \Bar{Q}_i^t(s, \cdot ) \right\rVert_2
    \\ &\leq
    \frac{3}{\eta} \left\lVert \pi_i^{t+1}(\cdot | s) - \pi_i^t(\cdot | s) \right\rVert_2, 
\end{align*}
where the line uses that $\lvert \Bar{Q}_{i,t}^t(s, \cdot) \rVert_2 \leq \frac{\sqrt{A}}{1-\gamma}$ and $\eta \leq \frac{1-\gamma}{\sqrt{A}}$.
In $(c)$, we choose an arbitrary $\nu \in \Delta(\mathcal{S})$ and exploit
\begin{align*}
    \frac{d_{\rho,t}^{\pi_i', \pi_{-i}^t}(s)}{d^{\pi^{t+1}_i, \pi^t_{-i}}_{\nu,t}(s)}
    \leq
    \frac{d_{\rho,t}^{\pi_i', \pi_{-i}^t}(s)}{(1-\gamma)\nu(s)}
    \leq
    \frac{\sup_{\pi,t} \lVert d_\rho^\pi / \nu \rVert_\infty}{(1-\gamma)}.
\end{align*}
In $(d)$, we exploit the Cauchy-Schwarz inequality, and we sum over all agents to replace the $i$ from the $\argmax_i$ in $(a)$, and in $(e)$ we proceed with using that the second term is equal to $\sqrt{T}$. 

We apply the first guarantee in Lemma~\ref{Lemma: Policy improvement appendix} to obtain:
\begin{align*}
    &\sum_{t=1}^T \max_i \left( \max_{\pi_i', \pi_{-i}^t} V_{i, t}^{\pi_i', \pi_{-i}^t}(\rho) - V_{i,t}^{\pi^t}(\rho) \right)
    \\ &\leq
\frac{\sqrt{\widetilde{\kappa}_{\rho}}} {\eta(1-\gamma)^{3/2}} \cdot \sqrt{T} \cdot \sqrt{2\eta(1-\gamma) \big( \Phi_{T+1}^{T+1}(v) - \Phi^1_{1}(v) \big) + \frac{8 \eta^3 A^2 n^2}{(1-\gamma)^4}T + \mathcal{W}_{r,p} \cdot 2\eta (1-\gamma)}
\\ &\leq
\sqrt{\frac{\widetilde{\kappa}_{\rho} \cdot T \cdot C_\Phi}{\eta(1-\gamma)^{2}}} 
+
\frac{\sqrt{\widetilde{\kappa}_{\rho}}} {\eta(1-\gamma)^{3/2}} \cdot \sqrt{T} \cdot \sqrt{\frac{4\eta^3  A^2 n^2}{(1-\gamma)^4}T }
    +  \frac{\sqrt{\widetilde{\kappa}_{\rho}}} {\eta(1-\gamma)^{3/2}} \cdot \sqrt{T}
    \cdot \sqrt{2\eta(1-\gamma) \cdot \mathcal{W}_{r,p} }
    \\ &\lesssim
    \sqrt{\frac{\widetilde{\kappa}_\rho T C_\phi}{\eta(1-\gamma)^2}} + \sqrt{\frac{\widetilde{\kappa}_\rho \eta T^2 A^2 n^2}{(1-\gamma)^7}} + \frac{\sqrt{\widetilde{\kappa}_\rho \cdot T \cdot \mathcal{W}_{r,p}}}{\sqrt{\eta}(1-\gamma)^3} 
\end{align*}
where we use $\mathcal{W}_{r,p} = \frac{n(n+1)}{2} \cdot \delta_{r,p} \cdot \lVert \pi^{t+1} - \pi^t \rVert_2$ and that $C_\Phi = \max_{t, \pi, \pi', \mu} |\Phi_t^\pi(\mu) - \Phi_t^{\pi'}(\mu)|$. 
We complete the first guarantee by taking step size $\eta = \frac{(1-\gamma)^{5/2} \sqrt{C_\Phi}}{n A \sqrt{T}}$. 

For completing the guarantee, we set $\eta \leq \frac{(1-\gamma)^4}{8 \min{ \{\kappa_\nu, S \}^3} n A}$ and we apply Lemma~\ref{Lemma: Policy improvement appendix}, for a large enough constant $c$, we obtain 
\begin{align*}
    &\sum_{t=1}^T \max_i \left( \max_{\pi_i', \pi_{-i}^t} V_{i, t}^{\pi_i', \pi_{-i}^t}(\rho) - V_{i,t}^{\pi^t}(\rho) \right)
    \\ &\leq
\frac{\sqrt{\sup_{t, \pi} \lVert d^\pi_{\rho,t} / \nu \rVert_\infty}}{\eta(1-\gamma)^{3/2}} \cdot \sqrt{T} \cdot \sqrt{2\eta(1-\gamma) \big( \Phi_{T}^{T+1}(v) - \Phi^1_{T}(v) \big) + \frac{8 \eta^3 A^2 n^2}{(1-\gamma)^4} + \mathcal{W}_{r,p} \cdot 2\eta (1-\gamma)}
    \\ &\leq
\sqrt{\frac{\sup_{t, \pi} \lVert d^\pi_{\rho,t} / \nu \rVert_\infty T C_\phi}{\eta(1-\gamma)^2}} + \sqrt{\frac{( \sup_{t, \pi} \lVert d^\pi_{\rho,t} / \nu \rVert_\infty) T C_\Phi}{\eta(1-\gamma)^2}}\sqrt{\frac{8 \eta  (\sup_{t, \pi} \lVert d^\pi_{\rho,t} / \nu \rVert_\infty) A^2 n^2}{(1-\gamma)^4}} + \frac{\sqrt{\widetilde{\kappa}_\rho \cdot T \cdot \mathcal{W}_{r,p}}}{\sqrt{\eta}(1-\gamma)} 
\\ &\leq
c \cdot \sqrt{\frac{ T C_\Phi \cdot \min \left\{ \kappa_\nu, S \right\}^4 n A}{(1-\gamma)^6}} + \frac{\sqrt{8 \min\{\kappa_\rho, S\}^3 n A }}{(1-\gamma)^5} \cdot \sqrt{\mathcal{W}_{r,p}{T}} ,
\end{align*}
where we use in the last inequality the following special cases: if $\nu = \rho$, then we have that $\sup_{t, \pi} \lVert d^\pi_{\rho,t} / \nu \rVert_\infty \leq \kappa_\nu$ and the first two terms are bounded by $O\left( \sqrt{\frac{\kappa_\rho^4 n A T C_\Phi}{(1-\gamma)^6}} \right)$ for the choice $\eta \leq \frac{(1-\gamma)^4}{8 \kappa_\rho^3 n A}$. Next, if we simply choose $\nu$ as the uniform distribution, we have that $\kappa_\nu \leq S$, $\eta = \frac{(1-\gamma)^4}{8S^3 n A} \leq \frac{(1-\gamma)^4}{8 \kappa_\nu^3 n A}$ is a valid choice, finalizing the guarantee depending on $S$.
Finally, we use the upper bound, given by Lemma~\ref{lemma: Bound noise terms}, to show that
\begin{align*}
    \mathcal{W}_{r, p} 
    &=
    \frac{n(n+1)}{1-\gamma} \cdot \Big(\perfc_r +\frac{ \gamma \cdot \perfc_p \sqrt{S}}{1-\gamma} \Big) 
    \sum_{t} \lVert \pi^{t+1} - \pi^t \Vert_2
    \\ &\le
    \frac{n(n+1)}{1-\gamma} \cdot \Big(\perfc_r +\frac{ \gamma \cdot \perfc_p \sqrt{S}}{1-\gamma} \Big) 
    \sum_{t} \lVert \pi^{t+1} - \pi^t \Vert_1
    \\ &\le
    \frac{n(n+1)}{1-\gamma} \cdot \Big(\perfc_r +\frac{ \gamma \cdot \perfc_p \sqrt{S}}{1-\gamma} \Big) 
    \left\lVert \sum_t \pi^{t+1} - \pi^t \right\Vert_1
    \\ &\le
    \frac{n(n+1)}{1-\gamma} \cdot \Big(\perfc_r +\frac{ \gamma \cdot \perfc_p \sqrt{S}}{1-\gamma} \Big) \cdot nS .
\end{align*}


\begin{lemma}[{\citet[Lemma~$1$]{IndepPolicyGrad_Ding}}]\label{Lemma: Performative Difference Lemma}
    Given an underlying $\textit{MPG}(\pi')$ and a policy $\pi = (\pi_i, \pi_{-i})$, for agent $i$ and any state distribution $\mu$, it holds for any two policies $\hat{\pi}_i$ and $\Bar{\pi}_i$
    \begin{align*}
        V_{i, \pi'}^{\hat{\pi}_i, \pi_{-i}}(\mu) - V_{i, \pi'}^{\Bar{\pi}_i, \pi_{-i}}(\mu) 
        =
        \frac{1}{1-\gamma} \sum_{s, a_i} d_{\mu, \pi'}^{\hat{\pi}_i, \pi_{-i}}(s) \cdot \big( \hat{\pi}_i - \Bar{\pi}_i \big)(a_i|s) \cdot \Bar{Q}_{i, \pi'}^{\Bar{\pi}_i, \pi_{-i}}(s, a_i) ,
    \end{align*}
    where we have $\Bar{Q}_{i, \pi'}^{\Bar{\pi}_i, \pi_{-i}}(s, a_i) = \sum_{a_{-i}} \pi_{-i}(a_{-i} | s) \cdot Q_{i, \pi'}^{\hat{\pi}, \pi_{-i}}(s, a_i)$.
\end{lemma}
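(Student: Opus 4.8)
The plan is to reduce the statement to the classical performance difference lemma of Kakade--Langford, exactly as in \citet[Lemma~1]{IndepPolicyGrad_Ding}. First I would observe that because the game $\mathcal{G}(\pi')$ is frozen --- the reward $r_{i,\pi'}$ and the transition kernel $P_{\pi'}$ depend only on the fixed policy $\pi'$ and not on the policy being evaluated --- the map $\pi_i \mapsto V_{i,\pi'}^{\pi_i,\pi_{-i}}$ is precisely the value function of a single-agent discounted MDP $\mathcal{M}_i$ with state space $\mathcal{S}$, action set $\mathcal{A}_i$, discount $\gamma$, marginalized reward $\bar r_i(s,a_i) \coloneqq \sum_{a_{-i}}\pi_{-i}(a_{-i}\mid s)\,r_{i,\pi'}(s,a_i,a_{-i})$, and marginalized transitions $\bar P(s'\mid s,a_i) \coloneqq \sum_{a_{-i}}\pi_{-i}(a_{-i}\mid s)\,P_{\pi'}(s'\mid s,a_i,a_{-i})$. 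Reading off the definitions from Section~\ref{sec: Formal Setting}, the value of $\pi_i$ in $\mathcal{M}_i$ equals $V_{i,\pi'}^{\pi_i,\pi_{-i}}$, the $Q$-function of $\pi_i$ in $\mathcal{M}_i$ equals the marginalized quantity $\bar Q_{i,\pi'}^{\pi_i,\pi_{-i}}(s,a_i)=\sum_{a_{-i}}\pi_{-i}(a_{-i}\mid s)\,Q_{i,\pi'}^{\pi_i,\pi_{-i}}(s,a_i,a_{-i})$, and the discounted state-visitation distribution of $\pi_i$ in $\mathcal{M}_i$ started from $\mu$ equals $d_{\mu,\pi'}^{\pi_i,\pi_{-i}}$.

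Then I would invoke the standard performance difference identity inside $\mathcal{M}_i$. Expanding $V^{\hat\pi_i}_{\mathcal{M}_i}(\mu)$ along trajectories generated by $\hat\pi_i$ and inserting and cancelling the telescoping terms $\gamma^h V^{\bar\pi_i}_{\mathcal{M}_i}(s^h)$ at each step $h\ge 0$, one obtains
\begin{align*}
V^{\hat\pi_i}_{\mathcal{M}_i}(\mu) - V^{\bar\pi_i}_{\mathcal{M}_i}(\mu)
= \frac{1}{1-\gamma}\sum_{s} d^{\hat\pi_i}_{\mu,\mathcal{M}_i}(s)\sum_{a_i}\hat\pi_i(a_i\mid s)\,A^{\bar\pi_i}_{\mathcal{M}_i}(s,a_i),
\end{align*}
where $A^{\bar\pi_i}_{\mathcal{M}_i}(s,a_i)=Q^{\bar\pi_i}_{\mathcal{M}_i}(s,a_i)-V^{\bar\pi_i}_{\mathcal{M}_i}(s)$. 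Using $\sum_{a_i}\bar\pi_i(a_i\mid s)\,A^{\bar\pi_i}_{\mathcal{M}_i}(s,a_i)=0$ I would replace $\hat\pi_i(a_i\mid s)$ by $(\hat\pi_i-\bar\pi_i)(a_i\mid s)$ in front of the advantage, and then using $\sum_{a_i}(\hat\pi_i-\bar\pi_i)(a_i\mid s)\,V^{\bar\pi_i}_{\mathcal{M}_i}(s)=0$ I would replace $A^{\bar\pi_i}_{\mathcal{M}_i}$ by the bare $Q$-value $Q^{\bar\pi_i}_{\mathcal{M}_i}$. Substituting the identifications from the first paragraph then yields exactly the claimed formula with $\bar Q_{i,\pi'}^{\bar\pi_i,\pi_{-i}}$.

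Since this is merely a restatement of the classical performance difference lemma, there is no genuine obstacle; the only point needing care is checking that freezing the game at $\pi'$ and marginalizing over $a_{-i}$ really produces a legitimate single-agent MDP, which is immediate precisely because $r_{i,\pi'}$ and $P_{\pi'}$ carry no dependence on the evaluated policy. Alternatively, the identity can be proved directly without the MDP detour, by the same telescoping of $V_{i,\pi'}^{\hat\pi_i,\pi_{-i}}(\mu)$ over time steps $h\ge 0$, repeatedly using the Bellman equation $Q_{i,\pi'}^{\bar\pi_i,\pi_{-i}}(s,a)=r_{i,\pi'}(s,a)+\gamma\sum_{s'}P_{\pi'}(s'\mid s,a)\,V_{i,\pi'}^{\bar\pi_i,\pi_{-i}}(s')$ together with the definition of the discounted visitation $d_{\mu,\pi'}^{\hat\pi_i,\pi_{-i}}$.
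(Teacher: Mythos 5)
Your proof is correct and takes essentially the same route as the paper: the paper's proof is just a one-line citation of the known multi-agent performance difference lemma (Lemma C.1 of Leonardos et al.\ / Lemma 1 of Ding et al.) applied to the frozen game $\mathcal{G}(\pi')$, and your argument is precisely the standard derivation of that lemma --- marginalize over $\pi_{-i}$ to obtain a single-agent MDP for agent $i$, apply the Kakade--Langford telescoping identity, and then pass from the advantage to the bare $\bar{Q}$ using that $(\hat\pi_i-\bar\pi_i)(\cdot\mid s)$ sums to zero. No gaps.
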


\begin{proof}
    Follows by e.g., {C.1} by \citet{GlobalConvergence_Leonardos} for a fixed underlying MPG($\pi'$).
\end{proof}

We denote by $i \sim j$ the set of indices $\{ \ell \; | \; i < \ell < j \}$, following the notation by \citet{IndepPolicyGrad_Ding}.

\begin{lemma}[{\citet[Lemma~$2$]{IndepPolicyGrad_Ding}}]\label{Lemma: Mulivariate function difference}
For any function $\Psi^\pi: \Pi \rightarrow \RR$ and any two policies $\pi, \pi' \in \Pi$, 
\begin{align*}
    \Psi^{\pi'} - \Psi^\pi
    &=
    \sum_{i=1}^n \left( \Psi^{\pi'_i, \pi_{-i}} - \Psi^\pi \right)
    \\ &\quad + 
    \sum_{i=1}^n \sum_{j = i+1}^n 
    \Big( \Psi^{\pi_{<i, i \sim j}, \pi'_{>j}, \pi'_i, \pi'_j} - \Psi^{\pi_{<i, i \sim j}, \pi'_{>j}, \pi_i, \pi'_j}
    - \Psi^{\pi_{<i, i \sim j}, \pi'_{>j}, \pi'_i, \pi_j} 
    +
    \Psi^{\pi_{<i, i \sim j}, \pi'_{>j}, \pi_i, \pi_j}
    \Big) .
\end{align*}
\end{lemma}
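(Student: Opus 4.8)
The identity is purely combinatorial: it holds for an arbitrary scalar function $\Psi^\pi$ of a product policy $\pi=(\pi_1,\dots,\pi_n)$, with no appeal to the game structure, so the plan is to derive it by two nested telescoping sums over the agents. Throughout, a superscript such as $\Psi^{\pi_{<i},\,\pi'_{>i},\,\pi_i}$ is read as: agents $1,\dots,i-1$ play $\pi$, agents $i+1,\dots,n$ play $\pi'$, and agent $i$ plays $\pi_i$.

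First I would telescope by switching agents from $\pi$ to $\pi'$ one at a time in \emph{decreasing} index order: along the chain $\pi=(\pi_1,\dots,\pi_n)\to(\pi_1,\dots,\pi_{n-1},\pi'_n)\to\cdots\to\pi'$, the step that flips agent $i$ (all of $i+1,\dots,n$ already at $\pi'$, all of $1,\dots,i-1$ still at $\pi$) contributes $\Psi^{\pi_{<i},\,\pi'_{>i},\,\pi'_i}-\Psi^{\pi_{<i},\,\pi'_{>i},\,\pi_i}$, so
\[
\Psi^{\pi'}-\Psi^\pi \;=\; \sum_{i=1}^n\Bigl(\Psi^{\pi_{<i},\,\pi'_{>i},\,\pi'_i}-\Psi^{\pi_{<i},\,\pi'_{>i},\,\pi_i}\Bigr).
\]
Each summand is the effect of switching agent $i$ in the ``context'' where every higher-indexed agent already plays $\pi'$; this is not yet the desired first-order term $\Psi^{\pi'_i,\pi_{-i}}-\Psi^\pi$, in which all other agents play $\pi$.

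Second, for each fixed $i$ I would add and subtract $\Psi^{\pi'_i,\pi_{-i}}-\Psi^\pi=\Psi^{\pi_{<i},\,\pi_{>i},\,\pi'_i}-\Psi^{\pi_{<i},\,\pi_{>i},\,\pi_i}$. Writing $g_i(S)$ for the ``switch agent $i$'' difference when the subset $S\subseteq\{i+1,\dots,n\}$ of higher-indexed agents is set to $\pi'$ (the rest to $\pi$), the leftover discrepancy is $g_i(\{i+1,\dots,n\})-g_i(\emptyset)$, which I would telescope a second time, flipping $j=n,n-1,\dots,i+1$ in decreasing order. Expanding $g_i$, the generic increment $g_i(\{j,\dots,n\})-g_i(\{j+1,\dots,n\})$ becomes precisely the mixed second difference
\[
\Psi^{\pi_{<i,\,i\sim j},\,\pi'_{>j},\,\pi'_i,\pi'_j}-\Psi^{\pi_{<i,\,i\sim j},\,\pi'_{>j},\,\pi_i,\pi'_j}-\Psi^{\pi_{<i,\,i\sim j},\,\pi'_{>j},\,\pi'_i,\pi_j}+\Psi^{\pi_{<i,\,i\sim j},\,\pi'_{>j},\,\pi_i,\pi_j},
\]
i.e.\ the summand of the double sum: agents strictly between $i$ and $j$ end at $\pi$ (hence the ``$i\sim j$'' label), those above $j$ at $\pi'$, those below $i$ at $\pi$. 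Summing over $j=i+1,\dots,n$ and then over $i$ reassembles the stated identity.

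There is no hard step here; the only thing to watch is the index bookkeeping — tracking which block of agents sits at $\pi$ versus $\pi'$ after each partial flip, so that the second telescoping increment matches the stated summand verbatim (the reindexings $k\mapsto i=n-k+1$ and similarly for $j$). A quick check at $n=2$, where the double sum reduces to the single term $\Psi^{\pi'_1,\pi'_2}-\Psi^{\pi_1,\pi'_2}-\Psi^{\pi'_1,\pi_2}+\Psi^{\pi_1,\pi_2}$, confirms the telescopings collapse to $\Psi^{\pi'}-\Psi^\pi$. Since nothing beyond $\Psi$ being defined on hybrid product policies is used, the lemma applies verbatim to $\Psi^\pi=\Phi_{\pi'}^\pi$ (for a fixed underlying game) and to $\Psi^\pi=V_{i,\pi'}^\pi$ used later, and the routine verification can be cited from \citet[Lemma~2]{IndepPolicyGrad_Ding}.
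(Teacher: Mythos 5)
Your double-telescoping argument is correct: the first telescope (flipping agents from $\pi$ to $\pi'$ in decreasing index order) and the second telescope on the ``context'' sets $\{j,\dots,n\}$ reproduce exactly the mixed second differences in the statement, and the $n=2$ sanity check confirms the bookkeeping. The paper itself does not reprove this identity --- it is imported verbatim as Lemma~2 of \citet{IndepPolicyGrad_Ding}, whose proof is the same hybrid/telescoping decomposition you describe --- so your proposal matches the intended argument while making it self-contained.
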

    
\begin{lemma}[Policy Improvement]\label{Lemma: Policy improvement appendix}
    For an MPG according to Definition \ref{Def: Markov Potential Game}, for any state distribution $\mu$ and two consecutive policies $\pi^{t+1}$ and $\pi^{t}$ generated by the PGA Algorithm~\eqref{eq:PGA ALGO}, we have:
    \begin{align*}
        \Phi^{t+1}_{t+1}(\mu) - \Phi^t_{t}(\mu)
        &\geq
        \frac{1}{2\eta(1-\gamma)} \sum_{i=1}^n \sum_s d_{\mu, t}^{\pi^{t+1}_i, \pi_{-i}}(s) \cdot \left( 1 - \frac{4 \eta \kappa^3_\mu A n}{(1 - \gamma)^4} \right) \cdot \left\lVert \pi_i^{t+1}(\cdot | s) - \pi^t_i(\cdot | s) \right\rVert^2
         \\ &\quad- \frac{n(n+1)}{2} \cdot \delta_{r,p} \cdot \lVert \pi^{t+1} - \pi^t \rVert_2,
    \end{align*}
where we have $\kappa_\mu = \sup_{t}\sup_{\pi \in \Pi} \lVert d^\pi_{\mu,t} / \mu \rVert_\infty$ and $\delta_{r, p} \coloneqq \frac{1}{1-\gamma}  \left(\perfc_r +\frac{ \gamma \cdot \perfc_p \sqrt{S}}{1-\gamma} \right)$.
\end{lemma}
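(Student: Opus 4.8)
The plan is to reduce the statement to the fixed-game potential-improvement bound of \citet{IndepPolicyGrad_Ding} plus an additive error controlled by Lemma~\ref{lemma: Bound noise terms}. The key observation is that the proximal update \eqref{eq:PGA ALGO} at iteration $t$ only uses the marginalized value $\Bar{Q}^t_{i,t}$, i.e.\ quantities of the game $\mathcal{G}(\pi^t)$; hence the natural reference game for the improvement step is $\mathcal{G}(\pi^t)$, and I would split
\[
  \Phi^{t+1}_{t+1}(\mu) - \Phi^{t}_{t}(\mu)
  = \underbrace{\bigl(\Phi^{t+1}_{t}(\mu) - \Phi^{t}_{t}(\mu)\bigr)}_{\text{fixed-game improvement}}
  + \underbrace{\bigl(\Phi^{t+1}_{t+1}(\mu) - \Phi^{t+1}_{t}(\mu)\bigr)}_{\text{performative discrepancy}} .
\]

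For the fixed-game term I would run the argument underlying \citet[Theorem~1]{IndepPolicyGrad_Ding} essentially verbatim, since it takes place entirely inside $\mathcal{G}(\pi^t)$. Concretely: apply Lemma~\ref{Lemma: Mulivariate function difference} with $\Psi^\pi = \Phi^{\pi}_{t}(\mu)$ and $(\pi,\pi')=(\pi^t,\pi^{t+1})$; the $n$ single-agent terms equal, by the potential property \eqref{Def: Markov Potential Game}, value differences $V^{(\pi^{t+1}_i,\pi^t_{-i})}_{i,t}(\mu)-V^{\pi^t}_{i,t}(\mu)$, which by Lemma~\ref{Lemma: Performative Difference Lemma} are $\tfrac{1}{1-\gamma}\sum_s d^{\pi^{t+1}_i,\pi^t_{-i}}_{\mu,t}(s)\langle \pi^{t+1}_i(\cdot\mid s)-\pi^t_i(\cdot\mid s),\Bar{Q}^t_{i,t}(s,\cdot)\rangle$, and the first-order optimality condition of \eqref{eq:PGA ALGO} lower-bounds each inner product by $\tfrac1\eta\lVert\pi^{t+1}_i(\cdot\mid s)-\pi^t_i(\cdot\mid s)\rVert_2^2$. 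The $\binom n2$ pairwise terms are controlled via smoothness of $V_{i,t}$ in $\pi_j$ together with $\lVert\pi^{t+1}_i-\pi^t_i\rVert_2\le\eta\lVert\Bar{Q}^t_{i,t}\rVert_2\le\tfrac{\eta\sqrt A}{1-\gamma}$ and the mismatch coefficient $\kappa_\mu$, producing the correction factor $1-\tfrac{4\eta\kappa^3_\mu A n}{(1-\gamma)^4}$. This recovers exactly the first line of the claimed bound, with weights $d^{\pi^{t+1}_i,\pi_{-i}}_{\mu,t}$.

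For the performative discrepancy I would first fix a gauge for the potential of each game (any choice with $\Phi^{\pi^0}_{\Bar\pi}\equiv 0$ for a common reference policy $\pi^0$, compatible with the construction giving $C_\Phi\le\tfrac{n}{1-\gamma}$), then expand $\Phi^{\pi^{t+1}}_{\Bar\pi}(\mu)$ around $\pi^0$ via Lemma~\ref{Lemma: Mulivariate function difference} and \eqref{Def: Markov Potential Game}, and subtract the expansions for $\Bar\pi=\pi^{t+1}$ and $\Bar\pi=\pi^t$. This writes $\Phi^{t+1}_{t+1}(\mu)-\Phi^{t+1}_{t}(\mu)$ as a sum of $n$ single-agent and $\binom n2$ pairwise ``game-swap'' terms — the quantities $\alpha(t,i)$ and $\beta_1(t,i,j)+\beta_2(t,i,j)$ sketched near Lemma~\ref{lemma: Bound noise terms} — each a (telescoped) combination of value differences $V^{\pi}_{i,t+1}(\mu)-V^{\pi}_{i,t}(\mu)$ at a common evaluation policy $\pi$. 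Lemma~\ref{lemma: Bound noise terms} bounds each such difference by $\delta_{r,p}\lVert\pi^{t+1}-\pi^t\rVert_2$, and summing the $n+\binom n2=\tfrac{n(n+1)}2$ of them gives $\bigl|\Phi^{t+1}_{t+1}(\mu)-\Phi^{t+1}_{t}(\mu)\bigr|\le\tfrac{n(n+1)}2\,\delta_{r,p}\lVert\pi^{t+1}-\pi^t\rVert_2$. Adding this (with a minus sign) to the fixed-game bound finishes the proof.

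The step I expect to be the main obstacle is the bookkeeping in the last paragraph: after telescoping, the game-swap terms appear as mixed second differences of four value functions, and one must exploit the cancellations between the $\mathcal{G}(\pi^t)$- and $\mathcal{G}(\pi^{t+1})$-contributions — and choose the potential gauge consistently across games — so that each of the $\tfrac{n(n+1)}2$ terms really costs only a single factor $\delta_{r,p}\lVert\pi^{t+1}-\pi^t\rVert_2$ rather than a fixed multiple of it. The fixed-game part, by contrast, is a direct transcription of \citet{IndepPolicyGrad_Ding} and should require no new ideas.
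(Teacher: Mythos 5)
Your proposal follows essentially the same route as the paper: the same splitting $\Phi^{t+1}_{t+1}-\Phi^{t}_{t} = (\Phi^{t+1}_{t}-\Phi^{t}_{t}) + (\Phi^{t+1}_{t+1}-\Phi^{t+1}_{t})$, the fixed-game term handled by the argument of \citet[Lemma 3(ii)]{IndepPolicyGrad_Ding} via Lemma~\ref{Lemma: Mulivariate function difference}, and the cross-game term bounded by $\tfrac{n(n+1)}{2}\,\delta_{r,p}\,\lVert\pi^{t+1}-\pi^t\rVert_2$ through Lemma~\ref{lemma: Bound noise terms}. Your explicit telescoping of the cross-game potential difference into $\tfrac{n(n+1)}{2}$ value-function differences is exactly the step the paper leaves implicit when it invokes Lemma~\ref{lemma: Bound noise terms} on $\lvert\Phi^{t+1}_{t+1}-\Phi^{t+1}_{t}\rvert$, so the proposal is correct and, if anything, slightly more detailed on that bookkeeping.
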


\begin{proof}
By Lemma~\ref{Lemma: Mulivariate function difference} with $\Psi^\pi = \Phi^\pi_{t}(\mu)$ and abbreviating $\pi' = \pi^{(t+1)}$, $\pi = \pi^{(t)}$, we have that
\begin{align*}
    \Phi^{t+1}_{t}(\mu) - \Phi^t_{t}(\mu) = \textbf{Diff}_\alpha + \textbf{Diff}_\beta,
\end{align*}
where 
\begin{align*}
    \textbf{Diff}_\alpha &\coloneqq \sum_{i=1}^n \Phi_{t}^{\pi'_i, \pi_{-i}}(\mu) - \Phi_{t}^\pi(\mu), \\
\textbf{Diff}_\beta &\coloneqq
\sum_{i=1}^n \sum_{j = i+1}^n 
    \Big( \Phi_{t}^{\pi_{<i, i \sim j}, \pi'_{>j}, \pi'_i, \pi'_j}(\mu) - \Phi_{t}^{\pi_{<i, i \sim j}, \pi'_{>j}, \pi_i, \pi'_j}(\mu)
    -\ \Phi_{t}^{\pi_{<i, i \sim j}, \pi'_{>j}, \pi'_i, \pi_j}(\mu) 
    +
    \Phi_{t}^{\pi_{<i, i \sim j}, \pi'_{>j}, \pi_i, \pi_j}(\mu)
    \Big).
\end{align*}
According to the analysis \citet[Lemma $3.(ii)$]{IndepPolicyGrad_Ding}, this implies that,
\begin{equation}\label{eq: Ding analysis}
    \Phi^{t+1}_{t}(\mu) - \Phi^t_{t}(\mu) 
    \geq
    \frac{1}{2\eta(1-\gamma)} \sum_{i=1}^n \sum_s d_{\mu, t}^{\pi^{t+1}_i, \pi_{-i}}(s) \cdot \left( 1 - \frac{4 \eta \kappa^3_\mu A n}{(1 - \gamma)^4} \right) \cdot \left\lVert \pi_i^{t+1}(\cdot | s) - \pi^t_i(\cdot | s) \right\rVert^2
\end{equation}
Further, we derive that,
\begin{align*}
    \Phi_{t+1}^{t+1}(\mu) - \Phi_{t}^t(\mu)
    = \Phi^{t+1}_{t+1} - \Phi^{t+1}_{t}  + \Phi_{t}^{t+1}(\mu) - \Phi^t_t(\mu)
    \geq 
    - | \Phi^{t+1}_{t+1} - \Phi^{t+1}_{t} | + \Phi_{t}^{t+1}(\mu) - \Phi^t_t(\mu).
\end{align*}
By applying \cref{lemma: Bound noise terms} to upper bound the first term and by applying equation \ref{eq: Ding analysis}, we obtain our result.
\end{proof}

\subsection{IPGA -- Without Gradient Oracle}\label{Proof: PGA Convergence Result -- Finite Sample Case}

Similar as in section~\ref{Proof: PGA Convergence Result -- Infinite Sample Case}, we extend the techniques by \citet{IndepPolicyGrad_Ding} to obtain finite sample guarantees for our version of MPGs under performativity.

We bound the maximum occurring deviations over all time steps.
The proof is based on \citet[Theorem 3]{IndepPolicyGrad_Ding} incorporating the additional costs due to performative effects.

\begin{proof}
We bound the performative regret that PGA Algorithm~\eqref{eq:PGA ALGO} achieves in a MPG with performative effects.
Recall that $\overline{Q}_{i,t}$ corresponds to the exact $Q$-value averaged over the policies of agents $-i$ under the performative effect induced by $\pi^t$, while $\widehat{Q}_{i,t}$ corresponds to the computed estimation, and exploration rate $\xi \leq \frac{1}{2}$.
In total, we have
\begin{equation}\label{eq: Main equation Theorem}
\begin{split}
&\sum_{t=1}^T \max_i \left( \max_{\pi_i'} V_{i, t}^{\pi'_i, \pi_{-i}^t}(\rho) - V_{i,t}^{\pi^t}(\rho) \right)    
\\&\overset{(a)}{=}
    \frac{1}{1-\gamma} \sum_{t=1}^T \max_{\pi'_i} \sum_{s,a_i} d_{\rho,t}^{\pi'_i, \pi_{-i}}(s) \left( \pi'_i(a_i|s) - \pi_i^{(t)}(\cdot | s) \right) \Bar{Q}_{i,t}^t(s,a_i)
\\&\overset{(b)}{\leq} 
\frac{1}{\eta(1-\gamma)} \sum_{t=1}^T \sum_s d_{\rho,t}^{\pi'_i, \pi^{t}_{-i}}(s) \left\lVert \pi_i^{t+1}(\cdot | s) - \pi_i^t(\cdot | s) \right\rVert_2 + \frac{\eta T \xi \sqrt{A}}{(1-\gamma)^2} 
\\&\quad+
\frac{1}{1-\gamma}\sum_{t=1}^T \sum_s d_{\rho,t}^{\pi'_i, \pi_{-i}^t}(s) \left\langle \pi'_i(\cdot | s) - \pi_{i}^t(\cdot | s), \Bar{Q}_{i,t}^t(s, \cdot) - \widehat{Q}_{i,t}^t(s, \cdot) \right\rangle_{\mathcal{A}_i}
\\&\overset{(c)}{\leq}
\frac{\sqrt{\kappa_\rho}}{\eta(1-\gamma)^{3/2}} \sum_{t=1}^T \sum_s \sqrt{d_{\rho,t}^{\pi_i^{t+1}, \pi_{-i}^t}(s) \cdot d_{\rho, t}^{\pi'_i, \pi^t_{-i}}(s)} \left\lVert \pi_i^{t+1}(\cdot | s) - \pi_i^t(\cdot | s) \right\rVert_2 + \frac{\xi T \sqrt{A}}{(1-\gamma)^2} 
\\ &\quad+
\frac{\kappa_\rho}{1-\gamma} \left| \sum_{t=1}^T \sum_s d_{\rho,t}^{\pi^t}(s) \cdot \left\langle \pi'_i(\cdot | s) - \pi_i^t(\cdot | s), \Bar{Q}_{i,t}^t(s, \cdot) - \widehat{Q}_{i,t}^t(s, \cdot) \right\rangle_{\mathcal{A}_i} \right| 
\\
&\overset{(d)}{\leq}
\frac{\sqrt{\kappa_\rho}}{\eta(1-\gamma)^{3/2}} \sqrt{\sum_{t=1}^T \sum_s d^{\pi'_i, \pi_{-i}^t}_{\rho,t}(s)} \cdot \sqrt{\sum_{t=1}^T \sum_s d_{\rho,t}^{\pi_i^{t+1}, \pi_{-i}^t}(s) \left\lVert \pi_i^{t+1}(\cdot | s) - \pi_i^t(\cdot | s) \right\rVert_2^2}
\\&\quad+
\frac{\xi T \sqrt{A}}{(1-\gamma)^2} + \frac{\kappa_\rho}{1-\gamma} \sum_{t=1}^T \sqrt{\frac{A L^t_{i}(\widehat{Q}_{i,t}^t)}{\xi}}
\\ 
&\overset{(e)}{\leq}
\frac{\sqrt{\kappa_\rho}}{\eta(1-\gamma)^{3/2}} \sqrt{\sum_{t=1}^T \sum_s d^{\pi'_i, \pi_{-i}^t}_{\rho,t}(s)} \cdot \sqrt{\sum_{t=1}^T \sum_{i=1}^n \sum_s d_{\rho,t}^{\pi_i^{t+1}, \pi_{-i}^t}(s) \left\lVert \pi_i^{t+1}(\cdot | s) - \pi_i^t(\cdot | s) \right\rVert_2^2}
\\&\quad+
\frac{\xi T \sqrt{A}}{(1-\gamma)^2} + \frac{\kappa_\rho}{1-\gamma} \sum_{t=1}^T \sqrt{\frac{A L^t_{i}(\widehat{Q}_{i,t}^t)}{\xi}},
\end{split}
\end{equation}
where we use the multi-agent performance difference Lemma~\ref{Lemma: Performative Difference Lemma} for $(a)$, inequality
$(b)$ follows by abusing notation following \citet{IndepPolicyGrad_Ding}: policy $\pi_i'$ represents the $\argmax_{\pi_i'}$ and $i$ captures the $\argmax_i$ and by using the following property of the algorithm, \citep[Equation~24]{IndepPolicyGrad_Ding} for $\xi \leq \frac{1}{2}$ and $\eta \leq \frac{1-\gamma}{\sqrt{A}}$:
\begin{align*}
    \left\langle \pi'_i(\cdot | s) - \pi_i^{t}(\cdot | s), \eta \widehat{Q}_i^t(s, \cdot) \right\rangle_{\mathcal{A}_i}
    \lesssim
    \frac{1}{\eta} \left\lVert \pi_i^{t+1}(\cdot | s) - \pi_i^t(\cdot | s) \right\rVert_2 + \frac{\xi \sqrt{A}}{1-\gamma} 
    + \left\langle \pi'_i(\cdot | s) - \pi^t(\cdot | s), \overline{Q}_i^t(s, \cdot) - \widehat{Q}_i^t(s, \cdot) \right\rangle_{\mathcal{A}_i} .
\end{align*}
$(c)$ follows by the definition of the distribution mismatch coefficient:
\begin{align*}
    \frac{d_{\rho, t}^{\pi'_i, \pi_{-i}^t}(s)}{d_{\rho,t}^{\pi_i^{t+1}, \pi^t_{-i}}(s)} \leq \frac{d_{\rho, t}^{\pi'_i, \pi_{-i}^t}(s)}{(1-\gamma) \rho(s)} \leq \frac{\kappa_\rho}{1-\gamma}.
\end{align*}
$(d)$ follows by applying the Cauchy-Schwartz inequality, Jensen inequality, and that 
\begin{align*}
\left| \sum_{t=1}^T \sum_s d_{\rho,t}^{\pi^t}(s) \left\langle \pi'_i(\cdot | s) - \pi_i^t(\cdot | s), \Bar{Q}_{i,t}^t(s, \cdot) - \widehat{Q}_{i,t}^t(s, \cdot) \right\rangle_{\mathcal{A}_i} \right| 
\leq
\sqrt{\frac{A L^t_{i}(\widehat{Q}_{i,t}^t) }{\xi}}.
\end{align*}
In $(e)$, we simply replace the $\argmax_i$ by the sum over all players.
The remaining part follows by applying the modified policy improvement Lemma~\ref{Lemma: Finite Sample: Policy improvement} that takes into account performative effects. 

We define $\mathcal{W}_{r,p} = \frac{n(n+1)}{2} \cdot \delta_{r,p} \cdot \lVert \pi^{t+1} - \pi^t \rVert_2$ and that $C_\Phi = \max_{t, \pi, \pi', \mu} |\Phi_t^\pi(\mu) - \Phi_t^{\pi'}(\mu)|$, which describes the performative costs. 
To obtain the guarantee, we apply Lemma~\ref{Lemma: Finite Sample: Policy improvement} to Eq.~\eqref{eq: Main equation Theorem}, which leads to
\begin{align*}
    &\E{
\left[ \sum_{t=1}^T \max_i \left( \max_{\pi_i'} V_{i, t}^{\pi'_i, \pi_{-i}^t}(\rho) - V_{i,t}^{\pi^t}(\rho) \right) \right]}
\\ &\lesssim
\frac{\sqrt{\kappa_\rho}}{\eta(1-\gamma)^{3/2}} \sqrt{T} \sqrt{\eta(1-\gamma)(\Phi^{T+1}_{T+1}(\mu) - \Phi_{1}^1(\mu)) + \frac{\eta^2 \kappa_\rho A}{(1-\gamma)\xi} \sum_{t=1}^T \sum_{i=1}^n \E{\left[ L_i^t(\widehat{Q}^t_{i,t})\right]} + \mathcal{W}_{r, p} \cdot \eta \cdot (1-\gamma)} 
\\ &\quad + \frac{\xi T \sqrt{A}}{(1-\gamma)^2} + \frac{\kappa_\rho}{1-\gamma} \sum_{t=1}^T \sqrt{\frac{A \E{ \left[ L^t_{i}(\widehat{Q}_{i,t}^t)\right]}}{\xi}}
\\
&\lesssim
\sqrt{\frac{\kappa_\rho T C_\Phi}{\eta(1-\gamma)^2}} + \frac{\kappa_\rho T}{(1-\gamma)^2} \sqrt{\frac{A \cdot n \cdot \delta_{stat}}{\xi}} + \frac{\xi T \sqrt{A}}{(1-\gamma)^2} + \frac{\sqrt{\kappa_\rho \cdot T \cdot \mathcal{W}_{r,p}}}{\sqrt{\eta}(1-\gamma)},
\end{align*}
where in the last line, we use that $C_\phi \geq  \Phi^{T+1}_{T+1} - \Phi^1_1$, and that $\mathbb{E}[L_i^t(\widehat{Q}_{i,t}^t)] \leq \delta_{\mathrm{stat}}$.
Finally, the guarantee follows by the choice $\eta = \frac{(1-\gamma)^4}{16 \kappa^3_\rho n A}$, and $\xi \leq
\left( \kappa_\rho^2 \cdot n \cdot \delta_{stat} \right)^{\frac{1}{3}}$.
\end{proof}

We obtain a similar policy improvement lemma as in Lemma~\ref{Lemma: Policy improvement appendix}, incurring additional costs for the estimation error in the expected regression loss $L^{t}_i(\widehat{Q}_{i,t}^t)$. 
\begin{lemma}[Policy improvement]\label{Lemma: Finite Sample: Policy improvement}
    For an MPG with respect to a set $D^*$ according to Definition~\ref{Def: Markov Potential Game}, for any state distribution $\mu$, the potential function $\Phi^\pi(\mu)$ and two consecutive policies $\pi^{t+1}$ and $\pi^{t}$ generated by the PGA Algorithm~\ref{eq:PGA ALGO}, we have:
    \begin{align*}
    \Phi^{t+1}_{t+1}(\rho) - \Phi_{t}^t(\rho)
    &\geq
    \frac{1}{4\eta(1-\gamma)}\sum_{i=1}^n \sum_s d_{\rho, t}^{\pi_i^{t+1}, \pi_{-i}^t}(s) \left(1 - \frac{4\eta\kappa_\rho^3 n A}{(1-\gamma)^4}\right) \left( \left\lVert  \pi^{t+1}(\cdot | s) - \pi^t(\cdot | s) \right\rVert_2^2 \right) 
    \\ &\quad - \frac{\eta \kappa_\rho A}{(1-\gamma)^2 \xi} \sum_{i=1}^n L^t_{i}(\widehat{Q}_{i,t}^t) 
    - \frac{n(n+1)}{2} \cdot \delta_{r,p} \cdot \lVert \pi^{t+1} - \pi^t \rVert_2,
    \end{align*}
where we have $\kappa_\mu = \sup_{t}\sup_{\pi \in \Pi} \lVert d^\pi_{\mu,t} / \mu \rVert_\infty$ and $\delta_{r, p} \coloneqq \frac{1}{1-\gamma}  \left(\perfc_r +\frac{ \gamma \cdot \perfc_p \sqrt{S}}{1-\gamma} \right)$.
\end{lemma}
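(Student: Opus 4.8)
The plan is to mirror the proof of the oracle policy-improvement Lemma~\ref{Lemma: Policy improvement appendix}, the only genuinely new ingredient being the inexact gradient $\widehat{Q}^t_{i,t}$ and the $\xi$-shrunken simplex $\Delta_\xi(\mathcal{A}_i)$. First I would fix the game $\mathcal{G}(\pi^t)$ and apply the multivariate difference identity (Lemma~\ref{Lemma: Mulivariate function difference}) with $\Psi^\pi = \Phi^\pi_t(\mu)$, $\pi' = \pi^{t+1}$, $\pi = \pi^t$, to split $\Phi^{t+1}_t(\mu) - \Phi^t_t(\mu) = \mathbf{Diff}_\alpha + \mathbf{Diff}_\beta$, where $\mathbf{Diff}_\alpha = \sum_{i} \big(\Phi_t^{\pi^{t+1}_i,\pi^t_{-i}}(\mu) - \Phi_t^{\pi^t}(\mu)\big)$ collects the single-agent deviations and $\mathbf{Diff}_\beta$ the $\binom{n}{2}$ pairwise cross terms, exactly as in \citet[Theorem~3]{IndepPolicyGrad_Ding}.

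For $\mathbf{Diff}_\alpha$ I would use the multi-agent performance-difference identity (Lemma~\ref{Lemma: Performative Difference Lemma}) inside the fixed game $\mathcal{G}(\pi^t)$ to rewrite each summand as $\tfrac{1}{1-\gamma}\sum_{s,a_i} d^{\pi^{t+1}_i,\pi^t_{-i}}_{\mu,t}(s)\,(\pi^{t+1}_i-\pi^t_i)(a_i\mid s)\,\overline{Q}^t_{i,t}(s,a_i)$, then invoke the optimality condition of the finite-sample update~\eqref{Gradient Update: Finite sample}. Since that update is phrased with the estimate $\widehat{Q}^t_{i,t}$ rather than $\overline{Q}^t_{i,t}$, adding and subtracting $\widehat{Q}^t_{i,t}$ yields the clean quadratic term $\tfrac{1}{2\eta(1-\gamma)}\sum_i\sum_s d^{\pi^{t+1}_i,\pi^t_{-i}}_{\mu,t}(s)\lVert\pi^{t+1}_i(\cdot\mid s)-\pi^t_i(\cdot\mid s)\rVert_2^2$ plus an estimation-error inner product $\sum_i\sum_s d^{\pi^{t+1}_i,\pi^t_{-i}}_{\mu,t}(s)\langle \pi^{t+1}_i(\cdot\mid s)-\pi^t_i(\cdot\mid s),\,\overline{Q}^t_{i,t}(s,\cdot)-\widehat{Q}^t_{i,t}(s,\cdot)\rangle$. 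The latter is controlled exactly as in \citet[proof of Theorem~3]{IndepPolicyGrad_Ding}: a Cauchy--Schwarz / AM--GM step against the quadratic term, together with the fact that the $\Delta_\xi$-projection guarantees $\pi^t_i(a_i\mid s)\ge \xi/\lvert\mathcal{A}_i\rvert$, which lets one convert the averaged regression loss $L^t_i(\widehat{Q}^t_{i,t})$ (taken under $d^{\pi^t}_{\rho,t}\circ\pi^t_i$) into per-state control, producing the $-\tfrac{\eta\kappa_\rho A}{(1-\gamma)^2\xi}\sum_i L^t_i(\widehat{Q}^t_{i,t})$ term, with $\kappa_\rho$ coming from the change of visitation distribution. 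For $\mathbf{Diff}_\beta$, since every summand is evaluated in the single fixed common-payoff game $\mathcal{G}(\pi^t)$, I would bound it below by the second-order performance-difference lemma of \citet{GlobalConvergence_Leonardos}, which is unaffected by performativity once $\mathcal{G}(\pi^t)$ is fixed, giving $\mathbf{Diff}_\beta \ge -\tfrac{2\kappa_\rho^3 nA}{(1-\gamma)^5}\sum_i\sum_s d^{\pi^{t+1}_i,\pi^t_{-i}}_{\mu,t}(s)\lVert\pi^{t+1}_i(\cdot\mid s)-\pi^t_i(\cdot\mid s)\rVert_2^2$. Combining the two and factoring out $\tfrac{1}{4\eta(1-\gamma)}$ yields the coefficient $\big(1-\tfrac{4\eta\kappa_\rho^3 nA}{(1-\gamma)^4}\big)$, the $1/4$ (versus $1/2$ in the oracle lemma) absorbing the slack lost in the AM--GM splitting on the estimation-error term.

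Finally, to pass from $\Phi^{t+1}_t$ to $\Phi^{t+1}_{t+1}$ I would write $\Phi^{t+1}_{t+1}(\mu)-\Phi^t_t(\mu) \ge -\big\lvert\Phi^{t+1}_{t+1}(\mu)-\Phi^{t+1}_t(\mu)\big\rvert + \big(\Phi^{t+1}_t(\mu)-\Phi^t_t(\mu)\big)$ and bound the first term by expanding the potential difference across the two games $\mathcal{G}(\pi^{t+1})$ and $\mathcal{G}(\pi^t)$ (for the common evaluated policy $\pi^{t+1}$) as a telescoping sum of value-function differences, in the same way as in the PSE-existence argument; each of the resulting $O(n^2)$ pieces is of the form $\lvert V^{\pi}_{i,\pi^{t+1}}(\mu)-V^{\pi}_{i,\pi^t}(\mu)\rvert \le \delta_{r,p}\lVert\pi^{t+1}-\pi^t\rVert_2$ by Lemma~\ref{lemma: Bound noise terms} (under Assumption~\ref{Assump: Sensitivity}), giving the $-\tfrac{n(n+1)}{2}\delta_{r,p}\lVert\pi^{t+1}-\pi^t\rVert_2$ term and completing the proof. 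I expect the main obstacle to be the precise bookkeeping of this estimation-error inner product --- obtaining exactly the $1/\xi$ and $\kappa_\rho$ dependence when converting $L^t_i(\widehat{Q}^t_{i,t})$ into a pointwise bound --- since this is the one place the finite-sample argument genuinely departs from the oracle case; the performative correction is then a routine add-on once Lemma~\ref{lemma: Bound noise terms} is available.
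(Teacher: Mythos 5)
Your proposal is correct and follows essentially the same route as the paper: the paper also splits $\Phi^{t+1}_{t}-\Phi^{t}_{t}$ via Lemma~\ref{Lemma: Mulivariate function difference}, invokes the analysis of \citet[Lemma~6.(ii)]{IndepPolicyGrad_Ding} for the fixed game $\mathcal{G}(\pi^t)$ (which is exactly the $\mathbf{Diff}_\alpha$/$\mathbf{Diff}_\beta$ bookkeeping you re-derive, including the $1/\xi$ and $\kappa_\rho$ factors on the regression-loss term), and then adds back the performative correction through $\Phi^{t+1}_{t+1}-\Phi^{t}_{t}\ge -\lvert\Phi^{t+1}_{t+1}-\Phi^{t+1}_{t}\rvert+\Phi^{t+1}_{t}-\Phi^{t}_{t}$ together with Lemma~\ref{lemma: Bound noise terms}. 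The only difference is presentational: the paper cites the fixed-game estimation-error analysis rather than reproducing it.
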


\begin{proof}
By Lemma~\ref{Lemma: Mulivariate function difference} with $\Psi^\pi = \Phi^\pi_{t}(\mu)$ and abbreviating $\pi' = \pi^{(t+1)}$, $\pi = \pi^{(t)}$, we have that
\begin{align*}
    \Phi^{t+1}_{t}(\mu) - \Phi^t_{t}(\mu) = \textbf{Diff}_\alpha + \textbf{Diff}_\beta,
\end{align*}
where 
\begin{align*}
    \textbf{Diff}_\alpha &\coloneqq \sum_{i=1}^n \Phi_{t}^{\pi'_i, \pi_{-i}}(\mu) - \Phi_{t}^\pi(\mu), \\
\textbf{Diff}_\beta &\coloneqq
\sum_{i=1}^n \sum_{j = i+1}^n 
    \Big( \Phi_{t}^{\pi_{<i, i \sim j}, \pi'_{>j}, \pi'_i, \pi'_j}(\mu) - \Phi_{t}^{\pi_{<i, i \sim j}, \pi'_{>j}, \pi_i, \pi'_j}(\mu)
    -\ \Phi_{t}^{\pi_{<i, i \sim j}, \pi'_{>j}, \pi'_i, \pi_j}(\mu) 
    +
    \Phi_{t}^{\pi_{<i, i \sim j}, \pi'_{>j}, \pi_i, \pi_j}(\mu)
    \Big).
\end{align*}
According to the analysis \citet[Lemma $6.(ii)$]{IndepPolicyGrad_Ding}, this implies that,
\begin{align}\label{eq: Ding analysis without grad}
    \Phi^{t+1}_{t}(\mu) - \Phi^t_{t}(\mu) 
    &\geq
    \frac{1}{4\eta(1-\gamma)}\sum_{i=1}^n \sum_s d_{\rho, t}^{\pi_i^{t+1}, \pi_{-i}^t}(s) \left(1 - \frac{4\eta\kappa_\rho^3 n A}{(1-\gamma)^4}\right) \left( \left\lVert  \pi^{t+1}(\cdot | s) - \pi^t(\cdot | s) \right\rVert_2^2 \right) \nonumber
    \\&- \frac{\eta \kappa_\rho A}{(1-\gamma)^2 \xi} \sum_{i=1}^n L^t_{i}(\widehat{Q}_{i,t}^t).
\end{align}
Further, we derive that,
\begin{align*}
    \Phi_{t+1}^{t+1}(\mu) - \Phi_{t}^t(\mu)
    = \Phi^{t+1}_{t+1} - \Phi^{t+1}_{t}  + \Phi_{t}^{t+1}(\mu) - \Phi^t_t(\mu)
    \geq 
    - | \Phi^{t+1}_{t+1} - \Phi^{t+1}_{t} | + \Phi_{t}^{t+1}(\mu) - \Phi^t_t(\mu).
\end{align*}
By applying \cref{lemma: Bound noise terms} to upper bound the first term and by applying equation \ref{eq: Ding analysis without grad}, we obtain our result.
\end{proof}

\subsection{Independent Natural Policy Gradient 
Ascent -- Analysis}\label{Appendix: Best-iterate Convergence proofs for Natural Policy Gradient Ascent}

In general, this analysis of previous results draw on a performative difference lemma that can be adopted to the PRL setting by incorporating additional costs due to changes of the environment. Recall that we assume a total potential function,

We focus on the result by \citet{DBLP:conf/nips/ZhangMDS022} with $\log$-barrier regularization, because this allows us to have a technical assumption on stationary points under shifting environments.
The natural generalization of the guarantees gives further justification for the robustness of NPG-type algorithms under performative effects in our experiments. 

\subsubsection{Unregularized Version}\label{app:inpg-unreg}
For a fixed MPG at time $t$, we have the following result by \citet[Proposition IX.2]{NPG_3}.
\begin{lemma}[Policy Improvement] \label{lemma:policy_improvement_fixed_MPG}
    For MPG$(\pi^t)$ and any initial distribution $\rho$, the following holds
    \begin{align*}
        \Phi_{t}^{t+1}(\rho) - \Phi_t^{t}(\rho)
        \geq
        &\left(\frac{1}{\eta} - \frac{\sqrt{n}}{(1 - \gamma)^2}\right) \sum_s \mu_{\rho, t}^{\pi^{t+1}}(s) \mathrm{KL}\left( \pi^{t+1}(\cdot | s) || \pi^{t}(\cdot | s) \right) 
        + \frac{1}{\eta} \sum_s \mu_{\rho, t}^{\pi^{t+1}}(s) \sum_i \log Z_i^{t}(s).
    \end{align*}
\end{lemma}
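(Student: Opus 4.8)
The plan is to reproduce the proof of Proposition IX.2 of \citet{NPG_3} applied to the \emph{fixed} game $\mathcal G(\pi^t)$; since a single induced game carries no performative effects, the potential $\Phi_t^{\cdot}(\rho)=\Phi_{\pi^t}^{\cdot}(\rho)$ behaves exactly like a discounted value function with per-stage payoff $\phi_{\pi^t}^{\cdot}\in[0,1]$, and we may freely use the defining property of an MPG, Eq.~\eqref{Def: Markov Potential Game}, which says that a unilateral change of agent $i$'s policy changes $\Phi_t^{\cdot}$ and $V_{i,t}^{\cdot}$ by the same amount. I will write $\mathrm{KL}_i^t(s)\coloneqq\mathrm{KL}\!\left(\pi_i^{t+1}(\cdot\mid s)\,\|\,\pi_i^t(\cdot\mid s)\right)$ and use that, because the joint policy is a product, $\mathrm{KL}\!\left(\pi^{t+1}(\cdot\mid s)\,\|\,\pi^t(\cdot\mid s)\right)=\sum_i \mathrm{KL}_i^t(s)$.

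First I would decompose the joint potential gap. Applying the multi-agent difference identity (Lemma~\ref{Lemma: Mulivariate function difference}) with $\Psi^{\pi}=\Phi_t^{\pi}(\rho)$ splits $\Phi_t^{t+1}(\rho)-\Phi_t^{t}(\rho)$ into the first-order sum $\sum_i\big(\Phi_t^{\pi_i^{t+1},\pi_{-i}^t}(\rho)-\Phi_t^{\pi^t}(\rho)\big)$ plus the cross/second-order terms. The $i$-th first-order term equals $V_{i,t}^{\pi_i^{t+1},\pi_{-i}^t}(\rho)-V_{i,t}^{\pi^t}(\rho)$ by Eq.~\eqref{Def: Markov Potential Game}, and since the opponents are held at exactly $\pi_{-i}^t$, the single-agent performance difference lemma (Lemma~\ref{Lemma: Performative Difference Lemma}) together with $\sum_{a_i}\pi_i^t(a_i\mid s)\bar Q_{i,t}^t(s,a_i)=V_{i,t}^t(s)$ rewrites it as $\tfrac{1}{1-\gamma}\sum_s d_{\rho,t}^{\pi_i^{t+1},\pi_{-i}^t}(s)\sum_{a_i}\pi_i^{t+1}(a_i\mid s)\,\bar A_{i,t}^t(s,a_i)$, with precisely the marginalized advantage that appears in the INPG update.

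Next I would substitute the INPG dynamics \eqref{eq:inpg}. Taking logarithms gives $\tfrac{\eta}{1-\gamma}\bar A_{i,t}^t(s,a_i)=\log\tfrac{\pi_i^{t+1}(a_i\mid s)}{\pi_i^t(a_i\mid s)}+\log Z_i^t(s)$, so averaging against $\pi_i^{t+1}(\cdot\mid s)$ yields $\sum_{a_i}\pi_i^{t+1}(a_i\mid s)\bar A_{i,t}^t(s,a_i)=\tfrac{1-\gamma}{\eta}\big(\mathrm{KL}_i^t(s)+\log Z_i^t(s)\big)$. Hence the $i$-th first-order term equals $\tfrac{1}{\eta}\sum_s d_{\rho,t}^{\pi_i^{t+1},\pi_{-i}^t}(s)\big(\mathrm{KL}_i^t(s)+\log Z_i^t(s)\big)$; summing over $i$ and using the product-form KL additivity produces the two terms $\tfrac{1}{\eta}\sum_s\mu_{\rho,t}^{\pi^{t+1}}(s)\,\mathrm{KL}(\pi^{t+1}(\cdot\mid s)\|\pi^t(\cdot\mid s))$ and $\tfrac{1}{\eta}\sum_s\mu_{\rho,t}^{\pi^{t+1}}(s)\sum_i\log Z_i^t(s)$ appearing in the statement, up to the discrepancy between the agent-wise visitations $d_{\rho,t}^{\pi_i^{t+1},\pi_{-i}^t}$ and the joint visitation $\mu_{\rho,t}^{\pi^{t+1}}$.

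The main obstacle — and the only place anything beyond the mirror-descent identity is needed — is controlling (i) that visitation discrepancy and (ii) the cross terms from the decomposition, both of which stem from all agents updating \emph{simultaneously} while each update is computed against the stale opponent profile $\pi_{-i}^t$. The plan is to bound both by Lipschitz/smoothness properties of $\Phi_t^{\cdot}$ and of $\bar A_{i,t}^t$ with respect to the policies; the key gain for a \emph{total} potential function (as assumed in this section) is that these constants scale like $\sqrt n/(1-\gamma)^2$ rather than the generic $n\gamma A_{\max}/(1-\gamma)^3$, which is exactly the improved $\sqrt n$ dependence of \citet{NPG_3}. Converting policy differences to KL via $\|\pi_i^{t+1}(\cdot\mid s)-\pi_i^t(\cdot\mid s)\|_1^2\le 2\,\mathrm{KL}_i^t(s)$ (Pinsker) and applying Cauchy–Schwarz over the $n$ agents then lets one absorb the entire error into a $-\tfrac{\sqrt n}{(1-\gamma)^2}\sum_s\mu_{\rho,t}^{\pi^{t+1}}(s)\,\mathrm{KL}(\pi^{t+1}(\cdot\mid s)\|\pi^t(\cdot\mid s))$ correction, yielding the stated bound. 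Everything else — the performance difference lemma, the $\log Z$ identity, and the KL additivity — is routine.
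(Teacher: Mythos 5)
The paper does not prove this lemma at all: it is imported verbatim from \citet[Proposition~IX.2]{NPG_3}, with the only observation being that, once the game $\mathcal G(\pi^t)$ is frozen, performative effects play no role and the standard MPG result applies. Your opening framing — fixed game, total potential $\Phi_t^{\cdot}$ behaves like an ordinary discounted value function with per-stage payoff $\phi\in[0,1]$ — is exactly the paper's justification for citing it, and your mirror-descent identity $\sum_{a_i}\pi_i^{t+1}(a_i\mid s)\bar A_{i,t}^t(s,a_i)=\tfrac{1-\gamma}{\eta}\bigl(\mathrm{KL}_i^t(s)+\log Z_i^t(s)\bigr)$ is indeed the core computation in the cited proof.

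However, your proposed route to the remainder of the bound has a genuine gap. You decompose $\Phi_t^{t+1}-\Phi_t^{t}$ via the Ding-style multivariate difference (Lemma~\ref{Lemma: Mulivariate function difference}), which is the IPGA machinery: it produces $\binom{n}{2}$ second-order cross terms, each agent's $\|\pi_i^{t+1}(\cdot\mid s)-\pi_i^t(\cdot\mid s)\|^2$ appears in $n-1$ of them, and the available smoothness estimates for those cross terms carry factors of $\kappa_\rho$ and $A$. No Cauchy--Schwarz over agents applied to that decomposition collapses the pair count from $n$ to $\sqrt n$, so your correction term would come out as $-\,\mathcal{O}\!\left(\tfrac{nA\kappa^2}{(1-\gamma)^4}\right)\mathrm{KL}$ rather than the stated $-\tfrac{\sqrt n}{(1-\gamma)^2}\mathrm{KL}$; you assert the improved constant "for a total potential function" without supplying the mechanism. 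The same issue afflicts your first-order terms: the weights $d_{\rho,t}^{\pi_i^{t+1},\pi_{-i}^t}(s)$ multiply the \emph{nonnegative} quantities $\mathrm{KL}_i^t(s)+\log Z_i^t(s)$, so replacing them by $\mu_{\rho,t}^{\pi^{t+1}}(s)$ in a lower bound requires a distribution-mismatch ratio, again introducing a $\kappa$ that the lemma does not contain. The argument that actually yields the stated constants bypasses the pairwise decomposition entirely: apply the performance-difference lemma once to the potential's own MDP (reward $\phi_{\pi^t}^{\cdot}$, joint policy), which directly produces the weighting $\mu_{\rho,t}^{\pi^{t+1}}(s)$ and the joint advantage $\mathbb{E}_{a\sim\pi^{t+1}(\cdot\mid s)}[A_{\Phi,t}^{\pi^t}(s,a)]$, and then decompose that joint advantage into the sum of marginalized individual advantages with a coupling error controlled by $\sum_i\|\pi_i^{t+1}(\cdot\mid s)-\pi_i^t(\cdot\mid s)\|_1\le\sqrt{2n\,\mathrm{KL}(\pi^{t+1}(\cdot\mid s)\,\|\,\pi^t(\cdot\mid s))}$ — that single Cauchy--Schwarz-plus-Pinsker step is where the $\sqrt n$ comes from.
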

This result allows us to obtain the policy improvement lemma below under performative effects.

\begin{lemma}[Policy Improvement under Performative Effects] \label{NPG Policy Improv}
    For all $i \in \mathcal{N}$, it holds that
    \begin{align*}
        V_{i, t+1}^{t+1}(\rho) - V_{i, t}^{t}(\rho)
        \ge &\left( \frac{1}{M} \left( \frac{1}{\eta} - \frac{\sqrt{n}}{(1- \gamma)^2} \right) - \frac{\sqrt{2}}{1 - \gamma}\left( \perfc_r + \frac{\gamma \perfc_p \sqrt{S}}{1 - \gamma} \right)\right) \mathrm{KL}\left( \pi^{t+1} || \pi^{t} \right) \\
        &+ \frac{1}{\eta} \sum_s \mu_{\rho, t}^{\pi^{t+1}}(s) \sum_i \log Z_i^{t}(s) .
    \end{align*}
    Moreover, for $\eta \le (1 - \gamma) \left(\frac{\sqrt{n}}{1 - \gamma} + \sqrt{2} M \left( \perfc_r + \frac{\gamma \perfc_p \sqrt{S}}{1 - \gamma} \right) \right)^{-1}$,
    \begin{align*}
        V_{i, t+1}^{t+1}(\rho) - V_{i, t}^{t}(\rho)
        \geq 
        \frac{1}{\eta} \sum_s \mu_{\rho, t}^{\pi^{t+1}}\sum_i \log Z_i^{t}(s) .
    \end{align*}
\end{lemma}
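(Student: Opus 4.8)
The plan is the standard two-step decomposition used for policy-improvement lemmas under performative shift (cf.\ the proof of Lemma~\ref{Lemma: Policy improvement appendix}): split off the change of the underlying game from the change of the policy, control the former by the sensitivity assumption and the latter by the frozen-game improvement lemma, and then pick the step size for the ``moreover'' part.

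First I would write
\begin{align*}
V_{i, t+1}^{t+1}(\rho) - V_{i, t}^{t}(\rho)
= \bigl( V_{i, t+1}^{t+1}(\rho) - V_{i, t}^{t+1}(\rho) \bigr)
+ \bigl( V_{i, t}^{t+1}(\rho) - V_{i, t}^{t}(\rho) \bigr).
\end{align*}
In the first bracket the played policy $\pi^{t+1}$ is held fixed and only the game moves from $\mathcal{G}(\pi^t)$ to $\mathcal{G}(\pi^{t+1})$, so Lemma~\ref{lemma: Bound noise terms} (with played policy $\pi^{t+1}$ and games $\pi^{t+1},\pi^{t}$), averaged over $s\sim\rho$, gives $V_{i, t+1}^{t+1}(\rho) - V_{i, t}^{t+1}(\rho) \ge -\delta_{r,p}\,\| \pi^{t+1}-\pi^{t}\|_2$. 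I would then trade the $\ell_2$-distance for a KL term by applying Pinsker's inequality blockwise over the pairs $(i,s)$, turning this bracket into $-\tfrac{\sqrt 2}{1-\gamma}\bigl(\perfc_r+\tfrac{\gamma\perfc_p\sqrt S}{1-\gamma}\bigr)\,\mathrm{KL}(\pi^{t+1}\|\pi^{t})$, where $\mathrm{KL}(\pi^{t+1}\|\pi^{t})=\sum_{i,s}\mathrm{KL}\bigl(\pi_i^{t+1}(\cdot|s)\,\|\,\pi_i^{t}(\cdot|s)\bigr)$.

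For the second bracket only the policy changes inside the frozen game $\mathcal{G}(\pi^t)$, so I would feed it through Lemma~\ref{lemma:policy_improvement_fixed_MPG}, which lower-bounds the frozen-game potential improvement $\Phi_t^{t+1}(\rho)-\Phi_t^t(\rho)$ by $\bigl(\tfrac1\eta-\tfrac{\sqrt n}{(1-\gamma)^2}\bigr)\sum_s\mu_{\rho,t}^{\pi^{t+1}}(s)\,\mathrm{KL}(\pi^{t+1}(\cdot|s)\|\pi^t(\cdot|s))+\tfrac1\eta\sum_s\mu_{\rho,t}^{\pi^{t+1}}(s)\sum_i\log Z_i^t(s)$. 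Passing from the potential to agent $i$'s value uses the Markov potential game identity Eq.~\eqref{Def: Markov Potential Game} together with the total-potential structure assumed in this section (following \citet{DBLP:conf/nips/ZhangMDS022} and \citet{NPG_3}); then I would lower-bound $\mu_{\rho,t}^{\pi^{t+1}}(s)\ge 1/M$ via Assumption~\ref{assumption:positive_visit} in the KL term (which is nonnegative once $\tfrac1\eta\ge\tfrac{\sqrt n}{(1-\gamma)^2}$) while keeping the nonnegative $\log Z_i^t(s)$ term, obtaining $V_{i, t}^{t+1}(\rho)-V_{i, t}^{t}(\rho)\ge\tfrac1M\bigl(\tfrac1\eta-\tfrac{\sqrt n}{(1-\gamma)^2}\bigr)\mathrm{KL}(\pi^{t+1}\|\pi^t)+\tfrac1\eta\sum_s\mu_{\rho,t}^{\pi^{t+1}}(s)\sum_i\log Z_i^t(s)$. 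Adding the two brackets gives the first display of the lemma. For the ``moreover'' part, the stated bound on $\eta$ is precisely what makes $\tfrac1M\bigl(\tfrac1\eta-\tfrac{\sqrt n}{(1-\gamma)^2}\bigr)-\tfrac{\sqrt2}{1-\gamma}\bigl(\perfc_r+\tfrac{\gamma\perfc_p\sqrt S}{1-\gamma}\bigr)\ge 0$, so the KL term can be dropped, leaving $V_{i, t+1}^{t+1}(\rho)-V_{i, t}^{t}(\rho)\ge\tfrac1\eta\sum_s\mu_{\rho,t}^{\pi^{t+1}}(s)\sum_i\log Z_i^t(s)$.

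The main obstacle is the second bracket: Lemma~\ref{lemma:policy_improvement_fixed_MPG} is phrased for the potential $\Phi_t$ rather than the individual value $V_{i,t}$, so the delicate point is transferring a frozen-game potential improvement into a per-agent value improvement through the potential-game identity, and then carrying out the bookkeeping so that the $1/M$ factor from Assumption~\ref{assumption:positive_visit} and the $\sqrt 2$ coming out of Pinsker land exactly on the claimed coefficient; the performative-shift bound and the step-size verification are routine by comparison.
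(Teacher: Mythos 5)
Your proposal is correct and follows essentially the same route as the paper's proof: the identical decomposition into a performative-shift term (handled via Lemma~\ref{lemma: Bound noise terms}, then $\ell_2\le\ell_1$ and blockwise Pinsker to produce the $\sqrt{2}\,\delta_{r,p}\,\mathrm{KL}$ coefficient) and a frozen-game improvement term (the total-potential identity plus Lemma~\ref{lemma:policy_improvement_fixed_MPG}, with $\mu_{\rho,t}^{\pi^{t+1}}(s)\ge 1/M$ from Assumption~\ref{assumption:positive_visit}), followed by the same step-size verification for the ``moreover'' part. The only cosmetic difference is that you apply the $1/M$ bound to the improvement bracket before adding the performative bracket, whereas the paper combines the two coefficients first; the resulting bound is identical.
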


\begin{proof} 
    By the definition of the potential function, for all $i \in \mathcal{N}$:
    \begin{align*}
        V_{i, t+1}^{t+1}(\rho) - V_{i, t}^{t}(\rho)
        &= V_{i, t}^{t+1}(\rho) - V_{i, t}^{t}(\rho) - \left( V_{i, t}^{t+1}(\rho) - V_{i, t + 1}^{t + 1}(\rho) \right) \\
        &= \Phi_{t}^{t+1}(\rho) - \Phi_t^{t}(\rho) - \left( V_{i, t}^{t+1}(\rho) - V_{i, t + 1}^{t + 1}(\rho) \right) \\
        &\ge \Phi_{t}^{t+1}(\rho) - \Phi_t^{t}(\rho) - \left\vert V_{i, t+1}^{t+1}(\rho) - V_{i, t}^{t + 1}(\rho) \right\rvert \\
        &\overset{(a)}{\ge} \left( \frac{1}{\eta} - \frac{\sqrt{n}}{(1 - \gamma)^2} \right) \sum_s \mu_{\rho, t}^{\pi^{t+1}}(s) \mathrm{KL}\left( \pi^{t+1}(\cdot | s) || \pi^{t}(\cdot | s) \right) + \frac{1}{\eta} \sum_s \mu_{\rho, t}^{\pi^{t+1}}(s) \sum_i \log Z_i^{t}(s) \\
        &\quad - \frac{1}{1 - \gamma}\left( \perfc_r + \frac{\gamma \perfc_p \sqrt{S}}{1 - \gamma} \right) \left\lVert \pi^{t+1} - \pi^t \right\rVert_2 \\
        &\ge \left( \frac{1}{\eta} - \frac{\sqrt{n}}{(1 - \gamma)^2} \right) \sum_s \mu_{\rho, t}^{\pi^{t+1}}(s) \mathrm{KL}\left( \pi^{t+1}(\cdot | s) || \pi^{t}(\cdot | s) \right) + \frac{1}{\eta} \sum_s \mu_{\rho, t}^{\pi^{t+1}}(s) \sum_i \log Z_i^{t}(s) \\
        &\quad - \frac{1}{1 - \gamma}\left( \perfc_r + \frac{\gamma \perfc_p \sqrt{S}}{1 - \gamma} \right) \left\lVert \pi^{t+1} - \pi^t \right\rVert_1 \\
        &\overset{(b)}{\ge} \left( \frac{1}{\eta} - \frac{\sqrt{n}}{(1- \gamma)^2} \right) \sum_s \mu_{\rho, t}^{\pi^{t+1}}(s) \mathrm{KL}\left( \pi^{t+1}(\cdot | s) || \pi^{t}(\cdot | s) \right) + \frac{1}{\eta} \sum_s \mu_{\rho, t}^{\pi^{t+1}}(s) \sum_i \log Z_i^{t}(s) \\
        &\quad - \frac{\sqrt{2}}{1 - \gamma}\left( \perfc_r + \frac{\gamma \perfc_p \sqrt{S}}{1 - \gamma} \right) \mathrm{KL}(\pi^{t + 1} || \pi^t) \\
        &\ge \sum_s \left( \left( \frac{1}{\eta} - \frac{\sqrt{n}}{(1- \gamma)^2} \right) \mu_{\rho, t}^{\pi^{t+1}}(s) - \frac{\sqrt{2}}{1 - \gamma}\left( \perfc_r + \frac{\gamma \perfc_p \sqrt{S}}{1 - \gamma} \right)\right) \mathrm{KL}\left( \pi^{t+1}(\cdot | s) || \pi^{t}(\cdot | s) \right) \\
        &\quad + \frac{1}{\eta} \sum_s \mu_{\rho, t}^{\pi^{t+1}}(s) \sum_i \log Z_i^{t}(s) \\
        &\overset{(c)}{\ge} \sum_s \left( \frac{1}{M} \left( \frac{1}{\eta} - \frac{\sqrt{n}}{(1- \gamma)^2} \right) - \frac{\sqrt{2}}{1 - \gamma}\left( \perfc_r + \frac{\gamma \perfc_p \sqrt{S}}{1 - \gamma} \right)\right) \mathrm{KL}\left( \pi^{t+1}(\cdot | s) || \pi^{t}(\cdot | s) \right) \\
        &\quad + \frac{1}{\eta} \sum_s \mu_{\rho, t}^{\pi^{t+1}}(s) \sum_i \log Z_i^{t}(s) \\
        &\ge \left( \frac{1}{M} \left( \frac{1}{\eta} - \frac{\sqrt{n}}{(1- \gamma)^2} \right) - \frac{\sqrt{2}}{1 - \gamma}\left( \perfc_r + \frac{\gamma \perfc_p \sqrt{S}}{1 - \gamma} \right)\right) \mathrm{KL}\left( \pi^{t+1} || \pi^{t} \right) \\
        &\quad + \frac{1}{\eta} \sum_s \mu_{\rho, t}^{\pi^{t+1}}(s) \sum_i \log Z_i^{t}(s),
    \end{align*}
    where $(a)$ follows from Lemma~\ref{lemma: Bound noise terms} and Lemma~\ref{lemma:policy_improvement_fixed_MPG}, $(b)$ follows from Pinsker's inequality, and $(c)$ follows from Assumption~\ref{assumption:positive_visit}.
    
    
    Thus, for $\eta \le (1 - \gamma) \left(\frac{\sqrt{n}}{1 - \gamma} + \sqrt{2} M \left( \perfc_r + \frac{\gamma \perfc_p \sqrt{S}}{1 - \gamma} \right) \right)^{-1}$,
    \begin{equation*}
        V_{i, t+1}^{t+1}(\rho) - V_{i, t}^{t}(\rho)
        \geq 
        \frac{1}{\eta} \sum_s \mu_{\rho, t}^{\pi^{t+1}}\sum_i \log Z_i^{t}(s) . \qedhere
    \end{equation*}
\end{proof}
Moreover, we borrow the following lemma that bounds the performative gap for the fixed MPG at time $t$ from \citet[Lemma~IX.3]{NPG_3}.
\begin{lemma} \label{lemma:perform_gap_npg}
    For $\eta \le (1 - \gamma)^2$ and any initial distribution $\rho$, it holds that
    \begin{align*}
        \left( \max_i \max_{\pi_i'} V_{i, t}^{\pi_i', \pi_{-i}^t}(\rho) - V_{i, t}^{\pi^t}(\rho) \right)^2 \le \frac{3 \Tilde{\kappa}_\rho}{c \eta^2 (1 - \gamma)} \sum_i \sum_s \mu_{\rho, t}^{\pi^{t+1}}(s) \log Z_i^{t}(s),
    \end{align*}
    where $c \coloneqq \min_i \min_t \min_s \sum_{a_i^* \in \arg\max_{a_i \in A_i} \Bar{Q}_{i, t}^{\pi^t}(s, a_i)} \pi_i^t(a_i^* | s) > 0$.
\end{lemma}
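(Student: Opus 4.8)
The plan is to prove this as a standard one-step gradient-domination bound for the \emph{fixed} game $\mathcal{G}(\pi^t)$ (no performative terms enter here, since both value functions are evaluated under the same induced game), following \citet[Lemma~IX.3]{NPG_3}. Throughout I fix the iteration $t$, the initial distribution $\rho$, and write $A^*_i(s) := \max_{a_i}\Bar{A}_{i,t}^{t}(s,a_i)$. Two elementary facts drive everything. First, the marginalized advantage is mean-zero under the current policy: $\sum_{a_i}\pi_i^t(a_i|s)\Bar{A}_{i,t}^{t}(s,a_i)=V_{i,t}^{t}(s)-V_{i,t}^{t}(s)=0$, so in particular $A^*_i(s)\ge 0$. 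Second, it is bounded, $|\Bar{A}_{i,t}^{t}(s,a_i)|\le \tfrac{1}{1-\gamma}$, which together with $\eta\le(1-\gamma)^2$ forces the scaled advantage $\tfrac{\eta}{1-\gamma}\Bar{A}_{i,t}^{t}$ to lie in $[-1,1]$.

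First I would control a single agent's gap. Fixing $\pi^t_{-i}$ reduces agent $i$'s problem to a single-agent MDP, so the performance-difference identity (Lemma~\ref{Lemma: Performative Difference Lemma} applied in $\mathcal{G}(\pi^t)$) together with the MPG gradient-domination argument gives, for the best response $\pi'^*_i$,
\[
\max_{\pi'_i}V_{i,t}^{\pi'_i,\pi^t_{-i}}(\rho)-V_{i,t}^{\pi^t}(\rho)\le \frac{1}{1-\gamma}\sum_s d_{\rho,t}^{\pi'^*_i,\pi^t_{-i}}(s)\,A^*_i(s),
\]
using $\sum_{a_i}\pi'_i(a_i|s)\Bar{A}_{i,t}^{t}(s,a_i)\le A^*_i(s)$. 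I then change measure to $\mu_{\rho,t}^{\pi^{t+1}}$ via the minimax mismatch coefficient together with the visitation lower bound $\mu_{\rho,t}^{\pi^{t+1}}(s)\ge(1-\gamma)\rho(s)$, which yields $d_{\rho,t}^{\pi'^*_i,\pi^t_{-i}}(s)\le \tfrac{\Tilde{\kappa}_\rho}{1-\gamma}\,\mu_{\rho,t}^{\pi^{t+1}}(s)$ and accounts for the extra factor $1/(1-\gamma)$ in the final bound. Applying Cauchy--Schwarz with $\sum_s d_{\rho,t}^{\pi'^*_i,\pi^t_{-i}}(s)=1$ gives
\[
\Big(\max_{\pi'_i}V_{i,t}^{\pi'_i,\pi^t_{-i}}(\rho)-V_{i,t}^{\pi^t}(\rho)\Big)^2\le \frac{\Tilde{\kappa}_\rho}{(1-\gamma)^3}\sum_s\mu_{\rho,t}^{\pi^{t+1}}(s)\,A^*_i(s)^2.
\]

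The crux is to dominate $A^*_i(s)^2$ by $\log Z_i^t(s)$. Viewing $Y=\tfrac{\eta}{1-\gamma}\Bar{A}_{i,t}^{t}(s,\cdot)$ as a mean-zero random variable under $a_i\sim\pi_i^t(\cdot|s)$, the normalizer is exactly its log-moment-generating function, $\log Z_i^t(s)=\log\mathbb{E}_{a_i\sim\pi_i^t(\cdot|s)}[e^{Y}]$. Since $|Y|\le 1$, a second-order lower bound on the log-MGF of a bounded mean-zero variable gives $\log Z_i^t(s)\gtrsim \operatorname{Var}_{\pi_i^t(\cdot|s)}(Y)$; and because the optimal actions carry total mass at least $c$ and all share advantage $A^*_i(s)$, the variance (equal to the second moment here) satisfies $\operatorname{Var}_{\pi_i^t(\cdot|s)}(Y)\ge c\,\big(\tfrac{\eta}{1-\gamma}\big)^2A^*_i(s)^2$. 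Hence $A^*_i(s)^2\lesssim \tfrac{(1-\gamma)^2}{c\,\eta^2}\log Z_i^t(s)$, with the explicit constant $3$ obtained from the precise log-MGF computation. Substituting into the previous display and bounding $\max_i(\cdot)^2\le\sum_i(\cdot)^2$ collects everything into
\[
\Big(\max_i\max_{\pi'_i}V_{i,t}^{\pi'_i,\pi^t_{-i}}(\rho)-V_{i,t}^{\pi^t}(\rho)\Big)^2\le \frac{3\,\Tilde{\kappa}_\rho}{c\,\eta^2(1-\gamma)}\sum_i\sum_s\mu_{\rho,t}^{\pi^{t+1}}(s)\log Z_i^t(s).
\]

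The main obstacle is the third step: making the exponential normalizer $\log Z_i^t(s)$ dominate $A^*_i(s)^2$ with the right constant and the $1/c$ factor. This requires (i) the mean-zero identity for the marginalized advantage; (ii) the uniform bound $|Y|\le 1$ coming from $\eta\le(1-\gamma)^2$, which is what converts the log-MGF into a genuine variance lower bound rather than merely $\log Z_i^t\ge 0$; and (iii) the lower bound $c$ on the probability mass on optimal actions, which turns that variance into a multiple of $A^*_i(s)^2$. The remaining manipulations (the performance-difference expansion, the distribution-mismatch change of measure, and tracking the $(1-\gamma)$ powers) are routine.
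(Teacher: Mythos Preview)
The paper does not prove this lemma; it simply imports it from \citet[Lemma~IX.3]{NPG_3}. Your sketch follows exactly that reference's structure---performance-difference bound, Cauchy--Schwarz, distribution change of measure, then a lower bound on $\log Z_i^t(s)$ quadratic in $A_i^*(s)$---and the log-MGF step, while informal, works essentially as you describe (use $e^x\ge 1+x$ on the non-optimal actions and the mean-zero identity to get $Z_i^t(s)\ge 1+ c_i(s)\bigl(e^{x^*}-1-x^*\bigr)$ with $x^*=\tfrac{\eta}{1-\gamma}A_i^*(s)\in[0,1]$, then $e^{x^*}-1-x^*\ge (x^*)^2/2$ and $\log(1+u)\ge 2u/3$ for $u\le 1/2$ to extract the constant~$3$).

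The one slip is the change-of-measure step. The pointwise inequality you assert, $d_{\rho,t}^{\pi'^*_i,\pi^t_{-i}}(s)\le \tfrac{\Tilde{\kappa}_\rho}{1-\gamma}\,\mu_{\rho,t}^{\pi^{t+1}}(s)$, does not follow from the ingredients you list: the visitation lower bound $\mu_{\rho,t}^{\pi^{t+1}}(s)\ge(1-\gamma)\rho(s)$ is specific to $\rho$, so it only lets you take $\nu=\rho$ in the mismatch ratio, which collapses the infimum and yields $\kappa_\rho$, not $\Tilde{\kappa}_\rho$. For the minimizing $\nu^*$ there is no reason $\nu^*(s)\le \tfrac{1}{1-\gamma}\mu_{\rho,t}^{\pi^{t+1}}(s)$. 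The standard route in \citet{NPG_3} is to carry the auxiliary $\nu$ through on the right-hand side (writing $\mu_{\nu,t}^{\pi^{t+1}}$) and take the infimum over $\nu$ only \emph{after} combining with the policy-improvement lemma, which holds for any initial distribution; that is how $\Tilde{\kappa}_\rho$ actually enters the final rate. With $\rho$ fixed on both sides as in the lemma's literal statement, your argument establishes the bound with $\kappa_\rho$ in place of $\Tilde{\kappa}_\rho$.
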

We now have all the ingredients for the convergence proofs.

\begin{proof}[Proof of Theorem~\ref{thm:inpg_conv}]
    By Jensen's inequality, Lemma~\ref{NPG Policy Improv} and Lemma~\ref{lemma:perform_gap_npg}:
    \begin{align*}
        \mathrm{Perform\textit{-}Regret}(T)
        &\le
        \sqrt{\frac{1}{T} \sum_{t = 1}^T \left(\max_i \max_{\pi_i'} V_{i, t}^{\pi_i', \pi_{-i}^t} - V_{i, t}^{\pi^t} \right)^2} \\
        &\le
        \sqrt{\frac{1}{T} \sum_{t = 1}^T \frac{3 \Tilde{\kappa}_\rho}{c \eta^2 (1 - \gamma)} \sum_i \sum_s \mu_{\rho, t}^{\pi^{t+1}}(s) \log Z_i^{t}(s)} \\
        &\le
        \sqrt{\frac{1}{T} \sum_{t = 1}^T \frac{3 \Tilde{\kappa}_\rho}{c \eta (1 - \gamma)} \left(V_{i, t+1}^{t+1}(\rho) - V_{i, t}^{t}(\rho) \right)} \\
        &\le
        \sqrt{\frac{1}{T} \frac{3 \Tilde{\kappa}_\rho \left(\frac{\sqrt{n}}{1 - \gamma} + \sqrt{2} M \left( \perfc_r + \frac{\gamma \perfc_p \sqrt{S}}{1 - \gamma} \right) \right)}{c (1 - \gamma)^3}} ,
    \end{align*}
    where we set $\eta \le (1 - \gamma) \left(\frac{\sqrt{n}}{1 - \gamma} + \sqrt{2} M \left( \perfc_r + \frac{\gamma \perfc_p \sqrt{S}}{1 - \gamma} \right) \right)^{-1}$ for the last inequality.
\end{proof}



\subsubsection{Regularized Version}\label{app:inpg-reg}

We follow the proof of \citet{DBLP:conf/nips/ZhangMDS022} and define the following for notational simplicity.
\begin{align*}
    \Delta_i^t(s, a_i) 
    &\coloneqq
    \frac{\pi_i^{t+1}(a_i | s)}{\pi_i^t(a_i | s)} - 1.
\end{align*}




We borrow the following four lemmas from \citet{DBLP:conf/nips/ZhangMDS022}. Their proofs follow from the same lines as in the original lemmas for the fixed MPG at time step $t$.

\begin{lemma}[Lemma~24 by \cite{DBLP:conf/nips/ZhangMDS022}] \label{lemma:inpg_min_pol}
    For $\eta \le \frac{1}{15\left(\frac{1}{(1 - \gamma)^2} + \lambda A_i M \right)}$ and $\theta_i^0 = 0$, the following is satisfied by the regularized INPG for all $t \ge 1$:
    \begin{equation*}
        \pi_{\theta_i^t}(a_i | s) \ge \frac{\lambda}{4 \left( \lambda A_i M + (1 - \gamma)^{-2} \right)} .
    \end{equation*}
\end{lemma}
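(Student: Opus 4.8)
The plan is to prove the bound by induction on $t$, following \citet[Lemma~24]{DBLP:conf/nips/ZhangMDS022} essentially verbatim; the crucial structural point is that this statement concerns a \emph{single} update $\pi^t \mapsto \pi^{t+1}$ inside the \emph{fixed} game $\mathcal{G}(\pi^t)$, so every object entering the map~\eqref{eq:inpg-reg} (the marginalized advantage $\Bar{A}_{i,t}^{t}$, the visitation weights $d_{\rho,t}^{t}$, the normalizer $\Tilde{Z}_i^t$) lives inside $\mathcal{G}(\pi^t)$ and the \textit{performative} effects do not enter at all. Write $\beta \coloneqq \frac{\lambda}{4(\lambda A_i M + (1-\gamma)^{-2})}$ for the target lower bound. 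For $t = 0$, the choice $\theta_i^0 = 0$ yields the uniform policy, and $\frac{1}{A_i} \ge \beta$ holds trivially since $M \ge 1$; this is the base case.

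For the inductive step I would fix $i, s, a_i$ and assume $\pi_i^t(a_i' \mid s) \ge \beta$ for every $a_i'$. First record the elementary bounds on the three pieces of $f_i^t$: $|\tfrac{1}{1-\gamma}\Bar{A}_{i,t}^{t}(s,a_i)| \le (1-\gamma)^{-2}$ since rewards lie in $[0,1]$, $0 < \tfrac{\lambda |\mathcal{A}_i|}{d_{\rho,t}^{t}(s)} \le \lambda A_i M$ by Assumption~\ref{assumption:positive_visit}, and $\tfrac{\lambda}{d_{\rho,t}^{t}(s)\pi_i^t(a_i\mid s)} \in \big(\tfrac{\lambda}{\pi_i^t(a_i\mid s)}, \tfrac{\lambda M}{\pi_i^t(a_i\mid s)}\big]$. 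A second, decisive observation is that the advantage averages to zero under $\pi_i^t(\cdot\mid s)$ and the two barrier terms cancel in expectation, so $\sum_{a_i'}\pi_i^t(a_i'\mid s) f_i^t(s,a_i') = 0$; combined with a second-order estimate of $\exp$ and the step-size restriction, this forces $\log \Tilde{Z}_i^t(s)$ to be negligible compared with $\eta(\lambda A_i M + (1-\gamma)^{-2})$. Then split into two cases. If $\pi_i^t(a_i\mid s) < 2\beta$, the barrier term dominates and $f_i^t(s,a_i) > \tfrac{\lambda}{2\beta} - (\lambda A_i M + (1-\gamma)^{-2}) = \lambda A_i M + (1-\gamma)^{-2} > 0$, hence $\eta f_i^t(s,a_i) \ge \log \Tilde{Z}_i^t(s)$ and therefore $\pi_i^{t+1}(a_i\mid s) \ge \pi_i^t(a_i\mid s) \ge \beta$. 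If instead $\pi_i^t(a_i\mid s) \ge 2\beta$, drop the nonnegative barrier term to get $f_i^t(s,a_i) \ge -(\lambda A_i M + (1-\gamma)^{-2})$, so the one-step multiplicative factor $e^{\eta f_i^t(s,a_i)}/\Tilde{Z}_i^t(s) \ge \tfrac12$ because $\eta \le \tfrac{1}{15(\lambda A_i M + (1-\gamma)^{-2})}$, giving $\pi_i^{t+1}(a_i\mid s) \ge 2\beta \cdot \tfrac12 = \beta$. Either way the invariant is preserved and the induction closes.

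The step I expect to be the main obstacle is the uniform control of the normalizer $\Tilde{Z}_i^t(s)$: the barrier term $\tfrac{\lambda}{d_{\rho,t}^{t}(s)\pi_i^t(a_i'\mid s)}$ can be as large as $\lambda M/\beta$ for the least-probable action, so a crude termwise bound is useless and one must exploit the exact cancellation $\sum_{a_i'}\pi_i^t(a_i'\mid s) f_i^t(s,a_i') = 0$, and then substitute the precise value of $\beta$ so that the $\lambda$-factors cancel and everything collapses to the quantity $(1-\gamma)^{-2} + \lambda A_i M$ appearing in the step-size condition. This is precisely the delicate constant bookkeeping carried out in \citet{DBLP:conf/nips/ZhangMDS022}, and since nothing in it depends on which game generates $\Bar{A}_{i,t}^{t}$ and $d_{\rho,t}^{t}$, it transfers unchanged to our setting.
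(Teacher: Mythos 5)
Your proposal matches the paper's treatment: the paper does not reprove this lemma but simply borrows Lemma~24 of Zhang et al.\ (2022), remarking that its proof carries over verbatim because every quantity in a single regularized INPG update ($\Bar{A}_{i,t}^{t}$, $d_{\rho,t}^{t}$, $\Tilde{Z}_i^t$) lives inside the fixed game $\mathcal{G}(\pi^t)$, which is exactly the structural point you lead with. Your sketched induction is a faithful reconstruction of the original argument, with the control of the normalizer correctly flagged as the delicate bookkeeping step deferred to the cited proof, so it is consistent with (and slightly more detailed than) what the paper itself provides.
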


\begin{lemma}[Lemma~26 by \cite{DBLP:conf/nips/ZhangMDS022}] \label{lemma:policy_diff_inpg_reg}
    For MPG$(\pi^t)$, any initial distribution $\rho$, $\eta \le \frac{1}{15\left(\frac{1}{(1 - \gamma)^2} + \lambda A_i M \right)}$, and $\theta_i^0 = 0$, the following is satisfied by the regularized INPG dynamics:
    \begin{equation*}
        \Tilde{\Phi}_{t}^{t+1}(\rho) - \Tilde{\Phi}_t^{t}(\rho)
        \ge
        \left( \frac{1}{2 \eta} - 4 \lambda A_{\max} M^2 \frac{4 M}{(1 - \gamma)^2} - \frac{3nM}{(1 - \gamma)^3} \right)
        \sum_i \sum_{s, a_i} \mu_{\rho, t}^{\pi^t}(s) \pi_i^t(a_i | s) \Delta_i^t(s, a_i)^2.
    \end{equation*}
\end{lemma}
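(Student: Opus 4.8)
The plan is to reproduce the regularized policy-improvement argument of \citet{DBLP:conf/nips/ZhangMDS022} for the \emph{static} game $\mathcal{G}(\pi^t)$: since the underlying game is held fixed in this lemma, no \textit{performative} terms enter and the statement is precisely their Lemma~26 specialized to $\mathcal{G}(\pi^t)$. First I would telescope with the multivariate difference identity (Lemma~\ref{Lemma: Mulivariate function difference}) applied to $\Psi^\pi = \Tilde{\Phi}_t^\pi(\rho)$ at $\pi' = \pi^{t+1}$, $\pi = \pi^t$, obtaining
\[
  \Tilde{\Phi}_t^{t+1}(\rho) - \Tilde{\Phi}_t^{t}(\rho) = \mathbf{Diff}_\alpha + \mathbf{Diff}_\beta,
\]
where $\mathbf{Diff}_\alpha = \sum_i\bigl(\Tilde{\Phi}_t^{\pi_i^{t+1},\pi_{-i}^t}(\rho) - \Tilde{\Phi}_t^{\pi^t}(\rho)\bigr)$ collects the first-order single-agent deviations and $\mathbf{Diff}_\beta$ collects the second-order cross terms. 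The crucial structural observation is that, since $\Phi_{\pi^t}$ is a potential for $\mathcal{G}(\pi^t)$ and the log-barrier additions of the agents $j\neq i$ cancel inside each bracket, $\Tilde{\Phi}_t^{\pi_i^{t+1},\pi_{-i}^t}(\rho) - \Tilde{\Phi}_t^{\pi^t}(\rho) = \Tilde{V}_{i,t}^{\pi_i^{t+1},\pi_{-i}^t}(\rho) - \Tilde{V}_{i,t}^{\pi^t}(\rho)$, so each summand of $\mathbf{Diff}_\alpha$ is a genuine single-agent regularized-value increase.

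For $\mathbf{Diff}_\alpha$ I would apply the performance-difference lemma (Lemma~\ref{Lemma: Performative Difference Lemma}) to the regularized objective, writing each term as $\frac{1}{1-\gamma}\sum_{s,a_i} d_{\rho,t}^{\pi_i^{t+1},\pi_{-i}^t}(s)\,(\pi_i^{t+1}-\pi_i^t)(a_i\mid s)\,g_i^t(s,a_i)$, where $g_i^t$ is the marginalized advantage of the regularized reward, which up to a $d_{\rho,t}^{t}(s)$-rescaling is exactly the quantity $f_i^t(s,\cdot)$ driving the update \eqref{eq:inpg-reg}. Then the exponentiated-gradient structure of \eqref{eq:inpg-reg} yields the standard mirror-descent lower bound: expanding $\log\Tilde{Z}_i^t(s)$ and using $\log(1+x)\ge x-x^2$ on the bounded ratios $1+\Delta_i^t(s,a_i)$ gives $\langle \pi_i^{t+1}(\cdot\mid s)-\pi_i^t(\cdot\mid s),\,f_i^t(s,\cdot)\rangle \ge \frac{1}{2\eta}\sum_{a_i}\pi_i^t(a_i\mid s)\Delta_i^t(s,a_i)^2$ up to higher-order corrections. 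The step-size bound $\eta\le\frac{1}{15((1-\gamma)^{-2}+\lambda A_i M)}$ is used precisely to keep $|\Delta_i^t(s,a_i)|$ uniformly small; this is legitimate because Lemma~\ref{lemma:inpg_min_pol} lower-bounds $\pi_i^t(a_i\mid s)$ and thereby keeps the log-barrier gradient $\frac{\lambda}{d_{\rho,t}^{t}(s)\pi_i^t(a_i\mid s)}$ bounded.

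For $\mathbf{Diff}_\beta$, each bracketed term is a second difference in a two-player common-payoff subgame (the policies of the agents outside $\{i,j\}$ frozen), hence controlled by smoothness: the smoothness of the unregularized potential contributes a loss of order $\frac{nM}{(1-\gamma)^3}\sum_i\sum_{s,a_i}\mu_{\rho,t}^{\pi^t}(s)\pi_i^t(a_i\mid s)\Delta_i^t(s,a_i)^2$ (using the total-potential form, with Assumption~\ref{assumption:positive_visit} converting $d_{\rho,t}^{\pi^{t+1}}$ into $\mu_{\rho,t}^{\pi^t}$ up to the factor $M$), while the Hessian of $\lambda\sum_{s,a_i}\log\pi_i(a_i\mid s)$ and its interaction with the advantage contribute the $4\lambda A_{\max}M^2$ and $\frac{4M}{(1-\gamma)^2}$ losses. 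Collecting the $\frac{1}{2\eta}$ gain against these three losses yields the stated coefficient $\bigl(\frac{1}{2\eta}-4\lambda A_{\max}M^2-\frac{4M}{(1-\gamma)^2}-\frac{3nM}{(1-\gamma)^3}\bigr)$ multiplying $\sum_i\sum_{s,a_i}\mu_{\rho,t}^{\pi^t}(s)\pi_i^t(a_i\mid s)\Delta_i^t(s,a_i)^2$. The main obstacle is the second step: cleanly extracting the quadratic lower bound from the exponentiated update with the log-barrier-modified advantage and controlling the Taylor remainders in $\Delta_i^t$ and in $\log\Tilde{Z}_i^t$, where the interplay between the step-size condition and the policy lower bound of Lemma~\ref{lemma:inpg_min_pol} is essential; everything else is bookkeeping that parallels the unregularized improvement lemma (Lemma~\ref{lemma:policy_improvement_fixed_MPG}).
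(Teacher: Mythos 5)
The paper does not actually prove this statement: it is imported verbatim as Lemma~26 of \citet{DBLP:conf/nips/ZhangMDS022}, with only the one-line remark that ``their proofs follow from the same lines as in the original lemmas for the fixed MPG at time step $t$.'' The single substantive point the paper relies on is the one you correctly identify at the outset — the lemma is stated for the frozen game $\mathcal{G}(\pi^t)$, so no performative terms enter and the static argument applies unchanged. (You also silently repair the paper's typo in the displayed coefficient, reading $4\lambda A_{\max}M^2\tfrac{4M}{(1-\gamma)^2}$ as the sum of two separate losses $4\lambda A_{\max}M^2$ and $\tfrac{4M}{(1-\gamma)^2}$; this is the correct reading, consistent with the step-size choice in the proof of Theorem~\ref{thm:inpg-reg-conv}.)

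Your reconstruction is a reasonable route to the bound, but it is worth noting that it hybridizes two distinct proof styles: the exact multivariate difference identity of \citet{IndepPolicyGrad_Ding} (Lemma~\ref{Lemma: Mulivariate function difference}) with two-player cross-term smoothness for $\mathbf{Diff}_\beta$, which is the machinery this paper uses for IPGA (Lemma~\ref{Lemma: Policy improvement appendix}), combined with the softmax/mirror-descent quadratic lower bound in $\Delta_i^t$ for $\mathbf{Diff}_\alpha$. The original argument of Zhang et al.\ instead telescopes sequentially over agents (each step a genuine single-agent change evaluated at a mixed policy) and absorbs the second-order error via smoothness of the regularized value in each agent's own policy, which is why the final bound is expressed in the old-policy visitation $\mu_{\rho,t}^{\pi^t}$ and in $\Delta_i^t$ rather than in the KL/$\log Z$ form of the unregularized Lemma~\ref{lemma:policy_improvement_fixed_MPG}. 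Your sketch would need the same two conversions you only gesture at — passing from $d_{\rho,t}^{\pi_i^{t+1},\pi_{-i}^t}$ to $\mu_{\rho,t}^{\pi^t}$ via Assumption~\ref{assumption:positive_visit} at cost $M$, and controlling the log-barrier gradient via the policy floor of Lemma~\ref{lemma:inpg_min_pol} — to recover the precise constants $4\lambda A_{\max}M^2$, $\tfrac{4M}{(1-\gamma)^2}$, $\tfrac{3nM}{(1-\gamma)^3}$, but there is no conceptual gap: with that bookkeeping your approach goes through and is, if anything, more self-contained within this paper's toolkit than simply deferring to the citation.
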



\begin{lemma}[Lemma~27 by \cite{DBLP:conf/nips/ZhangMDS022}] \label{lemma:inpg_reg_delta_f}
    For MPG$(\pi^t)$, any initial distribution $\rho$, $\eta \le \frac{1}{15\left(\frac{1}{(1 - \gamma)^2} + \lambda A_i M \right)}$, and $\theta_i^0 = 0$, the following is satisfied by the regularized INPG dynamics:
    \begin{align*}
        \sum_i \sum_{s, a_i} \mu_{\rho, t}^{\pi^t}(s) \pi_i^t(a_i | s) \Delta_i^t(s, a_i)^2
        \ge \frac{\eta^2}{9}\sum_i \sum_{s, a_i} \mu_{\rho, t}^{\pi^t}(s) \pi_i^t(a_i | s) f_i^t(s, a_i)^2 .
    \end{align*}
\end{lemma}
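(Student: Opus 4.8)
The plan is to prove Lemma~\ref{lemma:inpg_reg_delta_f} along the lines of Lemma~27 of \citet{DBLP:conf/nips/ZhangMDS022}. The first thing to notice is that the statement lives entirely inside the single game $\mathcal{G}(\pi^t)$: the quantities $\Bar{A}_{i,t}^{\pi^t}$, $d_{\rho,t}^{t}$, $\mu_{\rho,t}^{\pi^t}$, and the policy $\pi^{t+1}$ produced by~\eqref{eq:inpg-reg} are all defined with respect to $\mathcal{G}(\pi^t)$ only, so no \emph{performative} comparison across games enters and Assumption~\ref{Assump: Sensitivity} is not needed. It therefore suffices to prove, for each fixed $i\in\mathcal{N}$ and $s\in\mathcal{S}$, the pointwise inequality
\[
  \sum_{a_i}\pi_i^t(a_i\mid s)\,\Delta_i^t(s,a_i)^2
  \;\ge\;
  \frac{\eta^2}{9}\sum_{a_i}\pi_i^t(a_i\mid s)\,f_i^t(s,a_i)^2,
\]
and then multiply by $\mu_{\rho,t}^{\pi^t}(s)\ge 0$ and sum over $s$ and over $i$.

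To prove the pointwise claim, abbreviate $p(a)=\pi_i^t(a\mid s)$, $g(a)=f_i^t(s,a)$, $Z=\Tilde{Z}_i^t(s)=\sum_a p(a)e^{\eta g(a)}$, so that $\Delta_i^t(s,a)=e^{\eta g(a)}/Z-1$, and let $\mathbb{E}[\cdot]$ and $\mathrm{Var}(\cdot)$ be expectation and variance under $a\sim p$. Two observations do the work. First, $\mathbb{E}[g]=0$: in $f_i^t$ the marginalized-advantage term averages to zero since it is a (marginalized) advantage function, and the two log-barrier terms $\tfrac{\lambda}{d_{\rho,t}^t(s)\pi_i^t(a\mid s)}$ and $\tfrac{\lambda|\mathcal{A}_i|}{d_{\rho,t}^t(s)}$ cancel in expectation; likewise $\mathbb{E}[\Delta_i^t(s,\cdot)]=Z/Z-1=0$. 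Consequently the left side equals $\mathrm{Var}\!\big(\Delta_i^t(s,\cdot)\big)=Z^{-2}\,\mathrm{Var}(e^{\eta g})$ and the right side equals $\tfrac{\eta^2}{9}\,\mathrm{Var}(g)$, so the task reduces to showing $\mathrm{Var}(e^{\eta g})\ge \tfrac{\eta^2 Z^2}{9}\,\mathrm{Var}(g)$.

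Second, I would establish an a priori bound $|\eta g(a)|\le c_0$ with $c_0\le \tfrac12\ln 3$. This comes from Lemma~\ref{lemma:inpg_min_pol} (which lower-bounds $\pi_i^t(a\mid s)$ by a constant multiple of $\lambda/(\lambda A_i M+(1-\gamma)^{-2})$ when $\theta_i^0=0$), together with $|\Bar{A}_{i,t}^{\pi^t}(s,a_i)|\le (1-\gamma)^{-1}$ and $1/d_{\rho,t}^t(s)\le M$, which bound $|g(a)|$ by an absolute constant times $(1-\gamma)^{-2}+\lambda A_i M$; the step-size restriction $\eta\le\tfrac{1}{15((1-\gamma)^{-2}+\lambda A_i M)}$ then makes $|\eta g(a)|$ a small constant. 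Given this bound, $\phi(x)=e^{\eta x}$ satisfies $\phi'(x)=\eta e^{\eta x}\ge\eta e^{-c_0}$ on the support of $g$; the identity $\mathrm{Var}(\phi(g))=\tfrac12\mathbb{E}_{g,g'}[(\phi(g)-\phi(g'))^2]$ for two independent copies $g,g'$, combined with $|\phi(x)-\phi(y)|\ge \eta e^{-c_0}|x-y|$, then gives $\mathrm{Var}(e^{\eta g})\ge \eta^2 e^{-2c_0}\mathrm{Var}(g)$, while $Z=\mathbb{E}[e^{\eta g}]\le e^{c_0}$ gives $Z^{-2}\mathrm{Var}(e^{\eta g})\ge \eta^2 e^{-4c_0}\mathrm{Var}(g)\ge\tfrac{\eta^2}{9}\mathrm{Var}(g)$, since $c_0\le\tfrac12\ln 3$ forces $e^{-4c_0}\ge\tfrac19$. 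Summing over $s$ weighted by $\mu_{\rho,t}^{\pi^t}$ and over $i$ completes the argument.

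There is no deep step here; the only place needing care is the a priori bound, where the constant $\tfrac{1}{15}$ in the step size (inherited from Lemma~\ref{lemma:inpg_min_pol}) and the initialization $\theta_i^0=0$ must be just strong enough to force $|\eta f_i^t(s,a_i)|\le\tfrac12\ln 3$ — which is exactly what pins the final constant to $\tfrac19$. Since the paper lifts this result verbatim from \citet{DBLP:conf/nips/ZhangMDS022}, in practice one only checks that our $f_i^t$, $\Tilde{Z}_i^t$, the update~\eqref{eq:inpg-reg}, the step-size bound, and the initialization match theirs for the fixed game $\mathcal{G}(\pi^t)$, and then invokes their Lemma~27.
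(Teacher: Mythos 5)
The paper does not actually prove this lemma: it is one of the four results borrowed verbatim from \citet{DBLP:conf/nips/ZhangMDS022}, with only the remark that the original proof goes through once the game is frozen at $\mathcal{G}(\pi^t)$. So your opening observation (no performative ingredient enters) and your closing remark (match $f_i^t$, $\Tilde{Z}_i^t$, the update \eqref{eq:inpg-reg}, the step size and the initialization, then invoke their Lemma~27) coincide exactly with what the paper does.

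The self-contained sketch you insert in between, however, has a concrete gap at its only quantitative step, the a priori bound $|\eta f_i^t(s,a_i)|\le \tfrac12\ln 3$. From Lemma~\ref{lemma:inpg_min_pol}, $|\Bar{A}_{i,t}^{\pi^t}|\le (1-\gamma)^{-1}$ and $1/d_{\rho,t}^{t}(s)\le M$ one only gets $\lambda/\bigl(d_{\rho,t}^{t}(s)\,\pi_i^t(a_i\mid s)\bigr)\le 4M\bigl(\lambda A_i M+(1-\gamma)^{-2}\bigr)$: the log-barrier term carries an extra factor of $M$ beyond the quantity $(1-\gamma)^{-2}+\lambda A_i M$ that the step size $\eta\le \tfrac{1}{15((1-\gamma)^{-2}+\lambda A_i M)}$ cancels, so all you can conclude is $\eta|f_i^t|\le (2+4M)/15$, and since $M\ge S$ this is not a small absolute constant — it already exceeds $\tfrac12\ln 3$ for $S\ge 2$ and grows with $M$. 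This is not a cosmetic issue: the inequality $\mathrm{Var}(e^{X})\ge \tfrac19\,(\mathbb{E}e^{X})^{2}\,\mathrm{Var}(X)$ for mean-zero $X$ is false without a boundedness condition (it already fails for a symmetric two-point $X$ of magnitude about $3$), so the constant $9$ genuinely hinges on the well-conditioning you asserted. Closing the argument requires a finer mechanism than a uniform bound on $\eta f_i^t$ — for instance, in \citet{DBLP:conf/nips/ZhangMDS022} the multiplicative change of the policy is driven by the bounded softmax-parameter increments, whose exponent is $\eta\, d_{\rho,t}^{t}(s)\,\pi_i^t(a_i\mid s)\, f_i^t(s,a_i)$, in which the problematic $1/(d\pi)$ factor cancels and the exponent really is a small constant — or else one simply relies on their Lemma~27 as a black box, which is what the paper does and what your last paragraph falls back on.
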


\begin{lemma}[Lemma~28 by \cite{DBLP:conf/nips/ZhangMDS022}] \label{lemma:inpg_reg_policy_diff}
    For MPG$(\pi^t)$, any initial distribution $\rho$, $\eta \le \frac{1}{15\left(\frac{1}{(1 - \gamma)^2} + \lambda A_i M \right)}$, and $\theta_i^0 = 0$, the following inequality is satisfied by the regularized INPG dynamics:
    \begin{align*}
        \max_i \max_{\pi_i'} V_{i, t}^{\pi_i', \pi_{-i}^t} - V_{i, t}^{\pi^t}
        \le 
        \frac{\sum_i \sum_{s, a_i} \mu_{\rho, t}^{\pi^t}(s) \pi_i^t(a_i | s) f_i^t(s, a_i)^2}{4 \lambda} + \lambda M A_{\max} .
    \end{align*}
\end{lemma}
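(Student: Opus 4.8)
The plan is to treat this as a statement about the \emph{fixed} Markov potential game $\mathcal{G}(\pi^t)$. Every value function in the inequality, namely $V_{i,t}^{\pi_i',\pi_{-i}^t}$ and $V_{i,t}^{\pi^t}$, is evaluated under the \emph{same} induced game $\mathcal{G}(\pi^t)$, so no performative error enters and there is no appeal to Lemma~\ref{lemma: Bound noise terms}. Consequently the proof reproduces \citet[Lemma~28]{DBLP:conf/nips/ZhangMDS022} with only the cosmetic identification $V_{i,\pi^t}^{\cdot}=V_{i,t}^{\cdot}$ and $\Phi_{\pi^t}^{\cdot}=\Phi_t^{\cdot}$; the whole content is a gradient-domination bound for the regularized objective in a frozen game.

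First I would fix an agent $i$ and a deviation $\pi_i'$ and apply the multi-agent performance difference Lemma~\ref{Lemma: Performative Difference Lemma} inside $\mathcal{G}(\pi^t)$, then use $\sum_{a_i}\pi_i^t(a_i\mid s)\Bar{Q}_{i,t}^{\pi^t}(s,a_i)=V_{i,t}^{\pi^t}(s)$ to pass from $Q$-values to advantages, yielding
\[
V_{i,t}^{\pi_i',\pi_{-i}^t}(\rho)-V_{i,t}^{\pi^t}(\rho)=\frac{1}{1-\gamma}\sum_{s,a_i} d_{\rho,t}^{\pi_i',\pi_{-i}^t}(s)\,\pi_i'(a_i\mid s)\,\Bar{A}_{i,t}^{\pi^t}(s,a_i).
\]
Next I would substitute the preconditioned gradient $f_i^t$ from \eqref{eq:inpg-reg}, i.e.\ $\tfrac{1}{1-\gamma}\Bar{A}_{i,t}^{\pi^t}(s,a_i)=f_i^t(s,a_i)-\tfrac{\lambda}{d_{\rho,t}^t(s)\pi_i^t(a_i\mid s)}+\tfrac{\lambda|\mathcal{A}_i|}{d_{\rho,t}^t(s)}$, splitting the gap into a gradient inner-product term and a log-barrier bias term.

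For the bias term I would drop the nonpositive contribution $-\tfrac{\lambda}{d_{\rho,t}^t(s)\pi_i^t}$ and bound the remaining piece $\sum_s d_{\rho,t}^{\pi_i',\pi_{-i}^t}(s)\tfrac{\lambda|\mathcal{A}_i|}{d_{\rho,t}^t(s)}$ by $\lambda|\mathcal{A}_i| M\le\lambda M A_{\max}$, using $d_{\rho,t}^{\pi_i',\pi_{-i}^t}(s)\le1$, $1/d_{\rho,t}^t(s)\le M$ (Assumption~\ref{assumption:positive_visit}), and $\sum_s d_{\rho,t}^{\pi_i',\pi_{-i}^t}(s)=1$. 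For the gradient term $\sum_s d_{\rho,t}^{\pi_i',\pi_{-i}^t}(s)\sum_{a_i}\pi_i'(a_i\mid s)f_i^t(s,a_i)$ I would change measure to the on-policy weights $\mu_{\rho,t}^{\pi^t}(s)\pi_i^t(a_i\mid s)$ and apply a Cauchy--Schwarz/gradient-domination argument that extracts the squared gradient $\tfrac{1}{4\lambda}\sum_{s,a_i}\mu_{\rho,t}^{\pi^t}(s)\pi_i^t(a_i\mid s)f_i^t(s,a_i)^2$, where the residual change-of-measure factor is kept finite by the uniform policy lower bound $\pi_i^t(a_i\mid s)\ge\tfrac{\lambda}{4(\lambda A_i M+(1-\gamma)^{-2})}$ of Lemma~\ref{lemma:inpg_min_pol} together with the mismatch constant $M$. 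Finally, since each agent's squared-gradient quantity is nonnegative, replacing the single agent-$i$ sum by the sum over all agents only enlarges the right-hand side, which lets me pass from the fixed-$i$ bound to the $\max_i$ on the left.

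The hard part will be the gradient-term step: obtaining \emph{exactly} the coefficient $\tfrac{1}{4\lambda}$ while certifying that the leftover weighted residual is absorbed requires the on-policy measure $\mu_{\rho,t}^{\pi^t}(s)\pi_i^t(a_i\mid s)$ to stay uniformly bounded away from zero, which is precisely the role of the log-barrier lower bound (Lemma~\ref{lemma:inpg_min_pol}), the positivity Assumption~\ref{assumption:positive_visit}, and the constant $M$. All of this bookkeeping is inherited unchanged from \citet{DBLP:conf/nips/ZhangMDS022} because the game is frozen at $\pi^t$; the only thing I must verify is that the notational translation preserves every constant, after which the claimed inequality follows.
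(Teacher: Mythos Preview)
Your proposal is correct and matches the paper's approach exactly: the paper explicitly states that this lemma is borrowed from \citet{DBLP:conf/nips/ZhangMDS022} and that its proof ``follows from the same lines as in the original lemmas for the fixed MPG at time step $t$,'' which is precisely your observation that the game $\mathcal{G}(\pi^t)$ is frozen and no performative correction enters. Your sketch of the underlying argument (performance difference, substituting $f_i^t$, bounding the log-barrier bias by $\lambda M A_{\max}$, and the $\tfrac{1}{4\lambda}$ gradient-domination step using Lemma~\ref{lemma:inpg_min_pol}) is a faithful outline of Zhang et al.'s Lemma~28, which the paper simply cites without reproducing.
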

The following lemma provides an upper bound on the KL divergence of the consecutive policies generated by the regularized INPG dynamics in terms of the regularization parameter. By Pinsker's inequality, this also allows us to bound the performative effects.
\begin{lemma} \label{lemma:inpg_reg_bounds}
    For MPG$(\pi^t)$, any initial distribution $\rho$, $\eta \le \frac{1}{15\left(\frac{1}{(1 - \gamma)^2} + \lambda A_i M \right)}$, and $\theta_i^0 = 0$, the following inquality is satisfied by the regularized INPG dynamics:
    \begin{equation*}
        \mathrm{KL}(\pi^{t + 1} || \pi^t)
        \le \eta n S \left( \frac{1}{(1 - \gamma)^2} + 4 \lambda M \left( \lambda A_{\max} M + \frac{1}{(1 - \gamma)^2} \right) \right).
    \end{equation*}
\end{lemma}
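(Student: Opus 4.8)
The plan is to bound $\mathrm{KL}(\pi^{t+1}\|\pi^t)$ by relating the KL divergence of the consecutive policies to the quantity $\sum_i\sum_{s,a_i}\mu_{\rho,t}^{\pi^t}(s)\pi_i^t(a_i|s)\Delta_i^t(s,a_i)^2$, which already appears in Lemma~\ref{lemma:policy_diff_inpg_reg} and Lemma~\ref{lemma:inpg_reg_delta_f}. First I would use the elementary inequality $\log(1+x)\le x$ together with $\pi_i^{t+1}(a_i|s)=\pi_i^t(a_i|s)(1+\Delta_i^t(s,a_i))$, so that
\[
\mathrm{KL}\bigl(\pi_i^{t+1}(\cdot|s)\,\|\,\pi_i^t(\cdot|s)\bigr)
=\sum_{a_i}\pi_i^{t+1}(a_i|s)\log\frac{\pi_i^{t+1}(a_i|s)}{\pi_i^t(a_i|s)}
\le \sum_{a_i}\pi_i^{t}(a_i|s)\,\Delta_i^t(s,a_i)^2,
\]
using the standard bound $\mathrm{KL}(p\|q)\le\chi^2(p\|q)=\sum_a q(a)(p(a)/q(a)-1)^2$. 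Summing this over states $s$ and agents $i$ and using Assumption~\ref{assumption:positive_visit} — more precisely that $\mu_{\rho,t}^{\pi^t}(s)\ge 1/M$ — gives
\[
\mathrm{KL}(\pi^{t+1}\|\pi^t)=\sum_i\sum_s \mathrm{KL}\bigl(\pi_i^{t+1}(\cdot|s)\,\|\,\pi_i^t(\cdot|s)\bigr)
\le M\sum_i\sum_{s,a_i}\mu_{\rho,t}^{\pi^t}(s)\,\pi_i^t(a_i|s)\,\Delta_i^t(s,a_i)^2,
\]
so that it remains to bound the right-hand side.

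Next I would control $\sum_i\sum_{s,a_i}\mu_{\rho,t}^{\pi^t}(s)\pi_i^t(a_i|s)\Delta_i^t(s,a_i)^2$. Since $\Delta_i^t(s,a_i)$ arises from the multiplicative update \eqref{eq:inpg-reg}, we have $\Delta_i^t(s,a_i)=\frac{\eta f_i^t(s,a_i)}{\tilde Z_i^t(s)}-1+\ldots$, but it is cleaner to bound it directly: the update is a mirror-descent-type step, so $|\Delta_i^t(s,a_i)|$ is controlled by $\eta\,|f_i^t(s,a_i)|$ up to constants (this is essentially the content of the proof of Lemma~\ref{lemma:inpg_reg_delta_f}, which already lower-bounds $\sum\mu\pi\Delta^2$ by $\tfrac{\eta^2}{9}\sum\mu\pi f^2$; here we need the matching upper bound $\sum\mu\pi\Delta^2\le c\,\eta^2\sum\mu\pi f^2$, which follows from the same small-step analysis under the stated $\eta$ bound). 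Then I would plug in the definition of $f_i^t(s,a_i)$ and bound each of its three terms: $\lVert\tfrac{1}{1-\gamma}\bar A_{i,t}^{\pi^t}(s,\cdot)\rVert_\infty\le\tfrac{1}{1-\gamma}\cdot\tfrac{1}{1-\gamma}=\tfrac{1}{(1-\gamma)^2}$ (advantage bounded by $\tfrac{1}{1-\gamma}$), and the two regularization terms $\tfrac{\lambda}{d_{\rho,t}^t(s)\pi_i^t(a_i|s)}$ and $\tfrac{\lambda|\mathcal A_i|}{d_{\rho,t}^t(s)}$ bounded using $d_{\rho,t}^t(s)\ge 1/M$ and the lower bound on $\pi_i^t(a_i|s)$ from Lemma~\ref{lemma:inpg_min_pol}, namely $\pi_i^t(a_i|s)\ge\tfrac{\lambda}{4(\lambda A_i M+(1-\gamma)^{-2})}$. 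Combining these gives $|f_i^t(s,a_i)|\le\tfrac{1}{(1-\gamma)^2}+4M(\lambda A_{\max}M+(1-\gamma)^{-2})$ after arithmetic; squaring and using $\sum_i\sum_s\mu_{\rho,t}^{\pi^t}(s)=n$, $\sum_{a_i}\pi_i^t(a_i|s)=1$ contributes the factor $nS$ (one $S$ because the per-state $f^2$ bound is uniform and we sum $\mu$ over $S$ states). Absorbing the $\eta^2$ against one factor of $\eta$ from the $M\eta^2$ product and the constant into the final expression yields
\[
\mathrm{KL}(\pi^{t+1}\|\pi^t)\le \eta\,nS\left(\frac{1}{(1-\gamma)^2}+4\lambda M\Bigl(\lambda A_{\max}M+\frac{1}{(1-\gamma)^2}\Bigr)\right),
\]
as claimed; here I am tacitly using that the constant from the $\Delta^2\lesssim\eta^2 f^2$ step is absorbed, and one should double-check constants against \citet{DBLP:conf/nips/ZhangMDS022}.

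The main obstacle I anticipate is not the KL$\le\chi^2$ step (routine) but getting the constants to line up cleanly in the passage from $\sum\mu\pi\Delta^2$ to $\eta\cdot nS\cdot(\text{the bracketed term})$: one must verify that the upper bound on $\sum\mu\pi\Delta^2$ in terms of $\eta^2\sum\mu\pi f^2$ holds with a constant that, combined with the $M$ from Assumption~\ref{assumption:positive_visit}, collapses to exactly one factor of $\eta$ and the stated bracket. This requires re-deriving (or carefully citing from the proof of Lemma~\ref{lemma:inpg_reg_delta_f} and Lemma~\ref{lemma:policy_diff_inpg_reg}) the two-sided relation between $\Delta_i^t$ and $\eta f_i^t$ under the step-size restriction $\eta\le\tfrac{1}{15(\frac{1}{(1-\gamma)^2}+\lambda A_iM)}$, and checking that the $\pi_i^t(a_i|s)$ lower bound from Lemma~\ref{lemma:inpg_min_pol} is strong enough to keep the $\tfrac{1}{\pi_i^t(a_i|s)}$ term in $f_i^t$ under control without introducing extra $\lambda$-dependence beyond what appears in the bracket. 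Everything else — Pinsker to convert back to a policy-distance/performativity bound when this lemma is used — is downstream and does not enter the proof of the lemma itself.
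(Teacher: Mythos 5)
Your opening reduction via $\mathrm{KL}(p\,\|\,q)\le\chi^2(p\,\|\,q)$ is where the argument already breaks, and the middle step cannot be repaired under the stated step size. The likelihood ratio is $\pi_i^{t+1}(a_i|s)/\pi_i^t(a_i|s)=e^{\eta f_i^t(s,a_i)}/\tilde{Z}_i^t(s)$, so $\Delta_i^t(s,a_i)$ is \emph{exponential} in $\eta f_i^t$, while the log-ratio is linear in it. The quantity $\eta\,|f_i^t|$ is not small here: the term $\lambda/(d_{\rho,t}^t(s)\pi_i^t(a_i|s))$ is only controlled through Lemma~\ref{lemma:inpg_min_pol} and $1/d_{\rho,t}^t(s)\le M$, giving $|f_i^t|\lesssim \frac{1}{(1-\gamma)^2}+4M\bigl(\lambda A_{\max}M+\frac{1}{(1-\gamma)^2}\bigr)$, so with $\eta\le\frac{1}{15((1-\gamma)^{-2}+\lambda A_i M)}$ the product $\eta|f_i^t|$ can be of order $M$. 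Hence the two-sided relation $|\Delta_i^t|\lesssim\eta|f_i^t|$ you want (the converse of Lemma~\ref{lemma:inpg_reg_delta_f}, which only gives the lower bound) is false in general, and $\chi^2$ can be exponentially larger than $\mathrm{KL}$. Moreover, even granting an idealized $\sum\mu\pi\Delta^2\lesssim\eta^2\sum\mu\pi f^2$, your route produces a bound that is \emph{quadratic} in the pointwise bound on $f_i^t$ (times an extra $M$ if you insert $1\le M\mu_{\rho,t}^{\pi^t}(s)$), i.e.\ of order $\eta^2\, n\, M\,\bigl(\frac{1}{(1-\gamma)^2}+4M(\cdots)\bigr)^2$, which the single available factor $\eta\le\frac{1}{15((1-\gamma)^{-2}+\lambda A_iM)}$ cannot collapse to the claimed bound $\eta nS\bigl(\frac{1}{(1-\gamma)^2}+4\lambda M(\lambda A_{\max}M+\frac{1}{(1-\gamma)^2})\bigr)$, which is \emph{linear} in the bracket; for small $\lambda$ the discrepancy is a factor of order $M$ or $M^2$. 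There is also a bookkeeping slip: $\sum_s\mu_{\rho,t}^{\pi^t}(s)=1$, so ``summing $\mu$ over $S$ states'' does not produce the factor $S$; in a correct accounting $S$ arises from summing a uniform per-$(s,a_i)$ bound over all states.

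The paper's proof never squares anything. It writes $\mathrm{KL}(\pi^{t+1}\,\|\,\pi^t)=\sum_i\sum_s\sum_{a_i}\pi_i^{t+1}(a_i|s)\log\frac{\pi_i^{t+1}(a_i|s)}{\pi_i^t(a_i|s)}$ and bounds the log-ratio pointwise directly from the update: $\log\frac{\pi_i^{t+1}(a_i|s)}{\pi_i^t(a_i|s)}=\frac{\eta}{1-\gamma}\bar{A}_{i,t}^{\pi^t}(s,a_i)+\frac{\eta\lambda}{d_{\rho,t}^t(s)\pi_i^t(a_i|s)}-\frac{\eta\lambda A_i}{d_{\rho,t}^t(s)}-\log \tilde{Z}_i^t(s)$, then drops the last two terms (the second is negative, and $\tilde{Z}_i^t(s)\ge 1$ by Jensen since $\sum_{a_i}\pi_i^t f_i^t=0$), and bounds the remaining two terms once via $|\bar{A}|\le\frac{1}{1-\gamma}$, $1/d_{\rho,t}^t(s)\le M$ and Lemma~\ref{lemma:inpg_min_pol}. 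Summing $\pi^{t+1}$ over actions and then over $i$ and $s$ yields the factor $nS$ and a bound linear in $\eta$ and in the bracketed constant — exactly the structural feature your $\chi^2$ route cannot reproduce. If you want to keep your outline, replace the $\chi^2$ step by this direct pointwise bound on the log-ratio; that is essentially the entire proof.
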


\begin{proof}
    By the regularized INPG dynamics and Lemma~\ref{lemma:inpg_min_pol},
    \begin{align*}
        \log \left(\frac{\pi_i^{t + 1}(a_i | s)}{\pi_i^{t}(a_i | s)} \right)
        &= \frac{\eta}{1 - \gamma} \Bar{A}_{i, t}^{\pi^t}(s, a_i) + \frac{\eta \lambda}{\mu_{\rho, t}^{\pi^t}(s) \pi_i^{t} (a_i | s)} - \frac{\eta \lambda A_i}{\mu_{\rho, t}^{\pi^t}} - \log Z_i^{t}(s) \\
        &\le \frac{\eta}{1 - \gamma} \Bar{A}_{i, t}^{\pi^t}(s, a_i) + \frac{\eta \lambda}{\mu_{\rho, t}^{\pi^t}(s) \pi_i^{t} (a_i | s)} \\
        &\le \frac{\eta}{(1 - \gamma)^2} + 4 \eta \lambda M \left( \lambda A_{\max} M + \frac{1}{(1 - \gamma)^2} \right).
    \end{align*}
    Thus,
    \begin{align*}
        \mathrm{KL}(\pi^{t + 1} || \pi^t)
        &= \sum_i \sum_s \sum_{a_i} \pi_i^{t + 1}(a_i | s) \log \left(\frac{\pi_i^{t + 1}(a_i | s)}{\pi_i^{t}(a_i | s)} \right) \\
        &\le \sum_i \sum_s \sum_{a_i} \pi_i^{t + 1}(a_i | s) \left( \frac{\eta}{(1 - \gamma)^2} + 4 \eta \lambda M \left( \lambda A_{\max} M + \frac{1}{(1 - \gamma)^2} \right)\right) \\
        &= \eta n S \left( \frac{1}{(1 - \gamma)^2} + 4 \lambda M \left( \lambda A_{\max} M + \frac{1}{(1 - \gamma)^2} \right) \right). \qedhere
    \end{align*}
\end{proof}

We can now provide the proof of Theorem~\ref{thm:inpg-reg-conv}. We follow the proof by \citet[Theorem 7]{DBLP:conf/nips/ZhangMDS022} while taking the performative effects into account.
\begin{proof}[Proof of Theorem~\ref{thm:inpg-reg-conv}]
    By the definition of $\Tilde{\Phi}$, Lemma~\ref{lemma: Bound noise terms}, Lemma~\ref{lemma:policy_diff_inpg_reg} and Lemma~\ref{lemma:inpg_reg_delta_f},
    \begin{align*}
        \Tilde{V}_{i, t + 1}^{t+1}(\rho) - \Tilde{V}_{i, t}^{t}(\rho)
        &\ge
        \Tilde{\Phi}_{t}^{t+1}(\rho) - \Tilde{\Phi}_t^{t}(\rho) - \left\lvert \Tilde{V}_{i, t+1}^{t+1}(\rho) - \Tilde{V}_{i, t}^{t+1}(\rho) \right\rvert \\
        &\ge \frac{1}{4 \eta} \sum_{i} \sum_{s, a_i} \mu_{\rho, t}^{\pi^t}(s) \pi_i^t(a_i | s) \Delta_i^t(s, a_i)^2 - \frac{1}{1 - \gamma}\left( \perfc_r + \frac{\gamma \perfc_p \sqrt{S}}{1 - \gamma} \right) \left\lVert \pi^{t+1} - \pi^t \right\rVert_2 \\
        &\ge \frac{\eta}{36} \sum_{i} \sum_{s, a_i} \mu_{\rho, t}^{\pi^t}(s) \pi_i^t(a_i | s) f_i^t(s, a_i)^2 - \frac{1}{1 - \gamma}\left( \perfc_r + \frac{\gamma \perfc_p \sqrt{S}}{1 - \gamma} \right) \left\lVert \pi^{t+1} - \pi^t \right\rVert_2.
    \end{align*}
    Then,
    \begin{align*}
        \frac{1}{T} \sum_{t = 0}^{T - 1} \mu_{\rho, t}^{\pi^t}(s) \pi_i^t(a_i | s) f_i^t(s, a_i)^2 
        &\le \frac{1}{T} \frac{36 \left(\Tilde{V}_{T}^{T}(\rho) - \Tilde{V}_0^{0}(\rho) \right)}{\eta} + \frac{1}{1 - \gamma}\left( \perfc_r + \frac{\gamma \perfc_p \sqrt{S}}{1 - \gamma} \right) \left\lVert \pi^{t+1} - \pi^t \right\rVert_2 \\
        &\le \frac{1}{T} \frac{36 \left(\Tilde{V}_{T}^{T}(\rho) - \Tilde{V}_0^{0}(\rho) \right)}{\eta} + \frac{1}{1 - \gamma}\left( \perfc_r + \frac{\gamma \perfc_p \sqrt{S}}{1 - \gamma} \right) \left\lVert \pi^{t+1} - \pi^t \right\rVert_1 \\
        &\le \frac{1}{T} \frac{36 \left(\Tilde{V}_{T}^{T}(\rho) - \Tilde{V}_0^{0}(\rho) \right)}{\eta} + \frac{1}{1 - \gamma}\left( \perfc_r + \frac{\gamma \perfc_p \sqrt{S}}{1 - \gamma} \right) \mathrm{KL}\left(\pi^{t+1} || \pi^t \right) \\
        &\le \frac{1}{T} \frac{36\sqrt{2} \left(\Tilde{V}_{T}^{T}(\rho) - \Tilde{V}_0^{0}(\rho) \right)}{\eta} \\
        &\quad + \frac{\eta n S}{1 - \gamma}\left( \perfc_r + \frac{\gamma \perfc_p \sqrt{S}}{1 - \gamma} \right) \left( \frac{1}{(1 - \gamma)^2} + 4 \lambda M \left( \lambda A_i M + \frac{1}{(1 - \gamma)^2} \right) \right) ,
    \end{align*}
    where we use Pinsker's inequality for the third inequality, and the last step follows from Lemma~\ref{lemma:inpg_reg_bounds}.
    Thus, by Lemma~\ref{lemma:inpg_reg_policy_diff}:
    \begin{align*}
        \mathrm{Perform\textit{-}Regret}(T) 
        &\le \frac{1}{T} \sum_{t = 0}^{T - 1} \mu_{\rho, t}^{\pi^t}(s) \pi_i^t(a_i | s) f_i^t(s, a_i)^2 + \lambda M A_{\max} \\
        &\le \frac{9\sqrt{2}\left(\Tilde{V}_{T}^{T}(\rho) - \Tilde{V}_0^{0}(\rho) \right)}{\eta \lambda T} + \lambda M A_{\max}
        \\
        &\quad + \frac{\eta n S}{1 - \gamma}\left( \perfc_r + \frac{\gamma \perfc_p \sqrt{S}}{1 - \gamma} \right) \left( \frac{1}{(1 - \gamma)^2} + 4 \lambda M \left( \lambda A_i M + \frac{1}{1 - \gamma} \right) \right) \\
        &\le \frac{9\sqrt{2}}{\eta \lambda (1 - \gamma) T} + \lambda M A_{\max}
        \\
        &\quad + \frac{\eta n S}{1 - \gamma}\left( \perfc_r + \frac{\gamma \perfc_p \sqrt{S}}{1 - \gamma} \right) \left( \frac{1}{(1 - \gamma)^2} + 4 \lambda M \left( \lambda A_{\max} M + \frac{1}{1 - \gamma} \right) \right),
    \end{align*}
    where $A_i \coloneqq |\mathcal{A}_i|$ and $A_{\max} \coloneqq \max_{i \in \mathcal{I}} A_i$. Moreover, for any $\epsilon > 0$, by setting $\lambda = \frac{\epsilon}{3 M A_{\max}}$,
    \begin{align*}
        \eta = \min \Biggl\{ &\frac{1}{15\left(\frac{1}{(1 - \gamma)^2} + \lambda A_{\max} M \right)}, \frac{1}{4 \left( 4 \lambda A_{\max} M^2 + \frac{4M}{(1 - \gamma)^2} + \frac{3nM}{(1 - \gamma)^3}\right)},  \\
        &\frac{1 - \gamma}{3 n S}\left( \perfc_r + \frac{\gamma \perfc_p \sqrt{S}}{1 - \gamma} \right)^{-1} \left( \frac{1}{(1 - \gamma)^2} + 4 \lambda M \left( \lambda A_{\max} M + \frac{1}{(1 - \gamma)^2} \right) \right)^{-1} \Biggr\} \\
        = \min \Biggl\{ &\left(\frac{15}{(1 - \gamma)^2} + 5\epsilon \right)^{-1}, \left( \frac{16\epsilon M}{3} + \frac{16M}{(1 - \gamma)^2} + \frac{12nM}{(1 - \gamma)^3}\right)^{-1},  \\
        &\frac{1 - \gamma}{3 n S}\left( \perfc_r + \frac{\gamma \perfc_p \sqrt{S}}{1 - \gamma} \right)^{-1} \left( \frac{1}{(1 - \gamma)^2} + \frac{4 \epsilon^2}{9 A_{\max}} + \frac{4 \epsilon}{3 A_{\max} (1 - \gamma)^2} \right)^{-1} \Biggr\},
    \end{align*}
    and
    \begin{align*}
        T 
        \ge \mathcal{O} \left( \frac{\Tilde{V}_{T}^{T}(\rho) - \Tilde{V}_0^{0}(\rho)}{\eta \lambda \epsilon} \right)
        \ge \mathcal{O} \left(\frac{n A_{\max} M^2}{\epsilon^2 (1 - \gamma)^4} \max\left\{1, S \left( \perfc_r + \frac{\gamma \perfc_p \sqrt{S}}{1 - \gamma} \right) \right\} \right) ,
    \end{align*}
    we obtain
    \begin{equation*}
        \text{Perform-Regret}(T) \le \frac{\epsilon}{3} + \frac{\epsilon}{3} + \frac{\epsilon}{3} \le \epsilon . \qedhere
    \end{equation*}
\end{proof}

\subsection{Last-Iterate Convergence in MPGs with Performative Effects and Agent Independent Transitions}


Recall the sensitivity assumption.
\begin{assumption}[Sensitivity]\label{Assump: Sensitivity (appendix)}
For any two policies $\pi$ and $\pi'$, we have that for all $i \in \mathcal{N}$,
    \begin{align*}
        \lVert r_{i,\pi}(\cdot, \cdot) - r_{i,\pi'}(\cdot, \cdot) \rVert_2 \leq \zeta_r \cdot \lVert \mu-\mu' \rVert_2,
        \\ 
        \lVert P_\pi(\cdot \mid \cdot, \cdot) - P_{\pi'}(\cdot \mid \cdot, \cdot ) \rVert_2 \leq \zeta_p \cdot \lVert \mu-\mu' \rVert_2.  
    \end{align*}
\end{assumption}

We provide a stronger variant of Theorem~\ref{Thm: Subgame Finite-LIC}, by given a tighter choice for the regularization parameter $\lambda$ which only provides a sublinear dependence on the number of agents.

\begin{theorem}
Let $\alpha_{\min} = \min_{s, \pi} \alpha_{\pi}(s)$, let 
\begin{align*}
\lambda > \frac{\sqrt{A_{\max}}}{A_{\min}}
\cdot O \left( \zeta_p \cdot \frac{S^{2} \sqrt{n} \gamma A^{9/4}_{\max}}{(1-\gamma)^6} +
 \zeta_r \cdot \frac{S^{3/2} \gamma \sqrt{n} A_{\max}^{9/4}}{(1-\gamma)^4} + \frac{S^{3/2} \gamma A^{5/4} \beta}{(1-\gamma)^3 \alpha_{\min}} \right) 
 \end{align*}
 be the fixed point of the objective in Eq.~\eqref{Eq: Repreated Optimization}. It holds that, $\lVert \mu^T - \mu^\lambda \rVert_2 \leq \delta$ for $T \geq 2(1-\mu)^{-1} \ln(2/\delta(1-\gamma))$ and the performative gap is bounded:
 \begin{align*}
     \max_{i \in \mathcal{N}} \max_{\pi'_i} \left( V_{i,\pi^T}^{\pi'_i, \pi^{(T)}_{-i}}(\rho) - V_{i,\pi^T}^{\pi^{(T)}}(\rho) \right) 
         \leq
    \frac{\kappa_\rho}{\min_s \alpha_\lambda(s)(1-\gamma)} \cdot \left( \sqrt{A_{\max}} \cdot \delta + \frac{\lambda}{2(1-\gamma)}\right) .
    \end{align*}
\end{theorem}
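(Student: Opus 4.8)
The plan is to follow the proof strategy of Theorem~\ref{Thm: Subgame Finite-LIC} (which itself adapts the argument of \citet{PerformativeReinforcementLearning}), but with a sharper accounting of how the problem parameters enter the Lipschitz constant of the repeated-optimization map so that the dependence on the number of agents in the $\zeta_p,\zeta_r$ terms drops from $n$ to $\sqrt n$. I would proceed in three stages: \emph{(i)} reformulate the per-round problem \eqref{Eq: Repreated Optimization} through its Lagrangian dual, where the feasible set is \emph{fixed} across rounds and the objective is $\lambda$-strongly convex; \emph{(ii)} show that the induced update map on occupancy measures is a contraction with modulus $\mu<1$ once $\lambda$ exceeds the stated threshold, which yields $\lVert \mu^T - \mu^\lambda \rVert_2 \le \delta$ for $T$ as claimed; and \emph{(iii)} bound the performative gap at $\pi^T$ by combining the MPG gradient-domination inequality with the first-order optimality of $\mu^\lambda$ and the contraction bound from stage (ii).

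For stage (i), the primal program is $\max_{\mu\in\mathcal D}\,\langle \nabla_\pi\Phi_t^t(\rho),\mu\rangle - \tfrac{\lambda}{2}\lVert\mu\rVert_2^2$, with $\mathcal D$ cut out by the Bellman-flow equalities $\sum_a\mu(s,a)=\rho(s)+\gamma\sum_{s',a}\mu(s',a)P_{\pi^t}(s'\mid s,a)$ and $\mu\ge 0$. Under Assumption~\ref{Ass: Agent independent transitions} the outflow $\sum_a\mu(s,a)$ equals the policy-independent visitation weight $\alpha_{\pi^t}(s)$, which is precisely what lets the agentwise problems \eqref{Eq: Independent repeated optimization} reconstruct the same policy as the joint problem and what makes the analysis reduce to occupancy measures. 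Passing to the dual in the multipliers of the flow constraints, the dual variable ranges over a fixed domain, the dual objective is strongly concave with modulus of order $\lambda$, and strong duality holds (Slater is satisfied since $\rho$ is assumed to have the positivity needed for $\mathcal D$ to have nonempty relative interior), giving a unique primal–dual pair.

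For stage (ii), the round index $t$ enters the dual program only through the constraint data $P_{\pi^t}$, which is $\zeta_p$-Lipschitz in $\mu^t$ by Assumption~\ref{Assump: Sensitivity Occupancy measures}.ii, and through the linear term $\nabla_\pi\Phi_t^t(\rho)=\nabla_\pi V_{i,\pi^t}^{\pi^t}(\rho)$. The technical heart is showing $\nabla_\pi\Phi_t^t$ is Lipschitz in $\mu^t$: I would split $\lVert\nabla_\pi\Phi_{\pi}^{\pi}(\rho)-\nabla_\pi\Phi_{\pi'}^{\pi'}(\rho)\rVert \le \lVert\nabla_\pi\Phi_{\pi}^{\pi}(\rho)-\nabla_\pi\Phi_{\pi'}^{\pi}(\rho)\rVert + \lVert\nabla_\pi\Phi_{\pi'}^{\pi}(\rho)-\nabla_\pi\Phi_{\pi'}^{\pi'}(\rho)\rVert$, bound the second term by the $\beta$-smoothness of Assumption~\ref{Assump: Sensitivity Occupancy measures}.iii together with the occupancy-to-policy estimate $\lVert\pi-\pi'\rVert_2\le\tfrac{1}{\min_s\alpha(s)}\lVert\mu-\mu'\rVert_2$ (a consequence of \eqref{eq.: occupany measure to policy} and the policy-independence of $\alpha$), and bound the first term — the change of the gradient as the \emph{environment} moves — by differentiating the Bellman recursion for $V$ with respect to the perturbed reward and transition kernels and propagating Assumptions~\ref{Assump: Sensitivity Occupancy measures}.i--ii, which produces the horizon factors $\gamma/(1-\gamma)^k$, the $S$- and $A_{\max}$-powers in the statement, and only a $\sqrt n$ factor because the per-agent perturbations aggregate in the $\ell_2$-norm rather than the $\ell_\infty$-norm. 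Since the solution map of a $\lambda$-strongly convex program is $1/\lambda$-Lipschitz in the linear term and (by the sensitivity-analysis argument of \citet{PerformativeReinforcementLearning}) in the constraint data, the composite map $\mu^t\mapsto\mu^{t+1}$ is $L/\lambda$-Lipschitz with $L$ collecting the $\zeta_p,\zeta_r,\beta$ contributions above; the stated lower bound on $\lambda$ forces $L/\lambda=:\mu<1$, and iterating from the crude bound $\lVert\mu^0-\mu^\lambda\rVert_2\le\tfrac{c}{1-\gamma}$ gives $\lVert\mu^T-\mu^\lambda\rVert_2\le \mu^T\tfrac{c}{1-\gamma}\le\delta$ whenever $T\ge 2(1-\mu)^{-1}\ln(2/\delta(1-\gamma))$.

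For stage (iii), by \eqref{Def: Markov Potential Game} the performative gap equals $\max_i\max_{\pi'_i}\big(\Phi_{\pi^T}^{\pi'_i,\pi^T_{-i}}(\rho)-\Phi_{\pi^T}^{\pi^T}(\rho)\big)$, which the MPG gradient-domination lemma (Lemma~D.3 of \citet{GlobalConvergence_Leonardos}) bounds by $\tfrac{\kappa_\rho}{1-\gamma}\max_{\pi'_i}\langle\pi'_i-\pi^T_i,\nabla_{\pi_i}\Phi_{\pi^T}^{\pi^T}(\rho)\rangle$. Rewriting the inner product in occupancy-measure variables via $\pi_i=\mu_i/\alpha_\lambda$ (costing a factor $1/\min_s\alpha_\lambda(s)$), then using $\lVert\mu^T_i-\mu^\lambda_i\rVert_2\le\delta$ with the boundedness of the gradient to replace $\mu^T$ by $\mu^\lambda$ up to a $\sqrt{A_{\max}}\,\delta$ term, and finally invoking the first-order optimality of $\mu^\lambda_i$ for \eqref{Eq: Independent repeated optimization}, namely $\langle\mu'_i-\mu^\lambda_i,\nabla_{\pi_i}\Phi^{\pi^\lambda}_{\pi^\lambda}(\rho)\rangle\le\tfrac{\lambda}{2}(\lVert\mu'_i\rVert_2^2-\lVert\mu^\lambda_i\rVert_2^2)\le\tfrac{\lambda}{2(1-\gamma)}$ (since $\lVert\mu_i\rVert_2\le\lVert\mu_i\rVert_1\le\tfrac{1}{1-\gamma}$), yields exactly $\tfrac{\kappa_\rho}{\min_s\alpha_\lambda(s)(1-\gamma)}\big(\sqrt{A_{\max}}\,\delta+\tfrac{\lambda}{2(1-\gamma)}\big)$. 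I expect stage (ii) — and within it the $\sqrt n$-tight Lipschitz bound for $\nabla_\pi\Phi_t^t$ in $\mu^t$ together with verifying that the contraction modulus can be made genuinely scale-independent on a compact dual domain — to be the main obstacle; the remaining steps are bookkeeping with the horizon and dimension factors.
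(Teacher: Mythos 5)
Your proposal follows essentially the same route as the paper's proof: passing to the Lagrangian dual where the feasible region is fixed and the objective is strongly convex, establishing Lipschitz continuity of $\nabla_\pi\Phi_t^t$ in the occupancy measure via the $\beta$-smoothness term plus the sensitivity of the environment (the paper does this through the policy-gradient expression $\tfrac{1}{1-\gamma}d(s)\bar Q(s,a_i)$ rather than differentiating the Bellman recursion, but the bounds are the same), deducing a contraction for $\lambda$ above the stated threshold, and then bounding the performative gap by gradient domination, the $\sqrt{A_{\max}}\,\delta$ substitution of $\mu^T$ by $\mu^\lambda$, and the first-order optimality of $\mu^\lambda$ for the regularized objective. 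This matches the paper's argument in both structure and the key lemmas, so no gap to report.
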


\begin{proof}[Proof of Theorem \ref{Thm: Subgame Finite-LIC}]
Given policy $\pi^\lambda$ induced by the computed $\mu^\lambda$, given Lemma~\ref{Lemma: Contraction mapping}, suppose that we bound the performative-gap 
\begin{align*}
    \max_i \max_{\pi'_i} V_{i, \pi}^{\pi'_i, \pi_{-i}}(\rho) - V_{i, \pi}^\pi(\rho)
    &=
    \max_i \max_{\pi'_i} \Phi_\pi^{\pi'_i, \pi_{-i}}(\rho)
    - \Phi_\pi^\pi(\rho)
    \\
    &\leq
    \frac{\kappa_\rho}{1-\gamma} \max_i \max_{\pi'_i} \left\langle \pi'_i - \pi_i, \nabla_{\pi_i} \Phi_\pi^\pi(\rho) \right\rangle
    \\ 
    &\leq
    \frac{\kappa_\rho}{1-\gamma} \cdot \left\langle \pi' - \pi, \nabla_{\pi} \Phi_\pi^\pi(\rho) \right\rangle
    \\ 
    &\leq
    \kappa_\rho \cdot \left\langle \frac{\mu'}{d'_\lambda} - \frac{\mu}{d_\lambda}, \nabla_\pi \Phi_\pi^\pi(\rho) \right\rangle,
\end{align*}
where we apply the gradient domination property in the first inequality.
Further, we exploit the correspondence between policies and occupancy measures due to agent independent transitions in the third inequality.
Further, we have
\begin{align*}
    \kappa_\rho \cdot \left\langle \frac{\mu'}{d'_\lambda} - \frac{\mu}{d_\lambda}, \nabla_\pi \Phi_\pi^\pi(\rho) \right\rangle
    &\leq
    \frac{\kappa_\rho}{\min_s \alpha_\lambda(s)} \langle \mu' - \mu, \nabla_\pi \Phi_\pi^\pi(\rho) \rangle 
    \\ &\leq
    \frac{\kappa_\rho}{\min_s \alpha_\lambda(s)} \cdot \left[ \langle \mu' - \mu^\lambda, \nabla_\pi \Phi_\pi^\pi(\rho) \rangle + \langle \mu^\lambda - \mu, \nabla_\pi \Phi_\pi^\pi(\rho) \rangle \right]
    \\ &\leq
    \frac{\kappa_\rho}{\min_s \alpha_\lambda(s)} \cdot
    \left[ \frac{\sqrt{A_{\max}} \cdot \delta }{1-\gamma} + \frac{\lambda}{2} \left( \lVert \mu' \rVert_2^2 -  \lVert \mu \rVert_2^2 \right) \right]
    \\ &\leq
    \frac{\kappa_\rho}{\min_s \alpha_\lambda(s)(1-\gamma)} \cdot \left( \sqrt{A_{\max}} \cdot \delta + \frac{\lambda}{2(1-\gamma)}\right). \qedhere
\end{align*}
\end{proof}

First, we show the convergence.
\begin{lemma}\label{Lemma: Contraction mapping}
    Repeatedly optimizing Eq.~\eqref{Eq: Repreated Optimization} converges to a fixed point $\mu^\lambda$, more precisely, given that \begin{align*}
\lambda > \frac{\sqrt{A_{\max}}}{A_{\min}}
\cdot O \left( \zeta_p \cdot \frac{S^{2} \sqrt{n} \gamma A^{9/4}_{\max}}{(1-\gamma)^6} +
 \zeta_r \cdot \frac{S^{3/2} \gamma \sqrt{n} A_{\max}^{9/4}}{(1-\gamma)^4} + \frac{S^{3/2} \gamma A^{5/4} \beta}{(1-\gamma)^3 \alpha_{\min}} \right) \end{align*} chosen up to a sufficiently large constant, it holds that
 $\lVert \mu^T - \mu^\lambda \rVert_2 \leq \delta$ for $T \geq 2(1-\mu)^{-1} \ln(2/\delta(1-\gamma))$.
\end{lemma}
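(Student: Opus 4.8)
The plan is to follow the dual-perspective argument of \citet{PerformativeReinforcementLearning}, adapted to the multi-agent occupancy-measure setting with agent-independent transitions. First I would exploit Assumption~\ref{Ass: Agent independent transitions}: since the transition kernel factors as $P_{\bar\pi}(s'\mid s,a) = P_{\bar\pi}(s'\mid s)$, the state-visitation marginal $\alpha_\pi(s) = \sum_a \mu^\pi_{\pi'}(s,a)$ depends only on the game $\mathcal{G}(\pi)$ and not on the played policy $\pi'$, as already noted in the main text. This means the feasible polytope $\mathcal{D}$ for the program in Eq.~\eqref{Eq: Repreated Optimization} at iteration $t+1$ has a \emph{fixed} state-marginal constraint once we fix the game $\mathcal{G}(\pi^t)$, and more importantly the constraint set varies with $\pi^t$ only through the term $P_{\pi^t}(s'\mid s)$. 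I would then pass to the dual of the $\lambda$-strongly-concave program~\eqref{Eq: Repreated Optimization}. Because the objective is $\lambda$-strongly concave in $\mu$, strong duality holds and the dual is a smooth, unconstrained (or simplex-constrained) $\tfrac{1}{\lambda}$-smooth concave program whose feasible region does \emph{not} change with $t$; this is exactly the structural trick from \citet{PerformativeReinforcementLearning} that converts the moving-constraint primal into a fixed-domain dual.

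Next I would set up the contraction. Define the map $\Gamma$ that sends the current occupancy measure $\mu^t$ (equivalently the policy $\pi^t$ and game $\mathcal{G}(\pi^t)$) to the optimizer $\mu^{t+1}$ of~\eqref{Eq: Repreated Optimization}. Writing everything through the dual, $\Gamma$ is a composition of (i) the map from $\mu^t$ to the dual problem's data — its linear term is $\nabla_\pi\Phi^t_t(\rho)=\nabla_\pi V^{\pi^t}_{i,\pi^t}(\rho)$ and its constraint data depend on $r_{\pi^t}$ and $P_{\pi^t}$ — and (ii) the $\arg\max$/$\arg\min$ map of a $\lambda$-strongly-convex dual, which is $\tfrac{1}{\lambda}$-Lipschitz in its data. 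I would bound the Lipschitz constant of step (i): the gradient $\nabla_\pi\Phi^\pi_{\pi'}(\rho)$ is $\beta$-Lipschitz in the policy by Assumption~\ref{Assump: Sensitivity Occupancy measures}(iii), and combining this with the occupancy-measure sensitivity bounds $\zeta_r,\zeta_p$ (parts (i)–(ii)) and the standard MDP-perturbation estimates (the $\tfrac{1}{1-\gamma}$ factors from Bellman fixed-point stability, an extra $\sqrt{S}$ for converting $\ell_2$ to $\ell_1$ over states, and $A$-dependent factors for converting policy perturbations to occupancy-measure perturbations, as in Lemma~\ref{lemma: Bound noise terms}), one obtains a Lipschitz constant of the stated order
\[
L \;=\; O\!\left(\zeta_p\cdot\frac{S^{2}\sqrt{n}\,\gamma A_{\max}^{9/4}}{(1-\gamma)^6}
+\zeta_r\cdot\frac{S^{3/2}\gamma\sqrt{n}\,A_{\max}^{9/4}}{(1-\gamma)^4}
+\frac{S^{3/2}\gamma A^{5/4}\beta}{(1-\gamma)^3\alpha_{\min}}\right)\cdot\frac{\sqrt{A_{\max}}}{A_{\min}}.
\]
Choosing $\lambda$ at least a sufficiently large constant times $L$ makes $\Gamma$ a contraction with modulus $\mu_{\mathrm{contr}} = L/\lambda < 1$ in the $\ell_2$ norm on occupancy measures. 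Banach's fixed point theorem then gives a unique fixed point $\mu^\lambda$, and the geometric rate $\lVert\mu^T-\mu^\lambda\rVert_2 \le \mu_{\mathrm{contr}}^{\,T}\lVert\mu^0-\mu^\lambda\rVert_2$ combined with the crude bound $\lVert\mu^0-\mu^\lambda\rVert_2 \le \tfrac{2}{1-\gamma}$ (total occupancy mass is $(1-\gamma)^{-1}$) yields $\lVert\mu^T-\mu^\lambda\rVert_2\le\delta$ once $T \ge 2(1-\mu_{\mathrm{contr}})^{-1}\ln(2/(\delta(1-\gamma)))$, using $-\ln\mu_{\mathrm{contr}}\ge(1-\mu_{\mathrm{contr}})$ and $1-\mu_{\mathrm{contr}}\ge\tfrac12(1-\mu_{\mathrm{contr}})$-type slack.

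The main obstacle I anticipate is step (i): carefully tracking how a perturbation in the played policy $\pi^t$ propagates — through the changed rewards $r_{\pi^t}$, the changed transitions $P_{\pi^t}$, \emph{and} the changed gradient $\nabla_\pi\Phi^{\pi^t}_{\pi^t}$ — into a perturbation of the dual data, and then into the primal optimizer, without losing more than the stated polynomial factors in $S$, $A_{\max}$, $n$, $\gamma$, and $1/\alpha_{\min}$. In particular one must be careful that the $1/\alpha_{\min}$ dependence appears only on the $\beta$-term (it enters because recovering $\pi$ from $\mu$ via Eq.~\eqref{eq.: occupany measure to policy} divides by $\alpha_\pi(s)$, so the gradient-smoothness contribution picks up this factor) and not on the $\zeta_r,\zeta_p$ terms, and that the conversion between the $\ell_2$ norm on $\mu$ used by the strongly convex program and the $\ell_1$/$\ell_\infty$ norms natural for the Bellman and simplex estimates is done with the right dimension factors. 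The argument that $\mu^\lambda$ corresponds to a genuine occupancy measure (hence induces a well-defined policy $\pi^\lambda$ via~\eqref{eq.: occupany measure to policy}) is immediate since each iterate lies in $\mathcal{D}$, which is closed, and $\mu^\lambda$ is its limit. Everything else — strong duality, the $\tfrac1\lambda$-Lipschitzness of the strongly-convex $\arg\min$ — is standard convex analysis and I would cite it rather than reprove it.
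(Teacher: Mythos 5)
Your proposal follows essentially the same route as the paper's proof: pass to the Lagrangian dual of the $\lambda$-strongly-concave program so that the feasible region is fixed, show the dual optimum is Lipschitz in the deployed occupancy measure by combining strong convexity of the dual (parameter $A_{\min}(1-\gamma)^2/\lambda$) with Lipschitz bounds on the dual data coming from $\zeta_r$, $\zeta_p$, and the $\beta$-smoothness of $\nabla_\pi\Phi$ (the latter picking up the $1/\alpha_{\min}$ factor via the $\mu\mapsto\pi$ conversion), translate back to the primal, and conclude with Banach's fixed point theorem and the geometric-rate iteration count. This matches the paper's argument in structure and in where each factor of the threshold on $\lambda$ originates, so no substantive comparison is needed.
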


Let us recall the primal objective in Eq.~\eqref{Eq: Repreated Optimization} and explicitly formulate the constraint that $\mu = (\mu_1, \dots, \mu_n)$ where $\mu_i$ is a state-action occupancy measure over $\mathcal{S} \times \mathcal{A}_i$ (add flow constraints and $\mu \geq 0$). Hence, we consider the following objective:
\begin{equation}\label{App: Primal optimization} \begin{aligned}
    & \max_{\mu \geq 0} \quad \left\langle \nabla \Phi_t^t(\rho), \mu \right\rangle - \frac{\lambda}{2} \cdot \lVert \mu \rVert_2^2 \\
    & \ \text{s.t.} \quad \sum_{a_i} \mu_i(s,a_i) = \rho(s) + \gamma \cdot \sum_{s'} P_{t}(s \mid s') \sum_{a_i} \mu_i(s',a_i)  \quad \text{for all} \quad s \in \mathcal{S}, i \in \mathcal{N}.
\end{aligned}
\end{equation}
The corresponding Lagrangian is formulated as: 
\begin{equation} \label{eq: lagrangian}
\begin{aligned}
    \mathcal{L}(\mu, h) 
    = &\langle \mu, \nabla \Phi_t^t(\rho) \rangle - \frac{\lambda}{2} \lVert\mu \rVert_2^2 \\
    &+  \sum_{i \in \mathcal{N}} \sum_s h_i(s) \left( - \sum_{a_i} \mu_i(s, a_i) + \rho(s) + \gamma \cdot \sum_{s'_i, a_i} \mu_i(s', a_i) P_{t}(s \mid s')
    \right) .
\end{aligned}
\end{equation}
To find an optimal $\mu$, we take the gradient $\nabla_\mu \mathcal{L}$ and set it to zero:
\begin{align*}
     \partial_{x_{i,s,a_i}} \Phi_t^t(\rho) - \lambda \cdot \mu_i(s, a_i) - h_i(s)  + \gamma \cdot  \sum_{\widetilde{s}} h_i(\widetilde{s}) P_{t}(\widetilde{s} \mid s)  = 0.
\end{align*}
Recall, that with $\partial_{x_{i, s, a_i}} \Phi_t^t(\rho)$, we refer to the partial derivative with respect to the played occupancy measure.
Solving for $\mu_i(s, a_i)$, we obtain that
\begin{align*}
    \mu_i(s, a_i) 
    = \frac{1}{\lambda}
    \cdot \left( \partial_{x_{i,s,a_i}} \Phi_t^t(\rho) - h_i(s) + \gamma \cdot \sum_{\widetilde{s}} h_i(\widetilde{s}) P_{t}(\widetilde{s} \mid s) \right).
\end{align*}
We substitute this value back to obtain the Lagrangian dual formulation:
\begin{equation}\label{eq: Dual objective}
\begin{split}
    \min_{h \in \RR^{n \times S}} & - \frac{1}{\lambda} \sum_{i} \sum_{s, a_i} h_i(s) \cdot \partial_{x_{i,s,a_i}} \Phi_t^t(\rho) + \frac{\gamma}{\lambda} \sum_{i} \sum_{s} \sum_{s'. a_i} 
    \partial_{\mu_i, s', a_i} \Phi(s', a_i) \cdot h_i(s) \cdot P_{t}(s \mid s') 
    \\ &+  \sum_{i} \sum_{s} h_i(s) \rho(s) +
    \frac{1}{2\lambda} \sum_{i} A_i \sum_{s} h_i(s)^2 
    - \frac{\gamma}{\lambda} \sum_{i} \sum_{s, a_i} h_i(s) \sum_{s'_i} h_i(s') P_{t}(s \mid s')
    \\ &+ \frac{\gamma^2}{2\lambda} \sum_{i} \sum_{s, a_i} \sum_{\widetilde{s}, \widehat{s}} h_i(\widetilde{s})h_i(\widehat{s}) P_{t}(\widehat{s} \mid s) P_{t}(\widetilde{s} \mid s). 
\end{split}
\end{equation}

The dual is objective is parameterized with $\nabla \Phi_t^t$ and probability transition function $P_{t}$, which illustrates the performative effect given the underlying game $\mathcal{G}_t$ (shorthand notation for $\mathcal{G}(\pi^t)$) induced by the state occupancy measure $\mu^t$.
Further, we have that the gradient is dependent on the played policy $\pi^t$.
These parameters capture indeed the influence of $\mu^t$ and we denote the dual objective as $\mathcal{L}(\cdot, \mathcal{G}_t, \pi^t)$ to express that.

Let $\mathrm{GD}(\mu^t)$ be the optimal solution to the primal problem, given that $\mathcal{G}_t$ is the underlying game. We show that $\mathrm{GD}(\cdot)$ corresponds to a contraction mapping.
%
Given two occupancy measures $\mu, \widehat{\mu}$, we denote $\nabla \Phi = \nabla \Phi_\pi^\pi$ (resp.\ $\nabla \widehat{\Phi} = \nabla \Phi_{\widehat{\pi}}^{\widehat{\pi}}$) and $P$ (resp.\ $\widehat{P}$) and $\pi$ (respectively $\widehat{\pi}$) as the implemented policy.
Further, let $h$ respectively $\widehat{h}$ be the associated optimal solutions for the dual objective in Eq.~\eqref{eq: Dual objective}.
By the strong-convexity property, see Lemma~\ref{Lemma: Strong convex}, the following both inequalities hold:
\begin{align*}
    \mathcal{L}(h, M, \pi) - \mathcal{L}(\widehat{h}, M, \pi) &\geq \left\langle h - \widehat{h}, \nabla \mathcal{L}(\widehat{h}, M) \right\rangle + \frac{A_{\min} (1-\gamma)^2}{2\lambda} \left\lVert h - \widehat{h} \right\rVert_2^2 \, \\
    \mathcal{L}(\widehat{h}, M, \pi) - \mathcal{L}(h, M, \pi) &\geq \frac{A_{\min} (1-\gamma)^2}{2\lambda}\left\lVert h - \widehat{h} \right\rVert_2^2,
\end{align*}
which implies that
\begin{equation}\label{Eq: StrongConvex}
    - \frac{A_{\min}(1-\gamma)^2}{\lambda} \left\lVert h - \widehat{h} \right\rVert_2^2 \geq \left\langle h - \widehat{h}, \nabla \mathcal{L}(\widehat{h}, M) \right\rangle = \left\langle h - \widehat{h}, \nabla \mathcal{L}(\widehat{h}, M) - \nabla \mathcal{L}(\widehat{h}, \widehat{M}) \right\rangle,
\end{equation}
where the last inequality uses that $\widehat{h}$ is optimal for $\mathcal{L}(\cdot, \widehat{M})$.
Further, we can apply Lemma~\ref{Lemma: L smooth} to show that:
\begin{align*}
    \left\lVert \nabla \mathcal{L}(\widehat{h}, M) - \nabla\mathcal{L}(\widehat{h}, \widehat{M}) \right\rVert_2
    &\leq
    \frac{\gamma S \sqrt{10 A_{\max} \left(1 + \lVert \nabla\Phi(\rho) \rVert_\infty \right)}}{\lambda} \left\lVert \nabla \Phi(\rho) - \nabla \widehat{\Phi}(\rho) \right\rVert_2 \\
    & + \frac{5 \gamma S \sqrt{A_{\max} \left(1 + \lVert \nabla \Phi(\rho) \rVert_\infty \right)}}{\lambda} \left\lVert h \right\rVert_2 \left\lVert P - \widehat{P} \right\rVert_2.
\end{align*}

Further, observe that using Lemma~\ref{Lemma: gradient bound} and Lemma~\ref{Eq: Smoothness inequality}, we obtain that, (recall that $\Phi = \Phi_\pi^\pi$ and $\widehat{\Phi} = \Phi_{\widehat{\pi}}^{\widehat{\pi}}$)
\begin{align*}
    \lVert \nabla \Phi - \nabla \widehat{\Phi}(\rho) \rVert_2 &\leq
    \lVert \nabla \Phi - \nabla \Phi^{\widehat{\pi}}_\pi \rVert_2 + \lVert \nabla \Phi_\pi^{\widehat{\pi}} - \nabla \widehat{\Phi} \rVert_2
    \\ &\leq
    \left( \frac{\beta}{\alpha_{\min}} + \frac{2 \gamma \sqrt{n S A_{\max}}}{(1-\gamma)^{3}} \zeta_p + \frac{\sqrt{n A_{\max}}}{(1-\gamma)^2}\zeta_r \right)\lVert \mu - \widehat{\mu} \rVert_2,
\end{align*}
where we denote $\alpha_{\min} = \min_{\pi, s} \alpha_{\pi}(s)$.
By combining Lemma~\ref{Lemma: L opt sol bounded} with the observation that $\lVert \nabla\Phi(\rho) \rVert_\infty \leq \frac{\sqrt{A_{\max}}}{(1-\gamma)^2}$, which holds independent of the choice of the underlying environment $\mathcal{M}$ and the played policy $\pi$, we get that $\lVert h \rVert_2 \leq \frac{\sqrt{9n S}}{(1 - \gamma)^2} \left\lVert \nabla\Phi_t^t(\rho) \right\rVert_\infty \leq \frac{3 \sqrt{n S A_{\max}}}{(1-\gamma)^4}$.
Moreover, we apply the Sensitivity Assumption~\ref{Assump: Sensitivity (appendix)} to bound $\lVert P - \widehat{P} \rVert_2 \leq \perfc_p \cdot \lVert \mu - \widehat{\mu} \rVert_2$.
This leads to
\begin{align*}
    \left\lVert \nabla \mathcal{L}(\widehat{h}, M) - \nabla\mathcal{L}(\widehat{h}, \widehat{M}) \right\rVert_2
    &\leq
    \left( \frac{S \gamma A_{\max}^{3/4}\beta}{\lambda (1-\gamma) \alpha_{\min}}\right) \cdot \lVert \mu - \widehat{\mu} \rVert_2 \\
        & + \frac{38 \gamma \sqrt{n} S^{3/2} A_{\max}^{5/4}}{\lambda (1-\gamma)^4} \cdot \zeta_p \cdot \left\lVert \mu - \widehat{\mu} \right\rVert_2 + \frac{\zeta_r \gamma S \sqrt{n} A^{5/4}_{\max}}{(1-\gamma)^2} \cdot \lVert \mu - \widehat{\mu} \rVert_2.
\end{align*}
We substitute the latter bound into Eq.~\eqref{Eq: StrongConvex} to derive
\begin{align*}
    - \frac{A_{\min}(1-\gamma)^2}{\lambda} \lVert h - \widehat{h} \rVert_2^2 
    &\geq
    - \lVert h - \widehat{h} \rVert_2 \cdot \lVert \nabla \mathcal{L}(\widehat{h}, M) - \nabla \mathcal{L}(\widehat{h}, \widehat{M}) \rVert_2 
    \\ &\geq 
    - \lVert h - \widehat{h} \rVert_2 \left( \frac{38 \gamma S^{3/2} \cdot \sqrt{n} A_{\max}^{5/4} }{\lambda (1-\gamma)^4} \cdot \zeta_p + \frac{\zeta_r \gamma S \sqrt{n} A_{\max}^{5/4}}{(1-\gamma)^2} + \frac{S \gamma A^{3/4} \beta}{\lambda (1-\gamma) \alpha_{\min}}\right) \cdot \lVert \mu - \widehat{\mu} \rVert_2,
\end{align*}
so that
\begin{align*}
    \lVert h - \widehat{h} \rVert_2 
    \leq
    \frac{\lambda}{A_{\min}(1-\gamma)^2}\left( \frac{38 \gamma S^{3/2} \cdot \sqrt{n} A_{\max}^{5/4} }{\lambda (1-\gamma)^4} \cdot \zeta_p + \frac{\zeta_r \gamma S \sqrt{n} A_{\max}^{5/4}}{(1-\gamma)^2} + \frac{S \gamma A^{3/4} \beta}{\lambda (1-\gamma) \alpha_{\min}}\right) \left\lVert \mu - \widehat{\mu} \right\rVert_2.
\end{align*}
Finally, we have the ingredients to bound the difference between the optimal primal solution ($\mathrm{GD}(\mu)$) when the deployed occupancy measure is $\mu$ with $\mathrm{GD}(\widehat{\mu})$.
Let us define
\begin{align*}
     &4\left( \frac{\sqrt{n A_{\max}}}{(1-\gamma)^2}\zeta_r + 2\frac{\sqrt{S n  A_{\max}} \zeta_p \gamma}{(1-\gamma)^{3}} \right) + 6 \gamma \zeta_p \left\lVert h \right\rVert_2
    \\ &\leq
    4\left( \frac{\sqrt{n A_{\max}}}{(1-\gamma)^2}\zeta_r + 2\frac{\sqrt{S n  A_{\max}} \zeta_p \gamma}{(1-\gamma)^{3}} \right) + 6 \gamma \zeta_p \sqrt{n S A_{\max}} / (1-\gamma)^4 \eqqcolon K.
\end{align*}
By Lemma~\ref{Lemma: occupancy measures}, we have that,
\begin{align*}
    &\lVert \mathrm{GD}(\mu) - \mathrm{GD}(\widehat{\mu})\rVert_2
    \leq
      \left(1 + \frac{K}{\lambda}\right) \frac{3\sqrt{S A_{\max}}}{\lambda A_{\min} (1-\gamma)^2} \left\lVert h - \widehat{h} \right\rVert_2
      \\ &\leq 
      \underbrace{\left(1 + \frac{K}{\lambda}\right) \frac{\sqrt{S A_{\max}}}{(1-\gamma)^2 A_{\min}}\left( \frac{38 \gamma S^{3/2} \cdot \sqrt{n} A_{\max}^{5/4} }{\lambda (1-\gamma)^4} \cdot \zeta_p + \frac{\zeta_r \gamma S \sqrt{n} A_{\max}^{5/4}}{(1-\gamma)^2} + \frac{S \gamma A^{3/4} \beta}{\lambda (1-\gamma) \alpha_{\min}}\right)}_{\ell} \left\lVert \mu - \widehat{\mu} \right\rVert_2 .
\end{align*}
So, we obtain a bound 
\begin{align*}
    \lVert \mathrm{GD}(\mu) - \mathrm{GD}(\widehat{\mu})\rVert_2
    \leq \xi \cdot \lVert \mu - \widehat{\mu} \rVert_2.
\end{align*}
For the choice 
\begin{align*}
\lambda > \frac{\sqrt{A_{\max}}}{A_{\min}} 
\cdot O \left( \zeta_p \cdot \frac{S^{2} \sqrt{n} \gamma A^{9/4}_{\max}}{(1-\gamma)^6} +
 \zeta_r \cdot \frac{S^{3/2} \gamma \sqrt{n} A_{\max}^{9/4}}{(1-\gamma)^4} + \frac{S^{3/2} \gamma A^{5/4} \beta}{(1-\gamma)^3 \alpha_{\min}} \right),
\end{align*}
we have that $\xi < 1$ for selecting $\lambda$ up to sufficiently large constants. 
So that the operator $\mathrm{GD}$ is indeed a contraction map and converges against a fixed point $\mu^\lambda$. This implies, that for $t \geq \ln{(\lVert d_0 - d_\lambda \rVert_2) / \delta)} \ln{1/\xi}$, it is guaranteed that $\lVert \mu^t - \mu^\lambda \rVert_2 \leq \delta$.
\begin{lemma}\label{Lemma: Strong convex}
    The dual objective $\mathcal{L}_\mu(\cdot, M)$ (defined in Eq.~\eqref{eq: Dual objective}) is $\left[ A_{\min} \frac{(1 - \gamma)^2}{\lambda} \right]$-strongly convex with respect to $h$ given a fixed $M$.
\end{lemma}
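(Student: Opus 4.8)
The plan is to observe that $\mathcal{L}_\mu(\cdot,M)$ is a convex quadratic in $h$, so that proving strong convexity reduces to computing its Hessian and lower-bounding the smallest eigenvalue. The dual objective in Eq.~\eqref{eq: Dual objective} is, by construction, the value of the inner maximization over $\mu$ of the Lagrangian in Eq.~\eqref{eq: lagrangian}; since that Lagrangian is $\lambda$-strongly concave in $\mu$ and couples to $h$ only through the (affine) flow-constraint term, the maximizer $\mu^*(h)$ is affine in $h$ (this is exactly the closed form derived in the text), and substituting it back produces a function of $h$ that is a quadratic plus an affine term. Consequently the Hessian in $h$ is constant, and it suffices to evaluate the associated quadratic form on an arbitrary direction.

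First I would put the flow constraints in block form. For each agent $i$, under Assumption~\ref{Ass: Agent independent transitions} the linear map $E_i$ sending $\mu_i$ to the vector of flow residuals in Eq.~\eqref{App: Primal optimization} factors as $E_i=(I-\gamma Q)\Sigma$, where $\Sigma$ is the ``sum over actions'' operator $\mathbb{R}^{\mathcal S\times\mathcal A_i}\to\mathbb{R}^{\mathcal S}$ and $Q$ is the agent-independent state-to-state transition matrix, $Q_{s,s'}=P_{\bar\pi}(s\mid s')$. A short computation using the closed form of $\mu^*(h)$ (equivalently, recognizing the $h$-quadratic part of Eq.~\eqref{eq: Dual objective} as the perfect square $\tfrac{A_i}{2\lambda}\|(I-\gamma Q)^\top h_i\|_2^2$ summed over $i$) shows that the Hessian of $\mathcal{L}_\mu(\cdot,M)$ in $h$ is block diagonal across agents, with the $i$-th block equal to $\tfrac1\lambda E_iE_i^\top=\tfrac{A_i}{\lambda}(I-\gamma Q)(I-\gamma Q)^\top$; here the factor $A_i=|\mathcal A_i|$ comes from $\Sigma\Sigma^\top=A_iI$. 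Since $A_i\ge A_{\min}$ and the Hessian is block diagonal, the claim reduces to the single uniform spectral bound $\lambda_{\min}\big((I-\gamma Q)(I-\gamma Q)^\top\big)\ge(1-\gamma)^2$, i.e.\ to showing that the smallest singular value of $I-\gamma Q$ is at least $1-\gamma$.

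The main obstacle I expect is precisely this spectral bound on $I-\gamma Q$. The clean route is the triangle inequality $\|(I-\gamma Q)v\|_2\ge\|v\|_2-\gamma\|Qv\|_2\ge(1-\gamma)\|v\|_2$, which works as soon as the transition operator $Q$ is non-expansive in the norm being used. This step needs care: a column-stochastic $Q$ is automatically a contraction in $\ell_1$ but not, in general, in $\ell_2$, so one must either (i) run the entire dual/contraction argument of Lemma~\ref{Lemma: Contraction mapping} in a norm adapted to the transition structure in which $Q$ is non-expansive, or (ii) exploit that $\gamma Q$ has spectral radius $\gamma<1$ to bound $\|(I-\gamma Q)^{-1}\|=\|\sum_{k\ge0}\gamma^kQ^k\|$ and invert that estimate, accepting a possibly weaker (dimension-dependent) constant. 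Once the spectral bound is in hand, the strong-convexity constant $A_{\min}(1-\gamma)^2/\lambda$ follows immediately by combining it with the block-diagonal structure and $A_i\ge A_{\min}$.
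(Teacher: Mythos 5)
Your reduction is exactly the one the paper uses: the dual is a quadratic in $h$, the gradient-difference inner product collapses to the block-diagonal form $\frac{1}{\lambda}\sum_i A_i\,(h_i-\widehat h_i)^T(I-\gamma M)^T(I-\gamma M)(h_i-\widehat h_i)$ (the paper's $M(s,s')=P_t(s'\mid s)$ is the transpose of your $Q$, so the relevant spectrum is the same), and everything comes down to the single claim $\sigma_{\min}(I-\gamma M)\ge 1-\gamma$. Where the two arguments part ways is in how that claim is discharged: the paper imports it wholesale from Lemma~5 of \citet{PerformativeReinforcementLearning}, whereas you leave it explicitly as ``the main obstacle'' and offer two escape routes without carrying either out. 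Neither route, as sketched, delivers the stated constant: route (i) changes the norm, which would have to be propagated through the entire strong-convexity and contraction machinery of Lemma~\ref{Lemma: Contraction mapping}, and route (ii) only gives $\sigma_{\min}(I-\gamma M)\ge (1-\gamma)/\sqrt{S}$, i.e.\ a modulus $A_{\min}(1-\gamma)^2/(\lambda S)$ rather than $A_{\min}(1-\gamma)^2/\lambda$. So as written your proof has a genuine gap at the decisive step.

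Your caution about the $\ell_2$ step is, moreover, well founded, and it cannot be waved away by a generic appeal to non-expansiveness: a row-stochastic matrix need not be an $\ell_2$-contraction, and the bound $\|(I-\gamma M)v\|_2\ge(1-\gamma)\|v\|_2$ genuinely fails for some stochastic $M$. For instance, with $S=2$, $M=\left(\begin{smallmatrix}0&1\\0&1\end{smallmatrix}\right)$ and $v=(\gamma,1)^T$ one gets $(I-\gamma M)v=(0,1-\gamma)^T$, so $\|(I-\gamma M)v\|_2=1-\gamma$ while $\|v\|_2=\sqrt{1+\gamma^2}>1$. This means your ``clean route'' via the triangle inequality does not close the argument, and the constant in the lemma rests entirely on the cited external result (or on additional structure of the transition matrix, e.g.\ double stochasticity, under which $\|M\|_2\le 1$ and the triangle inequality does apply). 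To complete the proof you must either invoke that lemma explicitly, verify its hypotheses in this setting, or prove the required spectral bound directly; none of these is done in your proposal.
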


\begin{proof}
    The partial derivative of the objective in Eq.~\eqref{eq: Dual objective} is given by
    \begin{align}\label{eq:dualderivative}
        \begin{split}
            \frac{\partial \mathcal{L}_\mu(h)}{\partial h_i(s)}
            &= - \frac{1}{\lambda} \sum_{a_i} \partial_{x_{i,s,a_i}} \Phi_t^t(\rho) + \rho(s)
             + \frac{\gamma}{\lambda} \sum_{s', a_i} \partial_{x_{i,s,a_i}} \Phi_t^t(\rho) P_{t}( s \mid s') \\
            &\quad + \frac{A_i}{\lambda} h_i(s) - \frac{\gamma}{\lambda} \sum_{s'} h_i(s') \sum_{a_i} \left( P_{t}(s' \mid s) + P_{t} (s \mid s') \right) \\
            &\quad + \frac{\gamma^2}{\lambda} \sum_{\widetilde{s}} h_i(\widetilde{s}) \sum_{s', a_i} P_{t}(\widetilde{s} \mid s')P_{t}(s \mid s').
        \end{split}
    \end{align}
    Let $A_{\min} \coloneqq \min_{i} A_i$, and $I$ denote the $S \times S$ identity matrix. Moreover, define $M \in \mathbb{R}^{S \times S}$ such that $M(s, s') = P_{t} (s' \mid s)$ for all $s, \widetilde{s} \in \mathcal{S}$.
    We obtain the following inequality:
    \begin{align*}
        &\left\langle \nabla\mathcal{L}_\mu(h, M, \pi) - \nabla\mathcal{L}_\mu(\widehat{h}, M, \pi), h - \widehat{h} \right\rangle \\
        &= \frac{1}{\lambda} \sum_{i, s} A_i \left(h_i(s) - \widehat{h}_i(s)\right)^2 \\
        &\quad - \frac{\gamma}{\lambda} \sum_{i, s} \left(h_i(s) - \widehat{h}_i(s) \right) \sum_{s'} \left(h_i(s') - \widehat{h}_i(s')\right) \sum_{a_i} \left(P_{t}(s' \mid s) + P_{t}(s \mid s') \right) \\
        &\quad + \frac{\gamma^2}{\lambda} \sum_{i, s} \left(h_i(s) - \widehat{h}_i(s)\right) \sum_{\widetilde{s}} \left(h_i(\widetilde{s}) - \widehat{h}_i(\widetilde{s})\right) \sum_{s', a_i} P_{t}(\widetilde{s} \mid s') P_{t}(s \mid s') \\
        &= \frac{1}{\lambda} \sum_{i} \sum_{a_i} \left(h_i - \widehat{h}_i\right)^T \left(I - \gamma M - \gamma^2 M^T M \right) \left(h_i - \widehat{h}_i\right) \\
        &\ge \frac{(1 - \gamma)^2}{\lambda} \sum_{i} A_i \left\lVert h_i - \widehat{h}_i \right\rVert_2^2 \\
        &\ge A_{\min} \frac{(1 - \gamma)^2}{\lambda} \left\lVert h - \widehat{h} \right\rVert_2^2,
    \end{align*}
    where the first inequality follows from \citet[Lemma~5]{PerformativeReinforcementLearning}.
\end{proof}

\begin{lemma}\label{Lemma: L smooth}
    The dual objective $\mathcal{L}$ (defined in Eq.~\eqref{eq: Dual objective}) satisfies the following bound for any $h$ and MPGs $M, \widehat{M}$:
    \begin{align*}
        \left\lVert \nabla \mathcal{L}(h, M, \pi) - \nabla \mathcal{L}(h, \widehat{M}, \widehat{\pi}) \right\Vert_2 \leq 
        &\frac{\gamma S \sqrt{10 A_{\max} \left(1 + \lVert \nabla\Phi(\rho) \rVert_\infty \right)}}{\lambda} \left\lVert \nabla \Phi(\rho) - \nabla \widehat{\Phi}(\rho) \right\rVert_2 \\
        & + \frac{5 \gamma S \sqrt{A_{\max} \left(1 + \lVert \nabla \Phi(\rho) \rVert_\infty \right)}}{\lambda} \left\lVert h \right\rVert_2 \left\lVert P - \widehat{P} \right\rVert_2 .
    \end{align*}
\end{lemma}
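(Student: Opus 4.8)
The plan is a direct computation from the closed form of the gradient of the dual objective. Starting from Eq.~\eqref{eq:dualderivative} (the componentwise gradient $\partial_{h_i(s)}\mathcal{L}$ established in the proof of Lemma~\ref{Lemma: Strong convex}), I would write out the difference $\partial_{h_i(s)}\mathcal{L}(h, M, \pi) - \partial_{h_i(s)}\mathcal{L}(h, \widehat M, \widehat\pi)$ coordinate by coordinate. The term $\rho(s)$ and the term $\frac{A_i}{\lambda}h_i(s)$ cancel exactly, since neither depends on the underlying MPG, so the difference is a sum of at most five pieces: one linear in the potential gradient ($-\tfrac1\lambda\sum_{a_i}\partial_{x_{i,s,a_i}}\Phi$), one bilinear in the potential gradient and the transition kernel ($\tfrac\gamma\lambda\sum_{s',a_i}\partial_{x_{i,s,a_i}}\Phi\,P_t(s\mid s')$), one linear in $P$ multiplying $h$, and one quadratic in $P$ multiplying $h$ (the $\gamma^2 M^\top M$ block of the dual). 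Note that the policy enters $\nabla\mathcal{L}$ only through $\nabla\Phi=\nabla\Phi_\pi^\pi$, so the change of both game and policy is already captured by $\nabla\Phi - \nabla\widehat\Phi$ and by $P - \widehat P$.

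For each bilinear or quadratic piece I would telescope over the changing factor. For the bilinear term, $\nabla\Phi\cdot P - \nabla\widehat\Phi\cdot\widehat P = \nabla\Phi\cdot(P-\widehat P) + (\nabla\Phi-\nabla\widehat\Phi)\cdot\widehat P$, bounding the first summand by $\lVert\nabla\Phi(\rho)\rVert_\infty\,\lVert P-\widehat P\rVert_2$ and the second by $\lVert\nabla\Phi(\rho)-\nabla\widehat\Phi(\rho)\rVert_2$ times an operator-norm bound for the row-stochastic $\widehat P$. The quadratic term is split the same way, $P^\top P - \widehat P^\top\widehat P = P^\top(P-\widehat P) + (P-\widehat P)^\top\widehat P$, each leftover stochastic matrix contributing a bounded factor and the $\|h\|$ dependence being carried along linearly. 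Summing over $a_i$ produces factors $A_i\le A_{\max}$; squaring and summing over the $(i,s)$ coordinates and applying Cauchy--Schwarz over the action index turns these into $\sqrt{A_{\max}}$ and introduces the powers of $S$. Finally one uses the crude over-estimates $\lVert\nabla\widehat\Phi(\rho)\rVert_\infty \le 1+\lVert\nabla\Phi(\rho)\rVert_\infty$ and $\sqrt{A_{\max}} \le \sqrt{A_{\max}(1+\lVert\nabla\Phi(\rho)\rVert_\infty)}$ to force every contribution into the two-term template of the statement, the residual $\lVert\nabla\Phi-\nabla\widehat\Phi\rVert_\infty\le\lVert\nabla\Phi-\nabla\widehat\Phi\rVert_2$ being absorbed into the first term; the numerical constants $10$ and $5$ simply record how many of the (at most four or five) contributing summands land in each group.

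I expect the only delicate point to be the telescoping of the $\gamma^2 M^\top M$ block: one must be careful about which stochastic matrix is bounded in operator norm and which is absorbed into $\lVert P-\widehat P\rVert_2$, and about not letting a spurious extra factor of $S$ or $\lVert h\rVert_2$ slip in, so that the coefficient stays $\mathcal O(\gamma S\sqrt{A_{\max}}/\lambda)$ rather than blowing up by extra powers of $S$. Concretely I would organize the write-up by first proving the per-coordinate inequality for $\bigl|\partial_{h_i(s)}\mathcal{L}(h,M,\pi)-\partial_{h_i(s)}\mathcal{L}(h,\widehat M,\widehat\pi)\bigr|$ as an explicit sum of bounded terms, and only then taking the $\ell_2$ norm over $(i,s)$; everything after that is routine bookkeeping of constants.
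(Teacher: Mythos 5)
Your proposal is correct and follows essentially the same route as the paper's proof: both start from the coordinatewise gradient formula in Eq.~\eqref{eq:dualderivative}, note that the $\rho(s)$ and $\tfrac{A_i}{\lambda}h_i(s)$ terms cancel, split the remaining difference into the five game-dependent pieces (via Jensen/Cauchy--Schwarz), telescope the bilinear $\nabla\Phi\cdot P$ and quadratic $P^\top P$ products, and absorb the leftovers using $\lVert\nabla\Phi(\rho)\rVert_\infty$, $A_{\max}$, and $S$ factors to reach the two-term bound. The constant bookkeeping you defer at the end is exactly how the paper arrives at the factors $10$ and $5$.
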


\begin{proof}
    Let $A_{\max} \coloneqq \max_{i} A_i$.
    By the partial derivative of the dual objective in Eq.~\eqref{eq:dualderivative}, we obtain
    \begin{align*}
        &\left\lVert \nabla \mathcal{L}_\mu(h, M, \pi) - \nabla \mathcal{L}_\mu(h, \widehat{M}, \widehat{\pi}) \right\Vert_2^2 \\
        &\leq \frac{5}{\lambda^2} \sum_{i} \sum_{s} \sum_{a_i} \left(\partial_{x_{i,s,a_i}}\ \Phi(\rho) - \partial_{x_{i,s,a_i}}\widehat{\Phi}(\rho)\right)^2 \\
        &\quad + \frac{5 \gamma^2}{\lambda^2} \sum_{i} \sum_{s} \sum_{s', a_i} \left(\partial_{x_{i,s,a_i}}\ \Phi(\rho) P(s' \mid s) - \partial_{x_{i,s,a_i}}\widehat{\Phi}(\rho) \widehat{P}(s' \mid s) \right)^2 \\
        &\quad + \frac{5 \gamma^2}{\lambda^2} \sum_{i} \sum_{s} \sum_{s', a_i} h_i(s') \left(P(s' \mid s) - \widehat{P}(s' \mid s) \right)^2 \\
        &\quad + \frac{5 \gamma^2}{\lambda^2} \sum_{i} \sum_{s} \sum_{s', a_i} h_i(s') \left(P(s \mid s') - \widehat{P}_{i}(s \mid s') \right)^2 \\
        &\quad + \frac{5 \gamma^4}{\lambda^2} \sum_{i} \sum_{s} \sum_{\widetilde{s}} h_i(\widetilde{s}) \sum_{s', a_i} \left(P(\widetilde{s} \mid s') P(s \mid s') - \widehat{P}(\widehat{s} \mid s') \widehat{P}(s \mid s') \right)^2 \\
        &\leq \frac{5}{\lambda^2} \left\lVert \nabla \Phi(\rho) - \nabla \widehat{\Phi}(\rho) \right\rVert_2^2 + \frac{5 \gamma^2}{\lambda^2} \left(1 + \left\lVert \nabla \Phi(\rho) \right\rVert_\infty \right) S^2 A_{\max} \left\lVert \nabla \Phi(\rho) - \nabla \widehat{\Phi}(\rho) \right\rVert_2^2 \\
        &\quad + \frac{5 \gamma^2}{\lambda^2} \left(1 + \left\lVert \nabla \Phi(\rho) \right\rVert_\infty \right) \left\lVert \nabla \Phi(\rho) \right\rVert_\infty SA_{\max} \left\lVert P - \widehat{P} \right\rVert \\
        &\quad + \frac{10 \gamma^2}{\lambda^2} A_{\max} \left\lVert h \right\rVert_2^2 \left\lVert P - \widehat{P} \right\rVert_2^2 + \frac{20 \gamma^4}{\lambda^2} S^2 A_{\max} \left\lVert h \right\rVert_2^2 \left\lVert P - \widehat{P} \right\rVert_2^2 \\
        &= \left( \frac{5}{\lambda^2} + \frac{5 \gamma^2}{\lambda^2} \left(1 + \left\lVert \nabla \Phi(\rho) \right\rVert_\infty \right) S^2 A_{\max} \right) \left\lVert \nabla \Phi(\rho) - \nabla \widehat{\Phi}(\rho) \right\rVert_2^2 \\
        &\quad + \left(\frac{5 \gamma^2}{\lambda^2} \left(1 + \left\lVert \nabla \Phi(\rho) \right\rVert_\infty \right) S^2 A_{\max} + \frac{10 \gamma^2}{\lambda^2} A_{\max} \left\lVert h \right\rVert_2^2 + \frac{20 \gamma^4}{\lambda^2} S^2 A_{\max} \left\lVert h \right\rVert_2^2 \right) \left\lVert P - \widehat{P} \right\rVert_2^2 ,
    \end{align*}
    where the first inequality follows from Jensen's inequality and
    the second inequality uses the following inequalities:
    \begin{align*}
        &\sum_{s', a_i} \left(\partial_{x_{i,s,a_i}}\ \Phi(\rho) P(s' \mid s) - \partial_{x_{i,s,a_i}}\widehat{\Phi}(\rho) \widehat{P}(s' \mid s) \right)^2 \\
        &\le \sum_{s', a_i} \left( \left\lvert \partial_{x_{i,s,a_i}}\ \Phi(\rho) - \partial_{x_{i,s,a_i}}\widehat{\Phi}(\rho) \right\rvert + \left\lVert \nabla \Phi(\rho) \right\rVert_\infty \left\lvert P(s \mid s') - \widehat{P}(s \mid s') \right\rvert \right)^2 \\
        &\le \left(1 + \left\lVert \nabla \Phi(\rho) \right\rVert_\infty \right) \left( \sum_{s', a_i} \left\lvert \partial_{x_{i,s,a_i}}\ \Phi(\rho) - \partial_{x_{i,s,a_i}}\widehat{\Phi}(\rho) \right\rvert \right)^2 \\
        &\quad + \left(1 + \left\lVert \nabla \Phi(\rho) \right\rVert_\infty \right) \left\lVert \nabla \Phi(\rho) \right\rVert_\infty \left(\sum_{s', a_i} \left\lvert P(s \mid s') - \widehat{P}(s \mid s') \right\rvert \right)^2 \\
        &\le \left(1 + \left\lVert \nabla \Phi(\rho) \right\rVert_\infty \right) S^2 A_i \sum_{a_i} \left( \partial_{x_{i,s,a_i}}\ \Phi(\rho) - \partial_{x_{i,s,a_i}}\widehat{\Phi}(\rho) \right)^2 \\
        &\quad + \left(1 + \left\lVert \nabla \Phi(\rho) \right\rVert_\infty \right) \left\lVert \nabla \Phi(\rho) \right\rVert_\infty S A_i \sum_{s', a_i} \left( P(s, s') - \widehat{P}(s \mid s') \right)^2 ;
    \end{align*}

    \begin{align*}
        &\sum_{i} \sum_{s} \sum_{s', a_i} h_i(s') \left(P(s' \mid s) - \widehat{P}(s' \mid s) \right)^2 \\
        &\le \sum_{i} \sum_{s} \left(\sum_{s', a_i} h_i^2(s') \right) \left(\sum_{s', a_i}  \left(P(s' \mid s) - \widehat{P}(s' \mid s)\right)^2 \right) \\
        &\le \sum_{i} A_i \left\lVert h_i \right\rVert_2^2 \left\lVert P - \widehat{P} \right\rVert_2^2 \\
        &\le n A_{\max} \left\lVert h \right\rVert_2^2 \left\lVert P - \widehat{P} \right\rVert_2^2 ;
    \end{align*}

    \begin{align*}
        &\sum_{i} \sum_{s} \sum_{s', a_i} h_i(s') \left(P(s \mid s') - \widehat{P}(s \mid s') \right)^2 \\
        &\le \sum_{i} \sum_{s} \left(\sum_{s', a_i} h_i^2(s') \right) \left(\sum_{s', a_i}  \left(P(s \mid s') - \widehat{P}(s \mid s')\right)^2 \right) \\
        &\le n A_{\max} \left\lVert h_i \right\rVert_2^2 \left\lVert P - \widehat{P} \right\rVert_2^2 \\
        &\le A_{\max} \left\lVert h \right\rVert_2^2 \left\lVert P - \widehat{P} \right\rVert_2^2 ;
    \end{align*}

    \begin{align*}
        &\sum_{i} \sum_{s} \sum_{\widetilde{s}, a_i, s'} h_i(\widetilde{s}) \left(P(\widetilde{s} \mid s') P(s \mid s') - \widehat{P}(\widehat{s} \mid s') \widehat{P}(s \mid s') \right)^2 \\
        &\le \sum_{i} \sum_{s} \left(\sum_{s', \widetilde{s}} h_i^2(\widetilde{s}) \right) \left( \sum_{s', \widetilde{s}} \left(P(\widetilde{s}, s') P(s \mid s') - \widehat{P}(\widehat{s} \mid s') \widehat{P}(s \mid s') \right)^2 \right) \\
        &\le \sum_i S A_i \left\lVert h_i \right\rVert_2^2 \left( \sum_{s, \widetilde{s}} \sum_{s', a_i} \left(P(\widetilde{s} \mid s') P(s \mid s') - \widehat{P}(\widehat{s} \mid s') \widehat{P}(s \mid s') \right)^2 \right) \\
        &\le \sum_i S A_i \left\lVert h_i \right\rVert_2^2 \sum_{s, \widetilde{s}} \sum_{s', a_i} \left(\left\lvert P(\widetilde{s} \mid s') - P(s \mid s')\right\rvert + \left\lvert \widehat{P}(\widehat{s} \mid s') - \widehat{P}(s \mid s') \right\rvert \right)^2 \\
        &\le 4 \sum_i S A_i \left\lVert h_i \right\rVert_2^2 \sum_{s, \widetilde{s}} \sum_{s', a_i} \left( \left( P(\widetilde{s} \mid s') - P(s \mid s')\right)^2 + \left( \widehat{P}(\widehat{s} \mid s') - \widehat{P}(s \mid s') \right)^2 \right) \\
        &\le 4 S^2 A_{\max} \sum_{i} \left\lVert h_i \right\rVert_2^2 \left\lVert P - \widehat{P} \right\rVert_2^2 \\
        &\le 4 S^2 A_{\max} \left\lVert h \right\rVert_2^2 \left\lVert P - \widehat{P} \right\rVert_2^2 . \qedhere
    \end{align*}
\end{proof}

\begin{lemma}\label{Lemma: occupancy measures}
    Consider two state-action occupancy measures $\mu$ and $\widehat{\mu}$. For $\lambda \geq 4n \zeta_r \sqrt{S} + 6 \gamma \perfc_p \left\lVert h \right\rVert_2$, the following bound holds:
    \begin{align*}
        \lVert \mu - \widehat{\mu} \rVert_2 
        \leq \left(1 + \frac{4\left( \frac{\sqrt{n A_{\max}}}{(1-\gamma)^2}\zeta_r + 2\frac{\sqrt{S n  A_{\max}} \zeta_p \gamma}{(1-\gamma)^{3}} \right) + 6 \gamma \zeta_p \left\lVert h \right\rVert_2}{\lambda}\right) \frac{3\sqrt{S A_{\max}}}{\lambda} \left\lVert h - \widehat{h} \right\rVert_2^2 .
    \end{align*}
\end{lemma}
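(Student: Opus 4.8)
The plan is to exploit that the primal program \eqref{App: Primal optimization} is $\lambda$-strongly concave in $\mu$ with affine (flow) constraints, so strong duality holds and the unique primal optimum is recovered from the dual optimum in closed form via the stationarity condition $\nabla_\mu \mathcal{L}(\mu,h)=0$. Reading this off the Lagrangian \eqref{eq: lagrangian} gives
\[
    \mu_i(s,a_i) \;=\; \frac{1}{\lambda}\Big(\partial_{x_{i,s,a_i}}\Phi(\rho) - h_i(s) + \gamma\sum_{\widetilde{s}} h_i(\widetilde{s})\, P(\widetilde{s}\mid s)\Big),
\]
and the analogous expression for $\widehat{\mu}$ in terms of $\widehat{h}$, $\nabla\widehat{\Phi}$, $\widehat{P}$. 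Subtracting the two and splitting the bilinear term as $h_i(\widetilde s)P(\widetilde s\mid s) - \widehat h_i(\widetilde s)\widehat P(\widetilde s\mid s) = (h_i(\widetilde s)-\widehat h_i(\widetilde s))P(\widetilde s\mid s) + \widehat h_i(\widetilde s)(P(\widetilde s\mid s)-\widehat P(\widetilde s\mid s))$, the componentwise difference decomposes into a gradient-mismatch contribution $\tfrac1\lambda(\nabla\Phi-\nabla\widehat\Phi)$, a linear dual contribution $\tfrac1\lambda(1+\gamma(\cdot))(h-\widehat h)$, and a transition-mismatch contribution $\tfrac{\gamma}{\lambda}\widehat h\,(P-\widehat P)$.

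Next I would take Euclidean norms term by term. Summing the linear dual contribution over $s$ and $a_i$ and using that each $P(\cdot\mid s)$ is a stochastic row (so the map $h_i\mapsto\sum_{\widetilde s}h_i(\widetilde s)P(\widetilde s\mid\cdot)$ is non-expansive up to $\sqrt S$-type factors, replicated over the $A_i$ actions) yields a factor $\tfrac1\lambda(1+\gamma c_S)\|h-\widehat h\|_2$, which is where the $3\sqrt{S A_{\max}}/\lambda$ prefactor comes from. For the gradient-mismatch term I would invoke part (iii) of Assumption~\ref{Assump: Sensitivity Occupancy measures} together with the gradient-perturbation estimate already established in the proof of Lemma~\ref{Lemma: Contraction mapping}, namely $\|\nabla\Phi_\pi^\pi-\nabla\widehat\Phi\|_2 \le \big(\tfrac{\beta}{\alpha_{\min}} + \tfrac{2\gamma\sqrt{nSA_{\max}}}{(1-\gamma)^3}\zeta_p + \tfrac{\sqrt{nA_{\max}}}{(1-\gamma)^2}\zeta_r\big)\|\mu-\widehat\mu\|_2$; for the transition-mismatch term I would use $\|P-\widehat P\|_2 \le \zeta_p\|\mu-\widehat\mu\|_2$ from part (ii) and the earlier bound $\|h\|_2 \le \tfrac{3\sqrt{nSA_{\max}}}{(1-\gamma)^4}$. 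Collecting the coefficients of $\|\mu-\widehat\mu\|_2$ produces exactly a term $\tfrac1\lambda\big(4(\tfrac{\sqrt{nA_{\max}}}{(1-\gamma)^2}\zeta_r+\tfrac{2\sqrt{SnA_{\max}}\zeta_p\gamma}{(1-\gamma)^3})+6\gamma\zeta_p\|h\|_2\big)\|\mu-\widehat\mu\|_2=\tfrac{K}{\lambda}\|\mu-\widehat\mu\|_2$; moving it to the left-hand side and using the stated hypothesis $\lambda\ge 4n\zeta_r\sqrt S+6\gamma\zeta_p\|h\|_2$ (which controls $K/\lambda$) so that $(1-K/\lambda)^{-1}\le 1+K/\lambda$ gives the claimed inequality.

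I expect the main obstacle to be the bookkeeping about which occupancy measures and game parameters are "current" versus "previously deployed": the occupancy measures on the left are functions of the dual optima $h,\widehat h$ under game parameters $(\nabla\Phi_\pi^\pi,P)$ that themselves vary with the deployed measures, so the estimate genuinely couples $\|\mu-\widehat\mu\|_2$ to itself through the sensitivity assumptions, and closing the loop requires both the lower bound on $\lambda$ and care that the self-referential coefficient is strictly below one. The accompanying dimension count — tracking where each factor $\sqrt S$, $\sqrt{A_{\max}}$, $n$, and power of $(1-\gamma)^{-1}$ enters when passing between the $\ell_2$ norm on $\prod_i(\mathcal S\times\mathcal A_i)$ and the $\ell_2$ norm on the state-indexed dual variables $h$ — is routine but error-prone, and it is precisely what pins down the explicit constants ($3\sqrt{SA_{\max}}$ and the shape of $K$) in the statement.
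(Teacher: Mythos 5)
Your proposal follows essentially the same route as the paper's proof: recover $\mu$ in closed form from the dual stationarity condition, subtract componentwise with the bilinear term split exactly as you describe, bound the gradient-mismatch, dual, and transition-mismatch contributions via the sensitivity assumptions and the bound on $\lVert h \rVert_2$, and then absorb the self-referential $\tfrac{K}{\lambda}\lVert \mu - \widehat{\mu}\rVert_2$ term using the lower bound on $\lambda$ and $(1-K/\lambda)^{-1}\le 1+K/\lambda$. The one minor discrepancy is that the paper invokes Lemma~\ref{Lemma: gradient bound} (gradient difference with the evaluated policy held fixed, which contains no $\beta/\alpha_{\min}$ term) and thereby lands exactly on the stated constant $K$, whereas the estimate you cite from the contraction-mapping proof would add a harmless extra $\beta/\alpha_{\min}$ contribution to $K$.
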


\begin{proof}
    By the partial derivative of the dual objective in Eq.~\eqref{eq:dualderivative}, we obtain
    \begin{align*}
        \left(\mu_i(s, a_i) - \widehat{\mu}_i(s, a_i) \right)^2
        &= \frac{1}{\lambda^2} \biggl( \left( \partial_{x_{i,s,a_i}} \Phi(\rho) - \partial_{x_{i,s,a_i}} \widehat{\Phi}(\rho) \right) + \left(-h_i(s) + \widehat{h}_i(s) \right) \\
        &\quad + \gamma \biggl( \sum_{\widetilde{s}} h_i(\widetilde{s}) P(\widetilde{s} \mid s) - \sum_{\widetilde{s}} \widehat{h}_i(\widetilde{s}) \widehat{P}(\widetilde{s} \mid s) \biggr) \biggr)^2 \\
        &\leq \frac{3}{\lambda^2} \biggl( \left( \partial_{x_{i,s,a_i}} \Phi(\rho) - \partial_{x_{i,s,a_i}} \widehat{\Phi}(\rho) \right)^2 + \left(-h_i(s) + \widehat{h}_i(s) \right)^2 \\
        &\quad + \gamma^2 \biggl( \sum_{\widetilde{s}} h_i(\widetilde{s}_i) P(\widetilde{s} \mid s) - \sum_{\widetilde{s}} \widehat{h}_i(\widetilde{s}_i) \widehat{P}(\widetilde{s} \mid s) \biggr)^2 \biggr) \\
        &\leq \frac{3}{\lambda^2} \biggl( \left( \partial_{x_{i,s,a_i}} \Phi(\rho) - \partial_{x_{i,s,a_i}} \widehat{\Phi}(\rho) \right)^2 + \left(-h_i(s) + \widehat{h}_i(s) \right)^2 \\
        &\quad + 2\gamma \biggl( \sum_{\widetilde{s}} \left( h_i(\widetilde{s}) - \widehat{h}_i(\widetilde{s})\right) P(\widetilde{s} \mid s) \biggr)^2 \\
        &\quad + 2\gamma \biggl( \sum_{\widetilde{s}} \widehat{h}_i(\widetilde{s}_i) \left( P(\widetilde{s} \mid s) - \widehat{P}(\widetilde{s} \mid s) \right) \biggr)^2 \\
        &\leq \frac{3}{\lambda^2} \biggl( \left( \partial_{x_{i,s,a_i}} \Phi(\rho) - \partial_{x_{i,s,a_i}} \widehat{\Phi}(\rho) \right)^2 + \left(-h_i(s) + \widehat{h}_i(s) \right)^2 \\
        &\quad + 2\gamma \biggl( \sum_{\widetilde{s}} \left( h_i(\widetilde{s}) - \widehat{h}_i(\widetilde{s})\right) P(\widetilde{s} \mid s) \biggr)^2 \\
        &\quad + 2\gamma \biggl( \sum_{\widetilde{s}} \widehat{h}_i(\widetilde{s}) \left( P(\widetilde{s} \mid s) - \widehat{P}(\widetilde{s} \mid s) \right) \biggr)^2 \\
        &\leq \frac{3}{\lambda^2} \biggl( \left( \partial_{x_{i,s,a_i}} \Phi(\rho) - \partial_{x_{i,s,a_i}} \widehat{\Phi}(\rho) \right)^2 + \left(-h_i(s) + \widehat{h}_i(s) \right)^2 \\
        &\quad + 2\gamma \biggl( \sum_{\widetilde{s}} \left( h_i(\widetilde{s}) - \widehat{h}_i(\widetilde{s})\right) \biggr)^2 \\
        &\quad + 2\gamma \biggl(\sum_{\widetilde{s}} \widehat{h}_i^2(\widetilde{s}) \biggr) \biggl( \sum_{\widetilde{s}} \left( P(\widetilde{s} \mid s) - \widehat{P}(\widetilde{s} \mid s) \right)^2 \biggr).
    \end{align*}
    Therefore,
    \begin{align*}
        \lVert \mu - \widehat{\mu} \rVert_2^2
        &\leq \frac{3}{\lambda^2} \left\lVert \nabla\Phi(\rho) - \nabla\widehat{\Phi}(\rho) \right\rVert_2^2 + \frac{3}{\lambda^2} \left\lVert h - \widehat{h} \right\rVert_2^2 + \frac{6 \gamma^2}{\lambda^2} \sum_{i} \sum_{s, a_i} \left\lVert h_i - \widehat{h}_i \right\rVert_2^2 \\
        &\quad + \frac{6 \gamma^2}{\lambda^2} \sum_{i} \sum_{s, a_i} \left\lVert h_i  \right\rVert_2^2 \left\lVert P(\cdot \mid s) - \widehat{P}(\cdot \mid s) \right\rVert_2^2 \\
        &\leq \frac{3}{\lambda^2} \left\lVert \nabla\Phi(\rho) - \nabla\widehat{\Phi}(\rho) \right\rVert_2^2 + \frac{9}{\lambda^2} S \cdot \sum_i A_i \cdot \left\lVert h - \widehat{h} \right\rVert_2^2 + \frac{6 \gamma^2}{\lambda^2} \left\lVert h \right\rVert_2^2 \left\lVert P - \widehat{P} \right\rVert_2^2 .
    \end{align*}
    By Lemma~\ref{Lemma: gradient bound} and  Assumption~\ref{Assump: Sensitivity (appendix)}:
    \begin{align*}
        \lVert \mu - \widehat{\mu} \rVert_2 
        \le \frac{2}{\lambda} \left( \frac{\sqrt{n A_{\max}}}{(1-\gamma)^2}\zeta_r + 2\frac{\sqrt{S n  A_{\max}} \zeta_p \gamma}{(1-\gamma)^{3}} \right) \left\lVert \mu - \widehat{\mu} \right\rVert_2 + \frac{3}{\lambda} \sqrt{S A_{\max}} \left\lVert h - \widehat{h} \right\rVert_2 + \frac{3 \gamma}{\lambda} \zeta_p \left\lVert h \right\rVert_2 \left\lVert \mu - \widehat{\mu} \right\rVert_2 .
    \end{align*}
    Rearranging the terms yields the following bound:
    \begin{align*}
        \lVert \mu - \widehat{\mu} \rVert_2
        &\leq \left(1 - \frac{2 \left( \frac{\sqrt{n A_{\max}}}{(1-\gamma)^2}\zeta_r + 2\frac{\sqrt{S n  A_{\max}} \zeta_p \gamma}{(1-\gamma)^{3}} \right) + 3 \gamma \zeta_p \left\lVert h \right\rVert_2}{\lambda}\right)^{-1} \frac{3\sqrt{S A_{\max n}}}{\lambda} \left\lVert h - \widehat{h} \right\rVert_2^2 \\
        &\leq \left(1 + \frac{4\left( \frac{\sqrt{n A_{\max}}}{(1-\gamma)^2}\zeta_r + 2\frac{\sqrt{S n  A_{\max}} \zeta_p \gamma}{(1-\gamma)^{3}} \right) + 6 \gamma \zeta_p \left\lVert h \right\rVert_2}{\lambda}\right) \frac{3\sqrt{S A_{\max}}}{\lambda} \left\lVert h - \widehat{h} \right\rVert_2^2
    \end{align*}
    for $\lambda \geq 4\left( \frac{\sqrt{n A_{\max}}}{(1-\gamma)^2}\zeta_r + 2\frac{\sqrt{S n  A_{\max}} \zeta_p \gamma}{(1-\gamma)^{3}} \right)$.
\end{proof}

\begin{lemma}\label{Lemma: L opt sol bounded}
    The norm of the optimal solution to the dual objective (defined in Eq.~\eqref{eq: Dual objective}) is bounded by $\frac{\sqrt{9n S}}{(1 - \gamma)^2} \left\lVert \nabla\Phi_t^t(\rho) \right\rVert_\infty$ for any MPG $\mathcal{M}$.
\end{lemma}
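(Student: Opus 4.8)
The plan is to turn the first‑order optimality condition for the (strongly convex) dual into a Bellman‑type fixed‑point equation for the optimal dual variable, and then bound it by a value‑function argument. By Lemma~\ref{Lemma: Strong convex} the dual objective in Eq.~\eqref{eq: Dual objective} is strongly convex in $h$, hence has a unique minimizer $h^\star$ characterized by $\nabla_h \mathcal{L}(h^\star)=0$. Reading off the partial derivative in Eq.~\eqref{eq:dualderivative} and setting it to zero shows that the vector $\mu^\star$ obtained from the substitution that produced the dual, namely
\[
  \mu^\star_i(s,a_i)=\tfrac{1}{\lambda}\Big(\partial_{x_{i,s,a_i}}\Phi_t^t(\rho)-h^\star_i(s)+\gamma\textstyle\sum_{s'}h^\star_i(s')\,P_{t}(s'\mid s)\Big),
\]
satisfies the flow (validity) constraints of the primal program~\eqref{App: Primal optimization} for every agent $i\in\mathcal N$; equivalently, stationarity of the outer minimization in $h$ is exactly feasibility of the inner maximizer $\mu^\star$ (envelope theorem).

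Next I would use Assumption~\ref{Ass: Agent independent transitions}. Summing the relation for $\mu^\star_i(s,a_i)$ over $a_i$, and using that under agent‑independent transitions $\sum_{a_i}\mu^\star_i(s,a_i)$ equals the policy‑independent discounted state‑visitation distribution $\alpha_{\pi^t}(s)$ of $\mathcal G(\pi^t)$ with $\sum_s\alpha_{\pi^t}(s)=\tfrac{1}{1-\gamma}$, I obtain, for each agent $i$,
\[
  h^\star_i(s)-\gamma\sum_{s'}P_{t}(s'\mid s)\,h^\star_i(s')=\tfrac{1}{A_i}\Big(\textstyle\sum_{a_i}\partial_{x_{i,s,a_i}}\Phi_t^t(\rho)-\lambda\,\alpha_{\pi^t}(s)\Big).
\]
So $h^\star_i$ is the value function of the MDP with kernel $P_{t}$, discount $\gamma$ and ``reward'' $\tfrac{1}{A_i}(\sum_{a_i}\partial_{x_{i,s,a_i}}\Phi_t^t(\rho)-\lambda\,\alpha_{\pi^t}(s))$, i.e.\ $h^\star_i=(I-\gamma P_{t})^{-1}$ applied to that reward, and since $P_{t}$ is row‑stochastic the operator $(I-\gamma P_{t})^{-1}$ has $\ell_\infty\!\to\!\ell_\infty$ norm at most $\tfrac{1}{1-\gamma}$.

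From here it is bookkeeping: $\lVert h^\star_i\rVert_\infty\le\tfrac{1}{1-\gamma}\big(\lVert\nabla\Phi_t^t(\rho)\rVert_\infty+\tfrac{\lambda}{A_i}\lVert\alpha_{\pi^t}\rVert_\infty\big)$, then pass to $\lVert h^\star\rVert_2\le\sqrt{nS}\,\max_i\lVert h^\star_i\rVert_\infty$ and collect constants, also using $\lVert\nabla\Phi_t^t(\rho)\rVert_\infty\le\tfrac{\sqrt{A_{\max}}}{(1-\gamma)^2}$ (as recalled in the proof of Lemma~\ref{Lemma: Contraction mapping}), to reach $\lVert h^\star\rVert_2\le\tfrac{\sqrt{9nS}}{(1-\gamma)^2}\lVert\nabla\Phi_t^t(\rho)\rVert_\infty$. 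The step I expect to be the main obstacle is the first one: arguing rigorously that stationarity of the sign‑unconstrained dual forces $\mu^\star$ to be an honest, nonnegative, flow‑feasible occupancy measure (so that ``$\sum_{a_i}\mu^\star_i(s,a_i)=\alpha_{\pi^t}(s)$'' is legitimate and the Bellman inversion applies), together with controlling the $\lambda\,\alpha_{\pi^t}$ term tightly enough that it folds into the stated constant; an alternative route for the norm bound is to combine strong duality (dual optimum $=$ primal optimum) with the strong convexity of $\mathcal L$ from Lemma~\ref{Lemma: Strong convex} and the boundedness $\lVert\mu\rVert_2\le\sqrt n/(1-\gamma)$ of valid occupancy measures, which I would use as a cross‑check.
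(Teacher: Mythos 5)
Your route is correct, but it is genuinely different from the paper's. The paper treats the first-order condition of the dual directly as a linear system $B_i h_i = b_i$ per agent, lower-bounds $\lambda_{\min}(B_i)$ by $A_i(1-\gamma)^2/\lambda$ (via the spectral estimate on $I-\gamma(M+M^\top)+\gamma^2 M^\top M$ borrowed from \citet{PerformativeReinforcementLearning}), and then upper-bounds $\lVert b_i\rVert_2$ term by term with Jensen's inequality. You instead observe that, because the inner maximization over $\mu$ is unconstrained and strongly concave, dual stationarity is (by the envelope theorem) exactly the flow-feasibility of the reconstructed $\mu^\star$; under Assumption~\ref{Ass: Agent independent transitions} the state marginal is then the policy-independent $\alpha_{\pi^t}=(I-\gamma M^\top)^{-1}\rho$, which turns the optimality condition into $(I-\gamma M)h^\star_i=\tfrac{1}{A_i}\bigl(\sum_{a_i}\partial_{x_{i,s,a_i}}\Phi_t^t(\rho)-\lambda\alpha_{\pi^t}\bigr)$ and lets you invoke $\lVert(I-\gamma M)^{-1}\rVert_{\infty\to\infty}\le\tfrac{1}{1-\gamma}$. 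These are the same linear system in disguise --- the paper's $B_i$ factors as $\tfrac{A_i}{\lambda}(I-\gamma M^\top)(I-\gamma M)$ and its $b_i$ as $\tfrac{1}{\lambda}(I-\gamma M^\top)v-\rho$, so left-multiplying by $\tfrac{\lambda}{A_i}(I-\gamma M^\top)^{-1}$ recovers exactly your Bellman equation --- but your factorization buys a cleaner, interpretable bound ($h^\star_i$ is a value function) and avoids the Jensen bookkeeping, at the cost of invoking the agent-independent-transition structure that the paper's argument does not need at this point. Two remarks. First, the obstacle you flag is not one for this lemma: you only need the \emph{equality} (flow) constraints at $\mu^\star$, which follow from unconstrained dual stationarity regardless of the sign of $\mu^\star$; nonnegativity plays no role here (and the paper never addresses it either). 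Second, both proofs silently require an upper bound on $\lambda$ to absorb the $\rho$- respectively $\lambda\alpha_{\pi^t}$-term into the constant $3$: the paper assumes $\lambda<\sqrt{nS}\,A_{\max}\lVert\nabla\Phi_t^t(\rho)\rVert_\infty$ at the end, while your version needs roughly $\lambda\le 2A_{\min}\lVert\nabla\Phi_t^t(\rho)\rVert_\infty$, which is somewhat more restrictive; neither condition appears in the lemma statement, so this is a pre-existing gap you inherit rather than create.
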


\begin{proof}
    Rearranging the terms of the partial derivative of the dual objective in Eq.~\eqref{eq:dualderivative}, we have
    \begin{align*}
        &h_i(s) \left[\frac{A_i}{\lambda} - \frac{2 \gamma}{\lambda} \sum_{a_i} P_{t}(s' \mid s) + \frac{\gamma^2}{\lambda} \sum_{s', a_i} P_{t}^2(s \mid s') \right] \\
        &+\sum_{\widetilde{s} \neq s} h_i(\widetilde{s}) \left[ -\frac{\gamma}{\lambda} \sum_{a_i} P_{t}(\widetilde{s} \mid s) - \frac{\gamma}{\lambda} \sum_{a_i} P_{t}(s \mid \widetilde{s}) + \frac{\gamma^2}{\lambda} \sum_{s', a_i} P_{t}(\widetilde{s} \mid s') P_{t}(s \mid s') \right] \\
        &=\frac{1}{\lambda} \sum_{a_i} \partial_{x_{i,s,a_i}} \Phi_t^t(\rho) - \rho(s) - \frac{\gamma}{\lambda} \sum_{s', a_i} \partial_{x_{i,s,a_i}} \Phi_t^t(\rho)P_{t}(s \mid s') .
    \end{align*}
    For each $i \in \mathcal{N}$, define $B_i \in \mathbb{R}^{S \times S}$ and $b_i \in \mathbb{R}^{S}$, where
    \begin{equation*}
        B_i(s, \widetilde{s}_i) =
        \begin{cases}
            \frac{A_i}{\lambda} - \frac{2 \gamma}{\lambda} \sum_{a_i} P_{t}(s' \mid s, a_i) + \frac{\gamma^2}{\lambda} \sum_{s', a_i} P_{t}^2(s \mid s', a_i) & s = \widetilde{s} \\
            -\frac{\gamma}{\lambda} \sum_{a_i} P_{t}(\widetilde{s} \mid s, a_i) - \frac{\gamma}{\lambda} \sum_{a_i} P_{t}(\widetilde{s}, a_i, s) + \frac{\gamma^2}{\lambda} \sum_{s', a_i} P_{t}(\widetilde{s} \mid s', a_i) P_{t}(s \mid s', a_i) & \mathrm{s \neq \widetilde{s}}
        \end{cases}
    \end{equation*}
    and
    \begin{equation*}
        b_i(s) = \frac{1}{\lambda} \sum_{a_i} \partial_{\mu_i, s, a_i} \Phi_t^t(\rho) - \rho(s) - \frac{\gamma}{\lambda} \sum_{s', a_i} \partial_{\mu_i, s, a_i} \Phi_t^t(\rho)P_{t}(s \mid s', a_i).
    \end{equation*}
    Then, the optimal solution of the system is the solution of $B_i h_i = b_i$ for all agents $i \in \mathcal{N}$.
    For each $i \in \mathcal{N}$ and $a_i \in \mathcal{A}_i$, define $M_{i, a_i} \in \mathbb{R}^{S \times S}$ such that $M_{t, i, a_i}(s, s') = P_{t} (s' \mid s, a_i)$ for all $s, \widetilde{s} \in \mathcal{S}_i$.
    Let $I_i$ denote an $S \times S$ identity matrix.
    We have
    \begin{equation*}
        B_i = \frac{A_i}{\lambda} I_i - \frac{\gamma}{\lambda} \sum_{a_i} \left(M_{t, i, a_i} + M_{t, i, a_i}^T\right) + \frac{\gamma^2}{\lambda} M_{t, i, a_i}^TM_{t, i, a_i} \leq \frac{A_i (1-\gamma)^2}{\lambda} I_i,
    \end{equation*}
    where the inequality follows from \citet[Lemma~5]{PerformativeReinforcementLearning}.
    Moreover,
    \begin{align*}
        &\left(\frac{1}{\lambda} \sum_{a_i} \partial_{\mu_i, s, a_i} \Phi_t^t(\rho) - \rho(s) - \frac{\gamma}{\lambda} \sum_{s', a_i} \partial_{\mu_i, s, a_i} \Phi_t^t(\rho)P_t(s \mid s', a_i) \right)^2 \\
        &\le \frac{3}{\lambda^2} \left( \sum_{a_i} \partial_{\mu_i, s, a_i} \Phi_t^t(\rho) \right)^2 + 3\rho^2(s) + \frac{3\gamma^2}{\lambda} \left(\sum_{s', a_i} \partial_{\mu_i, s, a_i} \Phi_t^t(\rho) P_t(s \mid s', a_i) \right)^2\\
        &\le \frac{3}{\lambda^2} A_i^2 \left\lVert \nabla\Phi_t^t(\rho) \right\rVert_\infty^2 + 3 \rho^2(s) + \frac{3\gamma^2}{\lambda^2} S \left(\sum_{a_i} \left( \partial_{\mu_i, s, a_i} \Phi_t^t(\rho) \right)^2 \right)\left(\sum_{s', a_i} \left(P_t(s \mid s', a_i)\right)^2 \right) \\
        &\le \frac{3}{\lambda^2} A^2 \left\lVert \nabla\Phi_t^t(\rho) \right\rVert_\infty^2 + 3 \rho^2(s) + \frac{3\gamma^2}{\lambda^2} S \left(\sum_{a_i} \left( \partial_{\mu_i, s, a_i} \Phi_t^t(\rho) \right)^2 \right)\left(\sum_{s', a_i} \left(P_t(s \mid s', a_i)\right)^2 \right) .
    \end{align*}
    Thus, we obtain
    \begin{align*}
        &\left\lVert h \right\rVert_2^2 \le \sum_{i} \frac{\left\lVert b_i \right\rVert_2^2}{\left(\lambda_{\min}(B_i) \right)^2} \\
        &\le \frac{\lambda^2}{(1 - \gamma)^4 A_{\max}^2} \sum_{i, s} \left( \frac{3}{\lambda^2} A_i^2 \left\lVert \nabla\Phi_t^t(\rho) \right\rVert_\infty^2 + 3 \rho^2(s) \right) \\
        &\quad+ \frac{\lambda^2}{(1 - \gamma)^4 A_{\max}^2} \sum_{i, s} \frac{3\gamma^2}{\lambda^2} S \left(\sum_{a_i} \left( \partial_{\mu_i, s, a_i} \Phi_t^t(\rho) \right)^2 \right)\left(\sum_{s', a_i} \left(P_t(s \mid s', a_i)\right)^2 \right) \\
        &\le \frac{\lambda^2}{(1 - \gamma)^4 A_{\max}^2} \left(\frac{3}{\lambda^2} n S A_{\max}^2 \left\lVert \nabla\Phi_t^t(\rho) \right\rVert_\infty^2 + 3 + \frac{3\gamma^2}{\lambda^2} SA_{\max} \left\lVert \nabla\Phi_t^t(\rho) \right\rVert_\infty^2 \right) .
    \end{align*}
    For $\lambda < \sqrt{n S} A_{\max} \left\lVert \nabla\Phi_t^t(\rho) \right\rVert_\infty$, we can further simplify the bound to obtain
    \begin{equation*}
        \left\lVert h \right\rVert_2^2 \le \frac{\lambda^2}{(1 - \gamma)^4 A_{\max}^2} \frac{9n S A_{\max}^2}{\lambda^2} \left\lVert \nabla\Phi_t^t(\rho) \right\rVert_\infty^2 \le \frac{9n S}{(1 - \gamma)^4} \left\lVert \nabla\Phi_t^t(\rho) \right\rVert_\infty^2
        \qedhere
    \end{equation*}
\end{proof}

\begin{lemma}\label{Lemma: gradient bound}
    Let $M$ and $\widehat{M}$ be two different underlying MPGs associated with state-action occupancy measures, $\mu$ and $\widehat{\mu}$, given policy $\widehat{\pi}$, it holds that
    \begin{equation*}
    \left\lVert \nabla \Phi^{\Tilde{\pi}}_M - \nabla \Phi^{\Tilde{\pi}}_{\widehat{M}} \right\rVert_2 \leq \lVert \mu - \widehat{\mu} \rVert_2 \cdot \left( \frac{\sqrt{n A_{\max}}}{(1-\gamma)^2}\zeta_r + 2\frac{\sqrt{S n  A_{\max}} \zeta_p \gamma}{(1-\gamma)^{3}} \right) .
    \end{equation*}
\end{lemma}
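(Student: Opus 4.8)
I interpret $\nabla\Phi^{\Tilde{\pi}}_M$ as $\nabla_{\Tilde{\pi}}\Phi^{\Tilde{\pi}}_M(\rho)$, the gradient of the potential of the game $M$ with respect to the evaluated joint policy $\Tilde{\pi}$; this is a vector indexed by triples $(i,s,a_i)$, with at most $n A_{\max}$ pairs $(i,a_i)$. The plan is to reduce the claimed Lipschitz bound to two perturbation estimates — one for the marginalized action-value functions, one for the discounted state-visitation distribution — and then assemble them.

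First I would write the gradient coordinate-wise. Since $\nabla_{\pi_i}\Phi^{\pi}_{\pi'}(\rho)=\nabla_{\pi_i}V^{\pi}_{i,\pi'}(\rho)$ \citep{GlobalConvergence_Leonardos}, the policy gradient theorem gives that the $(i,s,a_i)$-coordinate of $\nabla_{\Tilde{\pi}}\Phi^{\Tilde{\pi}}_M(\rho)$ equals $\tfrac{1}{1-\gamma}\,d^{\Tilde{\pi}}_{\rho,M}(s)\,\Bar{Q}^{\Tilde{\pi}}_{i,M}(s,a_i)$, where $\Bar{Q}^{\Tilde{\pi}}_{i,M}(s,a_i)=\sum_{a_{-i}}\Tilde{\pi}_{-i}(a_{-i}\mid s)\,Q^{\Tilde{\pi}}_{i,M}(s,a_i,a_{-i})$. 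Subtracting the analogous expression for $\widehat{M}$ and splitting, each coordinate of the difference reads
\[
\tfrac{1}{1-\gamma}\Big[\,d^{\Tilde{\pi}}_{\rho,M}(s)\big(\Bar{Q}^{\Tilde{\pi}}_{i,M}(s,a_i)-\Bar{Q}^{\Tilde{\pi}}_{i,\widehat{M}}(s,a_i)\big)+\big(d^{\Tilde{\pi}}_{\rho,M}(s)-d^{\Tilde{\pi}}_{\rho,\widehat{M}}(s)\big)\Bar{Q}^{\Tilde{\pi}}_{i,\widehat{M}}(s,a_i)\,\Big].
\]

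For the value term, I would bound $\lvert\Bar{Q}^{\Tilde{\pi}}_{i,M}(s,a_i)-\Bar{Q}^{\Tilde{\pi}}_{i,\widehat{M}}(s,a_i)\rvert\le\max_{s,a}\lvert Q^{\Tilde{\pi}}_{i,M}(s,a)-Q^{\Tilde{\pi}}_{i,\widehat{M}}(s,a)\rvert$ and run exactly the Bellman-recursion argument from the proof of Lemma~\ref{lemma: Bound noise terms}, obtaining $\le\tfrac{1}{1-\gamma}\lVert r_{i,M}-r_{i,\widehat{M}}\rVert_\infty+\tfrac{\gamma}{(1-\gamma)^2}\max_{s,a}\lVert P_M(\cdot\mid s,a)-P_{\widehat{M}}(\cdot\mid s,a)\rVert_1\le\big(\tfrac{\zeta_r}{1-\gamma}+\tfrac{\gamma\sqrt{S}\,\zeta_p}{(1-\gamma)^2}\big)\lVert\mu-\widehat{\mu}\rVert_2$, uniformly over $(i,s,a_i)$, using $\lVert\cdot\rVert_\infty\le\lVert\cdot\rVert_2$, $\lVert\cdot\rVert_1\le\sqrt{S}\lVert\cdot\rVert_2$ and Assumption~\ref{Assump: Sensitivity (appendix)}. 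The key point when passing to the $\ell_2$-norm over all coordinates is that $d^{\Tilde{\pi}}_{\rho,M}(s)$ appears squared, and $\sum_s d^{\Tilde{\pi}}_{\rho,M}(s)^2\le\big(\sum_s d^{\Tilde{\pi}}_{\rho,M}(s)\big)^2=1$; hence for each $(i,a_i)$ the sum over $s$ is at most the squared uniform bound, and summing over the $\le nA_{\max}$ pairs $(i,a_i)$ produces a factor $\sqrt{nA_{\max}}$ — this is exactly why the dependence is $\sqrt{nA_{\max}}$ rather than $\sqrt{nSA_{\max}}$ — giving the contribution $\big(\tfrac{\sqrt{nA_{\max}}}{(1-\gamma)^2}\zeta_r+\tfrac{\gamma\sqrt{SnA_{\max}}}{(1-\gamma)^3}\zeta_p\big)\lVert\mu-\widehat{\mu}\rVert_2$.

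For the visitation term I would use $\lvert\Bar{Q}^{\Tilde{\pi}}_{i,\widehat{M}}(s,a_i)\rvert\le\tfrac{1}{1-\gamma}$ and reduce to bounding $\lVert d^{\Tilde{\pi}}_{\rho,M}-d^{\Tilde{\pi}}_{\rho,\widehat{M}}\rVert_1$. Writing $d^{\Tilde{\pi}}_{\rho,M}=(1-\gamma)\sum_{h\ge0}\gamma^h\,\rho^{\top}(T_M)^h$ with $T_M$ the state-transition matrix $T_M(s,s')=\sum_{a}\Tilde{\pi}(a\mid s)P_M(s'\mid s,a)$ (and analogously $T_{\widehat{M}}$), a telescoping of $(T_M)^h-(T_{\widehat{M}})^h$ combined with the $\ell_1$-nonexpansiveness of stochastic matrices gives the standard estimate $\lVert d^{\Tilde{\pi}}_{\rho,M}-d^{\Tilde{\pi}}_{\rho,\widehat{M}}\rVert_1\le\tfrac{\gamma}{1-\gamma}\max_{s}\lVert T_M(s,\cdot)-T_{\widehat{M}}(s,\cdot)\rVert_1\le\tfrac{\gamma}{1-\gamma}\max_{s,a}\lVert P_M(\cdot\mid s,a)-P_{\widehat{M}}(\cdot\mid s,a)\rVert_1\le\tfrac{\gamma\sqrt{S}}{1-\gamma}\zeta_p\lVert\mu-\widehat{\mu}\rVert_2$. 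Taking the $\ell_2$-norm over the $\le nA_{\max}$ coordinates contributes a second copy of $\tfrac{\gamma\sqrt{SnA_{\max}}}{(1-\gamma)^3}\zeta_p\lVert\mu-\widehat{\mu}\rVert_2$, and adding the two contributions yields the claimed bound — the factor $2$ being exactly the two transition-perturbation terms.

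The main obstacle I expect is the visitation-distribution perturbation bound: it is standard, but it is the one ingredient not already furnished by the sensitivity assumption and the Bellman recursion used for Lemma~\ref{lemma: Bound noise terms}, and it needs care with the telescoping series and the choice of operator norms. The only other subtlety is the $\sum_s d^{\Tilde{\pi}}_{\rho,M}(s)^2\le1$ step responsible for the tighter $\sqrt{nA_{\max}}$ (rather than $\sqrt{nSA_{\max}}$) dependence.
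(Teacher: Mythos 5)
Your proposal is correct and follows the same overall structure as the paper's proof: both start from the policy-gradient expression $\frac{1}{1-\gamma}d^{\Tilde{\pi}}_{\rho,M}(s)\,\Bar{Q}^{\Tilde{\pi}}_{i,M}(s,a_i)$ for each coordinate, split the difference into a $\Bar{Q}$-perturbation term and a visitation-perturbation term, control the former by rerunning the Bellman-recursion argument of Lemma~\ref{lemma: Bound noise terms} with the occupancy-measure sensitivity constants (this is exactly the paper's Lemma~\ref{lemma: Bound noise terms, occupancy measures}), and use $\sum_s d(s)^2\le 1$ to collect the $\sqrt{nA_{\max}}$ factor. The one place you genuinely deviate is the visitation term: you bound $\lVert d^{\Tilde{\pi}}_{\rho,M}-d^{\Tilde{\pi}}_{\rho,\widehat{M}}\rVert_1$ by the standard simulation-lemma telescoping of $(T_M)^h-(T_{\widehat{M}})^h$ together with $\ell_1$-nonexpansiveness of stochastic matrices, yielding $\frac{\gamma\sqrt{S}}{1-\gamma}\zeta_p\lVert\mu-\widehat{\mu}\rVert_2$, whereas the paper instead passes through the $\ell_2$-difference of the occupancy measures of $\Tilde{\pi}$ under the two environments and reuses the computation from Lemma~\ref{Eq: Smoothness inequality} (obtaining $3S\zeta_p^2\lVert\mu-\widehat{\mu}\rVert_2^2/(1-\gamma)^2$ behind an $nA_{\max}(1-\gamma)^2$ prefactor). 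Your route is self-contained, arguably cleaner, and reproduces the stated constant exactly, including the factor $2$ from the two $\zeta_p$ contributions, while the paper's displayed chain works with squared norms and hides constants behind $\lesssim$; both give the same order of bound, so there is no gap in your argument.
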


\begin{proof}
    By the policy gradient theorem \citep{sutton_pg99}, for an agent $i \in \mathcal{N}$ policy $\pi \in \Pi$:
    \begin{align*}
        \frac{\partial \Phi^{\Tilde{\pi}}_{\pi}}{\partial \pi_i(a_i | s)} = \frac{1}{1 - \gamma} d_{\pi}^{\Tilde{\pi}}(s) \Bar{Q}_{i, \pi}^{\Tilde{\pi}}(s, a_i).
    \end{align*}
    Then,
    \begin{align*}
    &\left\lVert \nabla \Phi_M^{\Tilde{\pi}} - \nabla \Phi_{\widehat{M}}^{\Tilde{\pi}} \right\rVert_2^2 \\
    &=
    \frac{1}{(1 - \gamma)^2} \sum_i \sum_{s, a_i} \left( d_{\rho, \pi}^{\Tilde{\pi}}(s) \Bar{Q}_{i, \pi}^{\Tilde{\pi}}(s, a_i) -  d_{\rho, \widehat{\pi}}^{\Tilde{\pi}}(s) \Bar{Q}_{i, \widehat{\pi}}^{\Tilde{\pi}}(s, a_i) \right)^2 \\
    &\leq
    \frac{1}{(1 - \gamma)^2} \sum_i \sum_{s, a_i} 
    \left( \left\lvert d_{\rho, \pi}^{\Tilde{\pi}}(s) - d_{\rho, \widehat{\pi}}^{\Tilde{\pi}}(s) \right\rvert \Bar{Q}_{i, \pi}^{\Tilde{\pi}}(s, a_i) 
    + d_{\rho, \widehat{\pi}}^{\Tilde{\pi}}(s) \left\lvert \Bar{Q}_{i, \pi}^{\Tilde{\pi}}(s, a_i) - \Bar{Q}_{i, \widehat{\pi}}^{\Tilde{\pi}}(s, a_i) \right\rvert \right)^2 \\
    &\leq
    \frac{1}{(1 - \gamma)^2} \sum_i \sum_{s, a_i} 
    \left( \left\lvert d_{\rho, \pi}^{\Tilde{\pi}}(s) - d_{\rho, \widehat{\pi}}^{\Tilde{\pi}}(s) \right\rvert \Bar{Q}_{i, \pi}^{\Tilde{\pi}}(s, a_i) 
    + d_{\rho, \widehat{\pi}}^{\Tilde{\pi}}(s) \left\lvert \Bar{Q}_{i, \pi}^{\Tilde{\pi}}(s, a_i) - \Bar{Q}_{i, \widehat{\pi}}^{\Tilde{\pi}}(s, a_i) \right\rvert \right)^2 \\
    &\leq
    \frac{2}{(1 - \gamma)^2} \sum_i \sum_{s, a_i} 
    \left[ \left\lvert d_{\rho, \pi}^{\Tilde{\pi}}(s) - d_{\rho, \widehat{\pi}}^{\Tilde{\pi}}(s) \right\rvert \Bar{Q}_{i, \pi}^{\Tilde{\pi}}(s, a_i) \right]^2
    + \left[ d_{\rho, \widehat{\pi}}^{\Tilde{\pi}}(s) \left\lvert \Bar{Q}_{i, \pi}^{\Tilde{\pi}}(s, a_i) - \Bar{Q}_{i, \widehat{\pi}}^{\Tilde{\pi}}(s, a_i) \right\rvert \right]^2 \\
    &\leq
    \frac{2}{(1 - \gamma)^4} \sum_i \sum_{s, a_i} 
    \left[ d_{\rho, \pi}^{\Tilde{\pi}}(s) - d_{\rho, \widehat{\pi}}^{\Tilde{\pi}}(s)  \right]^2
    + 
    \frac{2}{(1 - \gamma)^2} \sum_i \sum_{s, a_i} \left[ d_{\rho, \widehat{\pi}}^{\Tilde{\pi}}(s) \left\lvert \Bar{Q}_{i, \pi}^{\Tilde{\pi}}(s, a_i) - \Bar{Q}_{i, \widehat{\pi}}^{\Tilde{\pi}}(s, a_i) \right\rvert \right]^2 \\
    &\lesssim
    \lVert \mu - \widehat{\mu} \rVert_2 \cdot \left( \frac{ S n A_{\max} \cdot \zeta_p^2}{(1 - \gamma)^4} + \left[ \frac{\sqrt{n A_{\max}}}{(1 - \gamma)^2}\left( \zeta_r + \frac{\gamma \cdot \zeta_p \sqrt{S}}{1 - \gamma} \right) \right]^2 \right).
\end{align*}
    For the last inequality to hold, we exploit the following bounds: 
    first, we use that
    \begin{align*}
        \sum_{i, s, a_i} ( d_{\rho, \pi}^{\Tilde{\pi}}(s) - d_{\rho, \widehat{\pi}}^{\Tilde{\pi}}(s) )^2
        &\leq
        (1-\gamma)^2 \cdot n A_{\max} \cdot \lVert \Tilde{\mu}_\pi - \Tilde{\mu}_{\widehat{\pi}}
        \Vert_2^2
        \\ &\leq
        3 S n A_{\max} \cdot \zeta^2_p \cdot \lVert \mu - \widehat{\mu} \rVert_2^2,
    \end{align*}
    where the last inequality uses the same computation as in Lemma~\ref{Eq: Smoothness inequality}.  Further, exploiting Lemma~\ref{lemma: Bound noise terms}, we obtain
    \begin{align*}
        \max_{s, a} \left\lvert \Bar{Q}_{i, \pi}^{\Tilde{\pi}}(s, a_i) - \Bar{Q}_{i, \widehat{\pi}}^{\Tilde{\pi}}(s, a_i) \right\rvert \le \frac{1}{1-\gamma} \cdot \Big(\zeta_r +\frac{ \gamma \cdot \zeta_p \sqrt{S}}{1-\gamma} \Big) \cdot \left\lVert \mu - \widehat{\mu} \right\rVert_2.
    \end{align*}
    By combining, we conclude the proof.
\end{proof}

\begin{lemma}\label{Eq: Smoothness inequality}
    It holds that $\lVert \nabla \Phi_\pi^{\pi'} - \Phi_\pi^{\pi''} \rVert_2 
    \lesssim
    \frac{\beta}{\min_{s,\pi} \alpha_\pi(s)}
    \left( 1 + \frac{\zeta_p \sqrt{S}}{1 - \gamma} \right) \lVert \mu'' - \mu' \rVert_2 $.
\end{lemma}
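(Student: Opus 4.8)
The plan is to establish the bound by two successive reductions. \textbf{First reduction (gradient difference $\to$ policy difference).} Since the underlying game $\mathcal{G}(\pi)$ is held fixed, Assumption~\ref{Assump: Sensitivity Occupancy measures}(iii) states exactly that the map $\pi' \mapsto \nabla \Phi_\pi^{\pi'}(\rho)$ is $\beta$-Lipschitz, so $\lVert \nabla \Phi_\pi^{\pi'}(\rho) - \nabla \Phi_\pi^{\pi''}(\rho) \rVert_2 \le \beta \lVert \pi' - \pi'' \rVert_2$. (For general potentials one instead uses the policy-gradient identity $\partial \Phi_\pi^{\pi'}(\rho)/\partial \pi'_i(a_i\mid s) = \tfrac{1}{1-\gamma} d^{\pi'}_{\rho,\pi}(s)\, \overline{Q}^{\pi'}_{i,\pi}(s,a_i)$ together with $\beta \le \tfrac{2n\gamma A_{\max}}{(1-\gamma)^3}$, noting that under Assumption~\ref{Ass: Agent independent transitions} the factor $d^{\pi'}_{\rho,\pi}(s)$ does not depend on $\pi'$.) It therefore remains to bound $\lVert \pi' - \pi'' \rVert_2$ by $\tfrac{1}{\min_{s,\pi}\alpha_\pi(s)}\!\left(1 + \tfrac{\zeta_p\sqrt{S}}{1-\gamma}\right)\lVert \mu' - \mu'' \rVert_2$.

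\textbf{Second reduction (policy difference $\to$ occupancy difference).} Under Assumption~\ref{Ass: Agent independent transitions}, the state-visitation mass $\alpha_{\bar\pi}(s) = \sum_{a_i}\mu^{\bar\pi}_{i,\bar\pi}(s,a_i)$ solves the Bellman-flow system $(I - \gamma P_{\bar\pi}^\top)\alpha_{\bar\pi} = (1-\gamma)\rho$ and is thus independent of the played policy, so $\mu'_i(s,a_i) = \pi'_i(a_i\mid s)\,\alpha_{\pi'}(s)$ and $\mu''_i(s,a_i) = \pi''_i(a_i\mid s)\,\alpha_{\pi''}(s)$. Writing the difference over a common denominator,
\[
\pi'_i(a_i\mid s) - \pi''_i(a_i\mid s) = \frac{\mu'_i(s,a_i) - \mu''_i(s,a_i)}{\alpha_{\pi'}(s)} + \mu''_i(s,a_i)\,\frac{\alpha_{\pi''}(s) - \alpha_{\pi'}(s)}{\alpha_{\pi'}(s)\,\alpha_{\pi''}(s)},
\]
and using $\mu''_i(s,a_i) \le \alpha_{\pi''}(s)$ together with $\alpha_{\pi'}(s),\alpha_{\pi''}(s) \ge \min_{s,\pi}\alpha_\pi(s)$ gives $\lVert \pi' - \pi'' \rVert_2 \lesssim \tfrac{1}{\min_{s,\pi}\alpha_\pi(s)}\big(\lVert \mu' - \mu'' \rVert_2 + \lVert \alpha_{\pi'} - \alpha_{\pi''} \rVert_2\big)$. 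Finally, the visitation perturbation is controlled by a resolvent argument: the flow system yields $\alpha_{\pi'} - \alpha_{\pi''} = \gamma(I - \gamma P_{\pi'}^\top)^{-1}(P_{\pi'}^\top - P_{\pi''}^\top)\alpha_{\pi''}$, and since $P^\top$ is $\ell_1$-nonexpansive we have $\lVert (I - \gamma P_{\pi'}^\top)^{-1}\rVert_{1\to 1} \le \tfrac{1}{1-\gamma}$; combining this with $\lVert\alpha_{\pi''}\rVert_1 \le \tfrac{1}{1-\gamma}$, the $\ell_1$-to-$\ell_2$ conversion (contributing the $\sqrt{S}$), and Assumption~\ref{Assump: Sensitivity (appendix)} ($\lVert P_{\pi'} - P_{\pi''}\rVert_2 \le \zeta_p \lVert \mu' - \mu''\rVert_2$) gives $\lVert \alpha_{\pi'} - \alpha_{\pi''}\rVert \lesssim \tfrac{\zeta_p\sqrt{S}}{1-\gamma}\lVert \mu' - \mu''\rVert_2$. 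Chaining the two reductions produces the claimed inequality.

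The main obstacle is precisely this last step: showing that the played-policy-independent visitation $\alpha_{\bar\pi}$ is itself Lipschitz in the deploying policy, which is where the sensitivity parameter $\zeta_p$ must enter. The delicate point is that $\mu'$ and $\mu''$ are occupancy measures of \emph{different} games $\mathcal{G}(\pi')$ and $\mathcal{G}(\pi'')$, so one cannot simply divide by a common $\alpha$; the resolvent bound has to be stated in norms compatible with Assumption~\ref{Assump: Sensitivity (appendix)} so that the $\sqrt{S}$ and $\tfrac{1}{1-\gamma}$ factors come out as stated, and the positive lower bound $\min_{s}\alpha_\pi(s) > 0$ (e.g.\ guaranteed when $\rho$ has full support) must be carried through both divisions.
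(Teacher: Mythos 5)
Your first reduction coincides with the paper's: both start from the $\beta$-smoothness of $\pi'\mapsto\nabla\Phi_\pi^{\pi'}(\rho)$ in Assumption~\ref{Assump: Sensitivity Occupancy measures}(iii), so everything hinges on bounding $\lVert\pi'-\pi''\rVert_2$ by the occupancy difference. For that second step you take a genuinely different route. The paper stays in occupancy space: since under Assumption~\ref{Ass: Agent independent transitions} the visitation mass is played-policy independent, it writes $\lVert\pi'-\pi''\rVert_2\le\lVert\mu_{\pi'}^{\pi'}-\mu_{\pi'}^{\pi''}\rVert_2/\min_s\alpha_{\pi'}(s)$, splits the numerator by a triangle inequality into $\lVert\mu'-\mu''\rVert_2$ plus the cross term $\lVert\mu_{\pi''}^{\pi''}-\mu_{\pi'}^{\pi''}\rVert_2$ (same played policy, two games), and bounds the cross term by a hands-on trajectory-marginal computation with Cauchy--Schwarz and the sensitivity of $P$. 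You instead decompose the policy difference algebraically as $(\mu'-\mu'')/\alpha_{\pi'}+\mu''(\alpha_{\pi''}-\alpha_{\pi'})/(\alpha_{\pi'}\alpha_{\pi''})$ and control $\lVert\alpha_{\pi'}-\alpha_{\pi''}\rVert$ via the resolvent identity $\alpha_{\pi'}-\alpha_{\pi''}=\gamma(I-\gamma P_{\pi'}^\top)^{-1}(P_{\pi'}^\top-P_{\pi''}^\top)\alpha_{\pi''}$, $\ell_1$-nonexpansiveness of $P^\top$, and Assumption~\ref{Assump: Sensitivity (appendix)}. This is a cleaner and more standard kernel-perturbation argument; the paper's trajectory computation is doing essentially the same estimate by hand and is arguably less transparent.

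Two bookkeeping points you should fix. First, you state the flow system with right-hand side $(1-\gamma)\rho$ (normalized visitation) but then invoke $\lVert\alpha_{\pi''}\rVert_1\le 1/(1-\gamma)$ (unnormalized occupancy); with the paper's unnormalized convention your chain yields $\gamma\zeta_p\sqrt{S}/(1-\gamma)^2$ rather than $\zeta_p\sqrt{S}/(1-\gamma)$ --- though the paper's own derivation of the single $1/(1-\gamma)$ accounts only for the last-step kernel difference and is itself loose, so this is a mismatch with the stated constant rather than a flaw in your reasoning. Second, converting the $\alpha$-difference into a policy difference coordinate-wise introduces a dimension factor: using $\mu''_i(s,a_i)=\pi''_i(a_i\mid s)\alpha_{\pi''}(s)$ and $\sum_{a_i}\pi''_i(a_i\mid s)^2\le 1$ this is $\sqrt{n}$ (your cruder bound $\mu''\le\alpha_{\pi''}$ gives an additional $\sqrt{A_{\max}}$), a factor the lemma's $\lesssim$ does not display; it is harmless where the lemma is applied, but worth stating explicitly.
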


\begin{proof}
First, we use the assumption that $\Phi_\pi^x$ is $\beta$-smooth in $x$, we have
    \begin{align*}
        \lVert \nabla \Phi_\pi^{\pi'} - \nabla \Phi_\pi^{\pi''} \rVert_2 \leq
        \beta \cdot \lVert \pi' - \pi'' \rVert_2.
    \end{align*}
Due to agent-independent transitions, it holds that
\begin{align*}
    \lVert \pi' - \pi'' \rVert_2 
    \leq
    \frac{ \lVert \mu_{\pi'}^{\pi'} -\mu_{\pi'}^{\pi''} \rVert_2}{\min_s \alpha_\pi(s)}
    \leq
    \frac{ \lVert \mu_{\pi'}^{\pi'} -\mu_{\pi''}^{\pi''} \rVert_2 + \lVert \mu_{\pi''}^{\pi''} -\mu_{\pi'}^{\pi''} \rVert_2}{\min_s \alpha_\pi(s)}. 
\end{align*}
It remains to show that $\lVert \mu_{\pi''}^{\pi''} -\mu_{\pi'}^{\pi''} \rVert_2$ is bounded by a fraction of $\lVert \mu_{\pi''}^{\pi''} -\mu_{\pi'}^{\pi''} \rVert_2$. We rewrite the expression explicitly and abuse notation to express the probability that an action/state pair occurs, i.e.,
\begin{align*}
    \lVert \mu_{\pi''}^{\pi''} - \mu_{\pi'}^{\pi''} \rVert_2
    &=
    \sum_{i \in \mathcal{N}} \sum_{s \in \mathcal{S}} \sum_{a_i} \left[ \sum_{t=0}^\infty \left(P_{\pi''}(s^t = s, a^t_i = a_i \mid \pi'') - P_{\pi'}(s_t = s, a^t_i = a_i \mid \pi'')\right) \right]^2 \\
    &\leq
    \frac{1}{(1-\gamma)^2} \sum_{i,s,a_i} \left( \max_t \left(P_{\pi''}(s^t = s, a^t_i = a_i \mid \pi'') - P_{\pi'}(s_t = s, a^t_i = a_i \mid \pi'')\right) \right)^2
    \\ &= 
    \frac{1}{(1-\gamma)^2} \sum_{i,s,a_i} \left( \sum_{s'} \left( \left| P_{\pi''}(s^t = s, a^t_i = a_i \mid \pi'', s') - P_{\pi'}(s_t = s, a^t_i = a_i \mid \pi'', s') \right| \cdot \max_{x \in \{ \pi', \pi'' \}} P_{x}(s_{t-1} = s ) \right) \right)^2
    \\ &\leq
    \frac{3 S}{(1-\gamma)^2} \sum_{i, s, a_i} 
    \sum_i \sum_{s, s', a} | P_{\pi''}(s,a \mid s') - P_{\pi'}(s,a \mid s') |^2
    \\
    &\leq 
    \frac{3S}{(1-\gamma)^2}\lVert P_{\pi''}(\cdot \mid \cdot, \cdot) 
 - P_{\pi'}(\cdot \mid \cdot, \cdot )\rVert_2^2
    \\
    &\leq
    \frac{3S \zeta_p^2}{(1-\gamma)^2} \lVert \mu'' - \mu' \rVert_2^2 ,
\end{align*}
where we especially make use of the Cauchy-Schwarz inequality in the second inequality and the sensitivity assumption in the last inequality.
Putting the inequalities together, it holds that
\begin{equation*}
    \lVert \pi' - \pi'' \rVert_2 
    \lesssim
    \frac{1}{\min_{s,\pi} \alpha_\pi(s)}
    \left( 1 + \frac{\zeta_p \sqrt{S}}{1 - \gamma} \right) \lVert \mu'' - \mu' \rVert_2 \qedhere
\end{equation*}
\end{proof}

The next statement can be proved as Lemma~\ref{lemma: Bound noise terms}, by exchanging the definition of sensitivity. 
\begin{lemma}\label{lemma: Bound noise terms, occupancy measures}
We have for all $\pi, \widehat{\pi}$, given a fixed $\Tilde{\pi}$, it holds that
    \begin{align*}
        \max_{s, a} \left\lvert \Bar{Q}_{i, \pi}^{\Tilde{\pi}}(s, a_i) - \Bar{Q}_{i, \widehat{\pi}}^{\Tilde{\pi}}(s, a_i) \right\rvert \le \frac{1}{1-\gamma} \cdot \Big(\zeta_r +\frac{ \gamma \cdot \zeta_p \sqrt{S}}{1-\gamma} \Big) \cdot \left\lVert \mu - \widehat{\mu} \right\rVert_2.
    \end{align*}
\end{lemma}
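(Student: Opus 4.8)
The plan is to reuse the proof of Lemma~\ref{lemma: Bound noise terms} essentially line for line, with two cosmetic changes: the role of ``the policy that is varied'' is now played by the game index (the subscript $\pi$ versus $\widehat\pi$) rather than by the superscript policy, and every appeal to the policy-space sensitivity (Assumption~\ref{Assump: Sensitivity}) is replaced by the occupancy-measure sensitivity (Assumption~\ref{Assump: Sensitivity (appendix)}). First I would observe that, since $\Tilde{\pi}$ is held fixed, both averaged action-value functions are convex combinations with the \emph{same} weights, namely $\Bar{Q}_{i,\pi}^{\Tilde{\pi}}(s,a_i)=\sum_{a_{-i}}\Tilde{\pi}_{-i}(a_{-i}\mid s)\,Q_{i,\pi}^{\Tilde{\pi}}(s,a_i,a_{-i})$ and likewise for $\widehat{\pi}$. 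Hence by the triangle inequality
\[
\max_{s,a_i}\bigl|\Bar{Q}_{i,\pi}^{\Tilde{\pi}}(s,a_i)-\Bar{Q}_{i,\widehat{\pi}}^{\Tilde{\pi}}(s,a_i)\bigr|
\le \max_{s,a}\bigl|Q_{i,\pi}^{\Tilde{\pi}}(s,a)-Q_{i,\widehat{\pi}}^{\Tilde{\pi}}(s,a)\bigr|,
\]
so it is enough to bound the (non-averaged) $Q$-difference.

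Next I would write the Bellman equations for $Q_{i,\pi}^{\Tilde{\pi}}$ and $Q_{i,\widehat{\pi}}^{\Tilde{\pi}}$ — both evaluated under the executed policy $\Tilde{\pi}$, but with rewards and transitions drawn from $\mathcal{G}(\pi)$ and $\mathcal{G}(\widehat{\pi})$ respectively — subtract them, and split via the triangle inequality into (i) the reward gap $|r_{i,\pi}(s,a)-r_{i,\widehat{\pi}}(s,a)|$, (ii) a transition term bounded by $\tfrac{\gamma}{1-\gamma}\lVert P_\pi(\cdot\mid s,a)-P_{\widehat{\pi}}(\cdot\mid s,a)\rVert_1$ using $\lVert Q\rVert_\infty\le\tfrac{1}{1-\gamma}$, and (iii) the recursive term $\gamma\max_{s',a'}|Q_{i,\pi}^{\Tilde{\pi}}(s',a')-Q_{i,\widehat{\pi}}^{\Tilde{\pi}}(s',a')|$. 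Taking the maximum over $(s,a)$ on both sides and rearranging the geometric term gives, exactly as in the proof of Lemma~\ref{lemma: Bound noise terms},
\[
\max_{s,a}\bigl|Q_{i,\pi}^{\Tilde{\pi}}(s,a)-Q_{i,\widehat{\pi}}^{\Tilde{\pi}}(s,a)\bigr|
\le \frac{1}{1-\gamma}\max_{s,a}\bigl|r_{i,\pi}(s,a)-r_{i,\widehat{\pi}}(s,a)\bigr|
+\frac{\gamma}{(1-\gamma)^2}\max_{s,a}\lVert P_\pi(\cdot\mid s,a)-P_{\widehat{\pi}}(\cdot\mid s,a)\rVert_1.
\]

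Finally I would convert norms and apply Assumption~\ref{Assump: Sensitivity (appendix)}: $\max_{s,a}|r_{i,\pi}-r_{i,\widehat{\pi}}|=\lVert r_{i,\pi}-r_{i,\widehat{\pi}}\rVert_\infty\le\lVert r_{i,\pi}-r_{i,\widehat{\pi}}\rVert_2\le\zeta_r\lVert\mu-\widehat{\mu}\rVert_2$, and $\max_{s,a}\lVert P_\pi(\cdot\mid s,a)-P_{\widehat{\pi}}(\cdot\mid s,a)\rVert_1\le\sqrt{S}\,\max_{s,a}\lVert P_\pi(\cdot\mid s,a)-P_{\widehat{\pi}}(\cdot\mid s,a)\rVert_2\le\sqrt{S}\,\lVert P_\pi(\cdot\mid\cdot,\cdot)-P_{\widehat{\pi}}(\cdot\mid\cdot,\cdot)\rVert_2\le\sqrt{S}\,\zeta_p\lVert\mu-\widehat{\mu}\rVert_2$. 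Substituting yields $\frac{1}{1-\gamma}\bigl(\zeta_r+\frac{\gamma\zeta_p\sqrt{S}}{1-\gamma}\bigr)\lVert\mu-\widehat{\mu}\rVert_2$, which is the claimed bound. There is essentially no obstacle: the only step requiring a moment's care is the very first one — recognizing that fixing $\Tilde{\pi}$ lets us pass from $\Bar{Q}$ to $Q$ with no extra error — after which the argument is precisely the Bellman-unrolling bookkeeping of Lemma~\ref{lemma: Bound noise terms} with $\lVert\pi-\pi'\rVert_2$ replaced everywhere by $\lVert\mu-\widehat{\mu}\rVert_2$.
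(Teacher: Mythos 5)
Your proposal is correct and is exactly what the paper does: the paper's own justification is a one-line remark that the lemma "can be proved as Lemma~\ref{lemma: Bound noise terms}, by exchanging the definition of sensitivity," i.e., the same Bellman-recursion bound with Assumption~\ref{Assump: Sensitivity} replaced by the occupancy-measure sensitivity, which is precisely your argument (including the harmless reduction from $\Bar{Q}$ to $Q$ since $\Tilde{\pi}$ is fixed).
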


\section{Experimental Setup}

This section describes details about the experiments presented in Section~\ref{sec:experiments}.
The code can be found in \href{https://github.com/PauliusSasnauskas/performative-mpgs}{https://github.com/PauliusSasnauskas/performative-mpgs}.

\subsection{Algorithms}\label{app:algorithms}

The policy distance presented in Figures \ref{fig:comp_leo_ding_app}, \ref{fig:comp_cong_leo_ding_app}, \ref{fig:comp_inpg_inpg_reg_app}, \ref{fig:comp_cong_inpg_inpg_reg_app} is the distance from the current policy to the average of the last 10 in that run:
\begin{equation*}
    \text{Policy distance }(t) = \frac{1}{N} \sum_i^N \left\| \pi_i^t - \pi_i^\text{last} \right\| ,
\end{equation*}
where $\pi_i^\text{last}$ is the average of the last 10 policies in that run.

\subsubsection{Independent Projected Gradient Ascent (IPGA)}
We improve upon the code presented by \citet{GlobalConvergence_Leonardos} for the IPGA algorithm, which is shown in Algorithm \ref{alg:pga-practical}.
We set the hyperparameters as shown in Table~\ref{tab:pga-hyp}.
We name the regularized IPGA version in the name of \citet{GlobalConvergence_Leonardos} -- \textit{IPGA-L}, and the unregularized version in the name of \citet{IndepPolicyGrad_Ding} -- \textit{IPGA-D}.
The comparison of performance for different performativity strengths and different learning rates can be seen in Figure~\ref{fig:comp_leo_ding_app} for the safe-distancing game, and in Figure~\ref{fig:comp_cong_leo_ding_app} for the stochastic congestion game.

\begin{table}[ht]
    \centering
    \caption{Hyperparameters used for the IPGA algorithm (unless noted otherwise).}
    \begin{tabular}{l c}
        \textbf{Parameter} & \textbf{Value} \\
        \hline
        Learning rate $\eta$ & 0.0001 \\
        Discount factor $\gamma$ & 0.99 \\
        Number of episodes per round & 20 \\
    \end{tabular}
    \label{tab:pga-hyp}
\end{table}

\begin{algorithm}
\caption{Independent Projected Gradient Ascent Practical Implementation}
\label{alg:pga-practical}
\begin{algorithmic}[1]
    \State{\textbf{Input}:} $\eta$ -- step size, $T$ -- number of rounds, $K$ -- \text{episodes per round} 
    \State{\textbf{Init.}:} for all $i \in \mathcal{N}, a_i \in \mathcal{A}_i , s \in S$, let $\pi^t_i(a_i | s ) = 1/|\mathcal{A}_i| $.
    \For{$t=1$ to $T$}
        \State Roll out policies $\pi^t$ for $K$ episodes to get trajectories $\tau$
        \State Compute state visitation distribution $\mu(s)$ from trajectories $\tau$
        \State Compute state-action value function $Q_i(s, a)$ from trajectories $\tau$ for each agent $i \in \mathcal{N}$
        \For{$i=1$ to $n$ (simultaneously)}
            \State Compute $g_i(s, a) = \mu(s) \, Q_i(s, a) \quad \forall s \in S, a \in \mathcal{A}_i$ \color{green!50!black} \Comment{Set $\mu(s) = 1 \; \forall s \in S$ for \textit{IPGA-D} version} \color{black}
            \State Update $\pi_i^{t+1}(a | s) = \text{Proj}_{\Delta_{\mathcal{A}_i}} (\pi_i^t(a | s) + \eta g(s, a)) \quad \forall s \in S, a \in \mathcal{A}_i$
        \EndFor
    \EndFor
\end{algorithmic}
\end{algorithm}

\begin{figure*}[ht]
    \centering
    \vspace{.3in}
    \includegraphics[width=0.4526\textwidth]{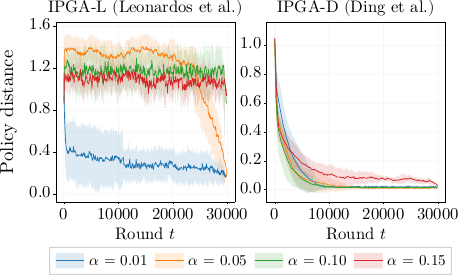}
    \hspace{1.5em}
    \includegraphics[width=0.4526\textwidth]{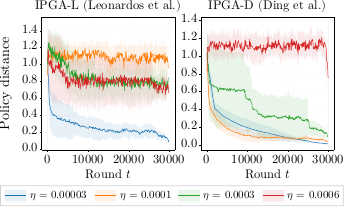}
    \vspace{.3in}
    \caption{
    Comparison of IPGA-L and IPGA-D in the safe-distancing game varying the performativity strength $\alpha$ (left two plots, $\eta = 0.0001$), and learning rate $\eta$ (right two plots, $\alpha = 0.15$). Mean and standard deviation over $10$ experiment replications.
    }
    \label{fig:comp_leo_ding_app}
\end{figure*}

\begin{figure*}[ht]
    \centering
    \vspace{.3in}
    \includegraphics[width=0.4526\textwidth]{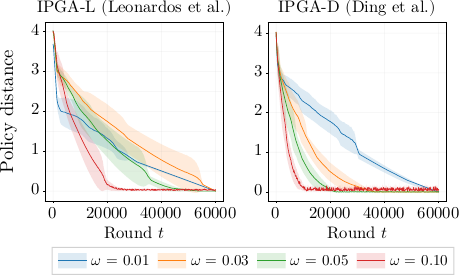}
    \hspace{1.5em}
    \includegraphics[width=0.4526\textwidth]{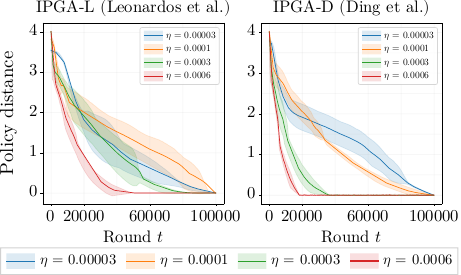}
    \vspace{.3in}
    \caption{Comparison of IPGA-L and IPGA-D in the stochastic congestion game varying the performativity strength $\perfc = \perfc_r = \perfc_p$ (left two plots, $\eta = 0.0003$), and learning rate $\eta$ (right two plots, $\perfc = 0.03$). Mean and standard deviation over $5$ experiment replications.}
    \label{fig:comp_cong_leo_ding_app}
\end{figure*}


\subsubsection{Independent Natural Policy Gradient (INPG)}

\begin{figure*}[ht]
    \centering
    \vspace{.3in}
    \includegraphics[width=0.4526\textwidth]{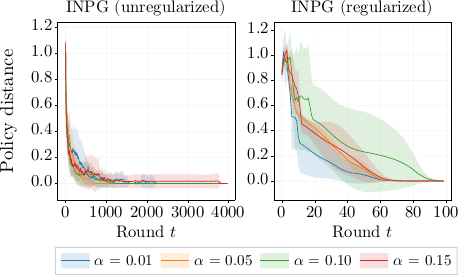}
    \hspace{0.5em}
    \includegraphics[width=0.4526\textwidth]{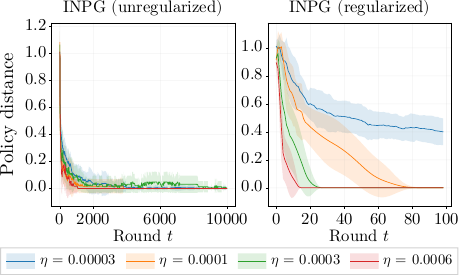}
    \vspace{.3in}
    \caption{Comparison of INPG regularized vs.\ unregularized version in the safe-distancing game varying the performativity strength $\alpha$ (left two plots, $\eta = 0.0001$) and learning rate $\eta$ (right two plots, $\alpha = 0.15$). Mean and standard deviation over $10$ experiment replications. In the INPG (regularized) version with $\eta = 0.00003$ (blue line) in the rightmost plot converges after approx.\ 3000 rounds (not seen in the plot). (Note the stark difference in the number of rounds.)}
    \label{fig:comp_inpg_inpg_reg_app}
\end{figure*}

\begin{figure*}[htb]
    \centering
    \vspace{.3in}
    \includegraphics[width=0.4526\textwidth]{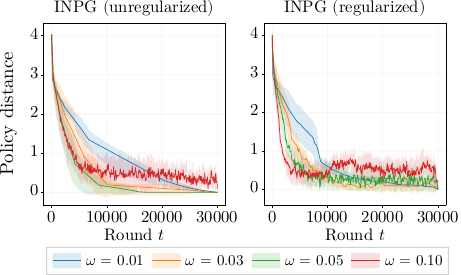}
    \hspace{0.5em}
    \includegraphics[width=0.4526\textwidth]{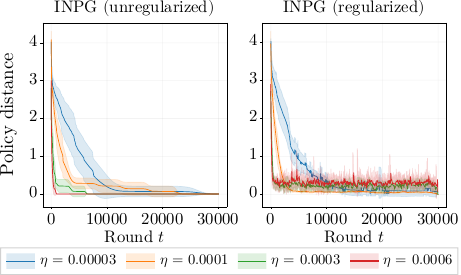}
    \vspace{.3in}
    \caption{Comparison of INPG regularized vs.\ unregularized version in the stochastic congestion game varying the performativity strength $\perfc = \perfc_r = \perfc_p$ (left two plots, $\eta = 0.00003$) and learning rate $\eta$ (right two plots, $\perfc = 0.03$). Mean and standard deviation over $5$ experiment replications.}
    \label{fig:comp_cong_inpg_inpg_reg_app}
\end{figure*}

We implement our own version of the INPG algorithm shown in Algorithm \ref{alg:inpg-practical}, based on the variant by \citet{NPG_converges}.
We name this algorithm \textit{INPG (unregularized)} in the plots.
We name the log-barrier regularized version \textit{INPG (regularized)}.
We set the hyperparameters as shown in Table~\ref{tab:inpg-hyp}.

The comparison of performance for different performativity strengths and different learning rates in the safe-distancing game can be seen in Figure~\ref{fig:comp_inpg_inpg_reg_app}, in the stochastic congestion game in Figure~\ref{fig:comp_cong_inpg_inpg_reg_app}.

\begin{table}[ht]
    \centering
    \caption{Hyperparameters used for the INPG algorithm (unless noted otherwise).}
    \begin{tabular}{l c}
        \textbf{Parameter} & \textbf{Value} \\
        \hline
        Learning rate $\eta$ & 0.0001 \\
        Discount factor $\gamma$ & 0.99 \\
        Number of episodes per round & 20 \\
        Regularizer strength $\lambda$ (only in regularized version) & 0.003 \\
    \end{tabular}
    \label{tab:inpg-hyp}
\end{table}

\begin{algorithm}
\caption{Independent Natural Policy Gradient Practical Implementation}
\label{alg:inpg-practical}
\begin{algorithmic}[1]
    \State{\textbf{Input}:} $\eta$ -- step size, $T$ -- number of rounds, $K$ -- \text{episodes per round} 
    \State{\textbf{Init.}:} for all $i \in \mathcal{N}, a_i \in \mathcal{A}_i , s \in S$, let $\pi^t_i(a_i | s ) = 1/|\mathcal{A}_i| $.
    \For{$t=1$ to $T$}
        \State Roll out policies $\pi^t$ for $K$ episodes to get trajectories $\tau$
        \State Compute state visitation distribution $\mu(s)$ from trajectories $\tau$
        \State Compute value functions $V_i(s)$ and $Q_i(s, a)$ from trajectories $\tau$ for each agent $i \in \mathcal{N}$
        \For{$i=1$ to $n$ (simultaneously)}
            \State Compute $A_i(s, a) = Q_i(s, a) - V_i(s) \quad \forall s \in S, a \in \mathcal{A}_i$
            \State Update $\pi_i^{t+1}(a | s) = \pi_i^t(a | s) \, \text{exp}\left( \frac{\eta}{1-\gamma}A_i(s, a) + \frac{\lambda}{\mu(s) \pi^t_i(a | s)} - \frac{\lambda |\mathcal{A}_i|}{\mu(s)} \right) \frac{1}{Z}
            \quad \forall s \in S, a \in \mathcal{A}_i$ \item[] \Comment{$Z$ is the renormalization term. For \textit{INPG (unregularized)} we set $\lambda = 0$.}
        \EndFor
    \EndFor
\end{algorithmic}
\end{algorithm}

\begin{figure}[htb]
    \centering
    \vspace{.3in}
    \includegraphics[width=0.45\linewidth]{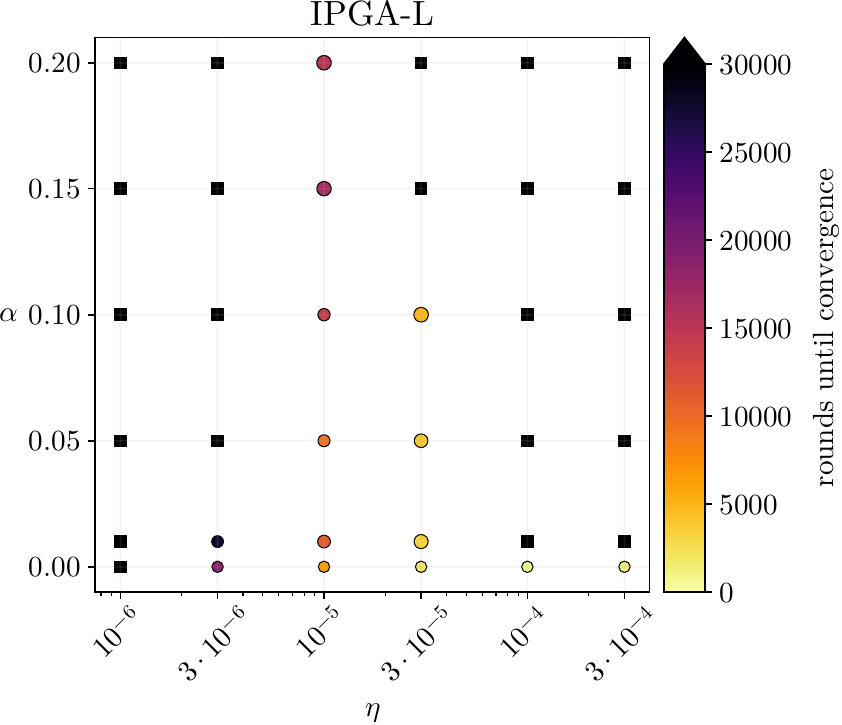}
    \hspace{0.5em}
    \includegraphics[width=0.45\linewidth]{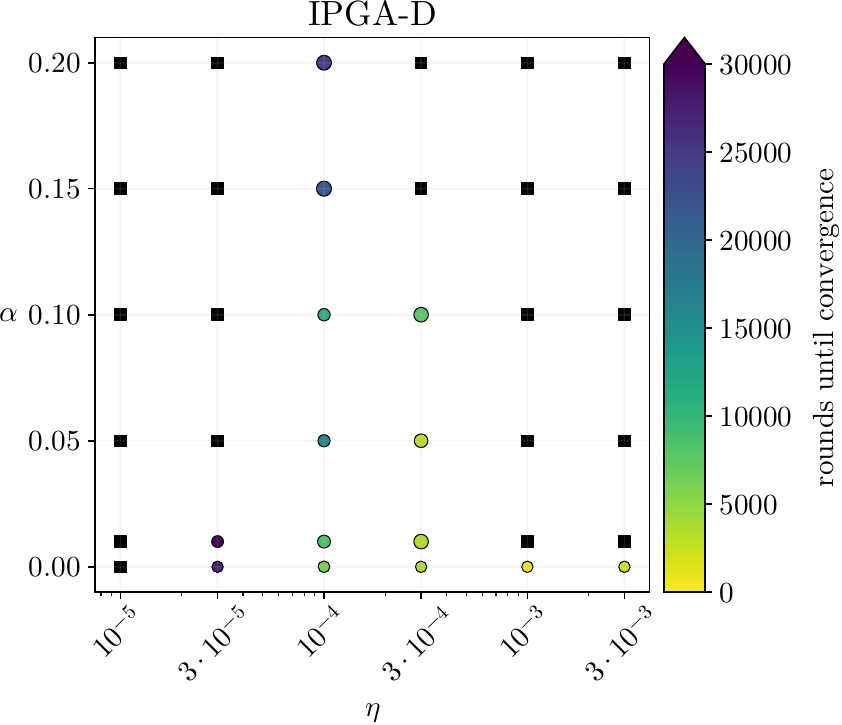}   
    \vspace{.3in}
    \caption{
    Comparison of different values of $\alpha$ (linear scale) and $\eta$ (log scale) for IPGA-L and IPGA-D in the safe-distancing environment.
    Circles indicate the runs converged in under 30000 rounds with the number of rounds indicated by the color scale shown on the right.
    Black squares indicate the runs did not converge in 30000 rounds.
    Values shown are from the mean over 10 runs.
    }
    \label{fig:omegas_lrs}
\end{figure}

Figure~\ref{fig:omegas_lrs} shows the importance of selecting an appropriate learning rate for the PGA algorithms.
Some learning rates are more stable for a larger set of performativity strengths $\alpha$, as seen in the plot ($\eta = 0.00001$ for IPGA-L and $\eta = 0.0001$ for IPGA-D).

\clearpage
\subsection{Environments}\label{app:environments}

\subsubsection{Safe-Distancing Game}\label{app:safe-dist-env}

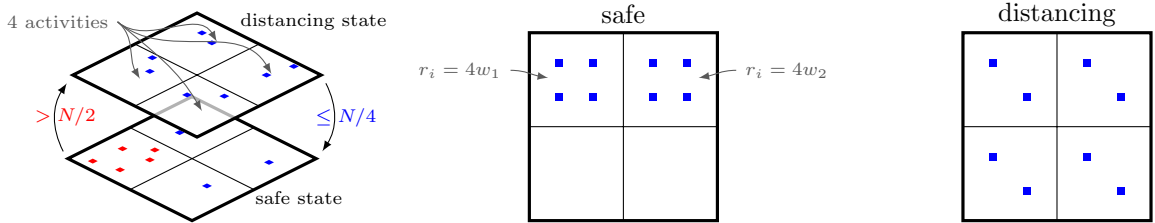
\begin{figure}[hbt]
    \centering
    \vspace{.3in}
    \begin{subfigure}[b]{0.33\textwidth}
        \centering
        \begin{tikzpicture}[scale=0.33,every node/.style={minimum size=0cm}, >={}]
             \begin{scope}[yshift=-63,every node/.append style={yslant=0.5,xslant=-1},yslant=0.5,xslant=-1]
            
                   \draw[step=25mm, black] (0,0) grid (5,5);
                   \draw[black,very thick] (0,0) rectangle (5,5);
                   \fill[red] (2,3.5) rectangle ++(0.2,0.2); 
                   \fill[blue] (1.6,1) rectangle ++(0.2,0.2); 
                   \fill[red] (1.5,3.2) rectangle ++(0.2,0.2); 
                   \fill[red] (0.3,4.3) rectangle ++(0.2,0.2); 
                   \fill[red] (0.5,3.4) rectangle ++(0.2,0.2); 
                   \fill[blue] (3.8,0.7) rectangle ++(0.2,0.2); 
                   \fill[red] (1.4,4.2) rectangle ++(0.2,0.2); 
                   \fill[blue] (3.2,3.7) rectangle ++(0.2,0.2); 
            \end{scope}
            \begin{scope}[yshift=30,every node/.append style={yslant=0.5,xslant=-1},yslant=0.5,xslant=-1]
                   \fill[white,fill opacity=0.7] (0.2,0) rectangle (5.2,5);
                   \draw[black,very thick] (0.2,0) rectangle (5.2,5);
                   \draw[step=25mm, black, xshift=.2cm] (0,0) grid (5,5);
                   \fill[blue] (1.6,1.8) rectangle ++(0.2,0.2); 
                   \fill[blue] (2.3,1) rectangle ++(0.2,0.2); 
                   \fill[blue] (4,1) rectangle ++(0.2,0.2);
                   \fill[blue] (4.2,3.4) rectangle ++(0.2,0.2); 
                   \fill[blue] (4.9,0.8) rectangle ++(0.2,0.2); 
                   \fill[blue] (4.4,4) rectangle ++(0.2,0.2);
                   \fill[blue] (2.4,4) rectangle ++(0.2,0.2);
                   \fill[blue] (1.8,3.5) rectangle ++(0.2,0.2);
               \end{scope}
            \node at (5, 0.2)  (a) {};
            \node at (-5, 0.2)  (b) {};
            \node at (-4.8, 3.7) (c) {};
            \node at (5.2, 3.7)  (d) {};
            
            \draw[-latex] (d) edge[bend left] node [midway,left,blue,very thick, xshift=0.7cm] {\scriptsize $\le N/4$} (a);
            \draw[-latex] (b) edge[bend left] node [midway,right,red,thick,xshift=-0.4cm] {\scriptsize $>N/2$} (c);
            
            \draw[-latex,gray!70!black](-3,5.8)node[left]{\scriptsize 4 activities}
                   to[out=-40,in=120] (0.4,2.2);
            \draw[-latex,gray!70!black](-3,5.8)node[left]{}
                   to[out=-40,in=120] (1,5);
            \draw[-latex,gray!70!black](-3,5.8)node[left]{}
                   to[out=-40,in=120] (3.2,3.7);
            \draw[-latex,gray!70!black](-3,5.8)node[left]{ }
                   to[out=-40,in=120] (-2.1,3.5);
            \draw[thick](4.9,5.8) node {\scriptsize distancing state};
            \draw[thick](4.3,-1.3) node {\scriptsize safe state};
        \end{tikzpicture}
    \end{subfigure}%
    \begin{subfigure}[b]{0.33\textwidth}
        \centering
        \begin{tikzpicture}[scale=0.5,every node/.style={minimum size=0cm}, >={}]
            \node at (2.5, 5.5)  (title) {safe};
            
            \draw[step=25mm, black] (0,0) grid (5,5);
            \draw[black,very thick] (0,0) rectangle (5,5);
            \fill[blue] (0.7,4.1) rectangle ++(0.2,0.2);
            \fill[blue] (0.7,3.2) rectangle ++(0.2,0.2);
            \fill[blue] (1.6,4.1) rectangle ++(0.2,0.2);
            \fill[blue] (1.6,3.2) rectangle ++(0.2,0.2);
            \fill[blue] (4.1,4.1) rectangle ++(0.2,0.2);
            \fill[blue] (3.2,4.1) rectangle ++(0.2,0.2);
            \fill[blue] (4.1,3.2) rectangle ++(0.2,0.2);
            \fill[blue] (3.2,3.2) rectangle ++(0.2,0.2);

            \draw[-latex,gray!70!black](-0.5,4) node[left] {\scriptsize $r_i = 4 w_1$} to[out=0,in=170] (0.5,3.8);
            \draw[-latex,gray!70!black](5.5,4) node[right] {\scriptsize $r_i = 4 w_2$} to[out=-180,in=10] (4.5,3.8);


        \end{tikzpicture}
    \end{subfigure}
    \begin{subfigure}[b]{0.33\textwidth}
        \centering
        \begin{tikzpicture}[scale=0.5,every node/.style={minimum size=0cm}, >={}]
            \node at (2.5, 5.5)  (title) {distancing};
            
            \draw[step=25mm, black] (0,0) grid (5,5);
            \draw[black,very thick] (0,0) rectangle (5,5);
            \fill[blue] (0.7,4.1) rectangle ++(0.2,0.2);
            \fill[blue] (1.6,3.2) rectangle ++(0.2,0.2);
            \fill[blue] (4.1,3.2) rectangle ++(0.2,0.2);
            \fill[blue] (3.2,4.1) rectangle ++(0.2,0.2);

            \fill[blue] (0.7,1.6) rectangle ++(0.2,0.2);
            \fill[blue] (1.6,0.7) rectangle ++(0.2,0.2);
            \fill[blue] (3.2,1.6) rectangle ++(0.2,0.2);
            \fill[blue] (4.1,0.7) rectangle ++(0.2,0.2);

        \end{tikzpicture}
    \end{subfigure}
    \vspace{.3in}
    \caption{An illustration of the Safe-Distancing environment that is used in the experiments.
    \textbf{Left:} illustration of the two states. When more than $\frac{N}{2} = 4$ agents in the \textit{safe} state are performing the same activity (5 red squares) all agents are transitioned to the \textit{distancing} state, where they have to spread out, and no more than $\frac{N}{4} = 2$ agents may perform an activity to transition back to the \textit{safe} state.
    \textbf{Center:} illustration of the optimal joint policy in the \textit{safe} state ($w_1$ and $w_2$ being the two highest weighted rewards).
    \textbf{Right:} illustration of the optimal joint policy in the \textit{distancing} state.
    Figure adapted from \citet{GlobalConvergence_Leonardos}.}
    \label{fig:safe-dist-env}
\end{figure}

We use one environment setup defined by \citet{GlobalConvergence_Leonardos} -- the safe-distancing game.
We consider an MDP with two states, one state is called \textit{safe}, the other -- \textit{distancing}.
There are $N=8$ agents, $|\mathcal{A}_i| = 4$ activities the agents can perform.
In both states the reward each agent receives for performing activity $a_i = k$ is equal to a weight $w_k$ multiplied with the number of agents performing that activity.
The weights satisfy $w_1 > w_2 > w_3 > w_4$, i.e., activity $1$ is the most preferable.
If more than $\frac{N}{2} = 4$ agents are performing the same activity, all agents are transitioned to the \textit{distancing} state.
At the \textit{distancing} state the reward weights are the same, except the reward is reduced by a (considerably large) constant $c=100$.
To transition back to the \textit{safe} state the agents have to distribute themselves evenly among the activities, i.e., no more than $\frac{N}{4} = 2$ agents may perform the same activity.
A visualization of the game and example policies can be seen in Figure~\ref{fig:safe-dist-env}.
In our experiments we set $(w_1, w_2, w_3, w_4) = (4, 3, 2, 1)$.

\paragraph{Performative Effect.}
To model the performative response we modify the environment by taking inspiration from \citet{PerformativeReinforcementLearning}, and do as follows.
Each agent is controlled by a principal agent (the learning algorithm), and an influencer agent.
The influencer agent may override some of the actions taken by the principal agent.
Therefore, the principal agent's effective environment is performative.

For example, the principal agent selects one of $|\mathcal{A}_i| = 4$ actions.
The influencer agent may choose to keep this action the same, or intervene, by overriding the action and choosing a different activity.
The influencer agent maintains $Q$-values of $|\Acal_i| + 1 = 5$ actions -- four for intervening by changing into one of $|\mathcal{A}_i| = 4$ actions, and an additional one for no intervention.
Parameter $\alpha$ controls the performative strength.
With a probability of $1-\alpha$ the original principal agent action is selected, and with probability $\alpha$ (e.g., $\alpha = 0.15$) the influencer agent action gets activated.
Its $Q$-values are computed on a perturbed environment, as described by \citet{PerformativeReinforcementLearning}.
The action selected by the influencer agent is sampled from:
\begin{equation*}
    \pi_2(a_i|s) = \frac{\text{exp}(Q^{*|\pi_1}(s|a_i)}{\sum_j \text{exp}(Q^{*|\pi_1}(s, a_j))} .
\end{equation*}

In our experiments, we use the default 
$\alpha = 0.15$, unless noted otherwise.

\subsubsection{Stochastic Congestion Game}\label{app:cong-game}

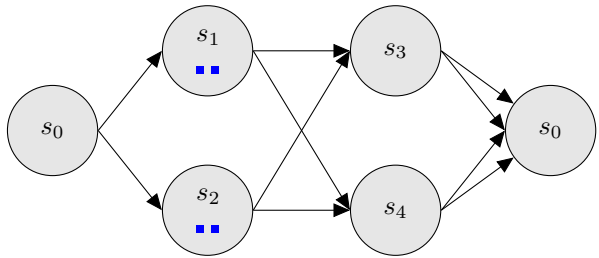
\begin{figure}[hbt]
    \centering
    \vspace{.3in}
    \begin{tikzpicture}[node distance=1.5cm, >=triangle 45]
        \tikzstyle{every node}=[circle, draw, fill=gray!20, minimum size=12mm]
        
        \node (s) at (0,0) {$s_0$};
        \node (a) [above right of=s, xshift=1cm, text width=0.3cm, align=center] {$s_1$\\\ };
        \node (b) [below right of=s, xshift=1cm, text width=0.3cm, align=center] {$s_2$\\\ };
        \node (c) [right of=a, xshift=1cm] {$s_3$};
        \node (d) [right of=b, xshift=1cm] {$s_4$};
        \node (t) [below right of=c, xshift=1cm] {$s_0$};
        
        \fill[blue] (a) ++(-0.15,-0.3) rectangle ++(0.1,0.1);
        \fill[blue] (a) ++(0.05,-0.3) rectangle ++(0.1,0.1);
        
        \fill[blue] (b) ++(-0.15,-0.3) rectangle ++(0.1,0.1);
        \fill[blue] (b) ++(0.05,-0.3) rectangle ++(0.1,0.1);
        
        \draw[->] (s.east) -- (a.west);
        \draw[->] (s.east) -- (b.west);
        \draw[->] (a.east) -- (c.west);
        \draw[->] (a.east) -- (d.west);
        \draw[->] (b.east) -- (c.west);
        \draw[->] (b.east) -- (d.west);
        \draw[->] (c.east) -- (t.west);
        \draw[->] (c.east) -- (t);
        \draw[->] (d.east) -- (t.west);
        \draw[->] (d.east) -- (t);

        
    \end{tikzpicture}
    \vspace{.3in}
    \caption{An illustration of the stochastic congestion game. From states $s_3$ and $s_4$ the game wraps around and state $s_0$ is reached. The agents (blue squares) in state $s_1$ can choose the same action (e.g., the edge to $s_3$), yielding a reward of $15$ each, or can choose different actions (edges to $s_3$ and $s_4$), yielding a reward of $50$ each.
    Figure adapted from \citet{NPG_converges}.}
    \label{fig:cong-game}
\end{figure}

We use the experiment setup defined by \citet{NPG_converges} -- the stochastic congestion game.
We consider an MDP with 5 states, $N=4$ agents.
The states and actions transition as shown in Figure~\ref{fig:cong-game}.
In every state each agent can perform one of $|\mathcal{A}_i| = 2$ actions.
The reward received by each agent is based on the number of agents choosing the same action at that same state.
In our experiments, if only one agent is choosing that action, the reward is $r_i = 50$, if two agents are choosing the same action $r_i = 15$, three $r_i = 5$, four $r_i = 1$.
The agents start at state $s_0$.

\paragraph{Performative Effect.}
To model the performative effect in this environment, we change the rewards and transition probabilities as follows:
\begin{equation}
    r_{i,\pi'} = r_{i,\pi_0} + \frac{\perfc_r}{(1-\gamma) \sqrt{|\mathcal{S}|\,|\mathcal{A}_i|}} (\pi' - \pi_0)\,,
\end{equation}
\begin{equation}
    P_{i,\pi'} = P_{i,\pi_0} + \frac{\perfc_p}{(1-\gamma) \sqrt{|\mathcal{S}|\,|\mathcal{A}_i|}} (\pi' - \pi_0) \frac{1}{|\mathcal{S}|}\,,
\end{equation}
varying the strength via $\perfc_r$ and $\perfc_p$.
We set $\pi_0$ to the initial uniform-random policy.

We restrict the changes the transition kernel in $P_{i, \pi'}$ to be valid for the game, (for example, following the naming in Figure \ref{fig:cong-game}, in state $s_2$ the kernel is restricted to only transition to $s_3$ or to $s_4$, and not to, for example, $s_0$).

In our experiments, we use the defaults $\perfc_r = \perfc_p = 0.03$, unless noted otherwise.


\subsection{Computing Infrastructure}

We ran the experiments on an internal computing cluster with NVIDIA A100 80 GB GPUs.
Running 10 experiment replications in parallel, a single round in the safe-distancing environment takes approx.~0.7~s, 10000 rounds -- approx.~2~h.
Running 5 experiment replications in parallel, a single round in the stochastic congestion game takes approx.~1.6~s, 10000 rounds -- approx.~4.5~h.

\section{Additional Related Work}\label{Additional Related work}

\paragraph{Performative Prediction.}
Since the seminal work of Performative Prediction \citep{DBLP:conf/icml/PerdomoZMH20}, various adaptations has been studied, we refer to the survey by \citet{PP_PF}.
There are variations considering stochastic optimization \citep{NEURIPS2020_PPSO} for finding performatively stable points.
Performative power -- a notion that measures the potential of a firm to influence the population distribution of participants on an online platform \citep{DBLP:conf/nips/HardtJM22}, performative prediction with neural networks \citep{DBLP:conf/aistats/MofakhamiMG23} considers a setting, which allows weaker assumptions on the loss function, regret minimization with performative feedback \citep{DBLP:conf/icml/JagadeesanZM22} and the connection between performativity and causality has been studied by \citet{NEURIPS2022_PPCausal, kulynych2022causal}.

\paragraph{Multi-Agent Performative Prediction}

A recent line of work studies performative prediction in a multi-agent setting with slightly different frameworks, e.g., consensus seeking agents where an agent $i$ has a local distribution that is not effected by the other agents' decisions \citep{li2022multi}, smooth games where the local distributions are affected by the joint decision \citep{narang2023multiplayer}, global distributions \citep{piliouras2023multi}.
Among these, our setting is conceptually closest to \citet{narang2023multiplayer}, which provide a game theoretic notion of multi-agent performative prediction. In this work, they provide methods to converge to a PSE in (stateless) strongly-monotone games.

\paragraph{Variations of MPGs.} 
MPGs have also been considered in different variations, e.g., $\alpha$-approximate MPGs \citep{Guo2023MarkovG}, Networked MPGs \citep{NetworkedMPGs}, fully decentralized settings (agents may not require to know if other agents exist) \citep{MPG_DecentralizedLearning}. 
Another recent line of work considers constrained MPGs to study MARL under safety constraints \citep{DBLP:conf/atal/AlaturRH024, DBLP:conf/aistats/JordanBH24}.

\paragraph{Non-stationary Multi-agent Reinforcement Learning.}
Our work is also related non-stationary MARL. There is work on full-information settings (e.g. gradients are known), see e.g., \cite{AdversarialCorruptions_zerosum, OntheConvergenceofNo-RegretLearningDynamicsinTime-VaryingGames, MultiagentOnlineLearninginTime-VaryingGames},
bandit feedback (gradient estimations are required) \citep{DBLP:conf/iclr/JiangCXFD24}.
\newpage
\section{Errata}

In the original version of this paper, there are minor typos in the first and second complexity result in the table \ref{tab:main-results}, stated correctly in this version. Additionally, in Theorem \ref{Theorem: Convergence Result - Infinite Sample Case}, $\mathcal{W}_{r,p}$ does not dependent on $T$. 

}
}
{}

\end{document}